\numberwithin{equation}{section}
\newcommand{\LogLn}{\left(\ln m\right)}
\providecommand{\keywords}[1]
{
  \small	
  \textbf{\textit{Keywords---}} #1
}
\begin{document}
\title{Towards an Understanding of Residual Networks Using Neural Tangent Hierarchy~(NTH)}
\author{Yuqing Li, Tao Luo, Nung Kwan Yip\\
Department of Mathematics, Purdue University, IN, 47907,
USA}

\date{\today}
\maketitle
 
% REQUIRED
\begin{abstract}
   
 Gradient descent yields zero training loss in polynomial time for deep neural networks despite   non-convex nature of the objective function. The behavior of  network  in the infinite width limit trained by gradient descent can be described by the Neural Tangent Kernel (NTK) introduced in~\cite {Jacot2018Neural}. In this paper, we study  dynamics of the NTK for finite width  Deep Residual Network~(ResNet) using  the neural tangent hierarchy~(NTH) proposed in~\cite{Huang2019Dynamics}. For a  ResNet with smooth and Lipschitz activation function, we reduce the requirement on the layer width $m$ with respect to the number of training samples $n$ from quartic to cubic. Our analysis suggests strongly that the particular skip-connection structure of  ResNet is the main reason for its triumph over fully-connected network.
\end{abstract}
\keywords{Residual Networks, Training Process, Neural Tangent Kernel, Neural Tangent Hierarchy}

\section{Introduction}\label{Section...Introduction}
Deep neural networks have achieved transcendent performance in a wide range of tasks such as speech recognition~\cite{Dahl2011context}, computer vision~\cite{Rastegari2016xnor}, and natural language processing \cite{Collobert2008unified}. There are various methods to train  neural networks, such as  first-order gradient
based methods like Gradient Descent~(\textbf{GD}) and Stochastic Gradient Descent~(\textbf{SGD}), which have been proven to achieve satisfactory results \cite{Goodfellow2016deep}. Experiments~in~\cite{Zhang2018understanding} established that, even though with a random labeling of the training images, if one trains the state-of-the-art convolutional network for image classification using SGD, the network is still  able to fit them well. There are numerous works trying to demystify such phenomenon theoretically. Du et al.  \cite{Du2018Gradient,Du2018Gradienta} proved that  GD can obtain zero training loss for deep and shallow neural networks, and Zou et al.~\cite{zou2018stochastic} analyzed the convergence of SGD   on  networks assembled with Rectified Linear Unit (\textbf{ReLU}) activation function.  All these results are built upon the overparameterized regime, and it is widely accepted that overparameterization enables the neural network to fit all training data and bring no harm to the power of its generalization \cite{Zhang2018understanding}.  In particular, the deep neural networks that evaluated positions and selected moves for the well-known program  AlphaGo are highly overparameterized~\cite{Silver2016mastering,Silver2017mastering}.

Another advance is the outstanding performance of Deep Residual Network~(\textbf{ResNet}), initially proposed by He et al.~\cite{He2016deep}. ResNet is arguably the most groundbreaking work in deep learning, in that it can train up to hundreds or even thousands of layers and still achieves compelling performance~\cite{Huang2017Densely}. Recent works have shown that ResNet can utilize the features in transfer learning with better efficiency, and its residual link structure enables faster convergence of the training loss \cite{Zagoruyko8wide,Szegedy2016inception}. Theoretically, Hardt and Ma  \cite{Hardt2016identity} proved that for any residual linear networks with arbitrary depth, there are no  spurious local optima. Du et al.~\cite{Du2018Gradient}  showed that in the scope of the convergence of GD  via overparameterization for different networks, training ResNet requires weaker conditions compared with fully-connected networks. Apart from that, the advantages of using residual connections remain to be discovered. 

In this paper, we contribute to the further understanding of the above two aspects and make improvements in the analysis of their performance. We use the same ResNet structure as in \cite{Du2018Gradient}. (Details of the network structure are provided in Section \ref{subsection....Problem Setup}.) The ResNet has $L$ layers with width $m.$ We will assume that the $n$ data points are not parallel with each other. Such an assumption holds in general for a standard dataset~\cite{Du2018Gradienta}. We focus on the empirical risk minimization problem given by the quadratic loss and   the activation function is $1$-Lipschitz and analytic. We show that if $m= \Omega\left( {n}^{3}L^2\right),$ then the empricial risk $R_S(\vtheta_t)$ under GD decays exponentially. More precisely, 
\begin{equation*} 
     R_S(\vtheta_t)\leq  R_S(\vtheta_0) \exp\left(-\frac{\lambda t}{n}\right),
\end{equation*}
where $\lambda$ is the least eigenvalue of $\mK^{[L+1]},$ definition of which can be found in \eqref{definition...definition of the th Gram Matrix L+1}.

It is worth noticing that
\begin{itemize}
    \item Given identical ResNet architectures, for the convergence of randomly initialized GD, our results improve upon~\cite{Du2018Gradient} in the required number of width per layer from $m=\Omega(n^4L^2)$ to $m=\Omega(n^3L^2)$~(Corollary \ref{corollary....for thm}).
    \item For fully-connected network,   the required amount of overparametrization in~\cite{Huang2019Dynamics} is $m=\Omega\left(n^32^{\fO(L)}\right).$  We are able to reproduce the result of Du~et~al.~\cite{Du2018Gradient},  showing that the exponential dependence of $m$ on the number of layers $L$ can be eliminated for ResNet.
\end{itemize}
 
Our work is mainly motivated by the framework proposed by Huang and  Yau~\cite{Huang2019Dynamics}, in which an infinite hierarchy of ordinary differential equations, the neural tangent hierarchy~(NTH) is derived. Huang and Yau applied NTH to a fully-connected feedforward network and showed that  it is possible for us to directly study the change of the neural tangent kernel~(NTK)~\cite{Jacot2018Neural},  and NTH outperforms kernel regressions using the corresponding limiting NTK.

Different from Huang and Yau's work in analyzing the fully-connected network,  ResNet is investigated in our paper. We exploit the benefits of using ResNet architecture for training and the advantage of choosing NTH over kernel regression. In Section \ref{section....Techinique}, an  of our technique is provided.

The organization of the paper is listed  as follows. In Section \ref{section....related wotks}, we  discuss some related works. In Section \ref{section....Preliminaries}, we give  some preliminary introductions to our problem. In Section \ref{section...main result}, we state our main results for ResNet using NTH. In Section \ref{section....Techinique}, we give out an outline of our approach. We give some conclusions and future direction in Section \ref{section....Discussion}. 
All the details of the proof are deferred to the Appendix.

\section{Related Works}\label{section....related wotks}
In this section, we survey some  previous works on aspects related to  optimization aspect of neural networks. 

Due to the non-convex nature of optimizing a neural network, it is challenging to locate the global optima. A popular way to analyze such optimization problems is to identify the geometric properties of each critical point. Some recent works have shown that   for the set of  functions satisfying: (i) all local minima are global and (ii)  every saddle point possesses a negative curvature~(i.e. it is non-degenerate), then GD can find  a global optima \cite{Du2017gradient,jin2017escape,Ge2015Escaping,lee2016gradient}. The objective functions of some shallow networks are in  such set \cite{Hardt2016identity,du2018power,nguyen2017loss,zhou2017critical}. The work~\cite{kawaguchi2016deep} indicates   that even for a three-layer linear network, there exists degenerate saddle points without negative curvature. So it is doubtful that  global convergence of first-order methods can be guaranteed for deep neural networks. 

Here we directly study the dynamics of the GD for a specific neural network architecture. This is another approach widely taken to obtain convergence results. Recently, it has been shown that if the network is over-parameterized, the SGD is able to find a global optima for two-layer networks~\cite{brutzkus2017globally,du2017gradientaa,ge2017learning,li2017convergence,ma2019comparative,Du2018Gradienta}, deep linear networks~\cite{arora2018convergence,Hardt2016identity,bartlett2019gradient} and ResNet~\cite{Du2018Gradient,Allen2018convergence}. Jacot et al. \cite{Jacot2018Neural} established that in the infinite width limit, the full batch GD corresponds to kernel regression predictor using the limiting NTK. Consequently, the convergence of GD for any `infinite-width' neural network can be characterized by a fixed kernel~\cite{Arora2019exact,Jacot2018Neural}.
This is the cornerstone upon which rests the compelling performance of over-parameterization . In the regime of finite width, many works have suggested that the network can  reduce training loss at exponential rate using GD~\cite{Du2018Gradient,Du2018Gradienta,Huang2019Dynamics,ma2019comparative,arora2018convergence}. As the width increases, there are going to be  small changes in the parameters during the whole training process~\cite{daniely2016toward,zou2018stochastic}. Such a variation of the parameters is crucial to the results we present, where the NTK of our ResNet behaves linearly in terms of its parameters throughout training~(Theorem~\ref{theorem...vary at 1/m}). Specifically, we use the results concerning the stability of the Gram matrices  in~\cite{Du2018Gradient} to demonstrate the benefits of choosing ResNet over fully-connected networks~(Proposition \ref{proposition......eigenvalue of order higher}).
% Moreover, we  observe that for ResNet,  it is more efficient to train only the top layer, see~\eqref{difference all and small} in Section \ref{subsection.....convergence rate of GD}. The observation above implies that ResNet can be well understood from a Gaussian Process(GP) viewpoint~\cite{lee2017deep,matthews2018gaussian,neal2012bayesian}. 

Some other works used optimal transport theory to analyze the mean field SGD dynamics of training neural networks in the large-width limit~\cite{sirignano2018mean,rotskoff2018parameters,chizat2018global,mei2018mean}. However, their results are limited to one hidden layer networks, and their normalization factor $1/m$ is different from our $1/\sqrt{m}$ which is commonly employed in modern networks~\cite{He2016deep,Glorot2010Understanding}.

\section{Preliminaries}\label{section....Preliminaries}
\subsection{Notations}\label{subsection...Notations}
We begin this section by introducing some notations that will be used in the rest of this paper.  We set $n$ for the number of input samples and $m$ for the width of the neural network, and a special vector $(1,1,1,\dots,1)^\T\in\sR^m$ by $\vone:=(1,1,1,\dots,1)^\T.$ We denote vector $L^2$ norm as $\Norm{\cdot}_2$, vector or function $L_{\infty}$ norm as $\Norm{\cdot}_{\infty}$, matrix spectral~(operator) norm as $\Norm{\cdot}_{2\to 2}$,  matrix Frobenius norm as $\Norm{\cdot}_{\mathrm{F}},$ matrix infinity norm as $\Norm{\cdot}_{\infty\to \infty},$ and a special matrix norm, matrix $2$ to infinity norm as $\Norm{\cdot}_{2\to \infty},$ which was shown to be useful in \cite{Du2018Gradienta}. For a semi-positive-definite  matrix  $\mA,$ we denote its smallest eigenvalue by $\lambda_{\min}(\mA).$ We use $\fO(\cdot)$ and $\Omega(\cdot)$ for the standard Big-O and Big-Omega notations. We take $C$ and $c$ for some universal constants, which might vary from line to line. 

Next we introduce a notion of high probability events that was also used in Huang and Yau~\cite[Section 1.3]{Huang2019Dynamics}. We say that an event holds with high probability if the probability of the  event is at least $1-\exp\left(-  m^{\eps}\right)$ for some constant $\eps>0.$ Since for a deep neural network in practice,  we always have $m\lesssim \mathrm{poly}(n)$ and $n\lesssim \mathrm{poly}(m)$ \cite{kawaguchi2019gradient,Allen2018convergence}, then the intersection of a collection of  many high probability events still has the same property as long as  the number of events is at most polynomial in $m$ and $n.$ This terminology is also used by  Huang and Yau~ \cite[Section 1.3]{Huang2019Dynamics}.

\subsection{Problem Setup}\label{subsection....Problem Setup}
 We shall focus on the empirical risk minimization problem given by quadratic loss:
 \begin{equation}\label{eq for definition...Loss function with order 1/n}
\min_{\vtheta}R_S(\vtheta)=\frac{1}{2n}\sum_{\alpha=1}^n\Norm{f(\vx_\alpha,\vtheta)-y_\alpha}_2^2.
\end{equation}
In the above $\{ \vx_\alpha\}_{\alpha=1}^n$ are the training inputs, 
$\{ y_\alpha\}_{\alpha=1}^n$ are the labels, $f(\vx_\alpha,\vtheta)$ is the prediction function, and $\vtheta$ are the parameters to be optimized, and their dependence is modeled by ResNet with $L$ hidden layers, each of which has $m$ neurons. Let $\vx\in \sR^d$ be an input sample, then the network has $d$ input nodes. Let $\vx^{[l]}$ be the output of layer~$l$ with $\vx^{[0]}=\vx.$ We consider the ResNet given below:
\begin{equation}\label{eq for definition....the h-th layer for Resnet}
 \begin{aligned}
 \vx^{[1]}&=\sqrt{\frac{c_{\sigma}}{m}} \sigma(\mW^{[1]}\vx), \\
 \vx^{[l]}&= \vx^{[l-1]}+\frac{c_{\mathrm{res}}}{L\sqrt{m}}\sigma(\mW^{[l]}\vx^{[l-1]}), \ \  \text{for}  \ 2\leq l\leq L, 
 \end{aligned} 
\end{equation}
 where $\sigma(\cdot)$ is the activation function applied coordinate-wisely to its input. We assume that $\sigma(\cdot)$  is $1$-Lipschitz and smooth. The constant $c_{\sigma}=\left(\Exp_{x\sim \fN(0,1)} \left[\sigma(x)^2\right]\right)^{-1}$ is a scaling factor serving for the purpose of normalization, and $0<c_{\mathrm{res}}<1$ is a small constant. Moreover, we have a series of weight matrices $\left\{\mW^{[l]}\right\}_{l=1}^L$. Note that 
$\mW^{[l]}\in \sR^{m\times d}$ for $l=1$, and $\mW^{[l]}\in \sR^{m\times m} $ for $2\leq l\leq L$. The output function of ResNet is 
\begin{equation}\label{eq for definition...the output of REsNet}
      f_{\mathrm{res}}(\vx,\vtheta) =\va^\T \vx^{[L]},
\end{equation}
where $\va\in \sR^m$ is the weight vector of the output layer. We denote the vector containing all parameters by $\vtheta= \left(\mathrm{vec} \left(\mW^{[L]}\right),\mathrm{vec} \left(\mW^{[L-1]}\right),\dots,\mathrm{vec} \left(\mW^{[1]}\right),\va\right).$ Such a~parameterization has been employed widely, see \cite{Du2018Gradient,Du2018Gradienta,Lee2019Wide}. We shall initialize the parameter vector $\vtheta_0$ following the adopted Xavier initialization scheme \cite{Glorot2010Understanding}, i.e.,~$W_{i,j}^{[l]}\sim \fN(0, 1), a_k\sim \fN(0,1)$, where $\fN(0,1)$ denotes the standard Gaussian distribution. Applying  the continuous time GD fot the loss function \eqref{eq for definition...Loss function with order 1/n}, we have for any time $t\geq 0$:
\begin{align}
    \partial_t \mW^{[l]}_t&=-\partial_{\mW^{[l]}}R_S(\vtheta_t), \ l=1,2,\cdots, L, \label{eq... augumented GD for Weight matrices}\\
    \partial_t \va_t&=-\partial_{\va}R_S(\vtheta_t).\label{eq... augumented GD for Weight vectors}
\end{align}
We use $\fX=\{\vx_1,\vx_2,...,\vx_n\}$ for the set of input samples, $\sigma\left(\vW^{[l]}\vx_{\alpha}^{[l-1]}\right)$ as $\sigma_{[l]}(\vx_{\alpha})$, and the diagonal matrix generated by the  $r$-th derivatives of  $\sigma_{[l]}(\vx_{\alpha})$, i.e., $\mathrm{diag} \left(\sigma^{(r)}(\mW^{[l]}\vx_{\alpha}^{[h-1]})\right)$ by $\vsigma^{(r)}_{[l]}(\vx_{\alpha}),$ where $r\geq 1.$ We also write the output function $f_{\mathrm{res}}(\vx_{\alpha},\vtheta_t)$ as $f_{\alpha}(t).$ Moreover, we shall define a series of special matrices. Using $\mI_{{m}}$ to signify the identity matrix in $\sR^{m\times m}$, we define for $2\leq l\leq L:$
\begin{equation}\label{def of matrices...Big E matrix for I+res}
    \mE_{t,\alpha}^{[l]}:=\left(\mI_{{m}}+\frac{c_{\mathrm{res}}}{L}\vsigma^{(1)}_{[l]}(\vx_{\alpha})\frac{\mW_t^{[l]}}{\sqrt{m}}\right).
\end{equation}

\noindent The above matrices are termed   \emph{skip-connection matrices}. Given $\left\{\mE_{t,\alpha}^{[l]}\right\}_{l=2}^L$, we let $\mE_{t,\alpha}^{[2:L]}$ be the direct parameterization of the end-to-end  mapping realized by the group of skip-connection matrices, i.e., $\mE_{t,\alpha}^{[2:L]}:= \mE_{t,\alpha}^{[L]}\mE_{t,\alpha}^{[L-1]}\cdots\mE_{t,\alpha}^{[2]},$ where we set $\mE_{t,\alpha}^{[i:j]}:=\mI_{m}, i>j$ for completeness.

With the above notations, the continuous time GD dynamics \eqref{eq... augumented GD for Weight matrices} and \eqref{eq... augumented GD for Weight vectors} can be written as 
\begin{align}
\partial_t \va_t&=-\frac{1}{n}\sum_{\beta=1}^n\vx_{\beta}^{[L]}(f_{\beta}(t)-y_{\beta}), \label{eqgroup...dynamic for parameter a_t}\\
\partial_t \mW^{[L]}_t&=-\frac{1}{n}\sum_{\beta=1}^n\frac{c_{\mathrm{res}}}{L\sqrt{m}} \mathrm{diag}\left(\vsigma^{(1)}_{[L]}(\vx_{\beta}) \va_t \right)\vone \otimes  (\vx_{\beta}^{[L-1]})^\T (f_{\beta}(t)-y_{\beta}), \label{eqgroup...dynamic for parameter W_H}\\
\partial_t \mW^{[l]}_t&=-\frac{1}{n}\sum_{\beta=1}^n\frac{c_{\mathrm{res}}}{L\sqrt{m}}\mathrm{diag}\left(\vsigma^{(1)}_{[l]}(\vx_{\beta})  \left(\mE_{t,\beta}^{[(l+1):L]}\right)^{\T}       \va_t \right)\vone \otimes  (\vx_{\beta}^{[l-1]})^\T (f_{\beta}(t)-y_{\beta}), \label{eqgroup...dynamic for parameter W_l}\\
\text{for}  \ l&=2,3,\cdots, L-1,\nonumber\\
\partial_t \mW^{[1]}_t&=-\frac{1}{n}\sum_{\beta=1}^n\sqrt{\frac{c_{\sigma}}{m}} \mathrm{diag}\left(\vsigma^{(1)}_{[1]}(\vx_{\beta})\left(\mE_{t,\beta}^{[2:L]}\right)^{\T}       \va_t\right)\vone  \otimes  (\vx_{\beta})^\T (f_{\beta}(t)-y_{\beta}).\label{eqgroup...original dynamic for parameter W_1}
\end{align}

\subsection{Neural Tangent Kernel}\label{subsection....NTK}
The Neural Tangent Kernel~(NTK) is introduced in Jacot et al.~\cite{Jacot2018Neural}. For any parametrized function $f(\vx,\vtheta_t),$ it is defined as: $$\fK_{\vtheta_t}(\vx_{\alpha},\vx_{\beta})=\left<\nabla_{\vtheta} f(\vx_{\alpha},\vtheta_t),\nabla_{\vtheta} f(\vx_{\beta},\vtheta_t) \right>.$$ 
In the situations where $f(\vx,\vtheta_t)$ is the output of a fully-connected feedforward network with appropriate scaling factor $1/\sqrt{m}$ for the parameters, there is an infinite width limit ($m\to\infty$) of $\fK_{\vtheta_t}(\vx_{\alpha},\vx_{\beta}),$ denoted by $\fK_{\infty}(\vx_{\alpha},\vx_{\beta}).$
This result allows them to capture the behavior of fully-connected feedforward network trained by GD in the infinite width limit. More precisely, the output function $f(\vx,\vtheta_t)$~evolves as a linear differential equation:
\begin{equation}\label{eq for sidenote...evolve as a linear equation}
    \partial_t f(\vx,\vtheta_t)= -\frac{1}{n}\sum_{\beta=1}^n\fK_{\infty}(\vx,\vx_{\beta}) (f(\vx_{\beta},\vtheta_t)-y_{\beta}).
\end{equation}
Note that the training dynamic is identical to the dynamics of kernel regression under gradient flow.  Also  we note that $\fK_{\infty}(\cdot)$  only depends on the training inputs. More importantly, $\fK_{\infty}(\cdot)$ is independent of the neural network parameters $\vtheta$~\cite{Du2018Gradient,Du2018Gradienta,Arora2019exact}. Similar result holds for our ResNet structure.
 
The finding above is groundbreaking in that it provides us an analytically~tractable equation to predict the behavior of GD. However,  the convergence~$\fK_{\vtheta_t}(\vx_{\alpha},\vx_{\beta})$ to $\fK_{\infty}(\vx_{\alpha},\vx_{\beta})$ is proved in the regime of infinite width.~This is  unrealistic in nature. Some concurrent works concerning various network structures \cite{Li2018Learning,Du2018Gradient,song2019quadratic,Du2018Gradienta,Arora2019Fine,arora2018convergence} have extended the result in~\cite{Jacot2018Neural} to the regime of finite width. For a two-layer network with ReLU,  the required width $m$ in Song and Yang \cite{song2019quadratic} is $m=\Omega(n^2 \mathrm{poly}(\log(n))) $ under some strong assumptions on the input data. For fully-connected feedforward network, Huang and Yau requires width $m=\Omega\left(n^3\log(n)2^{\fO(L)}\right).$ Finally, for ResNet which is the main focus of our paper, the required width for Du et al.~\cite{Du2018Gradient} is $m=\Omega(n^4L^2)$ with iteration complexity $T=\Omega(n^2 \log\left(\frac{1}{\eps}\right)).$ Our  Corollary \ref{corollary....for thm} only requires  $m=\Omega(n^3L^2)$ and $T=\Omega\left(n \log\left(\frac{1}{\eps}\right)\right).$

We now write out the NTK for ResNet:
\begin{equation}\label{eq for derivation...neural tangent kernel}
\begin{aligned}
    \partial_t(f_\alpha(t)-y_\alpha)&=-\nabla_{\vtheta} f_\alpha(t)\cdot\nabla_{\vtheta}R_S(\vtheta_t)\\
    &=-\frac{1}{n} \nabla_{\vtheta} f_\alpha(t)\cdot\sum_{\beta=1}^n\nabla_{\vtheta} f_\beta(t)(f_\beta(t)-y_{\beta})\\
    &=-\frac{1}{n}\sum_{\beta=1}^n \fK_{\vtheta_t}(\vx_{\alpha},\vx_{\beta})(f_\beta(t)-y_{\beta}),
\end{aligned}
\end{equation}
 using equations \eqref{eqgroup...dynamic for parameter a_t}, \eqref{eqgroup...dynamic for parameter W_H}, \eqref{eqgroup...dynamic for parameter W_l} and \eqref{eqgroup...original dynamic for parameter W_1}, the NTK $\fK_{\vtheta_t}(\cdot)$ is given below
\begin{equation}\label{eq for derivation...NTK written into separate sums of kernels}
    \fK_{\vtheta_t}(\vx_{\alpha},\vx_{\beta})=\left<\nabla_{\vtheta} f_\alpha(t),\nabla_{\vtheta} f_\beta(t)\right>=\sum_{l=1}^{L+1} \fG_t^{[l]}(\vx_\alpha,\vx_\beta),
\end{equation}
where
\begin{align}
    &\fG_t^{[1]}(\vx_\alpha,\vx_\beta)=\left<\partial_{\mW^{[1]}}f_\alpha(t),\partial_{\mW^{[1]}}f_\beta(t)\right> \nonumber\\
    &=\left<\sqrt{\frac{c_{\sigma}}{m}}\vsigma^{(1)}_{[1]}(\vx_{\alpha})\left(\mE_{t,\alpha}^{[2:L]}\right)^{\T}\va_t ,\sqrt{\frac{c_{\sigma}}{m}}\vsigma^{(1)}_{[1]}(\vx_{\beta})\left(\mE_{t,\beta}^{[2:L]}\right)^{\T}\va_t \right>\left<\vx_{\alpha},\vx_{\beta}\right>,\label{eq for definition....definition for kernel G1}
\end{align}
for $2\leq l \leq L,$
\begin{align}
    &\fG_t^{[l]}(\vx_\alpha,\vx_\beta)=\left<\partial_{\mW^{[l]}}f_\alpha(t),\partial_{\mW^{[l]}}f_\beta(t)\right> \nonumber\\
    &=\left<\frac{c_{\mathrm{res}}}{L\sqrt{m}}\vsigma^{(1)}_{[l]}(\vx_{\alpha})  \left(\mE_{t,\alpha}^{[(l+1):L]}\right)^{\T}       \va_t ,\frac{c_{\mathrm{res}}}{L\sqrt{m}}\vsigma^{(1)}_{[l]}(\vx_{\beta})  \left(\mE_{t,\beta}^{[(l+1):L]}\right)^{\T}       \va_t \right>\left<\vx^{[l-1]}_{\alpha},\vx^{[l-1]}_{\beta}\right>,\label{eq for definition....definition for kernel G L}
\end{align}
and finally
\begin{equation}\label{eq for definition....definition for kernel G L+1}
    \fG_t^{[L+1]}(\vx_\alpha,\vx_\beta)=\left<\partial_{\va}f_\alpha(t),\partial_{\va}f_\beta(t)\right>=\left< \vx_{\alpha}^{[L]},\vx_{\beta}^{[L]} \right>.
\end{equation}
We note that all the $\fG_t^{[l]}$ depends on $\vtheta_t$ but for simplicity it is not explicitly written.
\section{Main Results}\label{section...main result}
\subsection{Activation function and input samples}\label{subsection....Activation function and input samples}
In this paper, we will impose some following  technical conditions  on the activation function and input samples.
\begin{assump}\label{Assump...Assumption on activation functions}
The activation function $\sigma(\cdot)$ is smooth, and there exists a universal constant $0<C_L\leq 1$   such that for any $r\geq 1,$ its $r$-th  derivative and the function value at $0$ satisfy 
\begin{equation}\label{eq for assumption...the uniform constant on activation function +derivatives}
    \Abs{\sigma(0)},\Norm{\sigma^{(r)}(\cdot)}_{\infty}\leq C_L.
\end{equation}
\end{assump}

\noindent Note that Assumption \ref{Assump...Assumption on activation functions} can be satisfied by using the softplus activation: $$\sigma(x)=\ln(1+\exp(x)).$$ Some other functions also satisfy this assumption, for instance, the sigmoid activation:$$\sigma(x)=\frac{1}{1+\exp(-x)}.$$
\begin{assump}\label{Assump... ont he imput of the  samples}
The training inputs and labels satisfy $\Norm{\vx_{\alpha}}_2=1, \Abs{y_{\alpha}}\leq 1$, for any $ {\vx_\alpha}\in \fX$. All training inputs are non-parallel with each other, i.e.,~$\vx_{\alpha_1}\nparallel \vx_{\alpha_2},$ for any $\alpha_1\neq \alpha_2$. 
\end{assump}
Assumption \ref{Assump... ont he imput of the  samples} guarantees that some of the Gram matrices defined in Section \ref{subsection....gram matrices} are strictly positive definite.
\subsection{Gram Matrices}\label{subsection....gram matrices}
Recent works~\cite{Du2018Gradienta,zhang2019fast,song2019quadratic} have shown that the convergence of the outputs of neural networks are determined by the spectral property of Gram matrices. Here we define the key Gram matrices $\left\{\mK^{[l]} \right\}_{l=1}^{L+1}$ below. We more or less follow the definition of the Gram matrices partially from  \cite[Definition 6.1]{Du2018Gradient}. Also we note that the Gram matrices depends on the series of matrices~$\left\{\widetilde{\mK}^{[l]} \right\}_{l=1}^L,$ $\left\{\widetilde{\mA}^{[l]}\right\}_{l=1}^{L+1},$ and the series of vectors $\left\{\widetilde{\vb}^{[l]} \right\}_{l=1}^L,$ which are listed out as follows, for $2\leq l \leq L$
\begin{align}
\widetilde{\mK}^{[0]}_{ij}&=\left<\vx_i,\vx_j\right>,\nonumber\\
\widetilde{\mK}^{[1]}_{ij}&=\Exp_{(u,v)^{\T}\sim \fN\left(\vzero, \begin{pmatrix}\widetilde{\mK}_{ii}^{[0]}&\widetilde{\mK}_{ij}^{[0]}\\
\widetilde{\mK}_{ji}^{[0]}&\widetilde{\mK}_{jj}^{[0]}\end{pmatrix} \right)} c_{\sigma}\sigma(u)\sigma(v),\nonumber\\
\widetilde{\vb}^{[1]}_i&=\sqrt{c_{\sigma}}\Exp_{u\sim \fN(0,\widetilde{\mK}_{ii}^{[0]})} \left[\sigma(u)\right], \nonumber\\
\widetilde{\mA}^{[l]}_{ij}&=\begin{pmatrix}\widetilde{\mK}_{ii}^{[l-1]}&\widetilde{\mK}_{ij}^{[l-1]}\\
\widetilde{\mK}_{ji}^{[l-1]}&\widetilde{\mK}_{jj}^{[l-1]}\end{pmatrix},\nonumber\\
\widetilde{\mK}^{[l]}_{ij}&=\widetilde{\mK}_{ij}^{[l-1]}+\Exp_{(u,v)^{\T}\sim \fN\left(\vzero, \widetilde{\mA}^{[l]}_{ij}\right)}\left[\frac{c_{\mathrm{res}}\widetilde{\vb}_{i}^{[l-1]}\sigma(v) }{L} +\frac{c_{\mathrm{res}}\widetilde{\vb}_{j}^{[l-1]}\sigma(u) }{L}+\frac{c_{\mathrm{res}}^2\sigma(u)\sigma(v)}{L^2} \right],\nonumber\\
\widetilde{\vb}^{[l]}_i&=\widetilde{\vb}_{i}^{[l-1]}+\frac{c_{\mathrm{res}}}{L}\Exp_{u\sim \fN(0,\widetilde{\mK}_{ii}^{[l-1]})} \left[\sigma(u)\right], \nonumber\\
\widetilde{\mA}^{[L+1]}_{ij}&=
\begin{pmatrix}\widetilde{\mK}_{ii}^{[L]}&\widetilde{\mK}_{ij}^{[L]}\\
\widetilde{\mK}_{ji}^{[L]}&\widetilde{\mK}_{jj}^{[L]}\end{pmatrix},\nonumber
\end{align}
then we may proceed to the definitions of Gram matrices for $l=L+1$ and $L$.
\begin{defi}\label{def...Gram Matrix of order L+1}
Given the input samples $\fX=\{\vx_1,\vx_2,...,\vx_n\}$, the Gram matrix $\mK^{[L+1]}\in~\sR^{n\times n}$ is recursively defined as follows, for $1\leq i,j \leq n,  $
\begin{align}
\mK^{[L+1]}_{ij}&=\widetilde{\mK}_{ij}^{[L]}+\Exp_{(u,v)^{\T}\sim \fN\left(\vzero, \widetilde{\mA}^{[L+1]}_{ij}\right)}\left[\frac{c_{\mathrm{res}}\widetilde{\vb}_{i}^{[L]}\sigma(v) }{L} +\frac{c_{\mathrm{res}}\widetilde{\vb}_{j}^{[L]}\sigma(u) }{L}+\frac{c_{\mathrm{res}}^2\sigma(u)\sigma(v)}{L^2} \right].\label{definition...definition of the th Gram Matrix L+1}
\end{align}
\end{defi}
 
\begin{defi}\label{def...Gram Matrix of order L....}
Gram matrix $\mK^{[L]}\in \sR^{n\times n}$ is defined as follows,~for~$1\leq i,j\leq n$,
\begin{equation}\label{definition...definition of the Lth Gram Matrix} 
{\mK}^{[L]}_{ij}= \frac{c_{\mathrm{res}}^2}{L^2} \widetilde{\mK}_{ij}^{[L-1]}\Exp_{(u,v)^{\T}\sim \fN\left(\vzero, \widetilde{\mA}^{[L]}_{ij}\right)}\left[ \sigma^{(1)}(u)\sigma^{(1)}(v)\right]. 
\end{equation}
\end{defi}
Note that matrix ${\mK}^{[L]}$ coincides with ${\mK}^{[H]}$ given by \cite[Definition 6.1]{Du2018Gradient}. Now that given the definition of $\mK^{[L+1]}$ and $\mK^{[L]}$, we need to move forward to the definition of other Gram matrices $\left\{\mK^{[l]} \right\}_{l=1}^{L-1}.$ Since it is challenging to give   an explicit formula for the series of matrices $\left\{\mK^{[l]} \right\}_{l=1}^{L-1},$ we shall use a slightly different approach to write out the definitions for these matrices. 
\begin{defi}\label{definition....defintion of the rest of gram matrices}
Gram matrices $\mK^{[l]}\in \sR^{n\times n}$ are defined as follows, for $1\leq i,j\leq n, 2\leq l\leq L-1,$
\begin{equation}\label{definition....for the gram matrix K H-1...limiting case+m to infty}
    {\mK}^{[l]}_{ij}=\frac{c_{\mathrm{res}}^2}{L^2}\widetilde{\mK}_{ij}^{[l-1]} \lim_{m\to\infty}\frac{1}{m}\left<\vsigma^{(1)}_{[l]}(\vx_{i})  \left(\mE_{0,i}^{[(l+1):L]}\right)^{\T} \va_0,\vsigma^{(1)}_{[l]}(\vx_{j})  \left(\mE_{0,j}^{[(l+1):L]}\right)^{\T} \va_0\right>,
\end{equation}
and for $1\leq i,j \leq n,  l=1$,
\begin{equation}\label{definition....for the gram matrix K one ...limiting case+m to infty}
    {\mK}^{[1]}_{ij}=c_{\sigma}\widetilde{\mK}_{ij}^{[0]} \lim_{m\to\infty}\frac{1}{m}\left<\vsigma^{(1)}_{[1]}(\vx_{i})  \left(\mE_{0,i}^{[2:L]}\right)^{\T} \va_0,\vsigma^{(1)}_{[1]}(\vx_{j})  \left(\mE_{0,j}^{[2:L]}\right)^{\T} \va_0\right>.
\end{equation}
\end{defi}
\begin{rmk}
     Thanks to the Strong Law of Large Numbers, the above limit exists~\cite{Arora2019exact}. Since we send $m\to\infty$, the gram matrices only depend on the input samples and the activation patterns.
\end{rmk}
Moreover, in Section \ref{appendix section...least eigenvalue for Gram matrixces}, we show that under Assumption \ref{Assump... ont he imput of the  samples} and width $m\sim n^2$, $\mK^{[L+1]}$ and  $\mK^{[L]}$ are strictly positive definite.

\subsection{Convergence of Gradient Descent}\label{subsection.....convergence rate of GD}
Here  we state our main theorems for the NTH of ResNet.
\begin{thm}\label{thm......infinite family}
Under Assumption \ref{Assump...Assumption on activation functions} and \ref{Assump... ont he imput of the  samples}, there exists an infinite family of operators $\fK_t^{(r)}: \fX^r\to \sR, r\geq 2$ that describes the continuous time GD: 
\begin{equation}\label{eq for thm...neural tangent kernel .... order 2}
\partial_t(f_\alpha(t)-y_\alpha)=-\frac{1}{n}\sum_{\beta=1}^n \fK_t^{(2)}(\vx_{\alpha},\vx_{\beta})(f_\beta(t)-y_{\beta}),
\end{equation}
and for $r\geq 2,$ we have
\begin{equation}\label{eq for thm...neural tangent kernel .... higher order 2}
\partial_t\fK_t^{(r)}(\vx_{\alpha_1},\vx_{\alpha_2},\cdots, \vx_{\alpha_r})=-\frac{1}{n}\sum_{\beta=1}^n \fK_t^{(r+1)}(\vx_{\alpha_1},\vx_{\alpha_2},\cdots, \vx_{\alpha_r},\vx_{\beta})(f_\beta(t)-y_{\beta}).
\end{equation}
Moreover, with high probability w.r.t random initialization, for time $0\leq t \leq \sqrt{m}/{\LogLn}^{C'},$ the following holds: 
\begin{equation}\label{eq for thm...uniform estimate....order2}
    \Norm{\fK_t^{(2)}\left(\cdot\right)}_{\infty}\lesssim 1,
\end{equation}
and for $r\geq 3,$
\begin{equation}\label{eq for thm...uniform estimate....higher rder2}
    \Norm{\fK_t^{(r)}\left(\cdot\right)}_{\infty}\lesssim\frac{{\LogLn}^{C}}{m^{r/2-1}}.
\end{equation}
where the constant $C$   in general depends on  $r.$
\end{thm}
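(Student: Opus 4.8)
The plan is to derive the hierarchy by repeated differentiation and then control the growth of each operator by induction on $r$.

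\textbf{Step 1: Deriving the hierarchy.} The identity \eqref{eq for thm...neural tangent kernel .... order 2} with $\fK_t^{(2)} = \fK_{\vtheta_t}$ is exactly \eqref{eq for derivation...neural tangent kernel}. To obtain \eqref{eq for thm...neural tangent kernel .... higher order 2}, I would define $\fK_t^{(r)}$ recursively as (a suitable contraction of) the time derivative of $\fK_t^{(r-1)}$. Concretely, writing out $\partial_t \fK_t^{(r)}$ using the chain rule, every factor of $\partial_t\vtheta_t$ appearing through \eqref{eq... augumented GD for Weight matrices}--\eqref{eq... augumented GD for Weight vectors} carries a factor $-\tfrac{1}{n}\sum_\beta(\cdots)(f_\beta(t)-y_\beta)$; collecting the residuals $(f_\beta(t)-y_\beta)$ and the remaining $\vx$-dependent quantity defines $\fK_t^{(r+1)}$. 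This produces operators on $\fX^{r+1}$ and yields \eqref{eq for thm...neural tangent kernel .... higher order 2}. This step is essentially bookkeeping and follows the scheme of~\cite{Huang2019Dynamics}; the ResNet-specific part is that each differentiation may now also hit the skip-connection matrices $\mE_{t,\alpha}^{[l]}$ in \eqref{def of matrices...Big E matrix for I+res}, contributing extra terms, but each such term still comes with its own $1/\sqrt m$ and a bounded activation derivative.

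\textbf{Step 2: A priori bounds on the network quantities along the trajectory.} Before estimating the operators I would establish, with high probability and for $0\le t\le \sqrt m/\LogLn^{C'}$, uniform bounds on the pieces that enter $\fK_t^{(r)}$: the forward signals $\Norm{\vx_\alpha^{[l]}}_2$, the output-layer weights $\Norm{\va_t}_2$, the operator norms $\Norm{\mW_t^{[l]}/\sqrt m}_{2\to2}$ and hence $\Norm{\mE_{t,\alpha}^{[i:j]}}_{2\to2}$, and the derivative matrices $\vsigma^{(r)}_{[l]}$ (these are bounded by $C_L$ pointwise by Assumption~\ref{Assump...Assumption on activation functions}). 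Using $c_{\mathrm{res}}<1$, the telescoping ResNet structure \eqref{eq for definition....the h-th layer for Resnet} gives $\Norm{\vx_\alpha^{[l]}}_2 = O(1)$ \emph{uniformly in $L$} — this is precisely where ResNet beats the fully-connected case and where the $2^{\fO(L)}$ factor disappears. The deviation of all these quantities from their initial values must be shown to be $O(\mathrm{poly}(\ln m)/\sqrt m)$ over the stated time window, which follows from integrating the GD dynamics and a bootstrap/continuity argument together with $R_S(\vtheta_t)\le R_S(\vtheta_0)$.

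\textbf{Step 3: Induction on $r$.} For the base case $r=2$, the six scalar kernels $\fG_t^{[l]}$ in \eqref{eq for definition....definition for kernel G1}--\eqref{eq for definition....definition for kernel G L+1} are each a bounded inner product divided by $m$ and summed over $m$ coordinates (or over $L$ layers with a $1/L^2$ prefactor), so $\Norm{\fK_t^{(2)}}_\infty=O(1)$ by Step 2. For the inductive step, observe that passing from $\fK_t^{(r)}$ to $\fK_t^{(r+1)}$ via $\partial_t$ differentiates one more parameter block; each such differentiation introduces a new summation over $m$ hidden units accompanied by a new normalization factor of order $1/m$ (for the $\mW^{[1]},\va$ blocks) or $1/(L^2 m)$ together with a sum over $L$ layers (for the $\mW^{[l]}$, $2\le l\le L$, blocks), so the net effect is a gain of a factor $1/\sqrt m$ times a factor of $\mathrm{poly}(\ln m)$ from the a priori bounds of Step 2. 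Starting from $\Norm{\fK_t^{(2)}}_\infty\lesssim 1$ and applying this $r-2$ times yields $\Norm{\fK_t^{(r)}}_\infty\lesssim \LogLn^{C}/m^{r/2-1}$, with $C$ accumulating (linearly in $r$) from each step.

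\textbf{Main obstacle.} The routine parts are the chain-rule combinatorics and the per-step $1/\sqrt m$ counting; the genuinely delicate part is Step 2 — propagating the a priori bounds for the full time window $t\lesssim\sqrt m/\LogLn^{C'}$ rather than just $O(1)$ time. One must carefully track how the $\mathrm{poly}(\ln m)$ factors multiply across the $L$ layers without reintroducing exponential-in-$L$ blowup: the $c_{\mathrm{res}}/L$ damping in \eqref{def of matrices...Big E matrix for I+res} must be used to show $\Norm{\mE_{t,\alpha}^{[2:L]}}_{2\to2}$ stays $O(1)$ uniformly in $L$ even after the weights have drifted, which requires a bound like $\prod_{l=2}^L(1+\tfrac{c_{\mathrm{res}}}{L}\Norm{\vsigma^{(1)}_{[l]}}\,\Norm{\mW_t^{[l]}/\sqrt m}_{2\to2}) \le \exp(c_{\mathrm{res}}\cdot O(1))$. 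Getting the drift of $\Norm{\mW_t^{[l]}/\sqrt m}_{2\to2}$ to be small \emph{enough, uniformly over all $L$ layers simultaneously}, with only a union bound over polynomially many high-probability events, is the crux.
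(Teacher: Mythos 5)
Your high-level architecture (hierarchy via chain rule, a priori bounds, induction on $r$) matches the paper's, and you correctly flag the ResNet-specific crux of keeping $\Norm{\mE_{t,\alpha}^{[2:L]}}_{2\to2}$ bounded uniformly in $L$. However, there is a genuine gap in how Step 3 is supposed to produce polylog rather than polynomial-in-$m$ factors, and it traces back to an omission in Step 2.

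The issue is that for $r\geq 3$ the vectors entering $\fK_t^{(r)}$ live in $\sA_{r-2}$ and contain $\mathrm{diag}(\vg)$ factors, where $\vg$ itself belongs to some lower-level $\sA_{r'}$. Bounding $\Norm{\mathrm{diag}(\vg)}_{2\to 2}=\Norm{\vg}_{\infty}$ is unavoidable, and if you only carry $L^2$ or spectral-norm a priori bounds (as your Step 2 lists: $\Norm{\vx_\alpha^{[l]}}_2$, $\Norm{\va_t}_2$, $\Norm{\mW_t^{[l]}/\sqrt m}_{2\to2}$, $\Norm{\mE_{t,\alpha}^{[i:j]}}_{2\to2}$), the best you can extract is $\Norm{\vg}_{\infty}\leq\Norm{\vg}_2\lesssim\sqrt m$. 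That loses a factor of $\sqrt m$ per $\mathrm{diag}$ operation, and your claimed "$1/\sqrt m$ times $\mathrm{poly}(\ln m)$" gain per differentiation collapses. The paper's proof closes precisely by maintaining, in parallel with the $L^2$ bound $\xi_{\infty,r}(t)\lesssim\LogLn^{C}\sqrt m$, a sup-norm bound $\eta_{\infty,r}(t)\lesssim\LogLn^{C}$ on every expression in $\sA_r$, proved simultaneously by induction on the number of $\mathrm{diag}$ operations (Proposition~\ref{proposition... on the diagonal operation}). At initialization these $L^{\infty}$ bounds come from a Gaussian tail estimate and the matrix $2$-to-$\infty$ norm $\Norm{\mW}_{2\to\infty}\lesssim\LogLn^{C}$ (Lemmas~\ref{lemma..... on the initial of gaussian vectors}--\ref{lemma on the 2 to inmi}), and they are propagated in time via a Gronwall-type inequality for $\eta_{\infty,0}(t)$ and $\omega(t)=\sup\Norm{\mW_t^{[l]}}_{2\to\infty}$ (Proposition~\ref{proposition...estimate on A_0 for t >0}), not merely a bootstrap on $R_S(\vtheta_t)$. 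Without this $L^{\infty}$/$2$-to-$\infty$ layer of bookkeeping your induction would not yield $\LogLn^{C}/m^{r/2-1}$, so you should add it explicitly to Step~2 and identify the $\mathrm{diag}$ operator-norm issue as the reason it is needed.
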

\begin{rmk}
The operator $\fK_t^{(2)}(\cdot)$ by definition is the same as the NTK $\fK_{\vtheta_t}(\cdot)$   derived in~\eqref{eq for derivation...neural tangent kernel}.
\end{rmk}

We note that as $r$ increases, the pre-factor in \eqref{eq for thm...uniform estimate....higher rder2} explodes exponentially fast in $r.$ However, this will not  significantly affect   the convergence of GD. Firstly, only some lower order kernels need to be analyzed. As is shown in the proof of Corollary \ref{corollary....for thm}, only the  kernels up to order $r=4$ will be used.  Secondly, we shall recall the NTK $\fK_t^{(2)}(\cdot)$ derived in \eqref{eq for derivation...NTK written into separate sums of kernels} :
\begin{equation}\label{eq for derivation...NTK written into separate sums of kernels...replicate}
    \fK_t^{(2)}(\vx_{\alpha},\vx_{\beta})=\left<\nabla_{\vtheta} f_\alpha(t),\nabla_{\vtheta} f_\beta(t)\right>=\sum_{l=1}^{L+1} \fG_t^{[l]}(\vx_\alpha,\vx_\beta),
\end{equation}
in the case of Huang and Yau \cite{Huang2019Dynamics}, for a fully-connected feedforward network, since all those  kernels $\fG_t^{[l]}$ are positive definite, then the sum of the least eigenvalue of all  the  kernels $\fG_t^{[l]}$ is much larger than the counterpart of a single kernel, i.e.,
 $$\lambda_{\min}\left[\fK_t^{(2)}\left(\vx_{\alpha},\vx_{\beta}\right)\right]_{1\leq \alpha,\beta\leq n}\gg \lambda_{\min}\left[\fG_t^{[l]}\left(\vx_{\alpha},\vx_{\beta}\right)\right]_{1\leq \alpha,\beta\leq n}.$$
However,  adding up all the kernels will not give substantial increase to the least eigenvalue for the ResNet.  Since there exists a scaling factor~$\frac{1}{L^2}$ for the  kernels $\fG_t^{[l]},$ where $2\leq l \leq L,$ then heuristically, the gap of the least eigenvalues between $\fK_t^{(2)}(\cdot)$ and $\fG_t^{[L+1]}(\cdot)+\fG_t^{[1]}(\cdot)$  is at most of order $\fO\left(\frac{L-1}{L^2}\right)=\fO\left(\frac{1}{L}\right).$ Hence for ResNet, we shall see that even if the depth $L$ gets larger, the least eigenvalue of the NTK is still concentrated on the kernels $\fG_t^{[L+1]}(\cdot)$ and $\fG_t^{[1]}(\cdot).$ Thanks to that observation, we only need to bring the kernel  $\fG_t^{[L+1]}(\cdot)$ to the spotlight. We omit the analysis of  $\fG_t^{[1]}(\cdot)$ because it is not needed in our proof.

It was proven in Theorem \ref{thm......infinite family} and other literatures \cite{Du2018Gradient,yang2019scaling,Arora2019exact} that the  change  of NTK during the dynamics for Deep Neural Network is bounded  by $\fO\left(\frac{1}{\sqrt{m}}\right).$ However, it was observed by Lee et al.~\cite{Lee2019Wide} that the  time variation of the NTK is closer to $\fO\left(\frac{1}{m}\right),$ indicating that there exists a performance gap between the kernel regression using the limiting NTK and neural networks. Such an observation has been confirmed by Huang and Yau~\cite{Huang2019Dynamics} and listed out as Corollary $\mathrm{2.4.}$  in their paper. We use a different approach to obtain similar results and state them as Theorem \ref{theorem...vary at 1/m}.
\begin{thm}\label{theorem...vary at 1/m}
Under Assumption \ref{Assump...Assumption on activation functions} and \ref{Assump... ont he imput of the  samples}, with high probability w.r.t random initialization,  for time $0\leq t \leq\sqrt{m}/{\LogLn}^{C'},$ the following holds:

\begin{equation}\label{eq for thm...uniform estimate....order3}
    \Norm{\partial_t \fG_t^{[L+1]}\left(\cdot\right)}_{\infty}\lesssim\frac{\left(1+t\right){\LogLn}^{C}}{m},
\end{equation}
where   the constant $C$   is independent of the depth $L.$ Moreover, the pre-factor in \eqref{eq for thm...uniform estimate....order3} is at most of order $\fO\left({L^2}\right).$
\end{thm}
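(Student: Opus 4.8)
The plan is to differentiate the definition \eqref{eq for definition....definition for kernel G L+1} of $\fG_t^{[L+1]}(\vx_\alpha,\vx_\gamma)=\left<\vx_\alpha^{[L]},\vx_\gamma^{[L]}\right>$ (with $\alpha,\gamma$ fixed), so that
\begin{equation*}
\partial_t\fG_t^{[L+1]}(\vx_\alpha,\vx_\gamma)=\left<\partial_t\vx_\alpha^{[L]},\vx_\gamma^{[L]}\right>+\left<\vx_\alpha^{[L]},\partial_t\vx_\gamma^{[L]}\right>,
\end{equation*}
and then to expand $\partial_t\vx_\alpha^{[L]}$ layer by layer. Differentiating \eqref{eq for definition....the h-th layer for Resnet} in $t$ and using that the Jacobian of $\vx_\alpha^{[l]}$ with respect to $\vx_\alpha^{[l-1]}$ is exactly the skip-connection matrix $\mE_{t,\alpha}^{[l]}$ of \eqref{def of matrices...Big E matrix for I+res}, one obtains
\begin{equation*}
\partial_t\vx_\alpha^{[L]}=\sum_{l=2}^{L}\mE_{t,\alpha}^{[(l+1):L]}\,\frac{c_{\mathrm{res}}}{L\sqrt m}\,\vsigma^{(1)}_{[l]}(\vx_\alpha)\,(\partial_t\mW_t^{[l]})\,\vx_\alpha^{[l-1]}+\mE_{t,\alpha}^{[2:L]}\,\sqrt{\frac{c_{\sigma}}{m}}\,\vsigma^{(1)}_{[1]}(\vx_\alpha)\,(\partial_t\mW_t^{[1]})\,\vx_\alpha .
\end{equation*}
Substituting the GD dynamics \eqref{eqgroup...dynamic for parameter W_H}, \eqref{eqgroup...dynamic for parameter W_l} and \eqref{eqgroup...original dynamic for parameter W_1}, and contracting $(\partial_t\mW_t^{[l]})\vx_\alpha^{[l-1]}$ so that the rank-one factor $\vone\otimes(\vx_\beta^{[l-1]})^{\T}$ becomes the scalar $\left<\vx_\beta^{[l-1]},\vx_\alpha^{[l-1]}\right>$, I get $\partial_t\vx_\alpha^{[L]}$ as a sum of layer contributions, each carrying an explicit scalar prefactor $\tfrac{c_{\mathrm{res}}^2}{L^2 m}$ (for $2\leq l\leq L$) or $\tfrac{c_{\sigma}}{m}$ (for $l=1$), times $\tfrac1n\sum_{\beta}(\cdots)(f_\beta(t)-y_\beta)$, where $(\cdots)$ is a product of skip-connection matrices and diagonal matrices $\vsigma^{(1)}$ together with inner products of hidden features and a single factor of $\va_t$.

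I would then invoke the a priori bounds made available with high probability on the window $0\leq t\leq\sqrt m/\LogLn^{C'}$ by the proof of Theorem \ref{thm......infinite family}: (i) $\Norm{\vx_\alpha^{[l]}}_2=\fO(1)$ and $\Norm{\mE_{t,\alpha}^{[i:j]}}_{2\to2}=\fO(1)$ uniformly in $l,i,j,t$ --- this is where the $1/L$ normalization of the residual branch is essential, ruling out the $2^{\fO(L)}$ blow-up present for fully-connected networks; (ii) $\Norm{\va_t}_2=\fO(\sqrt m)$ on this window; (iii) since $\fK_t^{(2)}$ is a Gram matrix, hence positive semidefinite, \eqref{eq for thm...neural tangent kernel .... order 2} gives $\partial_t\Norm{f(t)-y}_2^2\leq 0$, so $\Norm{f(t)-y}_2\leq\Norm{f(0)-y}_2=\fO(\sqrt{n}\,\LogLn^{c})$, whence $\tfrac1n\sum_\beta\Abs{f_\beta(t)-y_\beta}\leq\tfrac1{\sqrt n}\Norm{f(t)-y}_2=\fO(\LogLn^{c})$; and (iv) the linear-in-$t$ drift estimates $\Norm{\va_t-\va_0}_2=\fO(t\,\LogLn^{c})$, $\Norm{\mW_t^{[l]}-\mW_0^{[l]}}_{2\to2}=\fO(t\,\LogLn^{c}/L)$, $\Norm{\vx_\alpha^{[l]}(t)-\vx_\alpha^{[l]}(0)}_2=\fO(t\,\LogLn^{c}/\sqrt m)$ and $\Norm{\mE_{t,\alpha}^{[i:j]}-\mE_{0,\alpha}^{[i:j]}}_{2\to2}=\fO(t\,\LogLn^{c}/\sqrt m)$, obtained by integrating the dynamics against (i)--(iii). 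Feeding (i)--(iii) in directly already yields $\Norm{\partial_t\vx_\alpha^{[L]}}_2=\fO(\LogLn^{c}/\sqrt m)$, which only reproduces the $\fO(1/\sqrt m)$ bound of Theorem \ref{thm......infinite family}; the extra power of $1/\sqrt m$ must come from the pairing with $\vx_\gamma^{[L]}$.

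The crux is therefore to rearrange each layer contribution, after it has been paired with $\vx_\gamma^{[L]}$, so that the one factor of $\va_t$ enters only through a scalar $\left<\va_t,\vw\right>$, for a suitable vector $\vw=\vw(l,\alpha,\beta,\gamma,t)$ (a product of skip-connection matrices, $\vsigma^{(1)}$'s and the feature $\vx_\gamma^{[L]}$) with $\Norm{\vw}_2=\fO(1)$ by (i). Let $\vw_0$ denote $\vw$ at $t=0$; then $\vw_0$ is a deterministic function of the initial weight matrices $\{\mW_0^{[l']}\}$ alone, hence independent of $\va_0$, so conditionally on those matrices $\left<\va_0,\vw_0\right>$ is centered Gaussian with variance $\Norm{\vw_0}_2^2=\fO(1)$, and a union bound over the $\mathrm{poly}(n)$ relevant tuples gives $\Abs{\left<\va_0,\vw_0\right>}=\fO(\LogLn^{1/2})$ with high probability. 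Writing $\left<\va_t,\vw\right>=\left<\va_0,\vw_0\right>+\left<\va_0,\vw-\vw_0\right>+\left<\va_t-\va_0,\vw\right>$ and bounding the last two terms by $\Norm{\va_0}_2\Norm{\vw-\vw_0}_2=\fO(\sqrt m)\,\fO(t\,\LogLn^{c}/\sqrt m)=\fO(t\,\LogLn^{c})$ (using the drift bounds (iv) on the factors composing $\vw$) and $\Norm{\va_t-\va_0}_2\Norm{\vw}_2=\fO(t\,\LogLn^{c})$, I obtain $\Abs{\left<\va_t,\vw\right>}=\fO((1+t)\,\LogLn^{c})$ --- the crucial replacement of the trivial estimate $\Norm{\va_t}_2=\fO(\sqrt m)$. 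Substituting this back, each layer-$l$ term with $2\leq l\leq L$ is $\fO((1+t)\LogLn^{c}/(L^2 m))$, so their sum over $l=2,\dots,L$ is $\fO((1+t)\LogLn^{c}/(Lm))$, while the $l=1$ term is $\fO((1+t)\LogLn^{c}/m)$; adding the symmetric $\alpha\leftrightarrow\gamma$ contribution and taking the supremum over $\alpha,\gamma$ gives $\Norm{\partial_t\fG_t^{[L+1]}(\cdot)}_{\infty}=\fO((1+t)\LogLn^{C}/m)$ with $C$ independent of $L$, the only residual $L$-dependence of the overall constant --- of order $\fO(L^2)$ --- stemming from the coarse bookkeeping in the drift estimates (iv) for $\mE_{t,\alpha}^{[2:L]}$ and $\vx_\alpha^{[L]}$.

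The main obstacle I anticipate is exactly this gain of an extra $1/\sqrt m$: one must argue that the a priori $\fO(\sqrt m)$-sized scalar $\left<\va_t,\vw\right>$ is in fact only $\fO((1+t)\,\mathrm{polylog}(m))$ because $\vw$ is incoherent with $\va$ at initialization and drifts only linearly in $t$, and this forces one first to establish the full suite of time-variation estimates (iv) with the sharp $1/\sqrt m$ and $1/L$ powers --- keeping track of which factors contribute a $1/L$ (every residual block does, the first layer does not) is the delicate part, and it is also what pins the final constant at $\fO(L^2)$ rather than something worse.
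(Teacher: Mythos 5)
Your argument is correct and reaches the target bound, but by a genuinely different decomposition than the paper. The paper stays at the level of the neural tangent hierarchy: it formally generates $\fG_t^{(3)}$ and $\fG_t^{(4)}$ by the replacement rules, invokes the uniform bound $\Norm{\fG_t^{(4)}}_\infty\lesssim C(4,L)\LogLn^C/m$ from Theorem~\ref{thm......infinite family} to control $\partial_t\fG_t^{(3)}$, integrates once in time to get $\Abs{\fG_t^{(3)}}\leq\Norm{\fG_0^{(3)}}_\infty+tC(4,L)\LogLn^C/m$, and only then uses the conditional Gaussianity of $\left<\va_0,\mB^\T\vx_{\alpha_1}^{[L]}\right>$ given $\{\mW_0^{[l]}\}$ to handle the $t=0$ piece. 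You bypass $\fG_t^{(3)}$ and $\fG_t^{(4)}$ altogether: you differentiate $\left<\vx_\alpha^{[L]},\vx_\gamma^{[L]}\right>$ directly by the chain rule, isolate the single factor of $\va_t$ into a scalar $\left<\va_t,\vw\right>$ with $\Norm{\vw}_2=\fO(1)$, and decompose $\left<\va_t,\vw\right>=\left<\va_0,\vw_0\right>+\left<\va_0,\vw-\vw_0\right>+\left<\va_t-\va_0,\vw\right>$, controlling the first term by the same conditional-Gaussian argument and the other two by linear-in-$t$ drift estimates on $\va$, $\mW^{[l]}$, $\vx^{[l]}$, $\vsigma^{(1)}_{[l]}$ and the partial products $\mE^{[i:j]}$. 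The two routes share the one idea that does all the work --- the incoherence of $\va_0$ with the forward features at initialization, which improves the trivial $\Norm{\va_0}_2=\fO(\sqrt m)$ to $\Abs{\left<\va_0,\vw_0\right>}=\fO(\LogLn^{1/2})$ --- and each replaces the other's time-integration step: the paper integrates the NTH ODE for $\fG_t^{(3)}$, while you integrate the parameter and feature dynamics directly. Your route is more elementary in that you never need the $r=4$ case of Theorem~\ref{thm......infinite family}, at the cost of having to establish the full ladder of drift estimates, including the $\Norm{\cdot}_{2\to\infty}$ control on $\mW_0^{[l]}$ and on $\mW_t^{[l]}-\mW_0^{[l]}$ that the $\vsigma^{(1)}_{[l]}$-drift bound (an $\Norm{\cdot}_\infty$ estimate) actually requires. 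One small remark: your own accounting, carried out carefully, yields a pre-factor of $\fO(1+1/L)=\fO(1)$ rather than $\fO(L^2)$ --- the layer sum $\sum_{l=2}^L\frac{c_{\mathrm{res}}^2}{L^2 m}$ is $\fO(1/(Lm))$ and the telescoping of the $\mE$-drift over $L-1$ factors gives $\fO(t\LogLn^c/\sqrt m)$ with no extra power of $L$ --- so your closing hedge that the drift estimates cost $\fO(L^2)$ is unnecessarily pessimistic; the $\fO(L^2)$ in the paper arises solely from its coarse count $C(4,L)$ of replacements in $\fG_t^{(4)}$, which does not weight each replacement by the $1/L$ factor it carries.
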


As a direct consequence of Theorem \ref{thm......infinite family}, for the ResNet defined in \eqref{eq for definition....the h-th layer for Resnet}, with  width  $m\sim n^3,$ the GD converges to zero training loss at a linear rate. The precise statement is given in the following.
\begin{cor}\label{corollary....for thm}
Under Assumption \ref{Assump...Assumption on activation functions} and \ref{Assump... ont he imput of the  samples},  with $\mK^{[L+1]}$ defined in \eqref{definition...definition of the th Gram Matrix L+1}, we have that for some $\lambda_0>0,$ $ \lambda_{\min}\left(\mK^{[L+1]}\right)> \lambda_0.$ Equipped with this, we have the following two statements.

There exists  a small constant $\gamma_1>0$, such that for   $m=\Omega\left( \left(\frac{n}{\lambda_0}\right)^{2+\gamma_1}\right) ,$ with high probability w.r.t random initialization, the following holds:
\begin{equation}\label{least eigenvalue}
    \lambda_{\min}\left[\fK_0^{(2)}\left(\vx_{\alpha},\vx_{\beta}\right)\right]_{1\leq \alpha,\beta\leq n}\geq\frac{3}{4}\lambda_0.
\end{equation}
Furthermore, there exists  a small constant $\gamma_2>0$, such that for  $m=\Omega\left( \left(\frac{n}{\lambda_0}\right)^{3+\gamma_2}L^2\ln \left(\frac{1}{\eps}\right)^2\right), $
where $\eps>0$ is the desired accuracy for $R_S(\vtheta_t),$ then the training loss  $R_S(\vtheta_t)$ decays exponentially w.r.t time $t,$
\begin{equation}\label{training loss equation}
     R_S(\vtheta_t)\leq  R_S(\vtheta_0) \exp\left(-\frac{\lambda t}{n}\right).
\end{equation}
\end{cor}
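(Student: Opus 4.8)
\emph{Proof proposal.}
The first two assertions amount to an initialization/concentration statement, and the positive-definiteness $\lambda_{\min}(\mK^{[L+1]})>\lambda_0$ is exactly what Section~\ref{appendix section...least eigenvalue for Gram matrixces} establishes from Assumption~\ref{Assump... ont he imput of the  samples} (non-parallel inputs together with a non-polynomial $\sigma$ make the limiting Gram matrices strictly positive definite); I fix $\lambda_0>0$ so that it also bounds $\lambda_{\min}\!\big(\lim_{m\to\infty}\big[\fG_0^{[L+1]}(\vx_\alpha,\vx_\beta)\big]_{1\leq\alpha,\beta\leq n}\big)$ from below. For the empirical kernel at $t=0$, $\fG_0^{[L+1]}(\vx_\alpha,\vx_\beta)=\langle\vx_\alpha^{[L]},\vx_\beta^{[L]}\rangle$ (see \eqref{eq for definition....definition for kernel G L+1}) is a sum of $m$ weakly dependent, uniformly (poly-logarithmically) small terms whose fluctuation around its mean is $\fO({\LogLn}^{C}/\sqrt m)$ with high probability, so a routine concentration bound gives $\Norm{\big[\fG_0^{[L+1]}(\vx_\alpha,\vx_\beta)\big]_{\alpha,\beta}-\lim_{m\to\infty}(\cdot)}_{2\to 2}\lesssim n\,{\LogLn}^{C}/\sqrt m$ with high probability. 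Hence $m\gtrsim(n/\lambda_0)^{2}{\LogLn}^{2C}$---which $m=\Omega((n/\lambda_0)^{2+\gamma_1})$ provides, the logarithm being absorbed into $\gamma_1$ since $m\lesssim\mathrm{poly}(n)$---forces $\lambda_{\min}\!\big(\big[\fG_0^{[L+1]}(\vx_\alpha,\vx_\beta)\big]_{\alpha,\beta}\big)\geq\tfrac34\lambda_0$. Each block $\big[\fG_0^{[l]}(\vx_\alpha,\vx_\beta)\big]_{\alpha,\beta}$ is a Gram matrix, hence positive semidefinite, so $\big[\fK_0^{(2)}(\vx_\alpha,\vx_\beta)\big]_{\alpha,\beta}=\sum_{l=1}^{L+1}\big[\fG_0^{[l]}(\vx_\alpha,\vx_\beta)\big]_{\alpha,\beta}\succeq\big[\fG_0^{[L+1]}(\vx_\alpha,\vx_\beta)\big]_{\alpha,\beta}$, which gives \eqref{least eigenvalue}.

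Write $\mK_t^{(2)}:=\big[\fK_t^{(2)}(\vx_\alpha,\vx_\beta)\big]_{1\leq\alpha,\beta\leq n}$ and $\mK_t^{[L+1]}:=\big[\fG_t^{[L+1]}(\vx_\alpha,\vx_\beta)\big]_{1\leq\alpha,\beta\leq n}$. Differentiating $R_S(\vtheta_t)=\frac{1}{2n}\sum_\alpha(f_\alpha(t)-y_\alpha)^2$ and inserting \eqref{eq for thm...neural tangent kernel .... order 2},
\begin{equation*}
\partial_t R_S(\vtheta_t)=-\frac{1}{n^2}\sum_{\alpha,\beta}(f_\alpha(t)-y_\alpha)\,\fK_t^{(2)}(\vx_\alpha,\vx_\beta)\,(f_\beta(t)-y_\beta)\leq-\frac{2\,\lambda_{\min}\!\big(\mK_t^{(2)}\big)}{n}\,R_S(\vtheta_t),
\end{equation*}
using that $\mK_t^{(2)}$, being the Gram matrix of $\{\nabla_{\vtheta}f_\alpha(t)\}_\alpha$, is symmetric positive semidefinite and $\sum_\alpha(f_\alpha(t)-y_\alpha)^2=2nR_S(\vtheta_t)$. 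So it suffices to keep $\lambda_{\min}(\mK_t^{(2)})\geq\tfrac12\lambda_0$ on the relevant time window, and then \eqref{training loss equation} holds with $\lambda=\lambda_0$. I would run a continuity argument: set $T_1:=\sup\{t\geq0:\lambda_{\min}(\mK_s^{(2)})\geq\tfrac12\lambda_0\ \text{for all }s\leq t\}$, which is positive since $\lambda_{\min}(\mK_0^{(2)})\geq\tfrac34\lambda_0$ and $t\mapsto\lambda_{\min}(\mK_t^{(2)})$ is continuous. On $[0,T_1]$ the display integrates to $R_S(\vtheta_t)\leq R_S(\vtheta_0)e^{-\lambda_0 t/n}$, so in particular $R_S(\vtheta_t)\leq R_S(\vtheta_0)$, and with high probability $R_S(\vtheta_0)$ is bounded up to logarithmic factors (since $\Norm{\vx_\alpha^{[L]}}_2=\fO(1)$, $f_\alpha(0)=\va_0^{\T}\vx_\alpha^{[L]}$ is sub-Gaussian, and $\Abs{y_\alpha}\leq1$).

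The heart of the proof is controlling $\mK_t^{(2)}$ on $[0,T_1]$, and this is where the skip connections---and the passage from $n^4$ to $n^3$---enter. Since every $\big[\fG_t^{[l]}(\vx_\alpha,\vx_\beta)\big]_{\alpha,\beta}$ is positive semidefinite, $\lambda_{\min}(\mK_t^{(2)})\geq\lambda_{\min}(\mK_t^{[L+1]})$, so only the drift of the single block $\fG_t^{[L+1]}$ needs to be bounded; here one invokes the $\fO\!\big((1+t)L^2/m\big)$ estimate of Theorem~\ref{theorem...vary at 1/m}, not the cruder $\fO(1/\sqrt m)$ bound implicit in Theorem~\ref{thm......infinite family}. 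On $[0,T_1]\subseteq[0,\sqrt m/{\LogLn}^{C'}]$,
\begin{equation*}
\Norm{\mK_{T_1}^{[L+1]}-\mK_0^{[L+1]}}_{2\to 2}\leq n\max_{\alpha,\beta}\int_0^{T_1}\bigl|\partial_s\fG_s^{[L+1]}(\vx_\alpha,\vx_\beta)\bigr|\,\mathrm{d}s\lesssim\frac{n\,(1+T_1)^2\,L^2\,{\LogLn}^{C}}{m},
\end{equation*}
so by Weyl's inequality $\lambda_{\min}(\mK_{T_1}^{(2)})\geq\lambda_{\min}(\mK_0^{[L+1]})-Cn(1+T_1)^2L^2{\LogLn}^{C}/m\geq\tfrac34\lambda_0-Cn(1+T_1)^2L^2{\LogLn}^{C}/m$. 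The time needed to reach accuracy $\eps$ is $T^\star:=\frac{n}{\lambda_0}\ln\!\big(R_S(\vtheta_0)/\eps\big)\asymp\frac{n}{\lambda_0}\ln(1/\eps)$, and substituting $T_1\leq T^\star$ shows the correction stays strictly below $\tfrac14\lambda_0$ precisely when $m\gtrsim n(T^\star)^2L^2{\LogLn}^{C}/\lambda_0=(n/\lambda_0)^3L^2\ln(1/\eps)^2{\LogLn}^{C}$, i.e.\ under $m=\Omega\big((n/\lambda_0)^{3+\gamma_2}L^2\ln(1/\eps)^2\big)$ (the logarithm absorbed into $\gamma_2$; the same condition keeps $T^\star\leq\sqrt m/{\LogLn}^{C'}$). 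Then $\lambda_{\min}(\mK_{T_1}^{(2)})>\tfrac12\lambda_0$, which by continuity persists past $T_1$ and contradicts its maximality unless $T_1\geq T^\star$; thus $\lambda_{\min}(\mK_t^{(2)})\geq\tfrac12\lambda_0$ on $[0,T^\star]$, the second paragraph's bound integrates to \eqref{training loss equation}, and $R_S(\vtheta_{T^\star})\leq\eps$.

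The scalar ODE for $R_S$ and the initialization concentration are routine; the load-bearing step is the drift control above. What makes the cubic (not quartic) power of $n$ possible is the combination of two facts that must be used together: (i) the least eigenvalue of the full NTK is concentrated on the block $\fG^{[L+1]}$---the remaining $L-1$ blocks carry a $1/L^2$ prefactor and, being positive semidefinite, only help---so it suffices to track the drift of that one block; and (ii) that block drifts only at the improved rate $\fO\!\big((1+t)L^2/m\big)$ of Theorem~\ref{theorem...vary at 1/m}, rather than $\fO(1/\sqrt m)$. I expect the main difficulty to be making the continuity argument genuinely self-consistent: the width requirement above was derived under the running hypotheses $t\leq T^\star$ and $R_S(\vtheta_t)\leq R_S(\vtheta_0)$, both of which the bootstrap must itself re-establish, and one must keep $T^\star$ inside the validity window $[0,\sqrt m/{\LogLn}^{C'}]$ of Theorems~\ref{thm......infinite family} and~\ref{theorem...vary at 1/m} while verifying that the quadratic-in-time integral still closes against $T^\star\asymp n\ln(1/\eps)/\lambda_0$ with only the stated power of $n/\lambda_0$.
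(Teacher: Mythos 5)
Your proposal is correct and follows essentially the same route as the paper's proof: lower-bound $\lambda_{\min}[\fK_0^{(2)}]$ via initialization concentration and positivity of the remaining kernel blocks (the paper's Proposition~\ref{proposition......eigenvalue of order higher}), control the drift of the single block $\fG_t^{[L+1]}$ using Theorem~\ref{theorem...vary at 1/m} rather than the cruder NTK bound, and close the bootstrap by comparing $T^\star\asymp (n/\lambda_0)\ln(1/\eps)$ against the time-to-half-eigenvalue-drop, yielding the cubic width requirement. The only cosmetic differences are the choice of normalization for $\lambda$ (you use $\lambda_0$ with threshold $\lambda_0/2$; the paper uses $\lambda=3\lambda_0/4$ with threshold $\lambda/2$) and a slightly looser $(1+T_1)^2$ packaging of the drift integral in place of the paper's explicit solution of the quadratic $C(4,L)(t^*)^2+C(3,L)t^*=\lambda m/(2\LogLn^C n)$, neither of which changes the scaling.
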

For convenience, we summarize the above statement  in the following manner. If
\begin{equation}\label{requirement for width.........all}
    m=\max\left\{\Omega\left( \left(\frac{n}{\lambda}\right)^{2+\gamma_1}\right), \Omega\left( \left(\frac{n}{\lambda}\right)^{3+\gamma_2}L^2\ln \left(\frac{1}{\eps}\right)^2\right)\right\},
\end{equation}
then the continuous GD converges exponentially and reaches the training accuracy $\eps$ with time complexity 
\begin{equation}
    T=\fO\left(\frac{n}{\lambda} \ln \left(\frac{1}{\eps}\right)\right).
\end{equation}

Before we end this section, we present a fair comparison of our result with others. First of all, Du et al.~\cite[Theorem 6.1.]{Du2018Gradient} required $m=\Omega\left(\frac{n^4}{\lambda_{\min}\left(\mK^{[L]}\right)^4L^6}\right).$ Since there is a  scaling factor $\frac{1}{L^2}$ in $\lambda_{\min}\left(\mK^{[L]}\right),$  this leads to $m=\Omega\left({n^4}L^2\right).$ Then their GD converges with iteration complexity $T=\Omega\left(n^2L^2 \ln \left(\frac{1}{\eps}\right)\right).$ Our Corollary \ref{corollary....for thm} improves this result in two ways: (i) The quartic dependence on $n$ is reduced directly to cubic dependence. (ii) A faster convergence of the training process of GD.

Second, our work serves as an extension of the NTH proposed by Huang and Yau~\cite{Huang2019Dynamics}, which captures the GD dynamics for a fully-connected feedforward network. We  show that not only it is possible to  study directly the time variation of NTK  for ResNet  using NTH, but that  ResNet possesses more stability in many aspects than fully-connected network. In particular, we improve their results in three aspects: (i) With ResNet architecture, the dependency of the amount of over-parameterization on the depth $L$ can be reduced from their $2^{\fO(L)}$ to $L^2.$ (ii) While the  time interval for the result  in \cite{Huang2019Dynamics} takes the form $0\leq t \leq m^{\frac{p}{2(p+1)}}/{\LogLn}^{C'}$ for some $p\geq 2$, we extend the interval to $0\leq t \leq \sqrt{m}/{\LogLn}^{C'}.$ Moreover, we are able to show even further that the results hold true for $t\to \infty$ using techniques from~\cite{ma2019analysis}. (iii) In the proof of Corollary $\mathrm{2.5.}$ in \cite{Huang2019Dynamics},  a further assumption  on the least eigenvalue of the NTK $\fK_t^{(2)}(\cdot)$ has been imposed directly, we show in  Appendix \ref{appendix section,,,,,random gram} that the least eigenvalue of the NTK $\fK_t^{(2)}(\cdot)$  can be guaranteed with high probability as long as the width $m$ satisfies $m=\Omega(n^2).$

\section{Technique Overview}\label{section....Techinique}
In this part we first describe some technical tools and present the sketch of proofs for Theorem \ref{thm......infinite family} and \ref{theorem...vary at 1/m} and Corollary \ref{corollary....for thm}. 
\subsection{Replacement Rules}\label{subsection....replacemetn rules}
We revisit the NTK \eqref{eq for derivation...neural tangent kernel}  derived in Section \ref{subsection....NTK}, 
\begin{equation}\label{eq for derivation...NTK written revisit}
    \fK_{\vtheta_t}(\vx_{\alpha},\vx_{\beta})=\sum_{l=1}^{L+1} \fG_t^{[l]}(\vx_\alpha,\vx_\beta),
\end{equation}
Notice that $\fK_{\vtheta_t}(\cdot)$ coincides with $\fK_{t}^{(2)}(\cdot)$ in \eqref{eq for derivation...neural tangent kernel}, and $\fK_{t}^{(2)}(\cdot)$ is the sum of $L+1$ terms, with each term being the inner product of vectors containing the quantities $\va_t,  \vx_{\alpha}^{[l]}, \mE_{t,\alpha}^{[l]}$ and $\vsigma^{(1)}_{[l]}(\vx_{\alpha}).$ We are able to write down the dynamics of  $\va_t,  \vx_{\alpha}^{[l]}, \mE_{t,\alpha}^{[l]}$ and $\vsigma^{(1)}_{[l]}(\vx_{\alpha})$ following GD, using equation \eqref{eqgroup...dynamic for parameter a_t}, \eqref{eqgroup...dynamic for parameter W_H} \eqref{eqgroup...dynamic for parameter W_l}, \eqref{eqgroup...original dynamic for parameter W_1} and chain rules. In order to shorten the space, we perform a similar  replacement rule as in Huang and Yau~\cite{Huang2019Dynamics}. For instance, the dynamics of  $\va_t$  is written as 
\begin{equation}
   \partial_t \va_t=-\frac{1}{n}\sum_{\beta=1}^n\frac{1}{\sqrt{m}}\sqrt{m}\vx_{\beta}^{[L]}(f_{\beta}(t)-y_{\beta}).\label{eq..a_tdynamics}
\end{equation}
For simplicity, we symbolize the dynamics \eqref{eq..a_tdynamics} as $\va_t\to\frac{1}{\sqrt{m}}\sqrt{m}\vx_{\beta}^{[L]}.$
Similarly, for the dynamics of $\vx_{\alpha}^{[l]}, 2\leq l \leq L$, we have
\begin{align*}
\sqrt{m}\vx_{\alpha}^{[1]}&\to
{\frac{c_\sigma}{\sqrt{m}}} \mathrm{diag}\left( \vsigma^{(1)}_{[1]}(\vx_{\alpha})\vsigma^{(1)}_{[1]}(\vx_{\beta})\left(\mE_{t,\beta}^{[2:L]}\right)^{\T}\va_t \right)\vone \left<\vx_\alpha,\vx_\beta  \right>,\\
\sqrt{m}\vx_{\alpha}^{[l]}&\to
{\frac{c_\sigma}{\sqrt{m}}} \mathrm{diag}\left(\mE_{t,\alpha}^{[2:l]} \vsigma^{(1)}_{[1]}(\vx_{\alpha})\vsigma^{(1)}_{[1]}(\vx_{\beta})\left(\mE_{t,\beta}^{[2:L]}\right)^{\T}\va_t \right)\vone \left<\vx_\alpha,\vx_\beta  \right>\\
&+\sum_{k=2}^{l} \frac{c_{\mathrm{res}}^2}{L^2 \sqrt {m}} \mathrm{diag}\left(\mE_{t,\alpha}^{[(k+1):l]}\vsigma^{(1)}_{[k]}(\vx_{\alpha})\vsigma^{(1)}_{[k]}(\vx_{\beta})\left(\mE_{t,\beta}^{[(k+1):L]}\right)^{\T}\va_t  \right) \vone \left<\vx_\alpha^{[k-1]},\vx_\beta^{[k-1]}  \right>,
\end{align*}
and of $\vsigma^{(1)}_{[l]}(\vx_{\alpha})$, for $2\leq l \leq L-1, r\geq 1 $
\begin{align*}
\vsigma_{[1]}^{(r)}(\vx_\alpha)&\to \sqrt{\frac{c_\sigma}{m}}\vsigma_{[1]}^{(r+1)}(\vx_\alpha)\mathrm{diag}\left( \vsigma^{(1)}_{[1]}(\vx_{\beta})\left(\mE_{t,\beta}^{[2:L]}\right)^{\T}\va_t \right) \left<\vx_\alpha,\vx_\beta  \right>,\\
\vsigma_{[2]}^{(r)}(\vx_\alpha)&\to\frac{c_{\mathrm{res}}}{L\sqrt {m}}\vsigma_{[2]}^{(r+1)}(\vx_\alpha)\mathrm{diag}\left( \vsigma^{(1)}_{[2]}(\vx_{\beta})\left(\mE_{t,\beta}^{[3:L]}\right)^{\T}\va_t \right) \left<\vx_\alpha^{[1]},\vx_\beta^{[1]}  \right>\\
&+{\frac{c_\sigma}{\sqrt{m}}} \vsigma_{[2]}^{(r+1)}(\vx_\alpha) \mathrm{diag}\left(\frac{\mW_t^{[2]}}{\sqrt{m}}\vsigma^{(1)}_{[1]}(\vx_{\alpha})\vsigma^{(1)}_{[1]}(\vx_{\beta})\left(\mE_{t,\beta}^{[2:L]}\right)^{\T}\va_t \right)\left<\vx_\alpha,\vx_\beta  \right>,\\
\vsigma_{[l+1]}^{(r)}(\vx_\alpha)&\to\frac{c_{\mathrm{res}}}{L\sqrt {m}}\vsigma_{[l+1]}^{(r+1)}(\vx_\alpha)\mathrm{diag}\left( \vsigma^{(1)}_{[l+1]}(\vx_{\beta})\left(\mE_{t,\beta}^{[(l+2):L]}\right)^{\T}\va_t \right) \left<\vx_\alpha^{[l]},\vx_\beta^{[l]}  \right>\\
+\sum_{k=2}^{l}&\frac{c_{\mathrm{res}}^2}{L^2 \sqrt {m}}\vsigma_{[l+1]}^{(r+1)}(\vx_\alpha) \mathrm{diag}\left(\frac{\mW_t^{[l+1]}}{\sqrt{m}}\mE_{t,\alpha}^{[(k+1):l]}\vsigma^{(1)}_{[k]}(\vx_{\alpha})\vsigma^{(1)}_{[k]}(\vx_{\beta})\left(\mE_{t,\beta}^{[(k+1):L]}\right)^{\T}\va_t  \right) \left<\vx_\alpha^{[k-1]},\vx_\beta^{[k-1]}  \right>\\
&+{\frac{c_\sigma}{\sqrt{m}}} \vsigma_{[l+1]}^{(r+1)}(\vx_\alpha)\mathrm{diag}\left(\frac{\mW_t^{[l+1]}}{\sqrt{m}}\mE_{t,\alpha}^{[2:l]} \vsigma^{(1)}_{[1]}(\vx_{\alpha})\vsigma^{(1)}_{[1]}(\vx_{\beta})\left(\mE_{t,\beta}^{[2:L]}\right)^{\T}\va_t \right) \left<\vx_\alpha,\vx_\beta  \right>,
\end{align*}
and finally of  $\mE_{t,\alpha}^{[l]}, 2\leq l\leq L-1$, 
\begin{align*}
\mE_{t,\alpha}^{[2]}&\to\frac{c_{\mathrm{res}}^2}{L^2\sqrt{m}}\mathrm{diag}\left(\vsigma_{[2]}^{(1)}(\vx_\alpha)\vsigma^{(1)}_{[2]}(\vx_{\beta})  \left(\mE_{t,\beta}^{[3:L]}\right)^{\T}       \va_t\right)\vone \otimes  (\frac{\sqrt{m}\vx_{\beta}^{[1]}}{m})^\T\\
&+\frac{c_{\mathrm{res}}}{L\sqrt {m}}\vsigma_{[2]}^{(2)}(\vx_\alpha)\mathrm{diag}\left( \vsigma^{(1)}_{[2]}(\vx_{\beta})\left(\mE_{t,\beta}^{[3:L]}\right)^{\T}\va_t \right)\frac{c_{\mathrm{res}}}{L}\frac{\mW_t^{[2]}}{\sqrt{m}} \left<\vx_\alpha^{[1]},\vx_\beta^{[1]}  \right>\\
&+{\frac{c_\sigma}{\sqrt{m}}} \vsigma_{[2]}^{(2)}(\vx_\alpha) \mathrm{diag}\left(\frac{\mW_t^{[2]}}{\sqrt{m}}\vsigma^{(1)}_{[1]}(\vx_{\alpha})\vsigma^{(1)}_{[1]}(\vx_{\beta})\left(\mE_{t,\beta}^{[2:L]}\right)^{\T}\va_t \right)\frac{c_{\mathrm{res}}}{L}\frac{\mW_t^{[2]}}{\sqrt{m}}\left<\vx_\alpha,\vx_\beta  \right>,  \\
\mE_{t,\alpha}^{[l+1]}&\to\frac{c_{\mathrm{res}}^2}{L^2\sqrt{m}}\mathrm{diag}\left(\vsigma_{[l+1]}^{(1)}(\vx_\alpha)\vsigma^{(1)}_{[l+1]}(\vx_{\beta})  \left(\mE_{t,\beta}^{[(l+2):L]}\right)^{\T}       \va_t \right)\vone\otimes  (\frac{\sqrt{m}\vx_{\beta}^{[l]}}{m})^\T\\
&+\frac{c_{\mathrm{res}}^2}{L^2\sqrt {m}}\vsigma_{[l+1]}^{(2)}(\vx_\alpha)\mathrm{diag}\left( \vsigma^{(1)}_{[l+1]}(\vx_{\beta})\left(\mE_{t,\beta}^{[(l+2):L]}\right)^{\T}\va_t \right)\frac{\mW_t^{[l+1]}}{\sqrt{m}} \left<\vx_\alpha^{[l]},\vx_\beta^{[l]}  \right>\\
+\sum_{k=2}^{l}&\frac{c_{\mathrm{res}}^3}{L^3 \sqrt {m}}\vsigma_{[l+1]}^{(2)}(\vx_\alpha) \mathrm{diag}\left(\frac{\mW_t^{[l+1]}}{\sqrt{m}}\mE_{t,\alpha}^{[(k+1):l]}\vsigma^{(1)}_{[k]}(\vx_{\alpha})\vsigma^{(1)}_{[k]}(\vx_{\beta})\left(\mE_{t,\beta}^{[(k+1):L]}\right)^{\T}\va_t  \right) \frac{\mW_t^{[l+1]}}{\sqrt{m}}\left<\vx_\alpha^{[k-1]},\vx_\beta^{[k-1]}  \right>\\
&+{\frac{c_\sigma}{\sqrt{m}}}\frac{c_{\mathrm{res}}}{L} \vsigma_{[l+1]}^{(2)}(\vx_\alpha)\mathrm{diag}\left(\frac{\mW_t^{[l+1]}}{\sqrt{m}}\mE_{t,\alpha}^{[2:l]} \vsigma^{(1)}_{[1]}(\vx_{\alpha})\vsigma^{(1)}_{[1]}(\vx_{\beta})\left(\mE_{t,\beta}^{[2:L]}\right)^{\T}\va_t \right) \frac{\mW_t^{[l+1]}}{\sqrt{m}}\left<\vx_\alpha,\vx_\beta  \right>.
\end{align*}
We   notice that the constant $\frac{c_{\mathrm{res}}}{L}$ plays an important part in our proof, so that the width per layer $m$ does not depend exponentially in depth $L$.

Using the above rules, the derivative for NTK $\fK_t^{(2)}(\cdot)$ is obtained in the following form
\begin{equation*}
\partial_t   \fK_t^{(2)}(\vx_{\alpha_1},\vx_{\alpha_2})=-\frac{1}{n} \sum_{\beta=1}^n   \fK_t^{(3)}(\vx_{\alpha_1},\vx_{\alpha_2},\vx_\beta)(f_\beta(t)-y_{\beta}),
\end{equation*}
where each term in $\fK_t^{(3)}(\vx_{\alpha_1},\vx_{\alpha_2},\vx_\beta)$ is the summation of all the terms generated from   $\fK_t^{(2)}(\vx_{\alpha_1},\vx_{\alpha_2})$ by performing the replacement procedure. In order to illustrate the idea, we give out an example in the proof of Theorem \ref{theorem...vary at 1/m} in Section \ref{subsection...sketch of proof}. 

By the same reasoning, we could obtain the higher order kernels inductively by performing all the possible replacements. For instance, for kernel $\fK_t^{(r)}(\vx_{\alpha_1},\vx_{\alpha_2},\dots,\vx_{\alpha_r} )$, we could obtain $\fK_t^{(r+1)}(\vx_{\alpha_1},\dots,\vx_{\alpha_r};\vx_\beta )$ given by the following Ordinary Differential Equation
\begin{equation*}
        \partial_t\fK_t^{(r)}(\vx_{\alpha_1},\vx_{\alpha_2}\dots,\vx_{\alpha_r})=-\frac{1}{n}\sum_{\beta=1}^n \fK_t^{(r+1)}(\vx_{\alpha_1},\vx_{\alpha_2}\dots,\vx_{\alpha_r},\vx_{\beta})(f_\beta (t)-y_{\beta}).
\end{equation*}
In order to describe the vectors appearing in $\fK_t^{(r)}(\vx_{\alpha_1},\vx_{\alpha_2}\dots,\vx_{\alpha_r})$, we need to introduce some systematic notations. 

\subsection{Hierarchical Sets of Kernel Expressions}\label{subsection...Hierarchy of Sets}
The hierarchy of sets are proposed originally by Huang and Yau in \cite{Huang2019Dynamics}. We denote $\sA_0$ the first set of expressions in the following form, which corresponds to the terms in $\fK_t^{(2)}(\vx_{\alpha_1},\vx_{\alpha_2})$. We define  $\sA_0$ as :
\begin{equation}\label{definition of hierarchy of sets... the set  A0}
    \sA_0\triangleq\left\{\ve_s\ve_{s-1}\dots \ve_1\ve_0: 0\leq s\leq 4L\right\},
\end{equation}
where $\ve_j$ is chosen following the rules:
\begin{align}
    \ve_0\in\left\{\va_t,\{ \sqrt{m} \vx_\beta^{[1]},\sqrt{m} \vx_\beta^{[2]},\dots,\sqrt{m} \vx_\beta^{[L]}  \}_{1\leq \beta\leq n}   \right\},\label{definition of sets A0...for kernel order 2}
\end{align}
and for $1\leq j\leq s,$
\begin{align}
    \ve_j \in \left\{\left\{ \mE_{t,\beta}^{[2]},\left(\mE_{t,\beta}^{[2]}\right)^{\T},\dots,\mE_{t,\beta}^{[L]},\left(\mE_{t,\beta}^{[L]}\right)^{\T}\right\}_{1\leq \beta\leq n},\left\{\vsigma_{[1]}^{(1)}(\vx_\beta),\dots,\vsigma_{[L]}^{(1)}(\vx_\beta)\right\}_{1\leq \beta\leq n}\right\}.\label{definition of sets A r...for kernel order 2}
\end{align}
From equation \eqref{eq for definition....definition for kernel G1}, \eqref{eq for definition....definition for kernel G L} and \eqref{eq for definition....definition for kernel G L+1}, each term in $\fK_t^{(2)}(\vx_{\alpha_1},\vx_{\alpha_2})$ writes as 
\begin{equation*}
    \frac{\left<\vv_1(t),\vv_2(t)\right>}{m} \ \text{or} \   \frac{\left<\vv_1(t),\vv_2(t)\right>}{m} \frac{\left<\vv_3(t),\vv_4(t)\right>}{m},
\end{equation*}
where $\vv_1(t),\vv_2(t),\vv_3(t),\vv_4(t) \in \sA_0.$ Note that $\vv_i(t)$ can take the value of 
$\vv_i(t)=\sqrt{m} \vx_{\alpha},$
which are not contained in $\sA_0$, however such singularity is not a big issue, see Appendix~\ref{Appen subsection....Aprioir L2 bounds for sets in A0}. We remark that compared with \cite{Huang2019Dynamics}, $\ve_j$ is chosen in a way different from ours, the counterpart in \cite{Huang2019Dynamics} is chosen from the set $$\left\{\left\{ \frac{\mW_t^{[2]}}{\sqrt{m}},\left(\frac{\mW_t^{[2]}}{\sqrt{m}}\right)^{\T},\dots,\frac{\mW_t^{[L]}}{\sqrt{m}},\left(\frac{\mW_t^{[L]}}{\sqrt{m}}\right)^{\T}\right\}_{1\leq \beta\leq n},\left\{\vsigma_{[1]}^{(1)}(\vx_\beta),\dots,\vsigma_{[L]}^{(1)}(\vx_\beta)\right\}_{1\leq \beta\leq n}\right\}.$$
Such changes arise from the change of the network structure, and it has been shown in Appendix \ref{Appen subsection....Aprioir L2 bounds for sets in A0} that the group of skip-connection matrices $\mE_{t,\beta}^{[l]}$ possesses more stability than  $\frac{\mW_t^{[l]}}{\sqrt{m}}$.

Moreover,  given  the construction of  $\sA_0,\sA_1,\dots,\sA_r$, we denote $\sA_{r+1}$ the set of expressions in the following form:
\begin{equation*} 
    \sA_{r+1}\triangleq\left\{\ve_s\ve_{s-1}\dots \ve_1\ve_0: 0\leq s\leq 4L\right\},
\end{equation*}  
where $e_j$ is chosen from the following sets:
\begin{align}
    e_0\in\left\{\va_t,\vone, \{ \sqrt{m} \vx_\beta^{[1]},\sqrt{m} \vx_\beta^{[2]},\dots,\sqrt{m} \vx_\beta^{[L]}  \}_{1\leq \beta\leq n}   \right\},\label{definition of sets...kernel of order R}
\end{align}
and for $1\leq j\leq s,$ we have that each $\ve_j$ comes from one of the three following sets
\begin{align*}
   & \left\{\left\{ \mE_{t,\beta}^{[2]},\left(\mE_{t,\beta}^{[2]}\right)^{\T},\dots,\mE_{t,\beta}^{[L]},\left(\mE_{t,\beta}^{[L]}\right)^{\T}\right\}_{1\leq \beta\leq n},\left\{\vsigma_{[1]}^{(1)}(\vx_\beta),\dots,\vsigma_{[L]}^{(1)}(\vx_\beta)\right\}_{1\leq \beta\leq n}\right\},\label{definition of sets  case one...kernel of order R}\\
    &\left\{ \mathrm{diag}(\vg), \ \vg\in \sA_0\cup\sA_1\cup\dots\cup \sA_r  \right\},\\
 &\Big\{\vsigma_{[l]}^{(u+1)}(\vx_\beta) \mathrm{diag} \left(\left(\frac{\mW_t^{[l]}}{\sqrt{m}}\right)^{Q_1} \vg_1\right)\dots\mathrm{diag} \left(\left(\frac{\mW_t^{[l]}}{\sqrt{m}}\right)^{Q_u} \vg_u\right)\left(\frac{c_{\mathrm{res}}}{L}\frac{\mW_t^{[l]}}{\sqrt{m}}\right)^{Q_{u+1}},\nonumber\\
    &\left(\frac{c_{\mathrm{res}}}{L}\frac{\left(\mW_t^{[l]}\right)^\T}{\sqrt{m}}\right)^{Q_{u+1}}\vsigma_{[l]}^{(u+1)}(\vx_\beta) \mathrm{diag} \left(\left(\frac{\mW_t^{[l]}}{\sqrt{m}}\right)^{Q_1} \vg_1\right)\dots\mathrm{diag} \left(\left(\frac{\mW_t^{[l]}}{\sqrt{m}}\right)^{Q_u} \vg_u\right):2\leq l \leq L,\nonumber\\
 &~~~~~~1\leq \beta \leq n, 1\leq u\leq r, \  \vg_1,\vg_2\dots \vg_u\in \sA_0\cup\sA_1\cup\dots\cup \sA_r \  \text{and} \  Q_1,Q_2\dots Q_{u+1}\in \{0,1\}\Big  \},
\end{align*}
 the maximum possible total number of diag operations for any element in $\sA_r$ is $r$, i.e., if $\vv(t)\in \sA_r,$ it contains at most $r$ diag operations. We observe from the replacement rules, there will be a scaling of $\frac{1}{\sqrt{m}}$ whenever we take derivatives, hence inductively,   for each term in kernel $\fK_t^{(p)}(\vx_{\alpha_1},\vx_{\alpha_2},\dots,\vx_{\alpha_p} )$, it takes the form \begin{equation}\label{equation for generation....of the kernel of order p}
    \frac{1}{m^{p/2-1}}\prod_{j=1}^s\frac{\left<\vv_{2j-1}(t),\vv_{2j}(t)\right>}{m}, \ 1\leq s\leq p, \ \ \vv_i(t)\in \sA_0\cup\sA_1\cup\dots\cup \sA_{p-2},
\end{equation}
which is a direct consequence of Proposition \ref{prop...A.1}.
Note  that $\vv_i(t)$ can still take the value of 
$\vv_i(t)=\sqrt{m} \vx_{\alpha},$
which are not in the set $\sA_{r+1}.$
Huang and Yau also obtained \eqref{equation for generation....of the kernel of order p} in equation (3.8) in \cite{Huang2019Dynamics}, and they use the tensor program proposed by Yang \cite{yang2019scaling} to estimate the initial value of the kernel $\fK_0^{(p)}(\vx_{\alpha_1},\vx_{\alpha_2},\dots,\vx_{\alpha_p} ).$ They showed that for each vector $\vv_j(t)$ in \eqref{equation for generation....of the kernel of order p} at $t=0$,  it is  a linear  combination of projections of independent Gaussian vectors.  Hence, if we consider such quantity
\begin{equation*}
    \eta(t)=\left\{\Norm{\vv(t)}_{\infty}:  \vv(t)\in \sA_0\cup\sA_1\dots\cup\sA_{r}\right\},
\end{equation*}
    at $t=0$, since $\vv(0)$ is  a linear  combination of projections of independent Gaussian vectors, then with high probability, $\eta(0)\lesssim {\LogLn}^{C}.$ For $t>0,$ Huang and Yau derived a self-consistent Ordinary Differential Inequality   for $\eta(t)$ :
\begin{align}
    \partial_t^{(p+1)}\eta(t) \lesssim \frac{\eta(t)^{2p}}{m^{p/2}}, \label{ode..huang}\\
    \eta(0)\lesssim {\LogLn}^{C}, \label{ode..condition}
\end{align}
then it holds that $\eta(t)\lesssim {\LogLn}^{C}$ for time $0\leq t\leq m^{\frac{p}{2p+1}}/{\LogLn}^{C'}.$

Our approach is different from them, instead of using tensor programs, we use a special  matrix norm, the $2$ to infinity matrix norm , to show that  $\eta(0)\lesssim {\LogLn}^{C},$ and we show a  Gronwall-type inequality for $\eta(t)$:
\begin{align*}
    \eta(t) \lesssim {\LogLn}^{C}+ \frac{1}{\sqrt{m}} \int_{0}^t \eta(s) \diff s,
\end{align*}
then it follows that for time $0\leq t \leq \sqrt{m}/\LogLn^{C'},$ $\eta(t)\lesssim {\LogLn}^{C}$ holds. Then~\eqref{eq for thm...neural tangent kernel .... order 2} and~\eqref{eq for thm...neural tangent kernel .... higher order 2} in Theorem~\ref{thm......infinite family} holds, and we are able to show that the kernels of higher order vary slowly, which brings us the proof of Theorem \ref{theorem...vary at 1/m}.

\subsection{Least Eigenvalue for Randomly Initialized Matrix}\label{subsection....least eigenvalue}
Firstly, since $\mK^{[L]}$ is a recursively defined matrix, we use results in  Du et al.~\cite{Du2018Gradient} to show that the Gram matrix $\mK^{[L]}$ is positive definite. Second,  we need to analyze how the
difference between $\mG^{[1]}$ and $\mK^{[1]}$, termed  \emph{the
perturbation} by Du et al.~\cite{Du2018Gradient},  from lower layers propagates to the $L$-th layer. We quantitatively characterize how large such propagation dynamics would be and   rediscover  that ResNet architecture serves as a stabilizer for such propagation~(Proposition \ref{proposition..................Initialization Norms layer 1}). Our proof is slightly different from \cite{Du2018Gradient}, where we use the concentration inequality for Lipschitz functions. We refer readers to Appendix \ref{appendix section,,,,,random gram} for details.

\subsection{Sketch of Proof}\label{subsection...sketch of proof}
We use Figure \ref{diagram} to  illustrate    the ideas of the proofs. Due to space contraints, all the proofs of the techinical Lemmas and Propositions are provided in Supplementary Material. Note that for the quantities in Figure \ref{diagram}, $\lambda_0$ is the least eigenvalue of $\widetilde{\mK}_{ij}^{[1]},$  $\xi_{\infty,r}(t)=\sup_{0\leq t'\leq t}\left\{\Norm{\vv(t')}_2:  \vv(t')\in \sA_r\right\},$ and $\eta_{\infty,r}(t)=\sup_{0\leq t'\leq t}\left\{ \Norm{\vv(t')}_{\infty}:  \vv(t')\in \sA_r\right\},$ where $ r\geq 0.$ Now we proceed to the Proof of Theorem \ref{thm......infinite family}.
\begin{figure}[h]
\caption{Diagram of the Proof of Main Theorems}
\centering
\includegraphics[width=\textwidth]{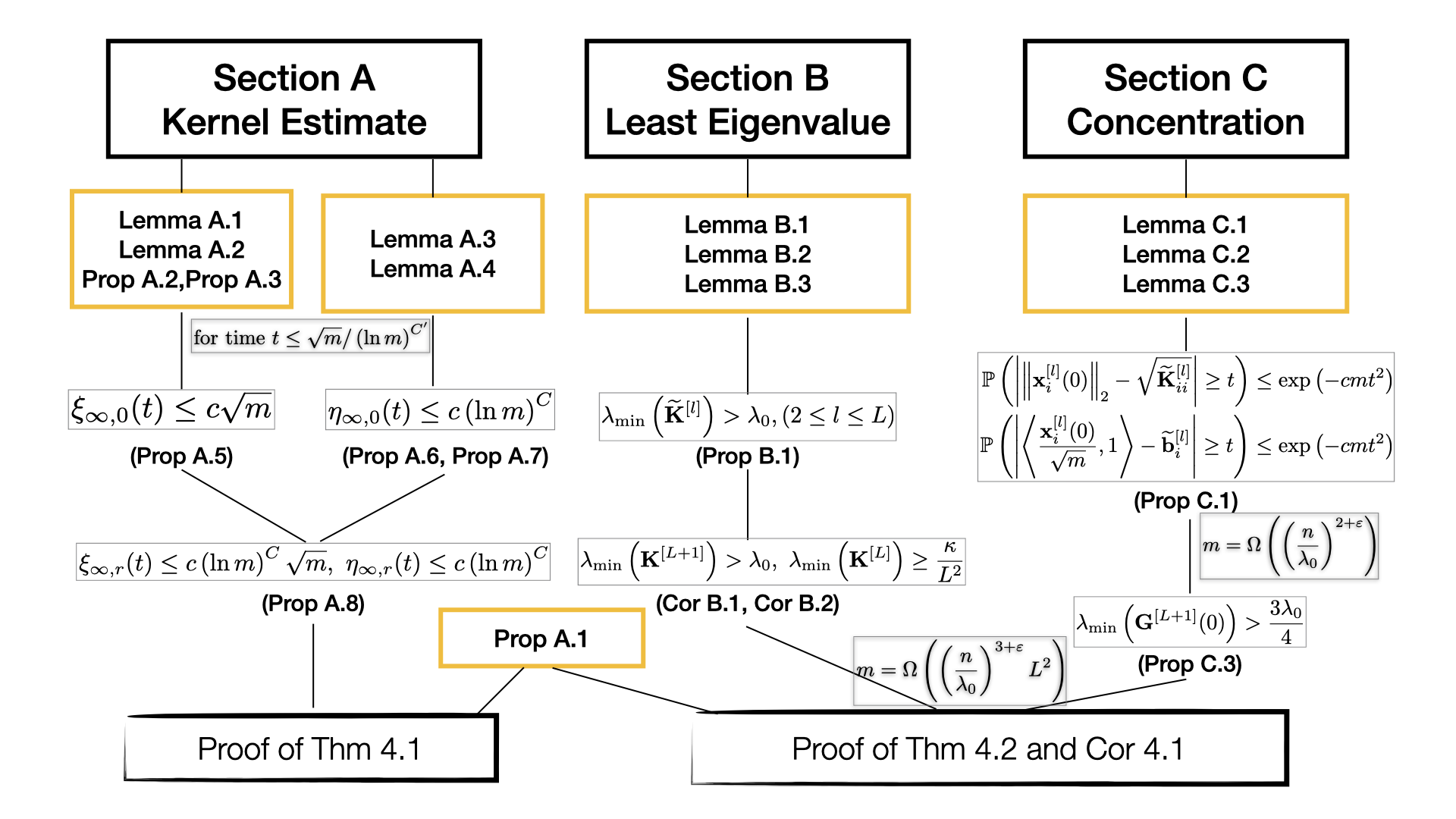}
\label{diagram}
\end{figure}

\begin{proof}[Proof of Theorem \ref{thm......infinite family}]
Since each term in kernel $\fK_t^{(r)}(\vx_{\alpha_1},\vx_{\alpha_2},\dots,\vx_{\alpha_r} ),$ it takes the form \begin{equation*}
    \frac{1}{m^{r/2-1}}\prod_{j=1}^s\frac{\left<\vv_{2j-1}(t),\vv_{2j}(t)\right>}{m}, \ 1\leq s\leq r, \ \ \vv_i(t)\in \sA_0\cup\sA_1\cup\dots\cup \sA_{r-2},
\end{equation*}
then for time  $0\leq t \leq \sqrt{m}/{\LogLn}^{C'},$ $r\geq 3$
\begin{align*}
    \Norm{\fK_t^{(2)}(\cdot)}_{\infty}&\lesssim  \left(\frac{\xi_{\infty,0}(t)^2}{m}\right)^2 \lesssim 1,\\
     \Norm{\fK_t^{(r)}(\cdot)}_{\infty}&\lesssim   \frac{1}{m^{r/2-1}} \left(\frac{\xi_{\infty,r}(t)^2}{m}\right)^s\lesssim    \frac{1}{m^{r/2-1}} \left(\frac{\left(c {\LogLn}^{C}\sqrt{m}\right)^2}{m}\right)^r \lesssim \frac{{\LogLn}^{2rC}}{m^{r/2-1}}.
\end{align*}
\end{proof}
Now we sketch the proofs for Theorem \ref{theorem...vary at 1/m} and Corollary \ref{corollary....for thm}. Details can be found in Appendix \ref{appendix...subsection for proof of thm and cor}.
\begin{proof}[Sketch of the Proof of Theorem  \ref{theorem...vary at 1/m}]
Since there exists  $\frac{1}{L^2}$ scaling in some kernels, we use $C(r,L)$ to denote the `effective terms' in each kernel.
We denote $\fG_t^{[L+1]}\left(\cdot\right)$ by $\fG_t^{[2]}\left(\cdot\right),$ i.e., $\fG_t^{(2)}(\cdot):=\fG_t^{[L+1]}(\cdot) ,$ it's natural for us to get that $C(2,L)=\fO(1).$  

 Next, we apply the replacement rule, all the possible terms generated from $\fG_t^{(2)}(\cdot)$ are
\begin{align*}
&\fG_t^{(2)}(\vx_{\alpha_{1}},\vx_{\alpha_{2}})=\left<\vx_{\alpha_1}^{[L]},\vx_{\alpha_2}^{[L]}\right>\to \fG_t^{(3)}(\vx_{\alpha_{1}},\vx_{\alpha_{2}}, \vx_{\beta})\\
 &\fG_t^{(3)}(\vx_{\alpha_{1}},\vx_{\alpha_{2}}, \vx_{\beta})={\frac{c_\sigma}{m}}\underbrace{\left< \mathrm{diag}\left(\mE_{t,\alpha_1}^{[2:L]} \vsigma^{(1)}_{[1]}(\vx_{\alpha_1})\vsigma^{(1)}_{[1]}(\vx_{\beta})\left(\mE_{t,\beta}^{[2:L]}\right)^{\T}\va_t \right)\vone,\vx_{\alpha_2}^{[L]}\right> \left<\vx_{\alpha_1},\vx_\beta  \right>}_{\textrm{I}}\\
&+\sum_{k=2}^{L} \frac{c_{\mathrm{res}}^2}{L^2 {m}}\underbrace{\left< \mathrm{diag}\left(\mE_{t,\alpha_1}^{[(k+1):L]}\vsigma^{(1)}_{[k]}(\vx_{\alpha_1})\vsigma^{(1)}_{[k]}(\vx_{\beta})\left(\mE_{t,\beta}^{[(k+1):L]}\right)^{\T}\va_t  \right) \vone,\vx_{\alpha_2}^{[L]}\right> \left<\vx_{\alpha_1}^{[k-1]},\vx_\beta^{[k-1]}  \right>}_{\mathrm{II}}\\
&+{\frac{c_\sigma}{m}}\left< \mathrm{diag}\left(\mE_{t,\alpha_2}^{[2:L]} \vsigma^{(1)}_{[1]}(\vx_{\alpha_2})\vsigma^{(1)}_{[1]}(\vx_{\beta})\left(\mE_{t,\beta}^{[2:L]}\right)^{\T}\va_t \right)\vone,\vx_{\alpha_1}^{[L]}\right> \left<\vx_{\alpha_2},\vx_\beta  \right>\\
&+\sum_{k=2}^{L} \frac{c_{\mathrm{res}}^2}{L^2 {m}}\left< \mathrm{diag}\left(\mE_{t,\alpha_2}^{[(k+1):L]}\vsigma^{(1)}_{[k]}(\vx_{\alpha_2})\vsigma^{(1)}_{[k]}(\vx_{\beta})\left(\mE_{t,\beta}^{[(k+1):L]}\right)^{\T}\va_t  \right) \vone,\vx_{\alpha_1}^{[L]}\right> \left<\vx_{\alpha_2}^{[k-1]},\vx_\beta^{[k-1]}  \right>.
\end{align*}
 we have
$C(3,L)=\fO\left( 2\left(1+\frac{L-1}{L^2}\right)\right)=\fO\left(1+\frac{1}{L}\right).$

Finally for $\fG_t^{(4)}(\cdot),$ by symmetry, we are only going to analyze  terms I  and II.
Since there are at most $(2L+2)$ symbols in term I to be replaced, and by the  replacement rules, each replacement will bring about up to  $(L+1)$ many terms. For term II, for each summand, there are also at most $(2L+2)$ symbols  to be replaced. Since there are $L-1$ summands in II, and  each replacement will bring about up to  $(L+1)$ many terms. we have that  
$$C(4,L)=\fO\left( 2\left((2L+2)(L+1)+ \frac{1}{L^2}(L-1)(2L+2)(L+1) \right)\right)=\fO\left(L^2\right).$$

It holds that for time $0\leq t\leq \sqrt{m}/{\LogLn}^{C'}$
\begin{align*}
    \Abs{\partial_t\fG_t^{(3)}\left(\vx_{\alpha_1},\vx_{\alpha_2},\vx_{\alpha_3}\right)}& \leq   \Norm{\fG_t^{(4)}(\cdot)}_{\infty} \sqrt{ R_S(\vtheta_0)} \leq C(4,L) \frac{{\LogLn}^C}{m},\\
    \Abs{\fG_t^{(3)}\left(\vx_{\alpha_1},\vx_{\alpha_2},\vx_{\alpha_3}\right)} 
    &\leq  \Norm{\fG_0^{(3)}\left(\cdot\right)}_{\infty}+tC(4,L)\frac{{\LogLn}^C}{m}. 
\end{align*}
Finally, we need to make estimate on $\Norm{\fG_0^{(3)}\left(\cdot\right)}_{\infty}.$ Each term in $\fG_0^{(3)}\left(\cdot\right)$ is of the form $\frac{c}{m}\left<\mB \va_0, \vx_{\alpha_1}^{[L]} \right>\left<\vx_{\alpha_2}^{[l]},\vx_\beta^{[l]}  \right> $
where $\mB$ is some specific matrix that changes from term to term. After taking conditional expectation up to the random variable $\va_0,$   we have with high probability 
\begin{align}
    \frac{c}{m}\left<\va_0, \mB^{\T}\vx_{\alpha_1}^{[L]} \right>\left<\vx_{\alpha_2}^{[l]},\vx_\beta^{[l]}  \right>\leq c\frac{{\LogLn}^C}{m}. \label{rewrite...final}
\end{align}
Consequently,  for time $0\leq t\leq \sqrt{m}/{\LogLn}^{C'}$
\begin{align*}
    \Abs{\fG_t^{(3)}\left(\vx_{\alpha_1},\vx_{\alpha_2},\vx_{\alpha_3}\right)} 
    &\leq C(3,L)\frac{{\LogLn}^C}{m}+tC(4,L)\frac{{\LogLn}^C}{m},\\
    \Abs{\partial_t\fG_t^{(2)}\left(\vx_{\alpha_1},\vx_{\alpha_2}\right)}&\leq \Norm{\fG_t^{(3)}( \cdot)}_{\infty}  \sqrt{R_S(\vtheta_0)} \leq  \left(C(3,L) +tC(4,L)\right)\frac{{\LogLn}^C}{m}, 
\end{align*}
which finishes the proof of Theorem \ref{theorem...vary at 1/m}.
\end{proof}

\begin{proof}[Sketch of the Proof of Corollary \ref{corollary....for thm}]
If $m=\Omega\left( \left(\frac{n}{\lambda_0}\right)^{2+\eps}\right)$,  with high probability w.r.t random initialization, $\lambda_{\min}\left[\fK_0^{(2)}\left(\vx_{\alpha},\vx_{\beta}\right)\right]_{1\leq \alpha,\beta\leq n}>
    \lambda_{\min}\left(\mG^{[L+1]}(0)\right)>\frac{3 \lambda_0}{4},$
by setting $\lambda=\frac{3 \lambda_0}{4},$ we finish  the proof of \eqref{least eigenvalue}.

Concerning the change of the least eigenvalue of the NTK, from the Sketch Proof of Theorem \ref{theorem...vary at 1/m}, for time  $0\leq t\leq \sqrt{m}/{\LogLn}^{C'},$
\begin{align*}
    \Norm {\left(\fG_t^{(2)}-\fG_0^{(2)}\right)(\cdot)}_{2\to 2} \leq  &\Norm {\left(\fG_t^{(2)}-\fG_0^{(2)}\right)(\cdot)}_{\mathrm{F}} \\
    \leq n &\Norm {\left(\fG_t^{(2)}-\fG_0^{(2)}\right)(\cdot)}_{\infty} \leq  nt\left(C(3,L) +tC(4,L)\right) \frac{{\LogLn}^{C}}{m},
\end{align*}
  set $t^{*}$ satisfying :
\begin{align}
   C(4,L) (t^*)^2 + C(3,L)t^* &= \frac{\lambda m}{2{\LogLn}^C n},\label{quadratic}
\end{align}
after solving \eqref{quadratic}  
\begin{align*}
    t^*= \frac{-C(3,L)+\sqrt{\left(C(3,L)\right)^2+2 C(4,L)\frac{\lambda m}{{\LogLn}^{C}n}}}{2 C(4,L) }\geq \frac{1}{2}\sqrt{\frac{\lambda m}{C(4,L){\LogLn}^{C}n}}. 
\end{align*}
Let $\Bar{t}:=\inf\left\{t: \lambda_{\min}\left[\fK_t^{(2)}\left(\vx_{\alpha},\vx_{\beta}\right)\right]_{1\leq \alpha,\beta\leq n}\geq \lambda/2 \right\},$ naturally we have $ t^*\leq \Bar{t}.$ Using \eqref{eq for thm...neural tangent kernel .... order 2} we have for any $0\leq t \leq \Bar{t},  R_S(\vtheta_t)\leq \exp\left(-{\lambda t}/{n}\right) R_S(\vtheta_0).$

Set  $R_S(\vtheta_t)=\eps$, it takes time $t\leq \left({n}/{\lambda}\right) \ln ({R_S(\vtheta_0)}/{\eps})$ for loss  $R_S(\vtheta_t)$ to reach accuracy $\eps,$ hence if $ t\leq \left({n}/{\lambda} \right)\ln \left({R_S(\vtheta_0)}/{\eps}\right)\leq t^*\leq \Bar{t},$ then   width $m$ is required to be
\begin{equation}\label{widtrh}
    \frac{n}{\lambda}\ln \left(\frac{R_S(\vtheta_0)}{\eps}\right)\leq  \frac{1}{2}\sqrt{\frac{\lambda m}{C(4,L){\LogLn}^{C}n}}.
\end{equation}
thus we have 
\begin{align*}
 m\geq C(4,L)\left(\frac{n}{\lambda}\right)^3\LogLn^{C}\ln\left(\frac{R_S(\vtheta_0)}{\eps}\right)^2,
\end{align*}
since $C(4,L)=\fO\left(L^2\right),$ we finish the proof. 
\end{proof}

\section{Discussion}\label{section....Discussion}

In this paper, we show that the GD on ResNet can obtain zero training loss, and its training dynamic is given by an infinite hierarchy of ordinary differential equations, i.e., the NTH, which makes it possible to  study the change of the NTK directly for deep neural networks. Our proof builds on a careful analysis of the least eigenvalue of randomly initialized Gram matrix, and the uniform upper bound on  kernels of higher order in the  NTH.

We list out some future directions for  research:
\begin{itemize}
\item  The NTH is an infinite sequence of relationship. However, Huang and Yau showed that under certain conditions  on the  width and the data set dimension, the NTH can be truncated and the truncated version of  NTH is still able to
approximate the original dynamic up to any precision. We believe that for ResNet, such technical conditions   can be loosened based on our result.
    \item In Corollary \ref{corollary....for thm}, the dependence of $m$ on the depth $L$ is quadratic, we believe that the dependence can be reduced even further. We conjecture that $m$ is independent of $L.$ 
    \item In this paper, we focus on the GD, and we believe that it can be extended to SGD, while maintaining the linear convergence rate.
    \item We focus on the training loss, but does not address the test loss. To further investigate the generalization power of ResNet, we believe some Apriori estimate for the generalization error of ResNet may be useful~\cite{ma2019priori}.
\end{itemize}

\bibliographystyle{siam}
\bibliography{ResNet.bib}
\newpage
\appendix
\section{ Estimates on the Kernel}\label{appen....initial estimate on kernel}
\subsection{Structure on Hierarchical Sets of Kernel Expressions}\label{appen subsection...kernel structure}
Since we have mentioned the replacement rules in Section \ref{subsection....replacemetn rules},~we haven't rigorously justified it yet. Hence we use Proposition \ref{prop...A.1}  to shed light on the structures of the elements in $\sA_r$, and consequently on the structures of each term in kernel~$\fK_t^{(r)}(\vx_{\alpha_1},\vx_{\alpha_2},\dots,\vx_{\alpha_r} )$.
\begin{prop}\label{prop...A.1}
For any vector $\vv(t)\in \sA_r$, the new vector obtained from $\vv(t)$ by performing the replacement rules are the sum of terms of the following forms:
\begin{align*}
(a).\frac{C}{\sqrt{m}}&\vv'(t): \vv'(t)\in \sA_r ,\\
(b).\frac{C}{\sqrt{m}}&\vv'(t)\frac{\left<\vp,\vq\right>}{m}: \vv'(t)\in  \sA_{r+1}, \vp,\vq\in\sA_0,\\
(c).\frac{C}{\sqrt{m}}&\vv'(t)\frac{\left<\sqrt{m} \vx_{\alpha},\sqrt{m} \vx_{\beta}\right>}{m}: \vv'(t)\in \sA_{r+1}, \vp,\vq\in\sA_0,\\
(d).\frac{C}{\sqrt{m}}&\vv'(t)\frac{\left<\vp,\vq\right>}{m}: \vv'(t)\in \sA_{r-s+1}, \vp\in\sA_s,\vq\in\sA_0, \ \text{for some } \ s\geq 1,\\
(e).\frac{C}{\sqrt{m}}&\vv'(t)\frac{\left<\vp,\vq\right>}{m}: \vv'(t)\in \sA_{s}, \vp\in\sA_{r-s+1},\vq\in\sA_0, \ \text{for some} \ s\geq 1.
\end{align*}

\end{prop}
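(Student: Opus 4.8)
The plan is to prove Proposition \ref{prop...A.1} by structural induction on the build-up of the sets $\sA_r$, tracking what happens to a single symbol $\ve_j$ in the product $\vv(t)=\ve_s\ve_{s-1}\cdots\ve_1\ve_0$ when it is differentiated in time. First I would recall from Section \ref{subsection....replacemetn rules} that the replacement rules express $\partial_t\va_t$, $\partial_t(\sqrt{m}\vx_\alpha^{[l]})$, $\partial_t\vsigma^{(r)}_{[l]}(\vx_\alpha)$ and $\partial_t\mE_{t,\alpha}^{[l]}$ each as a finite sum of terms, and that by the product rule $\partial_t\vv(t)=\sum_{j=0}^s \ve_s\cdots(\partial_t\ve_j)\cdots\ve_0$. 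So it suffices to analyze one factor $\partial_t\ve_j$ at a time, substitute the corresponding replacement expression, and verify that each resulting monomial, once the remaining factors $\ve_s\cdots\ve_{j+1}$ and $\ve_{j-1}\cdots\ve_0$ are reattached, falls into one of the five listed forms (a)--(e), possibly after pushing a $\mathrm{diag}(\cdot)$ outward and invoking the closure properties built into the definition of $\sA_{r+1}$.

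The key bookkeeping is the \emph{level accounting}. Every replacement rule carries a factor $1/\sqrt{m}$ out front (this is the $C/\sqrt{m}$ in all five cases) and produces either (i) no new inner product — this is case (a), where the derivative of $\ve_0\in\{\va_t,\sqrt m\vx_\beta^{[l]}\}$ or of an $\mE$- or $\vsigma$-type factor stays inside $\sA_r$ because the extra $\vsigma^{(r+1)}$, extra $\mW/\sqrt m$, and extra $\mathrm{diag}$ are exactly what the third generating set of $\sA_r$ absorbs; or (ii) exactly one new inner product $\langle\cdot,\cdot\rangle/m$, where one slot of the inner product is always an $\ve_0$-type vector from $\sA_0$ (the $\vx_\beta^{[k-1]}$ or $\vx_\beta$ or $\vx_\alpha$ appearing in every replacement rule) and the other slot is some vector $\vp$. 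I would then split on where $\vp$ lives: if $\vv(t)\in\sA_0$ then $\vp\in\sA_0$ and the leftover vector is in $\sA_1$, giving (b) or (c) (the latter when the inner product is the singular $\langle\sqrt m\vx_\alpha,\sqrt m\vx_\beta\rangle$ coming from the $\ve_0=\va_t$ replacement paired with $\vx_\alpha$); more generally, if the differentiated subfactor already carried $s$ diag-operations, the spawned vector $\vp$ lands in $\sA_s$ and the residual vector in $\sA_{r-s+1}$, which is precisely (d), while the symmetric case where the residual part is the one that went up a level and $\vp$ came from deeper gives (e). The total number of diag-operations is conserved plus one (the new $\mathrm{diag}$ introduced by the replacement), which is exactly the statement ``$\vv'(t)\in\sA_{r+1}$'' or the split ``$\vv'(t)\in\sA_{r-s+1},\ \vp\in\sA_s$''.

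The main obstacle I anticipate is the case analysis for the $\mE$-type and $\vsigma$-type factors $\ve_j$ with $1\le j\le s$, because their replacement rules (the long displays in Section \ref{subsection....replacemetn rules}) each produce three or four structurally distinct terms, some involving a leading $\mW_t^{[l+1]}/\sqrt m$ or a trailing $c_{\mathrm{res}}\mW_t^{[l]}/(L\sqrt m)$ that must be recognized as belonging to the third generating family in the definition of $\sA_{r+1}$ — i.e., one has to check that the template $\vsigma_{[l]}^{(u+1)}\mathrm{diag}((\mW/\sqrt m)^{Q_1}\vg_1)\cdots(c_{\mathrm{res}}\mW/(L\sqrt m))^{Q_{u+1}}$ with $Q_i\in\{0,1\}$ and $\vg_i\in\sA_0\cup\cdots\cup\sA_r$ genuinely captures every term that the replacement can generate, and that reattaching $\ve_{j+1}$ on the left and $\ve_{j-1}\cdots\ve_0$ on the right does not break the template. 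I would handle this by treating the three generating families of $\sA_r$ as a case split and, for each, simply reading off the replacement rule and matching; the $\mathrm{diag}$-of-$\sA_0\cup\cdots\cup\sA_r$ family and the sums $\sum_{k=2}^l$ in the rules are what force us to allow $\vg_i$ to range over $\sA_0\cup\cdots\cup\sA_r$ rather than just $\sA_r$, and I would make sure the bound ``at most $r$ diag-operations in $\sA_r$'' is respected at each step (it increases by exactly one, matching the jump to $\sA_{r+1}$). Once all three families plus the $\ve_0$ case are checked, the proposition follows, and the displayed monomial form \eqref{equation for generation....of the kernel of order p} for $\fK_t^{(p)}$ is an immediate induction on $p$ using (a)--(e).
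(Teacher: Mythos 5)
Your approach is essentially the paper's: apply the product rule to $\partial_t \vv(t)=\sum_j \ve_s\cdots(\partial_t\ve_j)\cdots\ve_0$, split on which type of symbol $\ve_j$ is hit ($\ve_0$ versus a $\vsigma$- or $\mE$-type middle factor versus a $\mW_t^{[l]}/\sqrt m$ factor), substitute the replacement rule, and track the number of $\mathrm{diag}$-operations. The level accounting — each replacement conserves the $\mathrm{diag}$-count plus one, and that extra unit either goes into the residual vector (raising it to $\sA_{r+1}$) or is split off into the $\vp$ of the inner product (giving (d)/(e)) — is the correct organizing device and matches how the paper handles the $\mW$ and $\mW^\T$ cases.

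Two details in your sketch are wrong, though they don't break the plan. First, your characterization of case (a) is too broad: among all the replacement rules, only $\va_t\to\frac{1}{\sqrt m}\sqrt m\,\vx_\beta^{[L]}$ produces no inner product and stays in $\sA_r$. The replacements for $\sqrt m\,\vx_\alpha^{[l]}$, for every $\vsigma_{[l]}^{(u)}(\vx_\alpha)$, and for every $\mE_{t,\alpha}^{[l]}$ each introduce a factor $\langle\cdot,\cdot\rangle/m$ together with a fresh $\mathrm{diag}$-operation, so the residual vector lands in $\sA_{r+1}$, not $\sA_r$; saying ``the derivative of $\ve_0\in\{\va_t,\sqrt m\,\vx_\beta^{[l]}\}$ or of an $\mE$- or $\vsigma$-type factor stays inside $\sA_r$'' misclassifies the bulk of the terms into case (a) when they in fact belong to (b)/(c) (or (d)/(e) when the hit falls on the $\mW_t^{[l]}/\sqrt m$ buried inside an $\mE_{t,\alpha}^{[l]}$). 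Second, the singular inner product $\langle\sqrt m\vx_\alpha,\sqrt m\vx_\beta\rangle$ of case (c) does not come from the $\va_t$ replacement — that rule has no inner product at all. It comes from the first-layer terms (the $\langle\vx_\alpha,\vx_\beta\rangle$ contributions) in the replacements for $\sqrt m\,\vx_\alpha^{[l]}$, $\vsigma_{[1]}^{(r)}(\vx_\alpha)$ and $\mE_{t,\alpha}^{[l]}$, where the raw input $\vx_\alpha$ appears but $\sqrt m\,\vx_\alpha\notin\sA_0$. Correcting these two attributions, the remaining case-matching you describe is exactly what the paper's proof carries out.
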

\begin{proof}
The proof comes as follows. Note that the constant $C$   listed out below might keep changing from term to term.

Since $\va_t$  appears only at the position $\ve_0$, if $\vv(t)\in \sA_r,$  based on the replacement rule
\begin{align*}
    \vv(t)=\ve_s\ve_{s-1}\dots \ve_1\va_t\to \widetilde{\vv}(t)=\frac{1}{\sqrt{m}}\ve_s\ve_{s-1}\dots \ve_1 \sqrt{m} \vx_\beta^{[L]}=\frac{1}{\sqrt{m}}\vv'(t),
\end{align*}
then $\vv'(t)=\ve_s\ve_{s-1}\dots \ve_1 \sqrt{m} \vx_\beta^{[L]} \in \sA_r.$ 

Similarly, $\sqrt{m}\vx_\alpha^{[l]}$  also appears only at   $\ve_0$, then if $\vv(t)\in \sA_r,$ by the replacement rule
\begin{align*}
\vv(t)&=\ve_s\ve_{s-1}\dots \ve_1\sqrt{m}\vx_\alpha^{[l]} \to \widetilde{\vv}(t),\\
\widetilde{\vv}(t)&=\sum_k \frac{C}{\sqrt{m}}\ve_s\ve_{s-1}\dots \ve_1 \  \mathrm{diag}(\vf_k) \vone \frac{\left<\sqrt{m}\vx_\alpha^{[k]},\sqrt{m}\vx_\beta^{[k]}  \right>}{m}=\sum_k\frac{C}{\sqrt{m}}\vv'_k(t)\frac{\left<\sqrt{m}\vx_\alpha^{[k]},\sqrt{m}\vx_\beta^{[k]}  \right>}{m},
\end{align*}
given that $\vf_k\in \sA_0,$ then $\vv'_k(t) \in \sA_{r+1}.$

Since $\vsigma_{[l]}^{(u)}(\vx_\alpha)$ only appears at the starting or the middle position, i.e., $\ve_j, \  j\geq 1$. For $u=1,$ $\vsigma_{[l]}^{(1)}(\vx_\alpha)$ has no diag operations accompanied with it, and any vector $\vv(t)\in\sA_r$ could contain $\vsigma_{[l]}^{(1)}(\vx_\alpha),$ for $r\geq 0$
\begin{align*}
\vv(t)&=\ve_s\dots\ve_{j+1}\vsigma_{[l]}^{(1)}(\vx_\alpha)\ve_{j-1}\dots\ve_0 \to \widetilde{\vv}(t),\\
\widetilde{\vv}(t)&=\frac{C}{\sqrt{m}} \ve_s\dots\ve_{j+1}\vsigma_{[l]}^{(2)}(\vx_\alpha)\mathrm{diag} \left( \vf_1\right) \ve_{j-1}\dots\ve_0 \frac{\left<\vp_1,\vq_1 \right>}{m}\\
&+\sum_{k}  \frac{C}{\sqrt{m}} \ve_s\dots\ve_{j+1}\vsigma_{[l]}^{(2)}(\vx_\alpha)\mathrm{diag} \left( \frac{\mW_t^{[l]}}{\sqrt{m}}\vf_k\right) \ve_{j-1}\dots\ve_0 \frac{\left<\vp_k,\vq_k \right>}{m}\\
&=\sum_{l}\frac{C}{\sqrt{m}}\vv'_l(t) \frac{\left<\vp_{l},\vq_{l} \right>}{m},
\end{align*}
 since $\vf_{k}\in \sA_{0},$ then $\vv'_l(t)\in \sA_{r+1},$ and $\vp_{l},\vq_{l}\in\sA_0.$

For $u\neq 1,$ $\vsigma_{[l]}^{(u)}(\vx_\alpha)$ has at most $u-1$ diag operations behind it, and only vector $\vv(t)\in\sA_r$ could contain $\vsigma_{[l]}^{(u)}(\vx_\alpha),$ for $r\geq u-1.$ 
\begin{align*}
\vv(t)&=\ve_s\dots\ve_{j+1}\ve_j\ve_{j-1}\dots\ve_0 \to \widetilde{\vv}(t),\\
\text{with} \ \ve_j&=\vsigma_{[l]}^{(u)}(\vx_\alpha) \mathrm{diag} \left(\left(\frac{\mW_t^{[l]}}{\sqrt{m}}\right)^{Q_1} \vg_1\right)\dots\mathrm{diag} \left(\left(\frac{\mW_t^{[l]}}{\sqrt{m}}\right)^{Q_{u-1}} \vg_{u-1}\right)\left(\frac{c_{\mathrm{res}}}{L}\frac{\mW_t^{[l]}}{\sqrt{m}}\right)^{Q_{u}},\\
\text{or} \ \ve_j&=\left(\frac{c_{\mathrm{res}}}{L}\frac{\left(\mW_t^{[l]}\right)^\T}{\sqrt{m}}\right)^{Q_{u}}\vsigma_{[l]}^{(u)}(\vx_\alpha) \mathrm{diag} \left(\left(\frac{\mW_t^{[l]}}{\sqrt{m}}\right)^{Q_1} \vg_1\right)\dots\mathrm{diag} \left(\left(\frac{\mW_t^{[l]}}{\sqrt{m}}\right)^{Q_{u-1}} \vg_{u-1}\right),
\end{align*}
and after applying replacement rules on $\ve_j\to \ve_j^{'}$, 
\begin{align*}
 \ve_j^{'}&=\frac{C}{\sqrt{m}} \vsigma_{[l]}^{(u+1)}(\vx_\alpha) \mathrm{diag}\left(\vf_1\right)\mathrm{diag} \left(\left(\frac{\mW_t^{[l]}}{\sqrt{m}}\right)^{Q_1} \vg_1\right)\dots\left(\frac{c_{\mathrm{res}}}{L}\frac{\mW_t^{[l]}}{\sqrt{m}}\right)^{Q_{u}}\frac{\left<\vp_1,\vq_1 \right>}{m}\\
 +&\sum_{k}  \frac{C}{\sqrt{m}}\vsigma_{[l]}^{(u+1)}(\vx_\alpha)\mathrm{diag} \left( \left(\frac{\mW_t^{[l]}}{\sqrt{m}}\right)^{Q_{0}}\vf_k\right)\mathrm{diag} \left(\left(\frac{\mW_t^{[l]}}{\sqrt{m}}\right)^{Q_1} \vg_1\right)\dots\left(\frac{c_{\mathrm{res}}}{L}\frac{\mW_t^{[l]}}{\sqrt{m}}\right)^{Q_{u}}\frac{\left<\vp_k,\vq_k \right>}{m},\\
\text{or} \ \ve_j^{'}&=\frac{C}{\sqrt{m}} \left(\frac{c_{\mathrm{res}}}{L}\frac{\left(\mW_t^{[l]}\right)^\T}{\sqrt{m}}\right)^{Q_{u}}\vsigma_{[l]}^{(u+1)}(\vx_\alpha) \mathrm{diag}\left(\vf_1\right)\dots\mathrm{diag} \left(\left(\frac{\mW_t^{[l]}}{\sqrt{m}}\right)^{Q_{u-1}} \vg_{u-1}\right)    \frac{\left<\vp_1,\vq_1 \right>}{m}\\
+\sum_{k} & \frac{C}{\sqrt{m}}\left(\frac{c_{\mathrm{res}}}{L}\frac{\left(\mW_t^{[l]}\right)^\T}{\sqrt{m}}\right)^{Q_{u}}\vsigma_{[l]}^{(u+1)}(\vx_\alpha)\mathrm{diag} \left( \left(\frac{\mW_t^{[l]}}{\sqrt{m}}\right)^{Q_{0}}\vf_k\right)\dots\mathrm{diag} \left(\left(\frac{\mW_t^{[l]}}{\sqrt{m}}\right)^{Q_{u-1}} \vg_{u-1}\right)  \frac{\left<\vp_k,\vq_k \right>}{m},
\end{align*}
then $$\widetilde{\vv}(t)=\sum_{l}\frac{C}{\sqrt{m}}\vv'_l(t) \frac{\left<\vp_{l},\vq_{l} \right>}{m},$$ 
since $\vf_{k}\in \sA_{0},$ then $\vv'_l(t)\in \sA_{r+1},$ and $\vp_{l},\vq_{l}\in\sA_0.$

Since $\frac{\mW_t^{[l]}}{\sqrt{m}}$  only appears at the starting or the middle position $\ve_j$, we have that if $\vv(t)\in \sA_r,$ then based on the replacement rules
\begin{align*}
\vv(t)&=\ve_s\ve_{s-1}\dots\ve_{j+1}\frac{\mW_t^{[l]}}{\sqrt{m}}\ve_{j-1}\dots \ve_1\ve_0 \to \widetilde{\vv}(t),
\\
\widetilde{\vv}(t)&=\frac{C}{{m}}\ve_s\ve_{s-1}\dots\ve_{j+1}\mathrm{diag} (\vg)\ \vone  \otimes (\vx^{[l-1]}_\beta)^{\T}\ve_{j-1}\dots \ve_1\ve_0 \\
&= \frac{C}{\sqrt{m}}\ve_s\ve_{s-1}\dots\ve_{j+1}\mathrm{diag} (\vg)\ \vone \frac{\left<\ve_{j-1}\dots \ve_1\ve_0, \sqrt{m} \vx_\beta ^{[l-1]}\right>}{m}\\
&=\frac{C}{\sqrt{m}} \vv' (t) \frac{\left<\vp,\vq\right>}{m},
\end{align*}
with $\vv' (t)\in\sA_{r-s+1},$ and $\vp\in\sA_s, \vq\in\sA_0,$ for some $s\geq 1.$

Similarly for $\frac{(\mW_t^{[l])^{\T}}}{\sqrt{m}},$
\begin{align*}
\vv(t)&=\ve_s\ve_{s-1}\dots\ve_{j+1}\frac{(\mW_t^{[l]})^{\T}}{\sqrt{m}}\ve_{j-1}\dots \ve_1\ve_0 \to \widetilde{\vv}(t)\\
\widetilde{\vv}(t)&=\frac{C}{{m}}\ve_s\ve_{s-1}\dots\ve_{j+1} \vx_\beta^{[l-1]}\  \otimes\vone ^{\T}\mathrm{diag} (\vg)\ve_{j-1}\dots \ve_1\ve_0 \\
&= \frac{C}{\sqrt{m}} \ve_s\ve_{s-1}\dots\ve_{j+1} \sqrt{m}\vx_\beta^{[l-1]}\ \frac{\left<\mathrm{diag} (\vg)\ve_{j-1}\dots \ve_1\ve_0,\vone\right>}{m}\\
&=\frac{C}{\sqrt{m}}\vv' (t)\frac{\left<\vp,\vq\right>}{m},
\end{align*}
with $\vv' (t)\in\sA_{r-s},$ and $\vp\in\sA_{r-s+1}, \vq\in\sA_0,$ for some $s\geq 1.$ Since $\mE_{t,\alpha}^{[l]}$  is situations combined with $\frac{\mW_t^{[l]}}{\sqrt{m}}$ and  $\vsigma_{[l]}^{(1)}(\vx_\alpha),$ so we will skip the analysis.
\end{proof}
 
From the discussion above, if we apply Proposition \ref{prop...A.1} to $\fK_t^{(r)}(\vx_{\alpha_1},\vx_{\alpha_2})$ inductively $(r-1)$ times,  for each term in kernel $\fK_t^{(r)}(\vx_{\alpha_1},\vx_{\alpha_2},\dots,\vx_{\alpha_r} )$, it takes the form:
\begin{equation}\label{equation for generation....of the kernel of order p...in the appendix}
    \frac{1}{m^{r/2-1}}\prod_{j=1}^s\frac{\left<\vv_{2j-1}(t),\vv_{2j}(t)\right>}{m}, \ 1\leq s\leq r, \ \ \vv_i(t)\in \sA_0\cup\sA_1\cup\dots\cup \sA_{r-2}.
\end{equation}
\subsection{Apriori $L^2$ bounds for expressions in $\sA_0$}\label{Appen subsection....Aprioir L2 bounds for sets in A0}
We begin with an estimate on the empirical risk   $R_S(\vtheta_t)$.
\begin{prop}\label{proposition...on the apriori loss at t=0}
Under Assumption \ref{Assump...Assumption on activation functions} and \ref{Assump... ont he imput of the  samples}, we have for $t\geq 0,$ 
\begin{equation}\label{inequality in Proposition on apriori loss...decay of empirical risk}
     R_S(\vtheta_t)\leq  R_S(\vtheta_0) \sim \fO(1).
\end{equation}
\end{prop}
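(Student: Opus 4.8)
The statement splits into two essentially independent pieces: (i) the empirical risk is non-increasing along the flow, so $R_S(\vtheta_t)\le R_S(\vtheta_0)$ for all $t\ge 0$, and (ii) $R_S(\vtheta_0)=\fO(1)$ with high probability with respect to the random initialization. Part (i) is immediate and architecture-independent: differentiating along \eqref{eq... augumented GD for Weight matrices}--\eqref{eq... augumented GD for Weight vectors} gives $\partial_t R_S(\vtheta_t)=\left\langle \nabla_{\vtheta}R_S(\vtheta_t),\partial_t\vtheta_t\right\rangle=-\Norm{\nabla_{\vtheta}R_S(\vtheta_t)}_2^2\le 0$, hence $t\mapsto R_S(\vtheta_t)$ is non-increasing. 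All the content is therefore in part (ii), which is really a statement about the forward pass of the randomly initialized ResNet.

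For (ii) the plan is to show, on a single high-probability event, that $\Norm{\vx_\alpha^{[l]}(0)}_2=\fO(1)$ uniformly in $1\le l\le L$ and $1\le\alpha\le n$, and then that $\Abs{f_\alpha(0)}=\fO(1)$. First, standard Gaussian operator-norm bounds give $\Norm{\mW_0^{[l]}}_{2\to 2}\lesssim\sqrt m$ simultaneously for all $l$ (this is an intersection of $L\lesssim\mathrm{poly}(m)$ high-probability events, so it is again high-probability in the sense of Section \ref{subsection...Notations}). Combining this with $\Norm{\vx_\alpha}_2=1$, the $1$-Lipschitz bound on $\sigma$, $\Abs{\sigma(0)}\le C_L$, and $c_{\sigma}=\Theta(1)$, the first layer of \eqref{eq for definition....the h-th layer for Resnet} obeys $\Norm{\vx_\alpha^{[1]}(0)}_2\le\sqrt{c_{\sigma}/m}\bigl(\Norm{\sigma(\vzero)}_2+\Norm{\mW_0^{[1]}\vx_\alpha}_2\bigr)\lesssim\sqrt{c_{\sigma}/m}\,(\sqrt m\,C_L+\sqrt m)=\fO(1)$. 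For $2\le l\le L$, the residual update gives $\Norm{\vx_\alpha^{[l]}(0)}_2\le\bigl(1+\tfrac{Cc_{\mathrm{res}}}{L}\bigr)\Norm{\vx_\alpha^{[l-1]}(0)}_2+\tfrac{Cc_{\mathrm{res}}}{L}$; iterating from $l=2$ to $L$ and using $(1+Cc_{\mathrm{res}}/L)^{L-1}\le e^{Cc_{\mathrm{res}}}$ yields $\Norm{\vx_\alpha^{[L]}(0)}_2=\fO(1)$. This is precisely the step where the $\tfrac{c_{\mathrm{res}}}{L\sqrt m}$ normalization of the skip-connection blocks is crucial --- it prevents the $L$-fold product from exploding in $L$ --- and it is the ResNet analogue of the layer-wise $L^2$ bounds in Appendix \ref{Appen subsection....Aprioir L2 bounds for sets in A0}. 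Finally, since $\va_0$ is independent of $\{\mW_0^{[l]}\}$, conditioning on the latter gives $f_\alpha(0)=\va_0^{\T}\vx_\alpha^{[L]}(0)\sim\fN\bigl(0,\Norm{\vx_\alpha^{[L]}(0)}_2^2\bigr)$, so $\frac1n\sum_\alpha f_\alpha(0)^2$ concentrates at its conditional mean $\frac1n\sum_\alpha\Norm{\vx_\alpha^{[L]}(0)}_2^2=\fO(1)$; together with $\Abs{y_\alpha}\le 1$ this gives $R_S(\vtheta_0)=\frac{1}{2n}\sum_\alpha\Norm{f_\alpha(0)-y_\alpha}_2^2\le\frac1n\sum_\alpha\bigl(f_\alpha(0)^2+y_\alpha^2\bigr)=\fO(1)$.

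The main obstacle is making the estimates in (ii) uniform and simultaneously high-probability: one must check that the operator-norm bounds on all $\mW_0^{[l]}$, the first-layer concentration, and the output concentration over all $n$ samples hold on a single event of probability $1-\exp(-m^{\eps})$, which relies on the $m\lesssim\mathrm{poly}(n)$, $n\lesssim\mathrm{poly}(m)$ regime recorded in Section \ref{subsection...Notations}. The recursion bookkeeping itself is routine once the $1/L$ factor is in place; the only mildly delicate point is the independence argument that keeps $f_\alpha(0)$ of order one despite $\Norm{\va_0}_2\asymp\sqrt m$.
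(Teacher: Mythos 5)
Your proof is correct, and in fact does more work than the paper's own proof of this proposition.

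For the monotonicity $R_S(\vtheta_t)\le R_S(\vtheta_0)$ you use the gradient-flow dissipation identity $\partial_t R_S(\vtheta_t)=-\Norm{\nabla_{\vtheta}R_S(\vtheta_t)}_2^2\le 0$; the paper instead differentiates $\sum_\alpha\Norm{f_\alpha(t)-y_\alpha}_2^2$ and observes that the resulting quadratic form in the residuals has the Gram matrix $\fK_t^{(2)}$, which is positive semi-definite since it is the Gram matrix of the gradient vectors $\nabla_{\vtheta}f_\alpha(t)$. The two arguments are the same fact phrased two ways; yours is the more elementary rendering, the paper's is the NTK-friendly one. Either is fine.

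Where you genuinely diverge from the paper is that the paper's proof stops at the monotonicity and never actually argues $R_S(\vtheta_0)=\fO(1)$; it implicitly treats that as self-evident. Your part (ii) fills this in, and the chain you use — Gaussian operator-norm bounds on the $\mW_0^{[l]}$ simultaneously, the layer-by-layer recursion with the $c_{\mathrm{res}}/L$ residual scaling producing $(1+Cc_{\mathrm{res}}/L)^{L-1}\le e^{Cc_{\mathrm{res}}}$, and then conditioning on the hidden weights so that $f_\alpha(0)=\va_0^\T\vx_\alpha^{[L]}(0)$ is centered Gaussian with $\fO(1)$ variance — is exactly the mechanism the paper develops later (the $\xi(0)\le c_{w,0}$ bound and Proposition \ref{proposition.... on the output of layes} at $t=0$), and there is no circularity since none of that initialization machinery itself relies on the loss bound. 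One small imprecision worth flagging: $\frac1n\sum_\alpha f_\alpha(0)^2$ is a rank-$\le n$ quadratic form in $\va_0$ whose Frobenius norm is $\Theta(1)$, so it does not concentrate to within an $\fO(1)$ band with the paper's ``high probability'' (i.e.\ probability $1-\exp(-m^{\eps})$). A union bound over $\alpha$ gives $\max_\alpha\Abs{f_\alpha(0)}\lesssim{\LogLn}^{C}$ with high probability, hence $R_S(\vtheta_0)\lesssim{\LogLn}^{C}$. This is what the paper in fact uses downstream (it absorbs $\sqrt{R_S(\vtheta_0)}$ into ${\LogLn}^{C}$ factors), so the ``$\fO(1)$'' in the statement should be read as ``up to poly-logarithmic factors,'' and your word ``concentrates'' should be replaced by this uniform-tail bound to be literal.
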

\begin{proof}
We  get  inequality \eqref{inequality in Proposition on apriori loss...decay of empirical risk} by  non-negative definiteness of kernel $\fK_t^{(2)}(\cdot)$. From \eqref{eq for thm...neural tangent kernel .... order 2}, we obtain that
\begin{equation}\label{eq for proof of prop apriori loss...decay of empirical risk}
    \partial_t\sum_{\alpha=1}^n\Norm{f_\alpha(t)-y_\alpha}_2^2=-\frac{2}{n}\sum_{\alpha,\beta=1}^n \fK_t^{(2)}(\vx_\alpha,\vx_\beta)(f_\alpha(t)-y_\alpha)(f_\beta(t)-t_\beta)\leq 0,
\end{equation}
hence
\begin{equation*}
    R_S(\vtheta_t)\leq  R_S(\vtheta_0),
\end{equation*}
which finish the proof of Proposition \ref{proposition...on the apriori loss at t=0}.
\end{proof}

Our next proposition is mainly on the spectral property of the skip-connection matrices. This proposition is similar to Proposition $\mathrm{B.1.}$ in \cite{Huang2019Dynamics}.
\begin{prop}\label{proposition..A priori spectral property random matrix and a t}
Under Assumptions \ref{Assump...Assumption on activation functions} and \ref{Assump... ont he imput of the  samples},  we define $\xi(t)$  as follows
\begin{align}
    \xi(t)=\sup_{0\leq t'\leq t}\max \Bigg\{1, \frac{1}{\sqrt{m}}  \Big\{&\Norm{\mW_{t'}^{[2]}}_{2\to2},\Norm{\left(\mW_{t'}^{[2]}\right)^{\T}}_{2\to2},\dots\nonumber\\
    \dots,  &\Norm{\mW_{t'}^{[L]}}_{2\to2},\Norm{\left(\mW_{t'}^{[L]}\right)^{\T}}_{2\to2}, \Norm{\va_{t'}}_2\Big\}\Bigg\},\label{def...on the xi t...}
\end{align}
then with high probability w.r.t the random initialization, 
 for $t\lesssim  \sqrt{m}$ 
\begin{equation}\label{inequality....on the xi t}
    \xi(t)\leq c_{w,t}.
\end{equation}
where $c_{w,t}>2$ is a constant independent of the depth of the network $L$.

Moreover for $t\lesssim\sqrt{m}$, $c_{w,t}$ has a uniform upper bound in $t,$ i.e.,
\begin{equation}
    c_{w,t}\leq \bar{c},
\end{equation}
where $\bar{c}$ is  independent of   depth  $L$ and time $t.$
\end{prop}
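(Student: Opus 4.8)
The plan is a bootstrap (continuity) argument for the quantity $\xi(t)$. For the base case at $t=0$, since $W^{[l]}_{i,j},a_k\sim\fN(0,1)$ are independent, the operator norm of an $m\times m$ Gaussian matrix concentrates around $2\sqrt m$ and $\Norm{\va_0}_2$ concentrates around $\sqrt m$, so standard concentration (operator-norm deviation bounds for Gaussian matrices, Lipschitz concentration for $\Norm{\va_0}_2$) gives $\xi(0)\le c_0$ with high probability for some universal constant $c_0>2$. I would also record once and for all that by Proposition~\ref{proposition...on the apriori loss at t=0} the risk is nonincreasing and $R_S(\vtheta_0)=\fO(1)$ with high probability; combined with Cauchy--Schwarz this yields the uniform-in-time estimate $\sum_{\beta}\Abs{f_\beta(t)-y_\beta}\le\sqrt n\,\bigl(2nR_S(\vtheta_0)\bigr)^{1/2}=\fO(n)$ along the whole trajectory, a bound that is crucially \emph{independent} of $\xi(t)$.

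For the inductive step I would fix the target constant $c_{w,t}:=2c_0$ and assume $\xi(t')\le c_{w,t}$ on an interval $[0,t]$. The first task is to propagate forward through the layers: using that $\sigma$ is $1$-Lipschitz with $\Abs{\sigma(0)}\le C_L$ together with the recursion \eqref{eq for definition....the h-th layer for Resnet}, one obtains $\Norm{\vx^{[l]}_\beta}_2\le\Norm{\vx^{[l-1]}_\beta}_2\bigl(1+c_{\mathrm{res}}C_L c_{w,t}/L\bigr)+c_{\mathrm{res}}C_L/L$, and iterating over the $L$ layers while invoking $(1+a/L)^{L}\le e^{a}$ rather than a naive product bound yields $\Norm{\vx^{[l]}_\beta}_2\le C_1$ for a constant $C_1$ depending on $c_{w,t},C_L,c_\sigma,c_{\mathrm{res}}$ but \emph{not} on $L$ --- this is exactly the step that exploits the $c_{\mathrm{res}}/L$ scaling of the residual blocks. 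The same reasoning applied to the skip-connection matrices \eqref{def of matrices...Big E matrix for I+res} gives $\Norm{\mE^{[i:j]}_{t',\beta}}_{2\to2}\le e^{c_{\mathrm{res}}C_L c_{w,t}}=:C_2$, again $L$-free.

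Next I would substitute these a~priori bounds into the gradient-flow dynamics \eqref{eqgroup...dynamic for parameter a_t}--\eqref{eqgroup...original dynamic for parameter W_1}. Since $\mathrm{diag}(\vg)\bigl(\vone\otimes\vp^{\T}\bigr)=\vg\vp^{\T}$ is rank one, each summand of $\partial_{t'}\mW^{[l]}_{t'}$ is a scalar times a rank-one matrix whose two factors have $\ell^2$ norms at most $C_L\Norm{(\mE^{[(l+1):L]}_{t',\beta})^{\T}\va_{t'}}_2\le C_LC_2c_{w,t}\sqrt m$ and $\Norm{\vx^{[l-1]}_\beta}_2\le C_1$; after summing over $\beta$, pulling out the $c_{\mathrm{res}}/(L\sqrt m)$ prefactor, and applying the risk bound above one gets $\Norm{\partial_{t'}\mW^{[l]}_{t'}}_{2\to2}\le C_3/L$ for $2\le l\le L$, while similarly $\Norm{\partial_{t'}\va_{t'}}_2\le C_4$. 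Integrating in time, $\Norm{\mW^{[l]}_t-\mW^{[l]}_0}_{2\to2}\le tC_3/L$ and $\Norm{\va_t-\va_0}_2\le tC_4$, so for $0\le t\le c\sqrt m$ with $c$ small and universal we obtain $\Norm{\mW^{[l]}_t}_{2\to2}/\sqrt m\le c_0+cC_3/L< 2c_0$ and $\Norm{\va_t}_2/\sqrt m\le c_0+cC_4<2c_0$; that is, $\xi(t)<c_{w,t}$ strictly, which closes the bootstrap and proves \eqref{inequality....on the xi t} for all $t\lesssim\sqrt m$. Since every constant $c_0,C_1,C_2,C_3,C_4$ and the bound on $R_S(\vtheta_0)$ depends only on $C_L,c_\sigma,c_{\mathrm{res}}$ and universal constants, never on $L$ or on $t$, the same choice $c_{w,t}=2c_0$ works throughout, giving the $L$- and $t$-independent upper bound $\bar c$.

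The main obstacle is the apparent circularity: the right-hand sides of \eqref{eqgroup...dynamic for parameter a_t}--\eqref{eqgroup...original dynamic for parameter W_1} involve exactly the quantities $\va_t$, $\mE^{[l]}_{t,\beta}$, $\vx^{[L]}_\beta$ and the residuals $f_\beta(t)-y_\beta$ that we wish to bound, and a crude estimate $\Abs{f_\beta(t)}\le\Norm{\va_t}_2\Norm{\vx^{[L]}_\beta}_2$ would inflate the Gr\"onwall argument and reintroduce an exponential-in-$t$ factor. The resolution is to decouple: Proposition~\ref{proposition...on the apriori loss at t=0} controls $\sum_\beta\Abs{f_\beta(t)-y_\beta}$ by $\fO(n)$ \emph{without reference to} $\xi(t)$, so the bootstrap is left only to absorb fixed constants. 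The second delicate point, keeping all propagation constants free of $L$, rests entirely on not discarding the $c_{\mathrm{res}}/L$ prefactors and on using $(1+a/L)^{L}\le e^{a}$; this is also where the residual architecture decisively beats the fully-connected case, whose analogous product bound would be exponential in $L$.
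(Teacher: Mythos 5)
Your proposal is correct and reaches the same conclusion, but it replaces the paper's integration of a self-consistent ordinary differential inequality with a bootstrap (continuity) argument. Both proofs share the three essential ingredients: (i) initial concentration bounds giving $\xi(0)\leq c_{w,0}$ via the operator-norm tail of a Gaussian matrix and a chi-square tail for $\Norm{\va_0}_2$; (ii) the loss monotonicity $R_S(\vtheta_t)\leq R_S(\vtheta_0)$ (Proposition~\ref{proposition...on the apriori loss at t=0}), which bounds $\frac{1}{n}\sum_\beta\Abs{f_\beta(t)-y_\beta}$ uniformly in time via Cauchy--Schwarz and thereby breaks the circularity you flag; and (iii) the geometric-sum-to-exponential bound $(1+a/L)^{L}\leq e^{a}$, which is precisely what keeps every constant $L$-free and exploits the $c_{\mathrm{res}}/L$ residual scaling. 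Where you diverge is the final step: the paper keeps $\xi(t)$ on both sides, arriving at $\sqrt{m}\,\partial_t\xi(t)\leq C\exp(2c_{\mathrm{res}}\xi(t))\xi(t)^2$, and then integrates $\int_{\xi(0)}^{\xi(t)}\exp(-2c_{\mathrm{res}}u)u^{-2}\,du\leq Ct/\sqrt m$, identifying the interval on which the integrand's antiderivative $f(z)=\exp(-2c_{\mathrm{res}}z)(c_{w,0}^{-1}-z^{-1})$ is increasing; you instead fix $c_{w,t}=2c_0$, substitute it for $\xi$ on the right to get Lipschitz-in-time bounds $\Norm{\partial_{t'}\mW^{[l]}_{t'}}_{2\to2}\lesssim 1/L$ and $\Norm{\partial_{t'}\va_{t'}}_2\lesssim 1$, integrate linearly, and close the bootstrap for $t\leq c\sqrt m$ with $c$ small enough that $c_0+cC_3<2c_0$. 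Your route is more elementary (linear in time, no nonlinear integral to analyze) but needs the standard maximal-time continuity argument to be stated carefully, whereas the paper's route yields an explicit admissible upper bound $\bar c=\tfrac{1}{2}\bigl(c_{w,0}+\sqrt{c_{w,0}^2+2c_{w,0}/c_{\mathrm{res}}}\bigr)$ without invoking a bootstrap. Both give $L$- and $t$-independent constants of the same quality; the choice is stylistic.
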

\begin{proof}
For the purpose  of proving the proposition, we shall state two lemmas, Lemma~\ref{lemma...sigular value of random matrix} and \ref{lemma...on chi2 distribution}. Lemma \ref{lemma...sigular value of random matrix} is given out as Lemma $\mathrm{G}.2.$ in Du et al.\cite{Du2018Gradient},  also  consequence of the results in \cite{vershynin2010introduction}.
\begin{lem}\label{lemma...sigular value of random matrix}
Given a matrix $\mW\in \sR^{m\times m}$ with each entry $W_{i,j}\sim \fN(0, 1),$ then  with probability at least $1-\exp\left(-\frac{(c^{'}_{w,0}-2)^2 m}{2}\right),$ the following holds
\begin{equation}\label{eq... eigenvalue of a random given matrices}
    \Norm{\mW}_{2\to 2}\leq {c^{'}_{w,0}}\sqrt{m},
\end{equation}
where $c^{'}_{w,0}>2$ is a constant.
\end{lem}
\begin{rmk}
 This event is  an event that holds with high probability.
\end{rmk}
Next concerning the term $\frac{1}{\sqrt{m}} \Norm{\va_0}_2$, we shall state a lemma on the tail bound of the chi-square distribution, using Lemma 1 from \cite{Laurent2000AdaptiveEstimationQuadratic} 
\begin{lem}\label{lemma...on chi2 distribution}
If $Z\sim \chi^2(m)$, then we have a tail bound 
\begin{equation}\label{equation for lemma...tail bound of chi square}
    \Prob\left(Z \geq m + 2\sqrt{mx} + 2x\right) \leq e^{-x}.
\end{equation}
\end{lem}
\begin{rmk}
 This event is also an event that holds with high probability.
 
\end{rmk}

Then if we write $2tm = m + (2t - 1)m$, letting $x=\frac{mt}{10},$ we can obtain that 
\begin{equation*}
    \Prob\left(\Norm{\va_0}_2^2\geq m + 2m\left(\sqrt{t/10}+t/10\right)\right) \leq\exp(-tm/10)),
\end{equation*}
and for $t\geq 1$, we have $2t - 1 \geq 2 \left(\sqrt{t/10} + t/10\right).$ Thus, if we choose $t$ properly, we  see that such event $$\frac{1}{\sqrt{m}} \Norm{\va_0}_2\leq c^{''}_{w,0}$$ holds with high probability.
Hence, for $t=0,$ $\xi(0)\leq \max\left\{1,c^{'}_{w,0},c^{''}_{w,0}\right\}.$ We set $c_{w,0}$ as $c_{w,0}=\max\left\{1,c^{'}_{w,0},c^{''}_{w,0}\right\},$ then
\begin{equation}\label{inequation of proof of proposition.....initialization}
    \xi(0)\leq c_{w,0}.
\end{equation}
In the following we are going to show the upper bound of $\partial_t\xi(t).$ In order to do that, we need to estimate $L^2$ bound on each output layer. For $l=1,$
\begin{align}
    \Norm{\vx^{[1]}}_2&=\sqrt{\frac{c_\sigma}{{m}}}\Norm{\sigma(\mW_t^{[1]}\vx)}_2\leq \sqrt{c_\sigma}\left(\Abs{\sigma(0)}+\frac{C_L}{\sqrt{m}}\Norm{\mW_t^{[1]}\vx}_2\right) \nonumber\\
    &\leq \sqrt{c_\sigma} C_L\left( 1+ \xi(t) \Norm{\vx}_2\right)\leq C\xi(t),\label{equation of proof of proposition...output of each layer at initialiation}
\end{align}
and for $2\leq l \leq L,$
\begin{align}
     \Norm{\vx^{[l]}}_2&\leq\Norm{\vx^{[l-1]}}_2+
     \frac{c_{\mathrm{res}}}{L\sqrt{m}}\Norm{\sigma\left(\mW_t^{[l]}\vx^{[l-1]}\right)}_2\nonumber \\
     &\leq \Norm{\vx^{[l-1]}}_2+ \frac{c_{\mathrm{res}}}{L}\left(\Abs{\sigma(0)}+C_L\xi(t)\Norm{\vx^{[l-1]}}_2\right)\nonumber\\
     &\leq  \Norm{\vx^{[l-1]}}_2+ \frac{c_{\mathrm{res}}}{L}\left(C_L+C_L\xi(t)\Norm{\vx^{[l-1]}}_2\right)\nonumber\\
     &\leq \left(1+\frac{2c_{\mathrm{res}}}{L}\xi(t) \right)\Norm{\vx^{[l-1]}}_2.\label{inequation of proof of proposition.....initialization on the value of each output l;ayers}
\end{align}
Hence we can obtain an inductive relation on the $2$-norm of $\vx^{[l]}.$
\begin{equation}\label{ineqlity.... a universal inequlaity on the output value of each layer}
\Norm{\vx^{[l]}}_2\leq C \left(1+\frac{2c_{\mathrm{res}}}{L}\xi(t) \right)^{l-1}\xi(t).
\end{equation}
Based on \eqref{eqgroup...dynamic for parameter a_t}, \eqref{eqgroup...dynamic for parameter W_H}, \eqref{eqgroup...dynamic for parameter W_l} and \eqref{eqgroup...original dynamic for parameter W_1}, ~combined with Proposition \ref{proposition...on the apriori loss at t=0}  
\begin{align}
    \partial_t\Norm{\mW_t^{[l]}}_{2\to 2}&\leq \frac{1}{n}\sum_{\beta=1}^n\frac{C}{\sqrt{m}} \Norm{\vsigma^{(1)}_{[l]}(\vx_{\beta}) \left(\mE_{t,\beta}^{[(l+1):L]}\right)^{\T}\va_t}_2  \Norm{\vx_{\beta}^{[l-1]}}_2 \Abs{f_{\beta}(t)-y_{\beta}}\nonumber\\
    &\leq \frac{1}{n}\sum_{\beta=1}^n{C C_L}\left(1+\frac{c_{\mathrm{res}}C_L}{L}\xi(t) \right)^{L-l}\xi(t)\left(1+\frac{2c_{\mathrm{res}}}{L}\xi(t) \right)^{l-1}\xi(t)\Abs{f_{\beta}(t)-y_{\beta}}\nonumber\\
    &\leq C\left(1+\frac{2c_{\mathrm{res}}}{L}\xi(t)\right)^{L-1}\xi(t)^2\sqrt{\frac{1}{n}\sum_{\beta=1}^n \Norm{f_{\beta}(t)-y_{\beta}}_2^2}\nonumber\\
     &\leq C\left(1+\frac{2c_{\mathrm{res}}}{L}\xi(t)\right)^{L-1}\xi(t)^2\sqrt{R_S\left(\vtheta_0\right)}\nonumber\\
    &\leq C\left(1+\frac{2c_{\mathrm{res}}}{L}\xi(t)\right)^{L-1}\xi(t)^2\leq C\exp\left(2  c_{\mathrm{res}} \xi(t)\right) \ \xi(t)^2, \label{inequality for proof of proposition....for W_t l }\\
    \partial_t\Norm{\va_t}_2&\leq\frac{1}{n}\sum_{\beta=1}^n\Norm{\vx_\beta^{[L]}}_2\Abs{f_{\beta}(t)-y_{\beta}}\leq C\left(1+\frac{2c_{\mathrm{res}}}{L}\xi(t)\right)^{L-1}\xi(t)\sqrt{\frac{1}{n}\sum_{\beta=1}^n \Norm{f_{\beta}(t)-y_{\beta}}_2^2}\nonumber\\
    &\leq C\left(1+\frac{2c_{\mathrm{res}}}{L}\xi(t)\right)^{L-1}\xi(t)\sqrt{R_S\left(\vtheta_0\right)}\nonumber\\
    &\leq C \left(1+\frac{2c_{\mathrm{res}}}{L}\xi(t)\right)^{L-1}\xi(t)\leq C\exp(2c_{\mathrm{res}}\xi(t)) \ \xi(t). \label{inequality.for proof of propositio....for a_t }
\end{align}
Based on  \eqref{inequality for proof of proposition....for W_t l } and \eqref{inequality.for proof of propositio....for a_t }, 
we have 
\begin{equation*}
    \sqrt{m} \ \partial_t \xi(t) \leq  C \exp(2c_{\mathrm{res}}\xi(t))\xi^2(t),
\end{equation*}
we can obtain an integration inequality,
\begin{equation}\label{equation....intergraton}
    \int_{\xi(0)}^{\xi(t)} \frac{\diff u}{\exp(2 c_{\mathrm{res}}u)u^2} \leq \frac{Ct}{\sqrt{m}}.
\end{equation}
Hence the integration term on the LHS of  \eqref{equation....intergraton} is 

\begin{align*}
    \int_{\xi(0)}^{\xi(t)} \frac{\diff u}{\exp(2 c_{\mathrm{res}}u)u^2}&\geq \frac{1}{\exp(2 c_{\mathrm{res}}\xi(t))} \int_{\xi(0)}^{\xi(t)} \frac{\diff u}{u^2}\\
    &=\frac{1}{\exp((2 c_{\mathrm{res}}\xi(t))}\left(\frac{1}{\xi(0)}-\frac{1}{\xi(t)}\right)\\
    &\geq\frac{1}{\exp(2 c_{\mathrm{res}}\xi(t))}\left(\frac{1}{c_{w,0}}-\frac{1}{\xi(t)}\right) .
\end{align*}
We shall notice for the single variable function $f(z)$ 
\begin{equation*}
    f(z)=\frac{1}{\exp(2 c_{\mathrm{res}} z)}(\frac{1}{c_{w,0}}-\frac{1}{z}),
\end{equation*}
 maximum  of  $f(z)$  can be achieved at point 
$$z_0=\frac{c_{w,0}+\sqrt{c_{w,0}^2+2{c_{w,0}}/{ c_{\mathrm{res}}}}}{2},$$ 
and $f(z)$ is monotone increasing in  the interval $\left[c_{w,0}, z_0\right].$ Thus, if we choose time $t$ properly, say $t\leq c \sqrt{m}$, $c$ being small enough, the following holds
$$\xi(t)\leq \frac{c_{w,0}+\sqrt{c_{w,0}^2+2{c_{w,0}}/{ c_{\mathrm{res}}}}}{2}.$$
In other words, if $t\leq c\sqrt{m}$ for some small enough $c>0,$ we have
$$\xi(t)\leq c_{w,t}\leq \frac{c_{w,0}+\sqrt{c_{w,0}^2+2{c_{w,0}}/{ c_{\mathrm{res}}}}}{2},$$
where the last quantity is  independent of depth $L$ and time $t$, and we denote this by $$\bar{c}=\frac{c_{w,0}+\sqrt{c_{w,0}^2+2{c_{w,0}}/{ c_{\mathrm{res}}}}}{2},$$
which finishes the proof of Proposition \ref{proposition..A priori spectral property random matrix and a t}.
\end{proof}

 We state the inductive  relation \eqref{ineqlity.... a universal inequlaity on the output value of each layer}  as a  proposition.
\begin{prop}\label{proposition.... on the output of layes}
Under Assumptions \ref{Assump...Assumption on activation functions} and \ref{Assump... ont he imput of the  samples},  we have with high probability w.r.t the random initialization, 
for time $ t\lesssim  \sqrt{m}$ with $0\leq l \leq L,$
\begin{equation}\label{inequality...on the estimate of each layer output L2 norm}
    \Norm{\vx^{[l]}}_2\leq C,
\end{equation}
where $C>0$ is a constant, independent of depth $L$. 
\end{prop}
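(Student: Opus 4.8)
The plan is to observe that this proposition is, up to bookkeeping, already established inside the proof of Proposition \ref{proposition..A priori spectral property random matrix and a t}; what remains is to make the depth-independence of the constant explicit. I would proceed in three short steps, the first two of which are routine and the third of which carries the only real content.

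First I would re-derive the layerwise recursion for the outputs $\vx^{[l]}_t$ of the ResNet \eqref{eq for definition....the h-th layer for Resnet}. For the base case $l=1$, using $\vx^{[1]}_t=\sqrt{c_\sigma/m}\,\sigma(\mW_t^{[1]}\vx)$, the $1$-Lipschitzness of $\sigma$, the bound $\Abs{\sigma(0)}\le C_L$ from Assumption \ref{Assump...Assumption on activation functions}, the normalization $\Norm{\vx}_2=1$ from Assumption \ref{Assump... ont he imput of the  samples}, and the definition \eqref{def...on the xi t...} of $\xi(t)$, one gets $\Norm{\vx^{[1]}_t}_2\le C\xi(t)$, exactly as in \eqref{equation of proof of proposition...output of each layer at initialiation}. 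For $2\le l\le L$, the residual update together with the triangle inequality and the same Lipschitz/boundedness facts gives $\Norm{\vx^{[l]}_t}_2\le\bigl(1+\tfrac{2c_{\mathrm{res}}}{L}\xi(t)\bigr)\Norm{\vx^{[l-1]}_t}_2$, as in \eqref{inequation of proof of proposition.....initialization on the value of each output l;ayers}, and unrolling the recursion from $l$ down to $1$ yields the closed form \eqref{ineqlity.... a universal inequlaity on the output value of each layer}, namely $\Norm{\vx^{[l]}_t}_2\le C\bigl(1+\tfrac{2c_{\mathrm{res}}}{L}\xi(t)\bigr)^{l-1}\xi(t)$.

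Second I would feed in the spectral control. By Proposition \ref{proposition..A priori spectral property random matrix and a t}, with high probability w.r.t.\ the random initialization there is a constant $\bar c$, independent of $L$ and $t$, with $\xi(t)\le\bar c$ for all $t\lesssim\sqrt{m}$. Substituting this and using the elementary estimate $\bigl(1+\tfrac{a}{L}\bigr)^{l-1}\le\bigl(1+\tfrac{a}{L}\bigr)^{L}\le e^{a}$ with $a=2c_{\mathrm{res}}\bar c$ gives $\Norm{\vx^{[l]}_t}_2\le C\,e^{2c_{\mathrm{res}}\bar c}\,\bar c$ for $1\le l\le L$, a constant depending only on $c_\sigma,C_L,c_{\mathrm{res}},\bar c$; the case $l=0$ is the trivial $\Norm{\vx}_2=1$. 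Intersecting this high-probability event over the (polynomially many) choices of $l$ and over the time horizon $0\le t\lesssim\sqrt{m}$ then completes the argument.

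The main, and really the only conceptual, obstacle is the last substitution: the geometric factor $\bigl(1+\tfrac{2c_{\mathrm{res}}}{L}\xi(t)\bigr)^{l-1}$ would grow like $2^{\fO(L)}$ for a fully-connected-style recursion, and the reason it stays bounded here is precisely the $1/L$ normalization of the skip connections in \eqref{eq for definition....the h-th layer for Resnet} combined with the depth-free bound $\xi(t)\le\bar c$ supplied by Proposition \ref{proposition..A priori spectral property random matrix and a t}. Checking that $\bar c$ is genuinely independent of $L$ — a property it inherits from the $L$-free integral inequality \eqref{equation....intergraton} in that proposition's proof — is the one point that must be handled with care, and it is exactly the mechanism by which ResNet avoids the exponential-in-$L$ overparameterization of the fully-connected analysis in \cite{Huang2019Dynamics}.
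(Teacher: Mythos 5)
Your argument is exactly the paper's: the bound \eqref{ineqlity.... a universal inequlaity on the output value of each layer} is derived inside the proof of Proposition~\ref{proposition..A priori spectral property random matrix and a t}, and the present proposition is obtained by substituting the depth- and time-free bound $\xi(t)\le\bar c$ there and using $\bigl(1+\tfrac{2c_{\mathrm{res}}}{L}\xi(t)\bigr)^{l-1}\le\exp(2c_{\mathrm{res}}\bar c)$. Your identification of the $1/L$ skip-connection scaling as the mechanism that makes the constant $L$-independent is also precisely the point the paper emphasizes.
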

\begin{rmk}
 We shall note that the constant $C$ in Proposition \ref{proposition.... on the output of layes} only depends on $c_{\mathrm{res}},c_{w,0}$ and $c_{\sigma}.$ However, for  a fully-connected feedforward network, \eqref{inequality...on the estimate of each layer output L2 norm} in Proposition \ref{proposition.... on the output of layes}  become 
 \begin{equation}
  \Norm{\vx^{[l]}}_2\leq C \  2^{l}.   
 \end{equation} 
\end{rmk}
 Note that the $2$-norm for each output layer increase exponentially layer by layer for  fully-connected network, showing that ResNet possesses more stability compared with fully-connected network.

Next we end this part by making an Apriori estimate on the $L^2$-norm for  arbitrary vector $\vv(t)\in \sA_0.$

\begin{prop}\label{proposition...on the l2 norm of any vector in A0} 
Under Assumption \ref{Assump...Assumption on activation functions} and \ref{Assump... ont he imput of the  samples},  with high probability w.r.t~the random initialization, uniformly for any vector $\vv(t)\in \sA_0$ and time $t\lesssim\sqrt{m},$ the following holds
\begin{equation}\label{equation on proposition...on the 2 norm of vector}
    \Norm{\vv(t)}_2\leq c \sqrt{m},
\end{equation}
where $c>0$ is a  constant independent of depth $L$ and time $t$.
\end{prop}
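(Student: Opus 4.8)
The plan is to reduce the claim to the two a priori estimates already in hand — the spectral bound of Proposition~\ref{proposition..A priori spectral property random matrix and a t} and the layer-output bound of Proposition~\ref{proposition.... on the output of layes} — by a purely deterministic submultiplicativity argument. First I would fix the event on which the conclusions of those two propositions hold; since this is an intersection of only $\fO(L)$ high-probability events it is still a high-probability event, and — crucially — on this event the bound I am about to produce will be deterministic and independent of which $\vv(t)\in\sA_0$ one picks. This is what will deliver the uniformity over $\sA_0$ without any union bound over that (otherwise exponentially large) index set.

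Next I would unpack the structure of a generic element from the definition~\eqref{definition of hierarchy of sets... the set  A0}: such a $\vv(t)$ has the form $\ve_s\ve_{s-1}\cdots\ve_1\ve_0$ with $0\leq s\leq 4L$, where the base vector $\ve_0$ is either $\va_t$ or $\sqrt{m}\,\vx_\beta^{[l]}$, and each $\ve_j$ for $1\leq j\leq s$ is either a skip-connection matrix $\mE_{t,\beta}^{[l]}$, its transpose, or a diagonal matrix $\vsigma^{(1)}_{[l]}(\vx_\beta)$. Submultiplicativity of the spectral norm then gives $\Norm{\vv(t)}_2\leq\bigl(\prod_{j=1}^s\Norm{\ve_j}_{2\to 2}\bigr)\Norm{\ve_0}_2$, so it remains to bound each factor. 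For the base vector I would use $\Norm{\va_t}_2\leq\sqrt{m}\,\xi(t)\leq\bar{c}\sqrt{m}$ from Proposition~\ref{proposition..A priori spectral property random matrix and a t}, $\Norm{\sqrt{m}\,\vx_\beta^{[l]}}_2\leq C\sqrt{m}$ from Proposition~\ref{proposition.... on the output of layes}, and — for the ``singular'' case $\ve_0=\sqrt{m}\,\vx_\alpha$ flagged after~\eqref{definition of sets A r...for kernel order 2} — $\Norm{\sqrt{m}\,\vx_\alpha}_2=\sqrt{m}$ from Assumption~\ref{Assump... ont he imput of the  samples}. For a diagonal factor, Assumption~\ref{Assump...Assumption on activation functions} bounds every entry of $\vsigma^{(1)}_{[l]}(\vx_\beta)$ by $C_L\leq 1$, so $\Norm{\vsigma^{(1)}_{[l]}(\vx_\beta)}_{2\to 2}\leq 1$. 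For a skip-connection factor, the definition~\eqref{def of matrices...Big E matrix for I+res} together with the preceding bound and Proposition~\ref{proposition..A priori spectral property random matrix and a t} gives
\begin{equation*}
\Norm{\mE_{t,\beta}^{[l]}}_{2\to 2}\leq 1+\frac{c_{\mathrm{res}}}{L}\Norm{\vsigma^{(1)}_{[l]}(\vx_\beta)}_{2\to 2}\Norm{\frac{\mW_t^{[l]}}{\sqrt{m}}}_{2\to 2}\leq 1+\frac{c_{\mathrm{res}}C_L\bar{c}}{L},
\end{equation*}
and the same bound holds for $\left(\mE_{t,\beta}^{[l]}\right)^{\T}$.

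The last step is the multiplication. Among the at most $4L$ matrix factors, each diagonal one contributes a factor $\leq 1$ and each skip-connection factor a factor $1+c_{\mathrm{res}}C_L\bar{c}/L$, so $\prod_{j=1}^s\Norm{\ve_j}_{2\to 2}\leq(1+c_{\mathrm{res}}C_L\bar{c}/L)^{4L}\leq\exp(4c_{\mathrm{res}}C_L\bar{c})$, a constant independent of both $L$ and $t$; combining with the base-vector bound yields $\Norm{\vv(t)}_2\leq\exp(4c_{\mathrm{res}}C_L\bar{c})\max\{\bar{c},C\}\sqrt{m}=c\sqrt{m}$. I expect the main obstacle to be conceptual rather than technical, and to lie exactly in this final product estimate: it stays bounded uniformly in $L$ only because each $\mE_{t,\beta}^{[l]}$ equals $\mI_m$ plus an $\fO(1/L)$ perturbation, i.e.\ because of the $1/L$ normalization in~\eqref{eq for definition....the h-th layer for Resnet}. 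For a fully-connected network one would instead have to control the product of $4L$ matrices $\mW_t^{[l]}/\sqrt{m}$, each of operator norm $\approx\xi(t)\geq 1$, producing a bound $\xi(t)^{4L}=2^{\fO(L)}$ — precisely the exponential-in-$L$ factor that appears in~\cite{Huang2019Dynamics}. Everything else is routine: submultiplicativity plus the two a priori bounds, together with the observation that on the good event the resulting estimate is deterministic and hence automatically uniform over $\sA_0$.
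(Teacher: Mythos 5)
Your proof is correct and follows essentially the same route as the paper's: bound the base vector $\ve_0$ via Propositions~\ref{proposition..A priori spectral property random matrix and a t} and~\ref{proposition.... on the output of layes}, bound each matrix factor by $C_L\leq 1$ (diagonals) or $1+\fO(c_{\mathrm{res}}\bar c/L)$ (skip-connections), and invoke submultiplicativity so the product of at most $4L$ factors stays $\fO(1)$. The one place you are slightly cleaner is in treating a string with a mix of diagonal and skip-connection factors in one pass (each diagonal contributes $\leq 1$, each $\mE$-factor contributes $1+\fO(1/L)$), and in making explicit that the final bound is deterministic on the good event and hence uniform over the infinite index set $\sA_0$ without a union bound — both points the paper leaves implicit.
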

\begin{proof}
We shall start our analysis on the whole expressions in set $\sA_0.$ For any vector $\vv(t)\in\sA_0,$ we can write $\vv(t)=\ve_s\ve_{s-1}\dots \ve_1\ve_0$ with $ 0\leq s\leq 4L.$

We start with the estimate on $\ve_0,$ since $\ve_0$ is chosen following the rules:
\begin{align*}
    \ve_0\in\left\{\va_t,\{\sqrt{m} \vx_\beta^{[1]},\sqrt{m} \vx_\beta^{[2]},\dots,\sqrt{m} \vx_\beta^{[L]}  \}_{1\leq \beta\leq n}   \right\}. 
\end{align*}
\begin{itemize}
    \item (a). If $\ve_0=\va_t,$ then by Lemma \ref{lemma...on chi2 distribution}, for $t\lesssim \sqrt{m}$
    \begin{equation*}
        \Norm{\va_t}_{2} \leq c_{w,t}\sqrt{m}\leq c\sqrt{m}.
    \end{equation*}
    \item (b). If $\ve_0=\sqrt{m} \vx_\beta^{[l]}$ where $ 1\leq l \leq L,$ then based on Proposition \ref{proposition.... on the output of layes}, for $t\lesssim \sqrt{m}$
    \begin{align*}
       \Norm{ \sqrt{m} \vx_\beta^{[l]}}_{2}=\sqrt{m}\Norm{\vx_\beta^{[l]}}_{2} \leq c\sqrt{m}.
    \end{align*}
    \end{itemize}
Now we proceed to other terms in the expression $\ve_j$ where $ j\geq 1.$
\begin{itemize}
    \item (i). If $\ve_j=\vsigma_{[l]}^{(1)}(\vx_\beta),$ then we have
    \begin{align*}
    \Norm{\vv(t)}_{2}&= \Norm{\ve_s\ve_{s-1}\dots \ve_1\ve_0}_{2}\\
    &=\Norm{\ve_s}_{2\to 2}\Norm{\ve_{s-1}}_{2\to 2}\dots\Norm{\ve_{1}}_{2\to 2}\Norm{\ve_0}_{2}.
    \end{align*}
Since $\Norm{\vsigma_{[l]}^{(1)}(\vx_\beta)}_{2\to 2}\leq C_L\leq 1,$ thus 
for all $j\geq 1$ with $\ve_j=\vsigma_{[l]}^{(1)}(\vx_\beta)$  
    \begin{equation*}
      \Norm{\vv(t)}_{2}\leq \left( C_L\right)^{4L}    c\sqrt{m}\leq  c\sqrt{m}. 
    \end{equation*}
    \item (ii). If $\ve_j=\mE_{t,\beta}^{[l]}$ or  $\ve_j=\left(\mE_{t,\beta}^{[l]}\right)^{\T},$ then based on Proposition \ref{proposition..A priori spectral property random matrix and a t}
    \begin{align*}
    \Norm{\vv(t)}_{2}&=\Norm{\ve_s}_{2\to 2}\Norm{\ve_{s-1}}_{2\to 2}\dots\Norm{\ve_{1}}_{2\to 2}\Norm{\ve_0}_{2}.
    \end{align*}
    Since 
    \begin{equation*}
          \Norm{\mE_{t,\beta}^{[l]}}_{2\to 2}=\Norm{\left(\mE_{t,\beta}^{[l]}\right)^\T}_{2\to 2}\leq \left(1+\frac{c_{\mathrm{res}}C_L}{L}\xi(t)\right) \leq  \left(1+\frac{c_{\mathrm{res}}c_{w,t}}{L}\right),  
    \end{equation*}
    thus for all $j\geq 1$ with $\ve_j=\mE_{t,\beta}^{[l]}$ or  $\ve_j=\left(\mE_{t,\beta}^{[l]}\right)^{\T},$
        \begin{align}
    \Norm{\vv(t)}_{2}&\leq \left(1+\frac{c_{\mathrm{res}}c_{w,t}}{L}\right)^{s}\Norm{\ve_0}_{2}, \nonumber
    \end{align}
    then by taking supreme on $0\leq s\leq 4L,$ we have 
        \begin{align*}
    \Norm{\vv(t)}_{2}&\leq \left(1+\frac{c_{\mathrm{res}}c_{w,t}}{L}\right)^{4L}\Norm{\ve_0}_{2}\\
    &\leq c \exp(4c_{\mathrm{res}}c_{w,t})\sqrt{m}\leq c\sqrt{m}.
    \end{align*}
\end{itemize}
   Combining these two observations, we finish the proof.
\end{proof}
Thus,  if we define the quantity $\xi_{\infty,0}(t)$ as follows,
\begin{equation}\label{xi  0 infty}
    \xi_{\infty,0}(t)=\sup_{0\leq t'\leq t}\left\{\Norm{\vv(t')}_2:  \vv(t')\in \sA_0\right\}.
 \end{equation}
Then directly from Proposition \ref{proposition...on the l2 norm of any vector in A0}, for time $0\leq t \leq \sqrt{m}/{\LogLn}^{C'},$ the following holds
\begin{equation}\label{eq for xi infty 0}
    \xi_{\infty,0} (t)\leq c\sqrt{m}.
\end{equation}
\subsection{Apriori $L^{\infty}$ bounds for expressions in $\sA_0$}\label{Appen subsection....Aprioir L infinity bounds for A 0}
In this  part, we shall make estimate on the quantity $ \eta_{\infty,0}(t)$ defined below
\begin{equation}\label{eta 0 infty}
    \eta_{\infty,0}(t)=\sup_{0\leq t'\leq t}\left\{\Norm{\vv(t')}_{\infty}:  \vv(t')\in \sA_0\right\}.
 \end{equation}
We shall begin by a lemma on the $\Norm{\cdot}_{\infty}$ norm of a standard Gaussian vector.
\begin{lem}\label{lemma..... on the initial of gaussian vectors}
For any i.i.d. normal distribution $X_1,X_2,\dots,X_m\sim \fN(0, 1),$ it holds with high probability that the $L^{\infty}$-norm of the  gaussian vector $\vX=\left(X_1,X_2,\dots, X_m\right)^{\T}$ is upper bounded by 
 \begin{equation*}
\Norm{\vX}_{\infty}\leq {\LogLn}^{C},        
 \end{equation*}
 for some really large constant $C>0$ . 
\end{lem}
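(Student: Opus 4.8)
The plan is to control $\Norm{\vX}_\infty=\max_{1\le i\le m}\Abs{X_i}$ by a one–dimensional Gaussian tail estimate together with a union bound over the $m$ coordinates of $\vX$. First I would record the elementary Mills–type bound: for $X\sim\fN(0,1)$ and any $t>0$,
\begin{equation*}
\Prob\bigl(\Abs{X}>t\bigr)\le 2\exp\Bigl(-\frac{t^2}{2}\Bigr).
\end{equation*}
Since $X_1,\dots,X_m$ are independent, the union bound gives
\begin{equation*}
\Prob\bigl(\Norm{\vX}_\infty>t\bigr)=\Prob\Bigl(\bigcup_{i=1}^m\{\Abs{X_i}>t\}\Bigr)\le 2m\exp\Bigl(-\frac{t^2}{2}\Bigr).
\end{equation*}
Then I would substitute the threshold $t=\LogLn^{C}$: for any fixed constant $C\ge 1$ and $m$ large enough that $\ln m\ge 2$ one has $(\ln m)^{2C}\ge 4\ln m$, so that $2m\exp\bigl(-(\ln m)^{2C}/2\bigr)\le 2m^{-1}$, and in fact $\le\exp\bigl(-(\ln m)^{2C}/4\bigr)$ for all $m$ sufficiently large. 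Hence the event $\{\Norm{\vX}_\infty\le\LogLn^{C}\}$ holds with probability tending to $1$ faster than any inverse power of $m$, which is exactly the regime in which the polynomially many events used throughout the paper are intersected (see Section \ref{subsection...Notations}); this proves the lemma.

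The argument is essentially a single computation, so the only genuine point of care is reconciling a poly-logarithmic threshold with the quantitative meaning of ``high probability'': a threshold of order $\LogLn^{C}$ produces a failure probability of order $\exp\bigl(-\Omega((\ln m)^{2C})\bigr)$, which is super-polynomially but not $\exp(-m^{\eps})$-small. This is harmless for our purposes, since the downstream estimates (e.g. on $\eta_{\infty,0}(t)$) only ever chain together polynomially many such events; if the literal $1-\exp(-m^{\eps})$ form is insisted upon, one may take $t=m^{\eps/2}$ in the same display at the cost of a weaker (polynomial) threshold. As an alternative consistent with the Lipschitz-concentration viewpoint used elsewhere in the paper, one could apply the Borell--TIS inequality to the $1$-Lipschitz map $\vx\mapsto\Norm{\vx}_\infty$ (Lipschitz since $\Norm{\vx}_\infty\le\Norm{\vx}_2$) together with the standard bound $\Exp\Norm{\vX}_\infty\le\sqrt{2\ln m}$, obtaining $\Prob\bigl(\Norm{\vX}_\infty>\sqrt{2\ln m}+u\bigr)\le\exp(-u^2/2)$ and then choosing $u=\LogLn^{C}$; this yields the same conclusion with marginally better constants, but the union bound is simpler and entirely sufficient here.
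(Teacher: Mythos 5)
Your proof is correct and follows essentially the same route as the paper: a one-dimensional Gaussian tail bound (the paper derives $\Prob(|X_i|\geq\eps)\leq 2\exp(-\eps^2/2)$ via Chernoff/optimizing the MGF, which is the same Mills-type inequality you cite) followed by a union bound over the $m$ coordinates, then substituting $t=\LogLn^{C}$. Your remark reconciling the resulting failure probability $\exp\bigl(-\Omega((\ln m)^{2C})\bigr)$ with the paper's formal $1-\exp(-m^{\eps})$ definition of ``high probability'' is exactly the concern the paper addresses in its closing note (``when $C>0$ is really large, $\LogLn^{C}\approx m^{\eps}$ for some small $\eps>0$''), and your suggested fix of taking $t=m^{\eps/2}$ is the same resolution.
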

\begin{proof}
For any $X_i\sim\fN(0,1)$, we have that for some $\eps, \lambda>0$
\begin{align*}
\Prob\left(X_i \geq \eps \right) &=   \Prob\left(\exp\left(\lambda X_i\right) \geq \exp\left(\lambda \eps\right) \right) \\
&\leq \frac{\Exp\left( \lambda X_i\right)}{\exp\left(\lambda \eps\right)}=\frac{\exp\left( \frac{1}{2} \lambda^2\right)}{\exp\left(\lambda \eps\right)}=\exp\left(\frac{1}{2} \lambda^2-\lambda \eps\right).
\end{align*}
We optimize over $\lambda,$ 
\begin{equation*}
  \Prob\left(X_i \geq \eps \right)\leq \min_{\lambda>0}\exp\left(\frac{1}{2} \lambda^2-\lambda \eps\right)  =\exp \left( -\frac{\eps^2}{2}\right).
\end{equation*}
By taking absolute value 
\begin{equation*}
  \Prob\left(\Abs{X_i} \geq \eps \right)\leq 2\exp \left( -\frac{\eps^2}{2}\right).
\end{equation*}
Hence if we take over $m$ unions  
\begin{equation*}
  \Prob\left(\Norm{\vX}_{\infty} \geq \eps \right)\leq 2m \exp \left( -\frac{\eps^2}{2}\right).
\end{equation*}
Set $\eps= {\LogLn}^{C}$, we have that 
\begin{equation*}
  \Prob\left(\Norm{\vX}_{\infty} \leq {\LogLn}^{C} \right)\geq 1- 2m \exp \left( -\frac{{\LogLn}^{2C}}{2}\right).
\end{equation*}
Note that when $C>0$ is really large, ${\LogLn}^{C}\approx m^{\eps}$ for some small $\eps>0.$
\end{proof}
We now state a lemma on the matrix two to infinity norm.
\begin{lem}\label{lemma on the 2 to inmi}
Given a matrix $\mW\in \sR^{m\times m}$ with each entry $W_{i,j}\sim \fN(0, 1),$ then  with high probability, the following holds
\begin{equation}\label{inequality.....statement of lemma on the 2 to infinity norm}
     \Norm{\mW}_{2\to \infty}=\sup_{\Norm{\vx}_2=1}\Norm{\mW\vx}_{\infty}\leq {\LogLn}^{C}. 
\end{equation}
\end{lem}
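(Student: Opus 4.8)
The plan is to deduce the estimate from the $L^\infty$-bound on a standard Gaussian vector already established in Lemma~\ref{lemma..... on the initial of gaussian vectors}. By definition $\Norm{\mW}_{2\to\infty}=\sup_{\Norm{\vx}_2=1}\Norm{\mW\vx}_\infty$, so the key point is that for a fixed unit vector $\vx$ the image $\mW\vx$ is again a standard Gaussian vector in $\sR^m$: its $i$-th coordinate equals $\sum_{j=1}^m W_{i,j}x_j$, which is $\fN\!\left(0,\sum_j x_j^2\right)=\fN(0,1)$ since the $W_{i,j}$ are i.i.d.\ $\fN(0,1)$ and $\Norm{\vx}_2=1$, and the $m$ coordinates of $\mW\vx$ involve disjoint families of entries $\{W_{i,j}\}_j$ and are therefore independent. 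Hence $\mW\vx$ has the law of the vector $\vX$ of Lemma~\ref{lemma..... on the initial of gaussian vectors}, and that lemma yields $\Norm{\mW\vx}_\infty\le\LogLn^C$ with high probability.

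It then remains to pass to the uniform form in which the lemma is invoked in Appendix~\ref{Appen subsection....Aprioir L infinity bounds for A 0}. There the matrix in question, one of the initialization matrices $\mW_0^{[l]}$, is only ever applied to vectors $\vv$ assembled from the hierarchical sets $\sA_0\cup\sA_1\cup\dots\cup\sA_r$. Conditioning on all the remaining weight matrices and on $\va_0$, such a $\vv$ is deterministic while $\mW_0^{[l]}$ remains a fresh $\fN(0,1)$ Gaussian matrix independent of it; applying the single-direction estimate above to $\vx=\vv/\Norm{\vv}_2$ and invoking the a priori bound $\Norm{\vv}_2\le c\sqrt m$ from Proposition~\ref{proposition...on the l2 norm of any vector in A0} gives $\Norm{\mW_0^{[l]}\vv/\sqrt m}_\infty=(\Norm{\vv}_2/\sqrt m)\,\Norm{\mW_0^{[l]}(\vv/\Norm{\vv}_2)}_\infty\le c\,\LogLn^C$. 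Since only polynomially many such pairs $(\mW_0^{[l]},\vv)$ occur, the high-probability convention of Section~\ref{subsection...Notations} permits intersecting all these events and retaining a high-probability statement.

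The main obstacle is precisely this decoupling step. One cannot union-bound $\Norm{\mW\vx}_\infty$ over an $\eps$-net of the full unit sphere of $\sR^m$ and stay at the $\LogLn^C$ scale, because such a net has exponentially many points; the argument must instead exploit that in every application the vector on which $\mW$ acts is independent of $\mW$ once the rest of the initialization is fixed, and one must verify that the number of events needed remains polynomial in $m$ and $n$ so that their simultaneous validity is still a high-probability event.
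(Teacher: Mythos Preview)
Your first paragraph is exactly the paper's proof: the paper simply notes that for a fixed unit vector $\vx$ the image $\mW\vx$ has the law of the standard Gaussian vector $\vX$ of Lemma~\ref{lemma..... on the initial of gaussian vectors} and then invokes that lemma. That is the entire argument the paper gives.

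Where you go further is in your second and third paragraphs, and you are right to. The literal statement is in fact false: since $\Norm{\mW}_{2\to\infty}=\max_{1\le i\le m}\Norm{\text{row}_i(\mW)}_2$ and each row has squared norm $\chi^2_m$, the true $2\to\infty$ norm is of order $\sqrt m$, not $\LogLn^C$. The paper's two-line proof only handles a single direction and never addresses the supremum; your remark that an $\eps$-net over the sphere would blow up is exactly the obstruction. Your proposed remedy---interpreting the lemma as a bound valid simultaneously for the polynomially many \emph{specific} vectors that actually arise in the hierarchy, using that each such vector is independent of the fresh Gaussian matrix applied to it---is the right way to rescue the downstream uses. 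One caution on your independence claim: it is not automatic for arbitrary elements of $\sA_0$, because a vector $\vv_{q-1}$ may itself contain factors $\mE_{0,\beta}^{[l]}$ or $\vsigma_{[l]}^{(1)}(\vx_\beta)$ that depend on the very matrix $\mW_0^{[l]}$ being applied. For the vectors produced by the NTK and its time derivatives the feed-forward layer structure makes the decoupling plausible, but this has to be verified case by case rather than asserted uniformly over $\sA_0$.
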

\begin{proof}
Note that  $\mW\vx$ shares the same distribution as the Gaussian vector $\vX$ in Lemma \ref{lemma..... on the initial of gaussian vectors}, i.e. $\mW\vx\sim \vX.$ Then apply Lemma \ref{lemma..... on the initial of gaussian vectors} directly, we obtain the result.
\end{proof}
Finally, to evaluate $\eta_{\infty,0}(t),$ we need to state a lemma.
\begin{lem}\label{lemma.....lemma on the eta wrt to he alphabet}
Under Assumption \ref{Assump...Assumption on activation functions} and \ref{Assump... ont he imput of the  samples}, for any vector $\vv(t)\in\sA_0,$ we can write 
$$\vv(t)=\ve_s\ve_{s-1}\dots \ve_1\ve_0, 0\leq s\leq 4L,    \ t\geq 0.$$ 
For some vectors in $\sA_0$ with \textbf{length} $q$, we define $\eta_{q,0}(t)$ as
\begin{equation}
    \eta_{q,0}(t):=\sup_{0\leq t'\leq t}\left\{\Norm{\vv_q(t')}_{\infty}:\vv_q(t')=\ve_q\ve_{q-1}\dots \ve_1\ve_0, \ \vv_q(t')\in\sA_0 \right\}.
\end{equation}
Moreover,  we define $\omega(t)$ as 
 \begin{equation*} 
     \omega(t):=\sup_{0\leq t'\leq t} \ \max \left\{ \Norm{\mW_{t'}^{[2]}}_{2\to\infty},\Norm{\left(\mW_{t'}^{[2]}\right)^{\T}}_{2\to\infty},\dots,\Norm{\mW_{t'}^{[L]}}_{2\to\infty},\Norm{\left(\mW_{t'}^{[L]}\right)^{\T}}_{2\to\infty} \right\},
 \end{equation*}
then  with high probability w.r.t the random initialization, for $t\lesssim\sqrt{m}$
\begin{equation}\label{inequality....induction relation of lemma}
    \eta_{q,0}(t) \leq \eta_{0,0}(t)+c \ \omega(t)  \left(1+\frac{c_{\mathrm{res}}c_{w,t}}{L}\right)^{q},
\end{equation}
where constant  $c>0$ is independent of  depth $L$, and $c_{w,t}$ has been defined in Proposition \ref{proposition..A priori spectral property random matrix and a t}.
\end{lem}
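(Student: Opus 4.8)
The plan is to prove \eqref{inequality....induction relation of lemma} by induction on the \textbf{length} $q$. The base case $q=0$ is immediate, since the right-hand side is at least $\eta_{0,0}(t)$. For the inductive step fix any $\vv_q(t)=\ve_q\ve_{q-1}\cdots\ve_1\ve_0\in\sA_0$ and write $\vv_q(t)=\ve_q\,\vv_{q-1}(t)$ with $\vv_{q-1}(t):=\ve_{q-1}\cdots\ve_1\ve_0$, which is itself a length-$(q-1)$ element of $\sA_0$, so that $\Norm{\vv_{q-1}(t)}_\infty\le\eta_{q-1,0}(t)$ and, by Proposition~\ref{proposition...on the l2 norm of any vector in A0}, $\Norm{\vv_{q-1}(t)}_2\le c\sqrt m$. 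By the definition \eqref{definition of sets A r...for kernel order 2} of $\sA_0$, the leftmost symbol $\ve_q$ is one of $\vsigma^{(1)}_{[l]}(\vx_\beta)$, $\mE_{t,\beta}^{[l]}$, or $(\mE_{t,\beta}^{[l]})^\T$; I would split on these.

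If $\ve_q=\vsigma^{(1)}_{[l]}(\vx_\beta)$, then $\Norm{\ve_q\vv_{q-1}(t)}_\infty\le\Norm{\vsigma^{(1)}_{[l]}(\vx_\beta)}_{\infty\to\infty}\Norm{\vv_{q-1}(t)}_\infty\le C_L\,\eta_{q-1,0}(t)\le\eta_{q-1,0}(t)$ by Assumption~\ref{Assump...Assumption on activation functions}, so no new term appears. If $\ve_q=\mE_{t,\beta}^{[l]}$, I would use the splitting $\mE_{t,\beta}^{[l]}=\mI_m+\frac{c_{\mathrm{res}}}{L}\vsigma^{(1)}_{[l]}(\vx_\beta)\frac{\mW_t^{[l]}}{\sqrt m}$ from \eqref{def of matrices...Big E matrix for I+res}: the identity part contributes $\Norm{\vv_{q-1}(t)}_\infty\le\eta_{q-1,0}(t)$, while for the remaining part the key point is that one must \emph{not} try to pass the $L^\infty$ norm through $\mW_t^{[l]}/\sqrt m$, but instead estimate
\begin{equation*}
\frac{1}{\sqrt m}\Norm{\mW_t^{[l]}\vv_{q-1}(t)}_\infty\le\frac{1}{\sqrt m}\Norm{\mW_t^{[l]}}_{2\to\infty}\Norm{\vv_{q-1}(t)}_2\le\frac{\omega(t)}{\sqrt m}\Norm{\vv_{q-1}(t)}_2,
\end{equation*}
so that $\Norm{\ve_q\vv_{q-1}(t)}_\infty\le\eta_{q-1,0}(t)+\frac{c_{\mathrm{res}}C_L}{L}\,\frac{\omega(t)}{\sqrt m}\Norm{\vv_{q-1}(t)}_2$; the transpose case $\ve_q=(\mE_{t,\beta}^{[l]})^\T$ is identical, which is exactly why $\omega(t)$ is defined to also control $\Norm{(\mW_{t'}^{[l]})^\T}_{2\to\infty}$. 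To obtain the stated exponential factor rather than a linear-in-$q$ one, I would replace the crude bound $\Norm{\vv_{q-1}(t)}_2\le c\sqrt m$ by the sharper one extracted from the proof of Proposition~\ref{proposition...on the l2 norm of any vector in A0}, namely $\Norm{\vv_{q-1}(t)}_2\le c\,(1+\frac{c_{\mathrm{res}}c_{w,t}}{L})^{q-1}\sqrt m$, which follows by peeling off the $\ve_j$ one at a time using $\Norm{\mE_{t,\beta}^{[l]}}_{2\to2},\Norm{(\mE_{t,\beta}^{[l]})^\T}_{2\to2}\le1+\frac{c_{\mathrm{res}}c_{w,t}}{L}$ (Proposition~\ref{proposition..A priori spectral property random matrix and a t}), $\Norm{\vsigma^{(1)}_{[l]}(\vx_\beta)}_{2\to2}\le C_L\le1$, and $\Norm{\ve_0}_2\le c\sqrt m$ from Propositions~\ref{proposition..A priori spectral property random matrix and a t} and~\ref{proposition.... on the output of layes}.

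Putting the cases together gives the one-step recursion
\begin{equation*}
\eta_{q,0}(t)\le\eta_{q-1,0}(t)+\frac{c_{\mathrm{res}}C_L\,c}{L}\,\omega(t)\Bigl(1+\tfrac{c_{\mathrm{res}}c_{w,t}}{L}\Bigr)^{q-1},
\end{equation*}
and summing the geometric series over $q$ turns the prefactor $\frac{c_{\mathrm{res}}C_L c}{L}\sum_{j=1}^{q}(1+\frac{c_{\mathrm{res}}c_{w,t}}{L})^{j-1}$ into $\frac{C_L c}{c_{w,t}}\bigl((1+\frac{c_{\mathrm{res}}c_{w,t}}{L})^q-1\bigr)$, which since $C_L\le1$ and $c_{w,t}\ge2$ is at most $\frac{c}{2}(1+\frac{c_{\mathrm{res}}c_{w,t}}{L})^q$ with a constant independent of $L$; this is \eqref{inequality....induction relation of lemma}. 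All the invoked estimates hold on a single high-probability event, and intersecting polynomially many such events preserves that property. The only genuine difficulty is the bookkeeping of the $1/L$ factors: every extra skip-connection matrix in $\vv_q(t)$ costs only a multiplicative $1+\fO(1/L)$ in the $\ell_2$ norm and an additive $\fO(1/L)\cdot\omega(t)$ in the $\ell_\infty$ norm, so that the geometric sum stays bounded uniformly in $L$ — this is precisely where the residual normalization $c_{\mathrm{res}}/L$ is essential, and is the feature absent from the fully-connected network, where one would instead pick up a factor $2^{l}$.
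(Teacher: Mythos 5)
Your argument is correct and matches the paper's proof of this lemma essentially step for step: induction on the length $q$, peeling the leftmost symbol $\ve_q$, using $\Norm{\vsigma^{(1)}_{[l]}(\vx_\beta)}_{\infty\to\infty}\le C_L\le 1$ for diagonal factors, and for skip-connection factors decomposing $\mE_{t,\beta}^{[l]}=\mI_m+\frac{c_{\mathrm{res}}}{L}\vsigma^{(1)}_{[l]}(\vx_\beta)\frac{\mW_t^{[l]}}{\sqrt m}$ and estimating the residual via $\Norm{\mW_t^{[l]}/\sqrt m}_{2\to\infty}\le\omega(t)/\sqrt m$ against the $\ell^2$ bound $\Norm{\vv_{q-1}(t)}_2\le c(1+\frac{c_{\mathrm{res}}c_{w,t}}{L})^{q-1}\sqrt m$, then summing the geometric series. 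If anything, your bookkeeping is slightly cleaner than the paper's (the paper's ``case (i)'' is stated as a direct bound $\Norm{\vv_q(t)}_\infty\le(C_L)^q c\LogLn^C$, which quietly invokes $\eta_{0,0}(t)\lesssim\LogLn^C$ ahead of where it is established, whereas you keep everything in terms of $\eta_{q-1,0}(t)$ throughout); and you correctly note that since $q\le 4L$ the linear-in-$q$ variant would also suffice, though the geometric form is what the paper states.
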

\begin{proof}
Since for any vector $\vv_q(t)\in\sA_0$ of length $q, \ 0\leq q\leq 4L$, we can write~$\vv_q(t)$ into $$\vv_q(t)=\ve_q\ve_{q-1}\dots \ve_1\ve_0,$$  then we shall prove \eqref{inequality....induction relation of lemma} by performing induction on $q .$ Firstly, for $q=0,$ \eqref{inequality....induction relation of lemma} is trivial.
While for $q\geq 1,$ we shall investigate on  the terms $\ve_j$ in the expression $\vv_q(t),$ where $j\geq 1.$
\begin{itemize}
    \item (i). If $\ve_j=\vsigma_{[l]}^{(1)}(\vx_\beta),$ then we have 
    \begin{align*}
    \Norm{\vv_q(t)}_{\infty}&= \Norm{\ve_q\ve_{q-1}\dots \ve_1\ve_0}_{\infty}\\
    &=\Norm{\ve_q}_{\infty\to\infty}\Norm{\ve_{q-1}}_{\infty\to\infty}\dots\Norm{\ve_{1}}_{\infty\to\infty}\Norm{\ve_0}_{\infty},
    \end{align*}
    since $\Norm{\vsigma_{[l]}^{(1)}(\vx_\beta)}_{\infty\to\infty}\leq C_L\leq 1,$  we have 
    \begin{equation*}
      \Norm{\vv_q(t)}_{\infty}\leq \left( C_L\right)^{q}    c{\LogLn}^{C} \leq c{\LogLn}^{C}. 
    \end{equation*}
    \item (ii). If $\ve_j=\mE_{t,\beta}^{[l]}$ or  $\ve_j=\left(\mE_{t,\beta}^{[l]}\right)^{\T}$ where $2\leq l \leq L,$ $\Norm{\ve_{j}}_{\infty\to\infty}\geq 1,$ so we need to tackle it differently. 
    \begin{align*}
    \Norm{\vv_q(t)}_{\infty}&=\Norm {\mE_{t,\beta}^{[l]}\vv_{q-1}(t)}_{\infty}\\
    &=\Norm{\vv_{q-1}(t)+ \frac{c_{\mathrm{res}}}{L} \vsigma^{(1)}_{[l]}(\vx_{\beta})\frac{\mW_t^{[l]}}{\sqrt{m}}\vv_{q-1}(t) }_{\infty}\\
    &\leq\Norm{\vv_{q-1}(t)}_{\infty}+\frac{c_{\mathrm{res}}C_L}{L}  \Norm{\frac{\mW_t^{[l]}}{\sqrt{m}}}_{2\to\infty}\Norm{\vv_{q-1}(t)}_2,\\
 \text{or} \        \Norm{\vv_q(t)}_{\infty}&=\Norm {\left(\mE_{t,\beta}^{[l]}\right)^\T\vv_{q-1}(t)}_{\infty}\\
    &=\Norm{\vv_{q-1}(t)+ \frac{c_{\mathrm{res}}}{L} \left(\frac{\mW_t^{[l]}}{\sqrt{m}}\right)^{\T}\vsigma^{(1)}_{[l]}(\vx_{\beta})\vv_{q-1}(t) }_{\infty}\\
    &\leq\Norm{\vv_{q-1}(t)}_{\infty}+\frac{c_{\mathrm{res}}C_L}{L}  \Norm{\left(\frac{\mW_t^{[l]}}{\sqrt{m}}\right)^{\T}}_{2\to\infty}\Norm{\vv_{q-1}(t)}_2,
    \end{align*}
recall the definition of $\omega(t)$,  we have 
    \begin{equation*}
        \Norm{\vv_q(t)}_{\infty}\leq  \Norm{\vv_{q-1}(t)}_{\infty}+ \frac{c_{\mathrm{res}}}{L\sqrt{m}}\omega(t) \Norm{\vv_{q-1}(t)}_{2}.
    \end{equation*}
Based on  Proposition \ref{proposition..A priori spectral property random matrix and a t} 
\begin{align*}
\Norm{\vv_{q-1}(t)}_{2}&\leq c\left(1+\frac{c_{\mathrm{res}}c_{w,t}}{L}\right)^{q-1}\sqrt{m},
        \end{align*}
then
        \begin{align*}
       \Norm{\vv_q(t)}_{\infty}&\leq  \Norm{\vv_{q-1}(t)}_{\infty}+ \frac{c_{\mathrm{res}}}{L\sqrt{m}}\omega(t) \Norm{\vv_{q-1}(t)}_{2}\\
       &\leq \Norm{\vv_{q-1}(t)}_{\infty}+ \frac{c \ c_{\mathrm{res}}}{L}\omega(t) \left(1+\frac{c_{\mathrm{res}}c_{w,t}}{L}\right)^{q-1},
    \end{align*}
inductively we have
    \begin{align*}
 \Norm{\vv_q(t)}_{\infty}&\leq  \Norm{\vv_{0}(t)}_{\infty}+ \frac{c}{c_{w,t}}\omega(t)  \left(1+\frac{c_{\mathrm{res}}c_{w,t}}{L}\right)^{q}\\
 &\leq \Norm{\vv_{0}(t)}_{\infty}+ c \ \omega(t)  \left(1+\frac{c_{\mathrm{res}}c_{w,t}}{L}\right)^{q},
    \end{align*}
where we use the property of a  geometric sum. By taking supreme on both sides, we have 
        \begin{align*}
        \eta_{q,0}(t)\leq \eta_{0,0}(t)+ c \ \omega(t)  \left(1+\frac{c_{\mathrm{res}}c_{w,t}}{L}\right)^{q}.
    \end{align*}
    \end{itemize}
\end{proof}
Based on these lemmas, recall definition \eqref{eta 0 infty}, we are able to make a proposition on the quantity $\eta_{\infty,0}(t)$ at $t=0$. 
\begin{prop}\label{propostion on the esimate of eta t at t=0..for set A_0}
Under Assumption \ref{Assump...Assumption on activation functions} and \ref{Assump... ont he imput of the  samples}, with high probability w.r.t the random initialization 
\begin{equation}\label{inequality in the proposition of the estimate of eta 0}
        \eta_{\infty,0}(0)\leq c {\LogLn}^{C},
\end{equation}
where $c,C>0$ are constants independent of the depth $L$.
\end{prop}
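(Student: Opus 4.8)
The plan is to reduce the claim to the two $L^\infty$-type concentration estimates already in hand — Lemma~\ref{lemma..... on the initial of gaussian vectors} for standard Gaussian vectors and Lemma~\ref{lemma on the 2 to inmi} for the $2\to\infty$ norm of the random weight matrices — together with the $L^2$ bounds of Propositions~\ref{proposition..A priori spectral property random matrix and a t} and~\ref{proposition.... on the output of layes} at $t=0$, and then to push these through the inductive relation of Lemma~\ref{lemma.....lemma on the eta wrt to he alphabet}. Since $\sA_0$ consists of products $\ve_s\ve_{s-1}\cdots\ve_1\ve_0$ with $0\le s\le 4L$, it suffices to control $\eta_{0,0}(0)$ (the $L^\infty$ norm of the right-most symbol) and $\omega(0)$ (the $2\to\infty$ norm of the weight matrices), both at initialization.

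First I would bound $\eta_{0,0}(0)$, i.e.\ the $L^\infty$ norm of each admissible $\ve_0$. If $\ve_0=\va_0$, this is Lemma~\ref{lemma..... on the initial of gaussian vectors} verbatim. If $\ve_0=\sqrt{m}\,\vx_\beta^{[l]}$, I would argue recursively in $l$. At $l=1$, $\sqrt{m}\,\vx_\beta^{[1]}=\sqrt{c_\sigma}\,\sigma(\mW_0^{[1]}\vx_\beta)$; since $\Norm{\vx_\beta}_2=1$ the vector $\mW_0^{[1]}\vx_\beta$ is standard Gaussian, so Lemma~\ref{lemma..... on the initial of gaussian vectors} gives $\Norm{\mW_0^{[1]}\vx_\beta}_{\infty}\le\LogLn^{C}$, and the Lipschitz bound with $\Abs{\sigma(0)}\le C_L$ yields $\Norm{\sqrt{m}\,\vx_\beta^{[1]}}_{\infty}\le c\,\LogLn^{C}$. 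For $2\le l\le L$ the residual update reads
\[
\sqrt{m}\,\vx_\beta^{[l]}=\sqrt{m}\,\vx_\beta^{[l-1]}+\frac{c_{\mathrm{res}}}{L}\,\sigma\!\left(\mW_0^{[l]}\vx_\beta^{[l-1]}\right),
\]
and $\Norm{\sigma(\mW_0^{[l]}\vx_\beta^{[l-1]})}_{\infty}\le C_L+C_L\Norm{\mW_0^{[l]}}_{2\to\infty}\Norm{\vx_\beta^{[l-1]}}_{2}\le c\,\LogLn^{C}$ by Lemma~\ref{lemma on the 2 to inmi} and the uniform bound $\Norm{\vx_\beta^{[l-1]}}_{2}\le C$ from Proposition~\ref{proposition.... on the output of layes}. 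Summing the $L-1$ increments, each of size $O(\LogLn^{C}/L)$, gives $\Norm{\sqrt{m}\,\vx_\beta^{[l]}}_{\infty}\le c\,\LogLn^{C}$ uniformly in $l$, hence $\eta_{0,0}(0)\le c\,\LogLn^{C}$.

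Next I would bound $\omega(0)=\max_{l}\{\Norm{\mW_0^{[l]}}_{2\to\infty},\Norm{(\mW_0^{[l]})^{\T}}_{2\to\infty}\}\le c\,\LogLn^{C}$, which is Lemma~\ref{lemma on the 2 to inmi} applied to each of the $2(L-1)$ matrices; the number of events is polynomial in $m$ and $n$, so their intersection still holds with high probability. Plugging these two bounds into Lemma~\ref{lemma.....lemma on the eta wrt to he alphabet} and letting $q$ range over $0\le q\le 4L$ gives
\[
\eta_{q,0}(0)\le \eta_{0,0}(0)+c\,\omega(0)\left(1+\frac{c_{\mathrm{res}}c_{w,0}}{L}\right)^{q}\le c\,\LogLn^{C}+c\,\LogLn^{C}\left(1+\frac{c_{\mathrm{res}}c_{w,0}}{L}\right)^{4L},
\]
and since $\left(1+c_{\mathrm{res}}c_{w,0}/L\right)^{4L}\le\exp(4c_{\mathrm{res}}c_{w,0})$ is a constant independent of $L$, taking the supremum over $q$, over $\beta$, and over the choice of alphabet symbols yields $\eta_{\infty,0}(0)\le c\,\LogLn^{C}$, as claimed. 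The main obstacle is the bound on $\Norm{\sqrt{m}\,\vx_\beta^{[l]}}_{\infty}$ at the deeper layers: one must check that accumulating $L-1$ residual blocks does not generate a factor growing with $L$. This is exactly where the $\tfrac{c_{\mathrm{res}}}{L}$ normalization of the residual update is essential — each increment is $O(1/L)$, so the telescoping sum stays $O(1)$, whereas for a fully connected network the analogous recursion would multiply by a constant $>1$ at every layer and blow up like $2^{L}$ (cf.\ the remark after Proposition~\ref{proposition.... on the output of layes}). A secondary bookkeeping point is to verify that only $O(n+L)$ base high-probability events are invoked, so that their intersection retains the high-probability property and the bound on $\eta_{\infty,0}(0)$ is uniform over all of $\sA_0$ with no further union bound needed.
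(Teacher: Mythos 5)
Your proposal is correct and follows essentially the same route as the paper: bound $\eta_{0,0}(0)$ via the Gaussian $L^\infty$ lemma and an $l$-induction on $\Norm{\sqrt{m}\,\vx_\beta^{[l]}}_\infty$ in which each residual increment carries a factor $c_{\mathrm{res}}/L$, bound $\omega(0)$ via the $2\to\infty$ norm lemma, and then feed both into the inductive inequality of Lemma~\ref{lemma.....lemma on the eta wrt to he alphabet}, using $(1+c_{\mathrm{res}}c_{w,0}/L)^{4L}\le\exp(4c_{\mathrm{res}}c_{w,0})$ to kill the $L$-dependence. The only cosmetic deviation is that at $l=1$ you observe $\mW_0^{[1]}\vx_\beta$ is itself a standard Gaussian vector rather than invoking $\Norm{\mW_0^{[1]}}_{2\to\infty}\Norm{\vx_\beta}_2$, which is an equivalent way to apply the same estimate.
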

\begin{proof}
As always, for any vector $\vv(t)\in\sA_0,$ we can write $\vv(t)$ as $$\vv(t)=\ve_s\ve_{s-1}\dots \ve_1\ve_0, 0\leq s\leq 4L.$$
We start with the estimate on $\eta_{0,0}(0),$ since $\ve_0$ is chosen following the rules:
\begin{align*}
    \ve_0\in\left\{\va_t,\{\sqrt{m} \vx_\beta^{[1]},\sqrt{m} \vx_\beta^{[2]},\dots,\sqrt{m} \vx_\beta^{[L]}  \}_{1\leq \beta\leq n}   \right\}. 
\end{align*}
\begin{itemize}
    \item (a). If $\ve_0=\va_t,$ then at $t=0$, by Lemma \ref{lemma..... on the initial of gaussian vectors}, 
    \begin{equation*}
        \Norm{\va_0}_{\infty} \leq {\LogLn}^{C}.
    \end{equation*}
    \item (b). If $\ve_0=\sqrt{m} \vx_\beta^{[l]},$  starting with $l=1$
    \begin{align*}
       \Norm{ \sqrt{m} \vx_\beta^{[1]}}_{\infty}&=\sqrt{c_{\sigma}} \Norm{\sigma\left(\mW_0^{[1]}\vx_{\beta}\right)}_{\infty}\\
       &\leq \sqrt{c_{\sigma}}\left(\Abs{\sigma(0)}+C_L\Norm{\mW_0^{[1]}\vx_{\beta}}_{\infty}\right)\\
       &\leq \sqrt{c_{\sigma}}\left( C_L+C_L \Norm{\mW_0^{[1]}}_{2\to \infty}\Norm{\vx_{\beta}}_2\right)\\
       &\leq \sqrt{c_{\sigma}}C_L \left(1+{\LogLn}^{C}\right) \leq c {\LogLn}^{C},
    \end{align*}
   moreover, for $l \geq 1 ,$ based on Proposition \ref{proposition.... on the output of layes},
        \begin{align*}
       \Norm{ \sqrt{m} \vx_\beta^{[l]}}_{\infty}&\leq \Norm{ \sqrt{m} \vx_\beta^{[l-1]}}_{\infty}+\frac{c_{\mathrm{res}}}{L}\Norm{\sigma\left(\mW_0^{[l]}\vx_{\beta}^{[l-1]}\right)}_{\infty}\\
       &\leq \Norm{ \sqrt{m} \vx_\beta^{[l-1]}}_{\infty}+ \frac{c_{\mathrm{res}}}{L}\left( C_L+C_L \Norm{\mW_0^{[l]}}_{2\to \infty}\Norm{\vx_{\beta}^{[l-1]}}_2\right)\\
       &\leq \Norm{ \sqrt{m} \vx_\beta^{[l-1]}}_{\infty}+ \frac{c_{\mathrm{res}}C_L}{L}\left(1+ C {\LogLn}^{C}\right)\\
       &\leq  \Norm{ \sqrt{m} \vx_\beta^{[l-1]}}_{\infty}+ \frac{c}{L} {\LogLn}^{C},
    \end{align*}
inductively    for $1\leq l \leq L$, 
    \begin{equation}\label{inequalit of proof in proposition.... infinity norm on each layer}
        \Norm{ \sqrt{m} \vx_\beta^{[l]}}_{\infty} \leq c \left(1+\frac{l}{L}\right) {\LogLn}^{C}\leq c{\LogLn}^{C},
    \end{equation}
where $c$ is independent of the depth $L$.
\end{itemize}
Hence we have 
\begin{equation}\label{inequality...estimate on the zero terms of infinity norm}
   \eta_{0,0}(0)\leq c{\LogLn}^{C}. 
\end{equation}
Directly from Lemma \ref{lemma.....lemma on the eta wrt to he alphabet}  
\begin{align*}
    \eta_{q,0}(0)&\leq \eta_{0,0}(0)+c {\LogLn}^C \left(1+\frac{c_{\mathrm{res}}c_{w,0}}{L}\right)^{q}\\
    &\leq c{\LogLn}^{C}+ c {\LogLn}^C \exp(4 c_{\mathrm{res}}c_{w,0})\leq c {\LogLn}^C,
\end{align*}
by taking supreme on $0\leq q \leq 4L$, we finish our proof.
\end{proof}
Our next proposition is on $\eta_{\infty,0}(t)$   for time $0\leq t \leq \sqrt{m}/{\LogLn}^{C'}.$
\begin{prop}\label{proposition...estimate on A_0 for t >0}
Under Assumption \ref{Assump...Assumption on activation functions} and \ref{Assump... ont he imput of the  samples}, with high probability w.r.t the random initialization, for time $0\leq t \leq \sqrt{m}/{\LogLn}^{C'},$ the following holds
\begin{equation}
    \eta_{\infty,0} (t)\leq c{\LogLn}^{C},
\end{equation}
where $c,C, C'>0$ are constants independent of the depth $L$. 
\end{prop}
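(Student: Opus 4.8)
The plan is to convert the length-recursion of Lemma~\ref{lemma.....lemma on the eta wrt to he alphabet} into a single Gronwall-type inequality for $\eta_{\infty,0}(t)$ whose only ``source'' terms are the zero-length quantity $\eta_{0,0}(t)$ and the weight norm $\omega(t)$. Taking the supremum over lengths $0\le q\le 4L$ in \eqref{inequality....induction relation of lemma} and using $c_{w,t}\le\bar c$ from Proposition~\ref{proposition..A priori spectral property random matrix and a t},
\[
  \eta_{\infty,0}(t)\;\le\;\eta_{0,0}(t)+c\,\omega(t)\Big(1+\tfrac{c_{\mathrm{res}}\bar c}{L}\Big)^{4L}\;\le\;\eta_{0,0}(t)+C\,\omega(t),
\]
so it suffices to control $\eta_{0,0}(t)$ and $\omega(t)$. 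At $t=0$ both are already bounded: \eqref{inequality...estimate on the zero terms of infinity norm} (in the proof of Proposition~\ref{propostion on the esimate of eta t at t=0..for set A_0}) gives $\eta_{0,0}(0)\le c{\LogLn}^{C}$, and Lemma~\ref{lemma on the 2 to inmi} gives $\omega(0)\le{\LogLn}^{C}$.

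First I would differentiate the length-zero expressions in $t$. By \eqref{eqgroup...dynamic for parameter a_t}, $\partial_t\va_t$ is an average of $-\tfrac1n\vx^{[L]}_\beta(f_\beta(t)-y_\beta)$, so $\Norm{\partial_t\va_t}_{\infty}\le\tfrac1{\sqrt m}\eta_{\infty,0}(t)\,\tfrac1n\sum_\beta\Abs{f_\beta(t)-y_\beta}$; differentiating $\sqrt m\,\vx^{[l]}_\beta$ via the chain rule --- equivalently via the replacement rules of Section~\ref{subsection....replacemetn rules} --- yields a sum of terms of the shape $\tfrac{C}{\sqrt m}\,\mathrm{diag}(\vg)\,\vone\left<\vx^{[k-1]}_\alpha,\vx^{[k-1]}_\beta\right>$ with $\vg\in\sA_0$ of length at most $4L$, the $k\ge 2$ summands carrying the extra scaling $c_{\mathrm{res}}^2/L^2$ and being summed over at most $L-1$ values of $k$, hence contributing total weight $\fO(1)$. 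Since $\Norm{\mathrm{diag}(\vg)\vone}_{\infty}=\Norm{\vg}_{\infty}\le\eta_{\infty,0}(t)$, the input-correlation factors $\left<\vx^{[k-1]}_\alpha,\vx^{[k-1]}_\beta\right>$ are bounded by Proposition~\ref{proposition.... on the output of layes} and Assumption~\ref{Assump... ont he imput of the  samples}, and $\tfrac1n\sum_\beta\Abs{f_\beta(t)-y_\beta}\le\sqrt{2R_S(\vtheta_0)}\le C$ by Proposition~\ref{proposition...on the apriori loss at t=0}, this gives $\Abs{\partial_t\Norm{\sqrt m\,\vx^{[l]}_\beta}_{\infty}}\le\tfrac{C}{\sqrt m}\eta_{\infty,0}(t)$. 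Likewise, for $2\le l\le L$ the derivative $\partial_t\mW^{[l]}_t$ from \eqref{eqgroup...dynamic for parameter W_H}--\eqref{eqgroup...dynamic for parameter W_l} is an average of rank-one matrices $\tfrac{c_{\mathrm{res}}}{L\sqrt m}\,\vp_\beta\vq_\beta^{\T}(f_\beta(t)-y_\beta)$ with $\vp_\beta=\vsigma^{(1)}_{[l]}(\vx_\beta)\left(\mE^{[(l+1):L]}_{t,\beta}\right)^{\T}\va_t\in\sA_0$ and $\vq_\beta=\vx^{[l-1]}_\beta$; using $\Norm{\vp\vq^{\T}}_{2\to\infty}=\Norm{\vp}_{\infty}\Norm{\vq}_2$ and $\Norm{(\vp\vq^{\T})^{\T}}_{2\to\infty}=\Norm{\vq}_{\infty}\Norm{\vp}_2$, together with $\Norm{\vp_\beta}_{\infty}\le\eta_{\infty,0}(t)$, $\Norm{\vp_\beta}_2\le\xi_{\infty,0}(t)\le c\sqrt m$ (by \eqref{eq for xi infty 0}), $\Norm{\vq_\beta}_2\le C$ and $\Norm{\vq_\beta}_{\infty}\le\tfrac1{\sqrt m}\eta_{\infty,0}(t)$, I obtain $\Abs{\partial_t\Norm{\mW^{[l]}_t}_{2\to\infty}}$ and $\Abs{\partial_t\Norm{(\mW^{[l]}_t)^{\T}}_{2\to\infty}}\le\tfrac{C}{\sqrt m}\eta_{\infty,0}(t)$.

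Integrating these pointwise bounds from $0$ to $t$ (and using that $\eta_{0,0}$ and $\omega$ are nondecreasing suprema) gives $\eta_{0,0}(t),\,\omega(t)\le c{\LogLn}^{C}+\tfrac{C}{\sqrt m}\int_0^t\eta_{\infty,0}(s)\,\diff s$, so by the first display
\[
  \eta_{\infty,0}(t)\;\le\;c{\LogLn}^{C}+\frac{C}{\sqrt m}\int_0^t\eta_{\infty,0}(s)\,\diff s .
\]
Since $\eta_{\infty,0}$ is the supremum of a finite family of continuous functions, it is finite and continuous, so Gronwall's inequality gives $\eta_{\infty,0}(t)\le c{\LogLn}^{C}\exp(Ct/\sqrt m)$, which is $\le c{\LogLn}^{C}$ for all $0\le t\le\sqrt m/{\LogLn}^{C'}$ (indeed for all $t\lesssim\sqrt m$). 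All of the estimates above hold on the common high-probability event on which Propositions~\ref{proposition..A priori spectral property random matrix and a t}, \ref{proposition.... on the output of layes}, \ref{propostion on the esimate of eta t at t=0..for set A_0} and Lemma~\ref{lemma on the 2 to inmi} are valid; there are only $\fO(L)$ of these, so their intersection is again a high-probability event. The bulk of the work is the bookkeeping in the second paragraph: one must verify that every vector produced by differentiating an element of $\sA_0$ is again an element of $\sA_0$ of length at most $4L$ --- so that its $\ell^{\infty}$ norm is dominated by $\eta_{\infty,0}(t)$ --- and that the sums over the $L$ hidden layers come with the $1/L^2$ weights supplied by the ResNet scaling $c_{\mathrm{res}}/L$, which is exactly the point where the skip-connection structure prevents the exponential-in-$L$ growth seen for fully-connected networks (cf.\ the remark after Proposition~\ref{proposition.... on the output of layes}).
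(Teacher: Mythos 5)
Your proof proposal takes essentially the same route as the paper: you control the length-zero quantity $\eta_{0,0}(t)$ and the auxiliary $2\!\to\!\infty$ norm $\omega(t)$ by differentiating $\va_t$, $\sqrt m\,\vx^{[l]}_\beta$, and $\mW^{[l]}_t$ via the GD dynamics, bound the resulting $\infty$-norms by $\frac{C}{\sqrt m}\eta_{\infty,0}(t)$ using Propositions~\ref{proposition.... on the output of layes} and~\ref{proposition...on the l2 norm of any vector in A0} and the rank-one structure of $\partial_t\mW^{[l]}_t$, feed these integral bounds into Lemma~\ref{lemma.....lemma on the eta wrt to he alphabet} with the geometric factor $(1+c_{\mathrm{res}}c_{w,t}/L)^{4L}\le e^{4c_{\mathrm{res}}\bar c}$, and close with Gronwall. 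This matches the paper's proof step for step; the only cosmetic difference is that you take the $\sup_q$ over Lemma~\ref{lemma.....lemma on the eta wrt to he alphabet} at the outset rather than at the end.
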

\begin{proof}
We shall start with the estimate on $\eta_{0,0}(t),$ since $\ve_0$ is chosen following the rules:
\begin{align*}
    \ve_0\in\left\{\va_t,\{\sqrt{m} \vx_\beta^{[1]},\sqrt{m} \vx_\beta^{[2]},\dots,\sqrt{m} \vx_\beta^{[L]}  \}_{1\leq \beta\leq n}   \right\}. 
\end{align*}
We observe that from the replacement rules given in Section \ref{subsection....replacemetn rules}, 
\begin{align*}
   \partial_t \va_t&=-\frac{1}{n}\sum_{\beta=1}^n\frac{1}{\sqrt{m}}\sqrt{m}\vx_{\beta}^{[L]}(f_{\beta}(t)-y_{\beta}) ,\\
\partial_t\sqrt{m}\vx_{\alpha}^{[l]}&=-\frac{1}{n}\sum_{\beta=1}^n
{\frac{c_\sigma}{\sqrt{m}}} \mE_{t,\alpha}^{[2:l]} \vsigma^{(1)}_{[1]}(\vx_{\alpha})\vsigma^{(1)}_{[1]}(\vx_{\beta})\left(\mE_{t,\beta}^{[2:L]}\right)^{\T}\va_t \left<\vx_\alpha,\vx_\beta  \right>(f_{\beta}(t)-y_{\beta})\\
+-\frac{1}{n}\sum_{\beta=1}^n&\sum_{k=2}^{l} \frac{c_{\mathrm{res}}^2}{L^2 \sqrt {m}} \mE_{t,\alpha}^{[(k+1):l]}\vsigma^{(1)}_{[k]}(\vx_{\alpha})\vsigma^{(1)}_{[k]}(\vx_{\beta})\left(\mE_{t,\beta}^{[(k+1):L]}\right)^{\T}\va_t  \left<\vx_\alpha^{[k-1]},\vx_\beta^{[k-1]}  \right>(f_{\beta}(t)-y_{\beta}),
\end{align*}
since for $0\leq t\leq \sqrt{m}/{\LogLn}^{C'}$, then by Proposition \ref{proposition.... on the output of layes}  
\begin{align*}
 \partial_t \Norm{\va_t}_{\infty}&\leq \frac{C}{\sqrt{m}}\Norm{\sqrt{m}\vx_{\beta}^{[L]}}_{\infty},\\
 \partial_t \Norm{\sqrt{m}\vx_{\alpha}^{[l]}}_{\infty} &\leq \sum_{k=1}^{l}\frac{C}{\sqrt{m}}\Norm{\mE_{t,\alpha}^{[(k+1):l]}\vsigma^{(1)}_{[k]}(\vx_{\alpha})\vsigma^{(1)}_{[k]}(\vx_{\beta})\left(\mE_{t,\beta}^{[(k+1):L]}\right)^{\T}\va_t }_{\infty},
\end{align*}
  by taking supreme on time $0\leq t\leq \sqrt{m}/{\LogLn}^{C'},$ we have  
\begin{equation}
    \eta_{0,0}(t)\leq c{\LogLn}^{C}+ \frac{C}{\sqrt{m}} \int_{0}^t \eta_{\infty,0}(s) \diff s.
\end{equation}
For the auxiliary term $\omega(t),$ from the replacement rules again, for $2\leq l \leq L$
\begin{align*}
    \partial_t \mW^{[l]}_t&=-\frac{1}{n}\sum_{\beta=1}^n\frac{c_{\mathrm{res}}}{L\sqrt{m}}\vsigma^{(1)}_{[l]}(\vx_{\beta})  \left(\mE_{t,\beta}^{[(l+1):L]}\right)^{\T}       \va_t  \otimes  (\vx_{\beta}^{[l-1]})^\T (f_{\beta}(t)-y_{\beta}),\\
        \partial_t \left(\mW^{[l]}_t\right)^\T&=-\frac{1}{n}\sum_{\beta=1}^n\frac{c_{\mathrm{res}}}{L\sqrt{m}} \vx_{\beta}^{[l-1]}\otimes  \left(\vsigma^{(1)}_{[l]}(\vx_{\beta})  \left(\mE_{t,\beta}^{[(l+1):L]}\right)^{\T}       \va_t \right)^\T (f_{\beta}(t)-y_{\beta}),
\end{align*}
then by Proposition \ref{proposition...on the l2 norm of any vector in A0}
\begin{align*}
   \partial_t \Norm{\mW_t^{[l]}}_{2\to \infty}&\leq \frac{C}{\sqrt{m}}\Norm{\vsigma^{(1)}_{[l]}(\vx_{\beta})  \left(\mE_{t,\beta}^{[(l+1):L]}\right)^{\T}       \va_t  }_{\infty}, \\
   \partial_t \Norm{\left(\mW_t^{[l]}\right)^\T}_{2\to \infty}&\leq \frac{C}{\sqrt{m}}\Norm{\sqrt{m}\vx_{\beta}^{[l-1]}}_{\infty},
\end{align*}
hence  by taking supreme on time $0\leq t\leq \sqrt{m}/{\LogLn}^{C'},$ we have 
\begin{equation}
    \omega(t)\leq {\LogLn}^{C}+\frac{C}{\sqrt{m}} \int_{0}^t \eta_{\infty,0}(s) \diff s.
\end{equation}
Directly from Lemma \ref{lemma.....lemma on the eta wrt to he alphabet}  
\begin{align*}
    \eta_{q,0}(t)&\leq \eta_{0,0}(t)+c \  \omega(t) \left(1+\frac{c_{\mathrm{res}}c_{w,t}}{L}\right)^{q}\\
    &\leq \left(c{\LogLn}^{C}+ \frac{C}{\sqrt{m}} \int_{0}^t \eta_{\infty,0}(s) \diff s\right)\left(1+ \left(1+\frac{c_{\mathrm{res}}c_{w,t}}{L}\right)^{q}\right).
\end{align*}
Finally by taking supreme on $0\leq q \leq 4L,$ we have 
\begin{align*}
    \eta_{\infty,0}(t)&\leq c{\LogLn}^{C}+ \frac{C}{\sqrt{m}} \int_{0}^t \eta_{\infty,0}(s) \diff s .
\end{align*}
This gives us a Gronwall-type inequality, we have that 
\begin{equation*}
    \eta_{\infty,0}(t)\leq c{\LogLn}^{C}\exp\left(  \frac{Ct}{\sqrt{m}} \right).
\end{equation*}
To sum up,  for $t\leq \sqrt{m}/{\LogLn}^{C'} , $ the following holds
\begin{equation}
    \eta_{\infty,0}(t)\leq c{\LogLn}^{C},
\end{equation}
which finishes the proof.
\end{proof}
\subsection{Apriori $L^2$ and $L^{\infty}$ bounds for expression in $\sA_r$, $r\geq 1$}\label{appendix....subsection L2 and L infinity bounds for A R}
In this part, we shall make estimates for  $\Norm{\cdot}_{\infty}$  and $\Norm{\cdot}_2$ of vectors belonging to higher order sets, i.e., $\sA_r$, $r\geq 1.$ Then it is natural for us to define several quantities for some vectors in $\sA_r$ with length $q$ 
\begin{equation}\label{definition.....apriori L2 bounds for sets higher order}
    \xi_{q,r}(t):=\sup_{0\leq t'\leq t}\left\{\Norm{\vv_q(t')}_{2}:\vv_q(t')=\ve_q\ve_{q-1}\dots \ve_1\ve_0, \ \vv_q(t')\in\sA_r \right\},
\end{equation}
note that from Proposition \ref{proposition..A priori spectral property random matrix and a t} and \ref{proposition...on the l2 norm of any vector in A0}, 
\begin{equation}\label{xi_q,0..... power}
      \xi_{q,0}(t)\leq c\left(1+\frac{c_{\mathrm{res}}c_{w,t}}{L}\right)^q \sqrt{m},
\end{equation}
moreover, we  define that 
\begin{equation}\label{definition in the proposition of eta infty}
        \xi_{\infty,r}(t)=\sup_{0\leq q \leq 4L}\left\{\xi_{q,r}(t)\right\},
\end{equation}
then by taking supreme on $0\leq q\leq 4L $ in \eqref{xi_q,0..... power} 
\begin{equation}\label{xi_infinity ,0 ...unifomr estimate }
     \xi_{\infty,0}(t)\leq c\sqrt{m},
\end{equation}
and recall the definition we made in Section \ref{Appen subsection....Aprioir L infinity bounds for A 0}, similarly we define
\begin{equation}\label{definition.....apriori L infinity bounds for sets higher order}
    \eta_{q,r}(t):=\sup_{0\leq t'\leq t}\left\{\Norm{\vv_q(t')}_{\infty}:\vv_q(t')=\ve_q\ve_{q-1}\dots \ve_1\ve_0, \ \vv_q(t')\in\sA_r \right\},
\end{equation}
moreover, we define that 
\begin{equation}\label{definition in the proposition of eta infty.}
        \eta_{\infty,r}(t)=\sup_{0\leq q \leq 4L}\left\{\eta_{q,r}(t)\right\}.
\end{equation}

Once again, for any vector $\vv(t)\in\sA_r,$ it can be written into $$\vv(t)=\ve_s\ve_{s-1}\dots \ve_1\ve_0, \   0\leq s\leq 4L,$$ 
we shall start with the estimate on $\ve_0.$ Since $\ve_0$ is chosen following the rules:
\begin{align*}
    \ve_0\in\left\{\va_t,\vone,\{ \sqrt{m} \vx_\beta^{[1]},\sqrt{m} \vx_\beta^{[2]},\dots,\sqrt{m} \vx_\beta^{[L]}  \}_{1\leq \beta\leq n}   \right\}. 
\end{align*}
 $\Norm{\vone}_{\infty}=1, \Norm{\vone}_{2}=\sqrt{m},$ then  for time $0\leq t\leq \sqrt{m}/{\LogLn}^{C'},$ by Proposition \ref{proposition..A priori spectral property random matrix and a t} and \ref{proposition...estimate on A_0 for t >0},
\begin{equation*}
   \xi_{0,r}(t)\leq  c\sqrt{m}, \  \eta_{0,r}(t)\leq c{\LogLn}^{C}.
\end{equation*}
Now we proceed to other terms in the expression $\ve_j$ where $ j\geq 1.$ For each $\ve_j,$ there are several cases:
\begin{itemize}
\item (i) $\ve_j=\vsigma_{[l]}^{(1)}(\vx_\beta), $ $\ve_j=\mE_{t,\beta}^{[l]}$ or  $\ve_j=\left(\mE_{t,\beta}^{[l]}\right)^{\T},  \ 2\leq l \leq L.$
\item (ii) $\ve_j=\mathrm{diag}(\vg).$
\item (iii) $$\ve_j=\vsigma_{[l]}^{(u+1)}(\vx_\beta) \mathrm{diag} \left(\left(\frac{\mW_t^{[l]}}{\sqrt{m}}\right)^{Q_1} \vg_1\right)\dots\mathrm{diag} \left(\left(\frac{\mW_t^{[l]}}{\sqrt{m}}\right)^{Q_u} \vg_u\right)\left(\frac{c_{\mathrm{res}}}{L}\frac{\mW_t^{[l]}}{\sqrt{m}}\right)^{Q_{u+1}},$$
or
$$\ve_j=\left(\frac{c_{\mathrm{res}}}{L}\frac{\left(\mW_t^{[l]}\right)^\T}{\sqrt{m}}\right)^{Q_{u+1}}\vsigma_{[l]}^{(u+1)}(\vx_\beta) \mathrm{diag} \left(\left(\frac{\mW_t^{[l]}}{\sqrt{m}}\right)^{Q_1} \vg_1\right)\dots\mathrm{diag} \left(\left(\frac{\mW_t^{[l]}}{\sqrt{m}}\right)^{Q_u} \vg_u\right).$$
\end{itemize}
By our observation, the total number of diag operations in $\vv(t)\in\sA_r$ is $r,$ and that is how we characterize a vector belonging to different hierarchical sets. Especially if for one of those $\ve_j$ belongs to case (iii), there are two scenarios:
\begin{itemize}
    \item $Q_{u+1}=0,$ then $\ve_j$ is just multiplication of several diagonal matrices, being a special situation for case (ii).
    \item $Q_{u+1}=1,$ since diagonal matrices commute, $\ve_j$ writes into
     $$\ve_j=\mathrm{diag} \left(\left(\frac{\mW_t^{[l]}}{\sqrt{m}}\right)^{Q_1} \vg_1\right)\dots\mathrm{diag} \left(\left(\frac{\mW_t^{[l]}}{\sqrt{m}}\right)^{Q_u} \vg_u\right)\vsigma_{[l]}^{(u+1)}(\vx_\beta) \frac{c_{\mathrm{res}}}{L}\frac{\mW_t^{[l]}}{\sqrt{m}},$$
or
$$\ve_j=\left(\frac{c_{\mathrm{res}}}{L}\frac{\mW_t^{[l]}}{\sqrt{m}}\right)^\T\vsigma_{[l]}^{(u+1)}(\vx_\beta) \mathrm{diag} \left(\left(\frac{\mW_t^{[l]}}{\sqrt{m}}\right)^{Q_1} \vg_1\right)\dots\mathrm{diag} \left(\left(\frac{\mW_t^{[l]}}{\sqrt{m}}\right)^{Q_u} \vg_u\right),$$
we shall take advantage of the special structure of $\ve_j$. Define a new type of skip-connection   matrix, $\widetilde{\mE}_{t,\beta}^{[l,r]},$ for $r\geq 2$:
\begin{equation}\label{definition for another matrices......skip conncetion}
    \widetilde{\mE}_{t,\beta}^{[l,r]}:=\left(\mI_m+\frac{c_{\mathrm{res}}}{L}\vsigma_{[l]}^{(r)}(\vx_\beta)\frac{\mW_t^{[l]}}{\sqrt{m}}\right).
\end{equation}
Then we can write $\ve_j$ into 
\begin{align*}
    \ve_j&=\mathrm{diag} \left(\left(\frac{\mW_t^{[l]}}{\sqrt{m}}\right)^{Q_1} \vg_1\right)\dots\mathrm{diag} \left(\left(\frac{\mW_t^{[l]}}{\sqrt{m}}\right)^{Q_u} \vg_u\right)\vsigma_{[l]}^{(u+1)}(\vx_\beta) \frac{c_{\mathrm{res}}}{L}\frac{\mW_t^{[l]}}{\sqrt{m}}\\
    &=\mathrm{diag} \left(\left(\frac{\mW_t^{[l]}}{\sqrt{m}}\right)^{Q_1} \vg_1\right)\dots\mathrm{diag} \left(\left(\frac{\mW_t^{[l]}}{\sqrt{m}}\right)^{Q_u} \vg_u\right)\widetilde{\mE}_{t,\beta}^{[l,u+1]}\\
    &~~-\mathrm{diag} \left(\left(\frac{\mW_t^{[l]}}{\sqrt{m}}\right)^{Q_1} \vg_1\right)\dots\mathrm{diag} \left(\left(\frac{\mW_t^{[l]}}{\sqrt{m}}\right)^{Q_u} \vg_u\right),
\end{align*}
or 
\begin{align*}
    \ve_j&=\left(\frac{c_{\mathrm{res}}}{L}\frac{\mW_t^{[l]}}{\sqrt{m}}\right)^\T\vsigma_{[l]}^{(u+1)}(\vx_\beta) \mathrm{diag} \left(\left(\frac{\mW_t^{[l]}}{\sqrt{m}}\right)^{Q_1} \vg_1\right)\dots\mathrm{diag} \left(\left(\frac{\mW_t^{[l]}}{\sqrt{m}}\right)^{Q_u} \vg_u\right)\\
&=\left(\widetilde{\mE}_{t,\beta}^{[l,u+1]}\right)^\T\mathrm{diag} \left(\left(\frac{\mW_t^{[l]}}{\sqrt{m}}\right)^{Q_1} \vg_1\right)\dots\mathrm{diag} \left(\left(\frac{\mW_t^{[l]}}{\sqrt{m}}\right)^{Q_u} \vg_u\right)\\
    &~~-\mathrm{diag} \left(\left(\frac{\mW_t^{[l]}}{\sqrt{m}}\right)^{Q_1} \vg_1\right)\dots\mathrm{diag} \left(\left(\frac{\mW_t^{[l]}}{\sqrt{m}}\right)^{Q_u} \vg_u\right).
\end{align*}
\end{itemize}
To illustrate such relation, if some vector $\bar{\vv}(t)$ contains $\ve_j$ belonging to case (iii), we write it as 
\begin{align*}
    \bar{\vv}(t)&=\ve_s\ve_{s-1}\cdots\ve_{j+1}\mathrm{diag} \left(\left(\frac{\mW_t^{[l]}}{\sqrt{m}}\right)^{Q_1} \vg_1\right)\dots\mathrm{diag} \left(\left(\frac{\mW_t^{[l]}}{\sqrt{m}}\right)^{Q_u} \vg_u\right)\vsigma_{[l]}^{(u+1)}(\vx_\beta) \frac{c_{\mathrm{res}}}{L}\frac{\mW_t^{[l]}}{\sqrt{m}}\ve_{j-1}\cdots\ve_0\\
    &=\ve_s\ve_{s-1}\cdots\ve_{j+1}\mathrm{diag} \left(\left(\frac{\mW_t^{[l]}}{\sqrt{m}}\right)^{Q_1} \vg_1\right)\dots\mathrm{diag} \left(\left(\frac{\mW_t^{[l]}}{\sqrt{m}}\right)^{Q_u} \vg_u\right)\widetilde{\mE}_{t,\beta}^{[l,u+1]}\ve_{j-1}\cdots\ve_0\\
    ~~&-\ve_s\ve_{s-1}\cdots\ve_{j+1}\mathrm{diag} \left(\left(\frac{\mW_t^{[l]}}{\sqrt{m}}\right)^{Q_1} \vg_1\right)\dots\mathrm{diag} \left(\left(\frac{\mW_t^{[l]}}{\sqrt{m}}\right)^{Q_u} \vg_u\right)\ve_{j-1}\cdots\ve_0.
\end{align*}

From the analysis above, we are able to characterize an element in   set $\sA_r.$ 
If~$\vv(t)\in\sA_r,$ then as always, we write it as
\begin{align*}
    \vv(t)&=\ve_s\ve_{s-1}\dots \ve_1\ve_0, \   0\leq s\leq 4L,
\end{align*}
and  there exists $\ve_{j_1},\ve_{j_2},\cdots, \ve_{j_{k}}$, such that 
\begin{align*}
    \ve_{j_1}&=\mathrm{diag} \left(\left(\frac{\mW_t^{[l_1]}}{\sqrt{m}}\right)^{Q_1}\vg_1\right), \ \vg_1\in\sA_{r_1-1},\\
     \ve_{j_2}&=\mathrm{diag} \left(\left(\frac{\mW_t^{[l_2]}}{\sqrt{m}}\right)^{Q_2}\vg_2\right), \ \vg_2\in\sA_{r_2-1},\\
     \vdots\\
       \ve_{j_k}&=\mathrm{diag} \left(\left(\frac{\mW_t^{[l_k]}}{\sqrt{m}}\right)^{Q_k}\vg_k\right), \ \vg_k\in\sA_{r_k-1},
\end{align*}
with 
\begin{equation}\label{equation for restriction diag operations}
   r_1+r_2+\cdots+r_k=r, r_1, r_2,\cdots, r_k\in\sN^{+}.
\end{equation}
Equation \eqref{equation for restriction diag operations} serves as the counting of the number of diag operations contained in $\vv(t),$ while for other $\ve_j\left( j\notin\left\{j_1,j_2,\cdots,j_k,0\right\}\right)$, chosen from the following sets
\begin{align}
     & \left\{ \mE_{t,\beta}^{[l]},\left(\mE_{t,\beta}^{[l]}\right)^{\T}: 2\leq l \leq L \right\}_{1\leq \beta\leq n},\label{gp...set1}\\
     &\left\{\vsigma_{[l]}^{(1)}(\vx_\beta): 1\leq l \leq L\right\}_{1\leq \beta\leq n},\label{gp...set2}\\ &\left\{\widetilde{\mE}_{t,\beta}^{[l,p]},\left(\widetilde{\mE}_{t,\beta}^{[l,p]}\right)^\T:2\leq l \leq L, \ p\geq 2\right\}_{1\leq \beta\leq n},\label{gp...set3}
\end{align}
note that the elements in set \eqref{gp...set1} and set \eqref{gp...set3} share the same matrix properties, thanks to Assumption~\ref{Assump...Assumption on activation functions} concerning the activation function.

Hence, in order to make estimates on $\xi_{q,r}(t)$ and   $\eta_{q,r}(t),$ we shall perform induction on the number of diag operations contained in each vector.
\begin{prop}\label{proposition... on the diagonal operation}
Under Assumption \ref{Assump...Assumption on activation functions} and \ref{Assump... ont he imput of the  samples},  with high probability w.r.t the random initialization, for some finite $r\geq 1$ and time $0\leq t \leq \sqrt{m}/{\LogLn}^{C'},$ the following holds
\begin{align}
    \xi_{\infty,r}(t)& \leq c {\LogLn}^{C}\sqrt{m},\label{prop..1}\\
     \eta_{\infty,r}(t)& \leq c {\LogLn}^{C},\label{prop..2}
\end{align}
where $c,C,C'>0$ are constants independent of depth $L.$
\end{prop}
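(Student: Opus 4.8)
[Sketch of the proof of Proposition \ref{proposition... on the diagonal operation}]
The plan is to induct on $r$, the number of $\mathrm{diag}$ operations occurring in a vector, the base case $r=0$ being precisely Propositions \ref{proposition...on the l2 norm of any vector in A0} and \ref{proposition...estimate on A_0 for t >0} --- from whose proof one also extracts $\omega(t)\leq{\LogLn}^{C}$ for $0\leq t\leq\sqrt{m}/{\LogLn}^{C'}$. For the inductive step I would fix $\vv(t)\in\sA_r$, write $\vv(t)=\ve_s\ve_{s-1}\cdots\ve_1\ve_0$ with $0\leq s\leq 4L$, and first \emph{normalise} the word: whenever some $\ve_j$ is of the case-(iii) form carrying a trailing bare weight factor $\frac{c_{\mathrm{res}}}{L}\frac{\mW_t^{[l]}}{\sqrt{m}}$ (that is, $Q_{u+1}=1$), I would rewrite that factor through the auxiliary skip-connection matrix $\widetilde{\mE}_{t,\beta}^{[l,u+1]}$ of \eqref{definition for another matrices......skip conncetion}. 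After this normalisation every factor $\ve_j$ with $j\geq1$ is one of: a skip-connection matrix $\mE_{t,\beta}^{[l]}$ or $\widetilde{\mE}_{t,\beta}^{[l,p]}$ (or a transpose thereof), a derivative factor $\vsigma_{[l]}^{(1)}(\vx_\beta)$, or a diagonal factor $\mathrm{diag}\big((\frac{\mW_t^{[l]}}{\sqrt{m}})^{Q_i}\vg_i\big)$ with $\vg_i\in\sA_{r_i-1}$ and $r_1+\cdots+r_k=r$. The point of this step is that it removes every \emph{bare} product of weight matrices $\frac{\mW_t^{[l]}}{\sqrt{m}}$ --- whose spectral norm is only $\fO(1)$, so that a product of them of length proportional to $L$ would cost a factor exponential in $L$ --- leaving only skip-connection matrices of operator norm $1+\fO(1/L)$ and isolated $2\to\infty$ applications of $\frac{\mW_t^{[l]}}{\sqrt{m}}$ hidden inside a $\mathrm{diag}$.

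For \eqref{prop..1} I would use $\Norm{\vv(t)}_2\leq\Norm{\ve_0}_2\prod_{j\geq1}\Norm{\ve_j}_{2\to2}$. By Proposition \ref{proposition..A priori spectral property random matrix and a t} each skip-connection matrix has operator norm at most $1+\frac{c_{\mathrm{res}}c_{w,t}}{L}$ and there are at most $4L$ of them, for a total factor $\exp(4c_{\mathrm{res}}c_{w,t})=\fO(1)$ independent of $L$; each $\vsigma_{[l]}^{(1)}(\vx_\beta)$ contributes at most $C_L\leq1$; and for a diagonal factor, $\Norm{\mathrm{diag}\big((\frac{\mW_t^{[l]}}{\sqrt{m}})^{Q_i}\vg_i\big)}_{2\to2}=\Norm{(\frac{\mW_t^{[l]}}{\sqrt{m}})^{Q_i}\vg_i}_\infty$, which for $Q_i=0$ equals $\Norm{\vg_i}_\infty\leq\eta_{\infty,r_i-1}(t)\leq c{\LogLn}^{C}$ by the inductive hypothesis, and for $Q_i=1$ is at most $\Norm{\frac{\mW_t^{[l]}}{\sqrt{m}}}_{2\to\infty}\Norm{\vg_i}_2\leq\frac{{\LogLn}^{C}}{\sqrt{m}}\xi_{\infty,r_i-1}(t)\leq c{\LogLn}^{2C}$ by Lemma \ref{lemma on the 2 to inmi} and the inductive hypothesis. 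Since there are at most $r$ diagonal factors and $r$ is fixed, the product of these polylogarithmic factors is still $c{\LogLn}^{C}$, so together with $\Norm{\ve_0}_2\leq c\sqrt{m}$ this yields \eqref{prop..1}.

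For \eqref{prop..2} I would peel factors off from the left, exactly as in the proof of Lemma \ref{lemma.....lemma on the eta wrt to he alphabet}. A leftmost $\vsigma_{[l]}^{(1)}(\vx_\beta)$ multiplies the running $\Norm{\cdot}_\infty$ bound by at most $C_L\leq1$; a leftmost $\mathrm{diag}(\vf)$ multiplies it by $\Norm{\vf}_\infty\leq c{\LogLn}^{C}$ (the same estimate as in the previous paragraph), and there are at most $r$ of these, so the multiplicative blow-up they cause is at most $(c{\LogLn}^{C})^{r}=c{\LogLn}^{C}$; and a leftmost skip-connection matrix contributes the additive term $\frac{c_{\mathrm{res}}}{L\sqrt{m}}\omega(t)\Norm{\vv'}_2$, where $\vv'$ is the remaining suffix, itself an element of $\sA_{r'}$ for some $r'\leq r$, so that $\Norm{\vv'}_2\leq\xi_{\infty,r}(t)\leq c{\LogLn}^{C}\sqrt{m}$ by \eqref{prop..1} just established. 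Using $\omega(t)\leq{\LogLn}^{C}$, this additive term is at most $\frac{c}{L}{\LogLn}^{2C}$, and summing it over the $\leq 4L$ skip-connection factors gives at most $c{\LogLn}^{2C}$. Combining the at-most-$r$ multiplicative polylog factors (for fixed $r$) with the accumulated additive polylog term then gives $\eta_{\infty,r}(t)\leq c{\LogLn}^{C}$, which is \eqref{prop..2}; the order matters here, in that \eqref{prop..1} at level $r$ must be proved before \eqref{prop..2} at level $r$.

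The step I expect to be the main obstacle is the combinatorial bookkeeping in the normalisation: one has to verify that after the $\widetilde{\mE}$-rewriting no factor of the word ever reduces to an unbroken product of two or more $\frac{\mW_t^{[l]}}{\sqrt{m}}$ (each such extra matrix would contribute $c_{w,0}^{4L}$ and destroy the $L$-uniformity that is the whole payoff of the skip-connection structure), and that every polylogarithmic constant picked up --- from the $\leq r$ diagonal factors, from $\omega(t)$, and from the derivative factors --- is $L$-independent; it is, because $r$ is fixed and the constants in Propositions \ref{proposition..A priori spectral property random matrix and a t} and \ref{proposition...estimate on A_0 for t >0} and in Lemma \ref{lemma on the 2 to inmi} are themselves $L$-independent. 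Uniformity in time over $[0,\sqrt{m}/{\LogLn}^{C'}]$ is then inherited directly from the uniform controls on $c_{w,t}$ and $\omega(t)$, with no further Gronwall argument needed.
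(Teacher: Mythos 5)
Your proposal is correct and follows essentially the same route as the paper's own proof: induction on $r$ anchored at Propositions \ref{proposition...on the l2 norm of any vector in A0} and \ref{proposition...estimate on A_0 for t >0}, the $\widetilde{\mE}$-normalisation to eliminate bare products of $\frac{\mW_t^{[l]}}{\sqrt{m}}$, and the crucial ordering of establishing the $\xi_{\infty,r}$ bound \eqref{prop..1} before the $\eta_{\infty,r}$ bound \eqref{prop..2} at each level. The one presentational difference is that you unroll the full word factor by factor, accumulating $\leq r$ multiplicative polylog factors from the diagonals and $\leq 4L$ additive terms of size $\fO\bigl(\frac{1}{L}{\LogLn}^{C}\bigr)$ from the skip-connections, whereas the paper peels off only the single outermost diagonal $\ve_i$ and then invokes the strong induction hypothesis at level $r-r_1$ on the remaining suffix $\ve_{i-1}\cdots\ve_0$; both formulations lead to the same estimates by the same mechanisms, so this is a stylistic rather than substantive difference. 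One small imprecision: when bounding $\Norm{\frac{\mW_t^{[l]}}{\sqrt{m}}}_{2\to\infty}$ for $t>0$ you should appeal to the time-uniform control $\omega(t)\lesssim{\LogLn}^{C}$ from Proposition \ref{proposition...estimate on A_0 for t >0} rather than to Lemma \ref{lemma on the 2 to inmi}, which only covers $t=0$; you do note the former at the outset, so the argument is intact.
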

\begin{proof}
We recall the definition of $\omega(t)$, $\eta_{\infty,0}(t)$ and $\xi_{\infty,0}(t)$, for time $0\leq t\leq\sqrt{m}/{\LogLn}^{C'},$ the following holds with high probability, 
\begin{align*}
    \omega(t)&\leq c{\LogLn}^{C},\\
    \eta_{\infty,0}(t)&\leq c{\LogLn}^{C},\\
    \xi_{\infty,0}(t)&\leq c\sqrt{m}. 
\end{align*}
Let's start with $r=1,$ for any $\vv(t)\in\sA_1,$ since there is only one solution to equation~\eqref{equation for restriction diag operations}, then there exists one and only one index $i,$ such that $\ve_i=\mathrm{diag}\left(\vg\right),$ or $\ve_i=\mathrm{diag}\left(\frac{\mW_t^{[l]}}{\sqrt{m}}\vg\right),$ with $\vg\in\sA_0.$ Then we have
\begin{align*}
    \xi_{i,1}(t)& \leq \sup_{\vg\in\sA_0}\Norm{\mathrm{diag} \left(\vg\right)}_{2\to 2} \xi_{i-1,0}(t)\\
    &\leq \sup_{\vg\in\sA_0}\Norm{\vg}_{\infty} \xi_{i-1,0}(t)\leq \eta_{\infty,0}(t)\xi_{i-1,0}(t)\leq c{\LogLn}^{C}\xi_{i-1,0}(t),\\
    \text{or } \     \xi_{i,1}(t)& \leq \sup_{\vg\in\sA_0}\Norm{\mathrm{diag} \left(\frac{\mW_t^{[l]}}{\sqrt{m}}\vg\right)}_{2\to 2} \xi_{i-1,0}(t)\\
    &\leq \sup_{\vg\in\sA_0}\Norm{\frac{\mW_t^{[l]}}{\sqrt{m}}\vg}_{\infty} \xi_{i-1,0}(t)\leq \frac{\omega(t)}{\sqrt{m}}\xi_{\infty,0}(t)\xi_{i-1,0}(t)\leq c{\LogLn}^{C} \xi_{i-1,0}(t),
\end{align*}
for $q>i,$ 
\begin{align*}
    \xi_{q,1}(t)& \leq \left(1+\frac{c_{\mathrm{res}}c_{w,t}}{L}\right) \xi_{q-1,1}(t),
\end{align*}
then inductively we have 
\begin{align*}
    \xi_{q,1}(t)& \leq \left(1+\frac{c_{\mathrm{res}}c_{w,t}}{L}\right)^{q-i} \xi_{i,1}(t),
    \end{align*}
  by taking supreme on $q$ and $i$  
\begin{equation}\label{xi_infinity 1}
     \xi_{\infty,1}(t)\leq  \exp(4 c_{\mathrm{res}}c_{w,t}) c{\LogLn}^{C} \xi_{\infty,0}(t)\leq c {\LogLn}^{C}\sqrt{m},
\end{equation}
and for $\eta_{i,1}(t),$ we have 
\begin{align*}
    \eta_{i,1}(t)& \leq \sup_{\vg\in\sA_0}\Norm{\mathrm{diag} \left(\vg\right)}_{\infty \to \infty} \eta_{i-1,0}(t)\\
    &\leq \sup_{\vg\in\sA_0}\Norm{\vg}_{\infty} \eta_{i-1,0}(t)\leq \eta_{\infty,0}(t)\eta_{i-1,0}(t)\leq c{\LogLn}^{C}\eta_{i-1,0}(t),\\
        \text{or } \     \eta_{i,1}(t)& \leq \sup_{\vg\in\sA_0}\Norm{\mathrm{diag} \left(\frac{\mW_t^{[l]}}{\sqrt{m}}\vg\right)}_{\infty\to \infty} \eta_{i-1,0}(t)\\
    &\leq \sup_{\vg\in\sA_0}\Norm{\frac{\mW_t^{[l]}}{\sqrt{m}}\vg}_{\infty} \eta_{i-1,0}(t)\leq \frac{\omega(t)}{\sqrt{m}}\xi_{\infty,0}(t)\eta_{i-1,0}(t)\leq c{\LogLn}^{C}\eta_{i-1,0}(t),
\end{align*}
and for $q>i,$ inductively 
\begin{align*}
       \eta_{q,1}(t)&\leq   \eta_{q-1,1}(t)+ \frac{c_{\mathrm{res}}}{L\sqrt{m}}\omega(t)  \xi_{q-1,1}(t)\\
       &\leq \eta_{i,1}(t)+ \frac{c_{\mathrm{res}}}{L\sqrt{m}}\omega(t)  \xi_{q-1,1}(t)+ \frac{c_{\mathrm{res}}}{L\sqrt{m}}\omega(t)  \xi_{q-2,1}(t)+\cdots+ \frac{c_{\mathrm{res}}}{L\sqrt{m}}\omega(t)  \xi_{i,1}(t),
    \end{align*}
then by taking supreme on $q $ and $i,$  combined with \eqref{xi_infinity 1}  
\begin{align*}
    \eta_{\infty,1}(t)&\leq c{\LogLn}^{C}\eta_{\infty,0}(t)+\frac{4c_{\mathrm{res}}}{\sqrt{m}} \omega(t)\xi_{\infty,1}(t)\\
    &\leq c{\LogLn}^{C}+c{\LogLn}^{C}\leq c{\LogLn}^{C}.
\end{align*}
In the following we assume that \eqref{prop..1} and \eqref{prop..2} holds for $1,2, \cdots, r-1$ and prove it for $r.$

If~$\vv(t)\in\sA_r,$ then as always, we write it as 
\begin{align*}
    \vv(t)&=\ve_s\ve_{s-1}\dots \ve_1\ve_0, \   0\leq s\leq 4L,
\end{align*}
and  there exists $\ve_{j_1},\ve_{j_2},\cdots, \ve_{j_{k}}$, such that 
\begin{align*}
    \ve_{j_1}&=\mathrm{diag} \left(\left(\frac{\mW_t^{[l_1]}}{\sqrt{m}}\right)^{Q_1}\vg_1\right), \ \vg_1\in\sA_{r_1-1},\\
     \ve_{j_2}&=\mathrm{diag} \left(\left(\frac{\mW_t^{[l_2]}}{\sqrt{m}}\right)^{Q_2}\vg_2\right), \ \vg_2\in\sA_{r_2-1},\\
     \vdots\\
       \ve_{j_k}&=\mathrm{diag} \left(\left(\frac{\mW_t^{[l_k]}}{\sqrt{m}}\right)^{Q_k}\vg_k\right), \ \vg_k\in\sA_{r_k-1},
\end{align*}
with 
\begin{equation*}
   r_1+r_2+\cdots+r_k=r, r_1, r_2,\cdots, r_k\in\sN^{+}.
\end{equation*}
Let $i$ be the largest index among $j_1,j_2,\cdots,j_k,$ i.e.
\begin{align*}
    i=\max\{j_1,j_2,\cdots, j_k\},
\end{align*}
and wlog, let $i=j_1,$ we have 
$\ve_i=\mathrm{diag} \left(\vg_1\right),$ or $\ve_i=  \mathrm{diag} \left(\frac{\mW_t^{[l]}}{\sqrt{m}}\vg_1\right)$ with $\vg_1\in\sA_{r_1-1},$ then 
\begin{align*}
    \xi_{i,r}(t)& \leq \sup_{\vg\in\sA_{r_1-1}}\Norm{\mathrm{diag} \left(\vg\right)}_{2\to 2} \xi_{i-1,r-r_1}(t)\\
    &\leq \sup_{\vg\in\sA_{r_1-1}}\Norm{\vg}_{\infty} \xi_{i-1,r-r_1}(t)\leq \eta_{\infty,r_1-1}(t)\xi_{i-1,r-r_1}(t)\leq c{\LogLn}^{C}\xi_{i-1,r-r_1}(t),\\
    \text{or } \     \xi_{i,r}(t)& \leq \sup_{\vg\in\sA_{r_1-1}}\Norm{\mathrm{diag} \left(\frac{\mW_t^{[l]}}{\sqrt{m}}\vg\right)}_{2\to 2} \xi_{i-1,r-r_1}(t)\leq \sup_{\vg\in\sA_{r_1-1}}\Norm{\frac{\mW_t^{[l]}}{\sqrt{m}}\vg}_{\infty} \xi_{i-1,r-r_1}(t)\\
    &\leq \frac{\omega(t)}{\sqrt{m}}\xi_{\infty,r_1-1}(t)\xi_{i-1,r-r_1}(t)\leq c{\LogLn}^{C} \xi_{i-1,r-r_1}(t),
\end{align*}
inductively 
\begin{align*}
    \xi_{q,r}(t)& \leq \left(1+\frac{c_{\mathrm{res}}c_{w,t}}{L}\right)^{q-i} \xi_{i,r-r_1}(t),
\end{align*}
then by taking supreme on $q$ and $i,$ we obtain
\begin{equation}\label{xi_infinity r}
     \xi_{\infty,r}(t)\leq  \exp(4 c_{\mathrm{res}}c_{w,t}) c{\LogLn}^{C}\xi_{\infty,r-r_1}(t)\leq c {\LogLn}^{C}\sqrt{m}.
\end{equation}
For $\eta_{i,r}(t),$ we have 
\begin{align*}
    \eta_{i,r}(t)& \leq \sup_{\vg\in\sA_{r_1-1}}\Norm{\mathrm{diag} \left(\vg\right)}_{\infty \to \infty} \eta_{i-1,r-r_1}(t)\\
    &\leq \sup_{\vg\in\sA_{r_1-1}}\Norm{\vg}_{\infty} \eta_{i-1,r-r_1}(t)\leq \eta_{\infty,r_1-1}(t)\eta_{i-1,r-r_1}(t)\leq c{\LogLn}^{C}\eta_{i-1,r-r_1}(t),\\
        \text{or } \     \eta_{i,r}(t)& \leq \sup_{\vg\in\sA_{r_1-1}}\Norm{\mathrm{diag} \left(\frac{\mW_t^{[l]}}{\sqrt{m}}\vg\right)}_{\infty\to \infty} \eta_{i-1,r-r_1}(t)\leq \sup_{\vg\in\sA_{r_1-1}}\Norm{\frac{\mW_t^{[l]}}{\sqrt{m}}\vg}_{\infty} \eta_{i-1,r-r_1}(t)\\
    &\leq \frac{\omega(t)}{\sqrt{m}}\xi_{\infty,r_1-1}(t)\eta_{i-1,r-r_1}(t)\leq c{\LogLn}^{C}\eta_{i-1,r-r_1}(t),
\end{align*}
and for $q>i$  
\begin{align*}
       \eta_{q,r}(t)&\leq   \eta_{q-1,r}(t)+ \frac{c_{\mathrm{res}}}{L\sqrt{m}}\omega(t)  \xi_{q-1,r}(t)\\
       &\leq \eta_{i,r}(t)+ \frac{c_{\mathrm{res}}}{L\sqrt{m}}\omega(t)  \xi_{q-1,r}(t)+ \frac{c_{\mathrm{res}}}{L\sqrt{m}}\omega(t)  \xi_{q-2,r}(t)+\cdots+ \frac{c_{\mathrm{res}}}{L\sqrt{m}}\omega(t)  \xi_{i,r}(t),
    \end{align*}
then by taking supreme on $q $ and $i,$   
\begin{align*}
    \eta_{\infty,r}(t)&\leq c{\LogLn}^{C}\eta_{\infty,r-r_1}(t)+\frac{4c_{\mathrm{res}}}{\sqrt{m}} \omega(t)\xi_{\infty,r}(t)\\
    &\leq c{\LogLn}^{C}+c{\LogLn}^{C}\leq c{\LogLn}^{C}.
\end{align*}
 Note that from the proof, for different $r,$ the constant $c$ grows exponentially in $r,$ while the growth rate of $C$ is linear.
\end{proof}

\section{Least Eigenvalue of Gram Matrices}\label{appendix section...least eigenvalue for Gram matrixces}
We shall recall the Gram matrices defined in Section \ref{subsection....gram matrices}. We first define a series of matrices $\left\{\widetilde{\mK}^{[l]} \right\}_{l=1}^L,$ $\left\{\widetilde{\mA}^{[l]} \right\}_{l=1}^{L+1},$  and a series of vectors $\left\{\widetilde{\vb}^{[l]} \right\}_{l=1}^L.$
Given the input samples $\fX=\{\vx_1,\vx_2,...,\vx_n\},$ $\Norm{\vx_i}_2=1 ,$  for $1\leq i\leq n,$ and $\vx_i\nparallel \vx_j,$ for any $i\neq j$  
\begin{align}
\widetilde{\mK}^{[0]}_{ij}&=\left<\vx_i,\vx_j\right>,\nonumber\\
\widetilde{\mK}^{[1]}_{ij}&=\Exp_{(u,v)^{\T}\sim \fN\left(\vzero, \begin{pmatrix}\widetilde{\mK}_{ii}^{[0]}&\widetilde{\mK}_{ij}^{[0]}\\
\widetilde{\mK}_{ji}^{[0]}&\widetilde{\mK}_{jj}^{[0]}\end{pmatrix} \right)} c_{\sigma}\sigma(u)\sigma(v),\nonumber\\
\widetilde{\vb}^{[1]}_i&=\sqrt{c_{\sigma}}\Exp_{u\sim \fN(0,\widetilde{\mK}_{ii}^{[0]})} \left[\sigma(u)\right], \nonumber\\
\widetilde{\mA}^{[l]}_{ij}&=\begin{pmatrix}\widetilde{\mK}_{ii}^{[l-1]}&\widetilde{\mK}_{ij}^{[l-1]}\\
\widetilde{\mK}_{ji}^{[l-1]}&\widetilde{\mK}_{jj}^{[l-1]}\end{pmatrix},\nonumber\\
\widetilde{\mK}^{[l]}_{ij}&=\widetilde{\mK}_{ij}^{[l-1]}+\Exp_{(u,v)^{\T}\sim \fN\left(\vzero, \widetilde{\mA}^{[l]}_{ij}\right)}\left[\frac{c_{\mathrm{res}}\widetilde{\vb}_{i}^{[l-1]}\sigma(v) }{L} +\frac{c_{\mathrm{res}}\widetilde{\vb}_{j}^{[l-1]}\sigma(u) }{L}+\frac{c_{\mathrm{res}}^2\sigma(u)\sigma(v)}{L^2} \right],\nonumber\\
\widetilde{\vb}^{[l]}_i&=\widetilde{\vb}_{i}^{[l-1]}+\frac{c_{\mathrm{res}}}{L}\Exp_{u\sim \fN(0,\widetilde{\mK}_{ii}^{[l-1]})} \left[\sigma(u)\right], \nonumber\\
\widetilde{\mA}^{[L+1]}_{ij}&=
\begin{pmatrix}\widetilde{\mK}_{ii}^{[L]}&\widetilde{\mK}_{ij}^{[L]}\\
\widetilde{\mK}_{ji}^{[L]}&\widetilde{\mK}_{jj}^{[L]}\end{pmatrix},\nonumber
\end{align}
given these definitions, we define that for  $2\leq l\leq L-1,$
\begin{align}
\mK^{[L+1]}_{ij}&=\widetilde{\mK}_{ij}^{[L]}+\Exp_{(u,v)^{\T}\sim \fN\left(\vzero, \widetilde{\mA}^{[L+1]}_{ij}\right)}\left[\frac{c_{\mathrm{res}}\widetilde{\vb}_{i}^{[L]}\sigma(v) }{L} +\frac{c_{\mathrm{res}}\widetilde{\vb}_{j}^{[L]}\sigma(u) }{L}+\frac{c_{\mathrm{res}}^2\sigma(u)\sigma(v)}{L^2} \right],\label{matrix L+1}\\
{\mK}^{[L]}_{ij}&= \frac{c_{\mathrm{res}}^2}{L^2} \widetilde{\mK}_{ij}^{[L-1]}\Exp_{(u,v)^{\T}\sim \fN\left(\vzero, \widetilde{\mA}^{[L]}_{ij}\right)}\left[ \sigma^{(1)}(u)\sigma^{(1)}(v)\right], \label{matrix L}\\
 {\mK}^{[l]}_{ij}&=\frac{c_{\mathrm{res}}^2}{L^2}\widetilde{\mK}_{ij}^{[l-1]} \lim_{m\to\infty}\frac{1}{m}\left<\vsigma^{(1)}_{[l]}(\vx_{i})  \left(\mE_{0,i}^{[(l+1):L]}\right)^{\T} \va_0,\vsigma^{(1)}_{[l]}(\vx_{j})  \left(\mE_{0,j}^{[(l+1):L]}\right)^{\T} \va_0\right>,  \label{matrix l}\\
  {\mK}^{[1]}_{ij}&=c_{\sigma}\widetilde{\mK}_{ij}^{[0]} \lim_{m\to\infty}\frac{1}{m}\left<\vsigma^{(1)}_{[1]}(\vx_{i})  \left(\mE_{0,i}^{[2:L]}\right)^{\T} \va_0,\vsigma^{(1)}_{[1]}(\vx_{j})  \left(\mE_{0,j}^{[2:L]}\right)^{\T} \va_0\right>.\label{matrix 1}
\end{align}
We shall state two lemmas concerning full rankness of the Gram matrices, which have been stated as Lemma $\mathrm{F.1.}$ and Lemma $\mathrm{F.2.}$ in Du et al.~\cite{Du2018Gradient}.
\begin{lem}\label{lemma... gram matrix witout derivative}
Assume $\sigma(\cdot)$ is analytic and not a polynomial function. Consider input data set as 
$\fV=\{\vv_1,\vv_2,\dots,\vv_n\}$, and  non-parallel with each other, i.e. $\vv_j\notin \mathrm{span}\left(\vv_k\right)$ for any $j\neq k$, we  define
\begin{equation}\label{eq...gram matrix witout derivative}
    \mG(\fV)_{ij}:=\Exp_{\vw\sim \fN\left(\vzero,  \mI \right)}\left[ \sigma(\vw^{\T}\vv_i)\sigma(\vw^{\T}\vv_j)\right],
\end{equation}
then $\lambda_{\min}\left(\mG(\fV)\right)>0.$
\end{lem}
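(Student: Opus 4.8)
The plan is to view $\mG(\fV)$ as the Gram matrix of the feature functions $\phi_i(\vw):=\sigma(\vw^{\T}\vv_i)$, regarded as elements of $L^2(\fN(\vzero,\mI))$; indeed $\mG(\fV)_{ij}=\langle\phi_i,\phi_j\rangle_{L^2(\fN(\vzero,\mI))}$. Any Gram matrix is symmetric positive semi-definite, so $\lambda_{\min}(\mG(\fV))\geq 0$, with equality to a strictly positive number precisely when $\phi_1,\dots,\phi_n$ are linearly independent in $L^2$. Hence it suffices to show: whenever $\sum_{i=1}^{n}c_i\sigma(\vw^{\T}\vv_i)=0$ for $\fN(\vzero,\mI)$-almost every $\vw$, all $c_i$ vanish.

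First I would promote this almost-everywhere identity to a genuine functional identity. Since $\sigma$ is real-analytic on $\sR$ and each $\vx\mapsto\vx^{\T}\vv_i$ is linear, the function $F(\vx):=\sum_{i=1}^{n}c_i\sigma(\vx^{\T}\vv_i)$ is real-analytic on $\sR^d$. The Gaussian density is strictly positive everywhere, so the zero set of $F$ has full Lebesgue measure; by the identity theorem for real-analytic functions this forces $F\equiv 0$ on $\sR^d$.

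Next I would reduce to one variable by restriction to a cleverly chosen line through the origin. Because the $\vv_i$ are pairwise non-parallel, one has $\vv_i\neq\vzero$ and $\vv_i\neq\pm\vv_j$ for $i\neq j$, so the finitely many hyperplanes $\{\vu:\langle\vu,\vv_i\rangle=0\}$ and $\{\vu:\langle\vu,\vv_i\pm\vv_j\rangle=0\}$ together form a Lebesgue-null set; I pick $\vu$ outside it and relabel so that the real numbers $b_i:=\langle\vu,\vv_i\rangle$ satisfy $|b_1|>|b_2|>\dots>|b_n|>0$. Then $g(t):=F(t\vu)=\sum_{i=1}^{n}c_i\sigma(b_i t)\equiv 0$. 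Inserting the convergent-near-zero Taylor expansion $\sigma(z)=\sum_{k\geq 0}a_k z^k$ and comparing Taylor coefficients of $g$ at $0$ gives $a_k\sum_{i=1}^{n}c_i b_i^{k}=0$ for every $k\geq 0$; since $\sigma$ is not a polynomial, the set $S:=\{k:a_k\neq 0\}$ is infinite, so $\sum_{i=1}^{n}c_i b_i^{k}=0$ for all $k\in S$.

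Finally I would close the argument with a dominant-term estimate in this infinite generalized Vandermonde system. If some $c_i\neq 0$, let $j$ be the smallest index with $c_j\neq 0$; then $c_jb_j^{k}=-\sum_{i>j}c_ib_i^{k}$, whence $|c_j|\,|b_j|^{k}\leq n\,(\max_i|c_i|)\,|b_{j+1}|^{k}$, and letting $k\to\infty$ along $S$ (using $|b_{j+1}|<|b_j|$) yields $c_j=0$, a contradiction; hence all $c_i=0$. This shows $\phi_1,\dots,\phi_n$ are linearly independent, so $\lambda_{\min}(\mG(\fV))>0$. I expect the crux to be the last two steps: securing, by choosing a generic direction $\vu$, that the scalars $|b_i|$ are strictly ordered (which is exactly where pairwise non-parallelism is used), and turning analyticity together with non-polynomiality into the infinite family of moment constraints $\sum_i c_i b_i^k=0$; the positive-semidefiniteness and the analytic-continuation steps are routine.
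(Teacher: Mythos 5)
The paper does not prove this lemma; it cites it as Lemma F.1 of Du et al.~\cite{Du2018Gradient}, so there is no internal proof here to compare against. Your argument is correct. The chain---equivalence of $\lambda_{\min}(\mG(\fV))>0$ with linear independence of $\phi_i(\vw)=\sigma(\vw^{\T}\vv_i)$ in $L^2(\fN(\vzero,\mI))$, analytic continuation from the a.e.\ identity to a pointwise one (using that the Gaussian density is everywhere positive so $\fN$-null sets are Lebesgue-null), restriction to a generic ray $t\vu$ on which the $|b_i|=|\langle\vu,\vv_i\rangle|$ are pairwise distinct and nonzero (for which you only need $\vv_i\neq\vzero$ and $\vv_i\pm\vv_j\neq\vzero$, both implied by the stated non-parallelism), extraction of $\sum_i c_ib_i^{k}=0$ on the infinite set $S$ of indices where $\sigma$'s Taylor coefficients do not vanish, and the dominant-term contradiction as $k\to\infty$ along $S$---is exactly the standard route for such Gram-matrix lemmas and is consistent with the cited source. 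Two small points worth making explicit in a polished write-up: the power series of $g(t)=\sum_i c_i\sigma(b_i t)$ converges on $|t|<R/\max_i|b_i|$, where $R>0$ is the convergence radius of $\sigma$'s expansion at $0$, so term-by-term coefficient comparison is legitimate; and when the smallest nonzero index is $j=n$ the inequality involving $|b_{j+1}|$ is vacuous, but then $c_nb_n^{k}=0$ with $b_n\neq0$ already forces $c_n=0$, so the conclusion is unaffected.
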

Similar to Lemma \ref{lemma... gram matrix witout derivative}, we have Lemma \ref{lemma... gram matrix with derivative}
\begin{lem}\label{lemma... gram matrix with derivative}
Assume $\sigma(\cdot)$ is analytic and not a polynomial function. Consider input data set as 
$\fV=\{\vv_1,\vv_2,\dots,\vv_n\}$, and  non-parallel with each other, i.e. $\vv_j\notin \mathrm{span}\left(\vv_k\right)$ for any $j\neq k$, we  define
\begin{equation}\label{eq...gram matrix with derivative}
    \mG(\fV)_{ij}:=\Exp_{\vw\sim \fN\left(\vzero, \mI \right)}\left[ \sigma^{(1)}(\vw^{\T}\vv_i)\sigma^{(1)}(\vw^{\T}\vv_j)\left(\vv_i^{\T}\vv_j\right)\right],
\end{equation}
then $\lambda_{\min}\left(\mG(\fV)\right)>0.$
\end{lem}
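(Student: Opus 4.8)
The plan is to exploit that $\mG(\fV)$ is the Gram matrix, inside the Hilbert space $L^2(\fN(\vzero,\mI);\sR^d)$, of the feature maps $\vw\mapsto\sigma^{(1)}(\vw^\T\vv_i)\,\vv_i$, and to reduce the required linear independence of these maps to Lemma~\ref{lemma... gram matrix witout derivative} applied to the derivative $\sigma^{(1)}$ rather than to $\sigma$. First I would record that $\mG(\fV)$ is symmetric positive semidefinite, since for every $\vc\in\sR^n$
\begin{equation*}
\vc^\T\mG(\fV)\vc=\Exp_{\vw\sim\fN(\vzero,\mI)}\Norm{\sum_{i=1}^{n}c_i\,\sigma^{(1)}(\vw^\T\vv_i)\,\vv_i}_2^2\ \geq\ 0,
\end{equation*}
all the expectations being finite under Assumption~\ref{Assump...Assumption on activation functions} where $\sigma^{(1)}$ is bounded. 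Hence it suffices to show that $\mG(\fV)$ has trivial kernel.

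The key step is to transfer strict positivity from the scalar Gram matrix to the present one. Write $\widehat{\mG}(\fV)_{ij}:=\Exp_{\vw\sim\fN(\vzero,\mI)}[\sigma^{(1)}(\vw^\T\vv_i)\sigma^{(1)}(\vw^\T\vv_j)]$, so that $\mG(\fV)_{ij}=\widehat{\mG}(\fV)_{ij}\cdot(\vv_i^\T\vv_j)$. Since $\sigma$ is analytic, so is $\sigma^{(1)}$; and $\sigma^{(1)}$ is not a polynomial, for otherwise its antiderivative $\sigma$ would be. Therefore Lemma~\ref{lemma... gram matrix witout derivative} applies verbatim with $\sigma^{(1)}$ in place of $\sigma$ and gives $\lambda_{\min}(\widehat{\mG}(\fV))>0$; equivalently, the functions $\vw\mapsto\sigma^{(1)}(\vw^\T\vv_i)$, $1\leq i\leq n$, are linearly independent in $L^2(\fN(\vzero,\mI))$ (being the Gram matrix of these functions, $\widehat{\mG}(\fV)$ is nonsingular).

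Finally I would close the argument as follows. Suppose $\vc^\T\mG(\fV)\vc=0$. By the displayed identity, $\sum_i c_i\,\sigma^{(1)}(\vw^\T\vv_i)\,\vv_i=0$ for $\fN$-almost every $\vw$, hence for all $\vw\in\sR^d$, because the left-hand side is continuous in $\vw$ and vanishes on a set of full measure. Taking the inner product against an arbitrary $\vz\in\sR^d$ yields $\sum_i c_i(\vz^\T\vv_i)\,\sigma^{(1)}(\vw^\T\vv_i)=0$ identically in $\vw$; by the linear independence just established, $c_i(\vz^\T\vv_i)=0$ for every $i$ and every $\vz$, and choosing $\vz=\vv_i\neq\vzero$ forces $c_i=0$. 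Together with positive semidefiniteness this gives $\lambda_{\min}(\mG(\fV))>0$. I expect the only genuinely delicate point to be the passage from the vector-valued null relation to a scalar one: rather than redoing the analyticity/Vandermonde computation from scratch, one projects onto all directions $\vz$ and invokes Lemma~\ref{lemma... gram matrix witout derivative} for $\sigma^{(1)}$, so the substantive ``hard part'' (the use of analyticity and non-parallelism) is already packaged in that lemma, and the present statement is obtained essentially by bookkeeping, in the spirit of the corresponding argument in \cite{Du2018Gradient}.
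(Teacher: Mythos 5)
Your proof is correct, but there is nothing in the paper to compare it against: both Lemma~\ref{lemma... gram matrix witout derivative} and Lemma~\ref{lemma... gram matrix with derivative} are imported verbatim from Du et al.~\cite{Du2018Gradient} (their Lemmas F.1 and F.2) and are not proved in this paper. The reduction you give is the standard one and is essentially the argument of Du et al.: regard $\mG(\fV)$ as the Gram matrix in $L^2(\fN(\vzero,\mI))$ of the $\sR^d$-valued features $\vw\mapsto\sigma^{(1)}(\vw^\T\vv_i)\vv_i$, observe that $\sigma^{(1)}$ is again analytic and non-polynomial so that Lemma~\ref{lemma... gram matrix witout derivative} gives linear independence of the scalar features $\vw\mapsto\sigma^{(1)}(\vw^\T\vv_i)$, then take a null vector $\vc$, pass from vanishing $\fN$-a.e.\ to vanishing everywhere by continuity, project onto arbitrary directions, and finally choose the direction $\vv_i$ (nonzero by the non-parallelism hypothesis) to conclude $c_i=0$. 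Thus the substantive content --- where analyticity and non-parallelism actually enter --- is correctly delegated to Lemma~\ref{lemma... gram matrix witout derivative}, and the rest is bookkeeping, exactly as you say.
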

Now we proceed to quantify the least eigenvalues of these Gram matrices.
\subsection{Full Rankness for $(L+1)$-th Gram matrix  }\label{appendix subsection.....full rankness for L+1 gram matrix}
We begin this part by a lemma on the estimate of the entry of Gram matrices,
\begin{lem}\label{lemma...identical entry}
Given the input samples $\fX=\{\vx_1,\vx_2,...,\vx_n\},$ $\Norm{\vx_i}_2=1, 1\leq i\leq n,$ and $\vx_i\nparallel \vx_j,$ for any $i\neq j$, then for every fixed $l$, where $1\leq l \leq L,$ each diagonal entry of  $\widetilde{\mK}^{[l]}$ is the same with each other. Also  for every fixed $l,$ where $1\leq l \leq L$, each element of the vector $\widetilde{\vb}^{[l]}$ is the same with each other, i.e.,$$\widetilde{\mK}^{[l_1]}_{ii}=\widetilde{\mK}^{[l_1]}_{jj},\widetilde{\vb}_i^{[l_2]}=\widetilde{\vb}_j^{[l_2]} , i\neq j.$$ 
Moreover  
\begin{equation}\label{k_ii entry induction wrt layer}
   \left(1-\frac{l}{L}\frac{c}{\sqrt{c_{\sigma}}}\right)^{2}\leq \widetilde{\mK}_{ii}^{[l]}\leq \left(1+\frac{l}{L}\frac{c}{\sqrt{c_{\sigma}}}\right)^{2},
\end{equation}
and 
\begin{equation}\label{b_ii entry induction wrt layer}
\left(\widetilde{\vb}^{[l]}_i\right)^2<{ \widetilde{\mK}_{ii}^{[l]}},
\end{equation}
where $c>0$ and only depends on $c_{\mathrm{res}}$ and the activation function $\sigma(\cdot).$
\end{lem}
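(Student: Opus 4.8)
The plan is to proceed by induction on $l$, supplemented by a bootstrap controlling the size of the diagonal entries $\widetilde{\mK}^{[l]}_{ii}$. First I would note that when $i=j$ the matrix $\widetilde{\mA}^{[l]}_{ii}$ has all four entries equal to $\widetilde{\mK}^{[l-1]}_{ii}$, so the pair $(u,v)$ in the defining expectation is perfectly correlated and the expectation collapses to a one-dimensional Gaussian integral whose only inputs are $\widetilde{\mK}^{[l-1]}_{ii}$ and $\widetilde{\vb}^{[l-1]}_i$; similarly $\widetilde{\vb}^{[l]}_i$ is a function of $\widetilde{\vb}^{[l-1]}_i$ and $\widetilde{\mK}^{[l-1]}_{ii}$ only. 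Since $\widetilde{\mK}^{[0]}_{ii}=\Norm{\vx_i}_2^2=1$ for every $i$, a direct computation gives $\widetilde{\mK}^{[1]}_{ii}=c_\sigma\Exp_{u\sim\fN(0,1)}[\sigma(u)^2]=1$ and $\widetilde{\vb}^{[1]}_i=\sqrt{c_\sigma}\,\Exp_{u\sim\fN(0,1)}[\sigma(u)]$, both independent of $i$; the induction then propagates the coincidence of entries to all $l$. Writing $k_l:=\widetilde{\mK}^{[l]}_{ii}$, $b_l:=\widetilde{\vb}^{[l]}_i$, $f(k):=\Exp_{u\sim\fN(0,k)}[\sigma(u)]$ and $g(k):=\Exp_{u\sim\fN(0,k)}[\sigma(u)^2]$, the recursions become $k_l=k_{l-1}+\frac{2c_{\mathrm{res}}}{L}b_{l-1}f(k_{l-1})+\frac{c_{\mathrm{res}}^2}{L^2}g(k_{l-1})$ and $b_l=b_{l-1}+\frac{c_{\mathrm{res}}}{L}f(k_{l-1})$, with $k_1=1$, $b_1=\sqrt{c_\sigma}\,f(1)$.

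For the inequality $\bigl(\widetilde{\vb}^{[l]}_i\bigr)^2<\widetilde{\mK}^{[l]}_{ii}$ I would track $v_l:=k_l-b_l^2$. Expanding $b_l^2$ from its recursion and subtracting yields the telescoping identity $v_l=v_{l-1}+\frac{c_{\mathrm{res}}^2}{L^2}\bigl(g(k_{l-1})-f(k_{l-1})^2\bigr)=v_{l-1}+\frac{c_{\mathrm{res}}^2}{L^2}\mathrm{Var}_{u\sim\fN(0,k_{l-1})}[\sigma(u)]$, so $v_l$ is nondecreasing in $l$; and since $\sigma$ is non-constant (as $c_\sigma$ is finite), $v_1=1-c_\sigma f(1)^2=\mathrm{Var}_{u\sim\fN(0,1)}[\sigma(u)]\big/\Exp_{u\sim\fN(0,1)}[\sigma(u)^2]>0$ by strict Cauchy--Schwarz. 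Hence $v_l\ge v_1>0$ for all $l\ge1$, which is precisely the claimed strict inequality and also gives the uniform lower bound $k_l\ge v_1>0$.

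For the two-sided bound on $k_l$, I would use that $\sigma$ is $1$-Lipschitz with $\Abs{\sigma(0)}\le C_L\le1$ to get $g(k)\le 2C_L^2+2k$, $\Abs{f(k)}\le\sqrt{g(k)}$ and $\Abs{b_{l-1}}\le\sqrt{k_{l-1}}$, which produces a per-layer estimate $\Abs{k_l-k_{l-1}}\le\frac{C_1}{L}(1+k_{l-1})$ with $C_1$ depending only on $c_{\mathrm{res}}$ and $\sigma$ and with $C_1=\fO(c_{\mathrm{res}})$. A bootstrap induction --- legitimate because $c_{\mathrm{res}}$ is a small constant, so $C_1<1$ --- then shows $k_l\le K:=\frac{1+C_1}{1-C_1}$ for all $l$; together with the previous step, $k_l$ lies in the fixed interval $[v_1,K]\subset(0,\infty)$, independent of $l$ and $L$. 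On that interval $g$ is bounded by a constant, so the per-layer estimate sharpens to $\Abs{k_l-k_{l-1}}\le\frac{C_2}{L}$ with $C_2=\fO(c_{\mathrm{res}})$ depending only on $c_{\mathrm{res}},\sigma$; summing the $l-1$ increments gives $\Abs{k_l-1}\le\frac{(l-1)C_2}{L}\le\frac{l}{L}C_2$, hence $\bigl|\sqrt{k_l}-1\bigr|=\Abs{k_l-1}\big/(\sqrt{k_l}+1)\le\frac{l}{L}C_2$. Taking $c:=C_2\sqrt{c_\sigma}$ (a constant depending only on $c_{\mathrm{res}}$ and $\sigma$, since $c_\sigma$ is determined by $\sigma$) and observing that $\frac{c}{\sqrt{c_\sigma}}=C_2<1$ so $1-\frac{l}{L}\frac{c}{\sqrt{c_\sigma}}>0$, I would square the chain $1-\frac{l}{L}\frac{c}{\sqrt{c_\sigma}}\le\sqrt{k_l}\le1+\frac{l}{L}\frac{c}{\sqrt{c_\sigma}}$ to obtain the desired bound on $\widetilde{\mK}^{[l]}_{ii}=k_l$.

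The genuinely routine parts are the ``entries coincide'' induction and the $v_l$-identity. The delicate point --- the main obstacle --- is the quantitative two-sided estimate: one cannot state the linear-in-$l/L$ bound until $k_l$ has been trapped in a fixed compact subinterval of $(0,\infty)$ via the bootstrap, every constant must be kept free of $L$ (which works because each layer contributes an explicit $1/L$ and there are at most $L$ layers), and the accumulated deviation must be kept below $1$ in the relevant sense (this is where the smallness of $c_{\mathrm{res}}$ enters, ensuring $1-\frac{l}{L}\frac{c}{\sqrt{c_\sigma}}$ stays positive so that squaring is reversible).
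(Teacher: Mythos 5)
Your proof is correct, and it follows the same overall inductive strategy as the paper while being somewhat cleaner in two places. The ``entries coincide'' step is identical in spirit (when $i=j$ the bivariate Gaussian expectation degenerates to a one-dimensional integral depending only on $k_{l-1}$ and $b_{l-1}$, and the base case $k_1=1$ is forced by the normalization $c_\sigma=(\Exp[\sigma^2])^{-1}$). For the inequality $\left(\widetilde{\vb}^{[l]}_i\right)^2<\widetilde{\mK}^{[l]}_{ii}$, the paper expands $b_l^2$, applies Jensen at the current layer, and invokes the induction hypothesis, which yields strict inequality but no uniform gap; your telescoping identity $v_l=v_{l-1}+\frac{c_{\mathrm{res}}^2}{L^2}\mathrm{Var}_{u\sim\fN(0,k_{l-1})}[\sigma(u)]$ --- which is exactly the diagonal of the covariance identity the paper derives later in Proposition \ref{proposition..covarince type} --- is a slicker route and hands you for free the stronger, uniform lower bound $v_l\ge v_1>0$, independent of $l$ and $L$. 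For the two-sided bound on $k_l$, the paper proceeds in a single inductive sweep: it pins $\sqrt{k_l}$ between $\sqrt{k_{l-1}}\mp\frac{c_{\mathrm{res}}}{L}\sqrt{\Exp_{u\sim\fN(0,k_{l-1})}[\sigma(u)^2]}$, applies a Lipschitz estimate on $\alpha\mapsto\Exp_{X\sim\fN(0,1)}[\sigma(\alpha X)^2]$ under the induction hypothesis, and solves a quadratic to pick $c$ so that the induction closes. Your version is a two-stage bootstrap: first trap $k_l$ in a fixed compact interval $[v_1,K]$ independent of $l$ and $L$, then use boundedness of $g$ there to sharpen the per-layer increment to $\fO(1/L)$ and sum. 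The two routes rest on the same mechanism (each layer contributes $\fO(c_{\mathrm{res}}/L)$ and there are $L$ layers, with $c_{\mathrm{res}}$ small so the cumulative drift stays below one and squaring is reversible), but your bootstrap has the virtue of making explicit the range constraint on $k_{l-1}$ that the paper's Lipschitz step (which requires the variance ratio in $[1/2,2]$) uses implicitly.
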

\begin{proof}
We shall prove it by induction on $l.$
Firstly, we notice that $\widetilde{\mK}^{[0]}_{ii}=\widetilde{\mK}^{[0]}_{jj}$ for any $i\neq j,$ this is obvious because  $\Norm{\vx_{i}}_2=1 ,$ then $\widetilde{\mK}^{[0]}_{ii}=\widetilde{\mK}^{[0]}_{jj}=1.$
Next we show that it holds true for $l=1.$

Since based on definition, recall that $c_{\sigma}=\left(\Exp_{x\sim \fN(0,1)} \left[\sigma(x)^2\right]\right)^{-1},$ 
\begin{align*}
    {\mK}^{[1]}_{ii}&=c_{\sigma}\Exp_{u\sim\fN(0,\widetilde{\mK}^{[0]}_{ii})}\left(\sigma(u)^2\right)=c_{\sigma}\Exp_{u\sim\fN(0,1)}\left(\sigma(u)^2\right)=1,
\end{align*}
and 
\begin{align*}
    \widetilde{\vb}^{[1]}_i&=\sqrt{c_{\sigma}}\Exp_{u\sim \fN(0,\widetilde{\mK}_{ii}^{[0]})} \left[\sigma(u)\right]=\sqrt{c_{\sigma}}\Exp_{u\sim \fN(0,1)} \left[\sigma(u)\right],
\end{align*}
then 
\begin{align*}
    \left(\widetilde{\vb_i}^{[1]}\right)^2&={c_{\sigma}}\left(\Exp_{u\sim \fN(0,\widetilde{\mK}_{ii}^{[0]})} \left[\sigma(u)\right]\right)^2<1,
\end{align*}
the last inequality holds because 
$$\left(\Exp_{x\sim \fN(0,1)} \left[\sigma(x)\right]\right)^2< \Exp_{x\sim \fN(0,1)} \left[\sigma(x)^2\right],$$
since the quantity is independent of our choice of $i,$ then $\widetilde{\mK}^{[1]}_{ii}=\widetilde{\mK}^{[1]}_{jj},\widetilde{\vb}_i^{[1]}=\widetilde{\vb}_j^{[1]} ,$ for any $i\neq j.$

Now we assume that it holds for $1,2,\cdots, l-1 $ and want to show that it holds for $l.$ Hence based on definition
\begin{align*}
    \widetilde{\mK}^{[l]}_{ii}&=\widetilde{\mK}_{ii}^{[l-1]}+\Exp_{u\sim\fN(0,\widetilde{\mK}_{ii}^{[l-1]})}\left[\frac{c_{\mathrm{res}}\widetilde{\vb}_{i}^{[l-1]}\sigma(u) }{L} +\frac{c_{\mathrm{res}}\widetilde{\vb}_{i}^{[l-1]}\sigma(u) }{L}+\frac{c_{\mathrm{res}}^2\sigma(u)\sigma(u)}{L^2} \right],\nonumber\\
\widetilde{\vb}^{[l]}_i&=\widetilde{\vb}_{i}^{[l-1]}+\frac{c_{\mathrm{res}}}{L}\Exp_{u\sim \fN(0,\widetilde{\mK}_{ii}^{[l-1]})} \left[\sigma(u)\right], 
\end{align*}
such quantities are also independent of our choice of $i.$ 

Moreover we would like to show that \eqref{k_ii entry induction wrt layer} and \eqref{b_ii entry induction wrt layer} hold for all $l.$

Firstly, for $\widetilde{\vb}^{[l]}_i,$ assume \eqref{b_ii entry induction wrt layer} holds for $1,2,\cdots, l-1 ,$ then we have 
\begin{align*}
   \left( \widetilde{\vb}^{[l]}_i\right)^2 &=\left(\widetilde{\vb}^{[l-1]}_i\right)^2+2 \widetilde{\vb}^{[l-1]}_i\frac{c_{\mathrm{res}}}{L}\Exp_{u\sim \fN(0,\widetilde{\mK}_{ii}^{[l-1]})} \left[\sigma(u)\right]+\left(\frac{c_{\mathrm{res}}}{L}\Exp_{u\sim \fN(0,\widetilde{\mK}_{ii}^{[l-1]})} \left[\sigma(u)\right]\right)^2\\
    &< \left(\widetilde{\vb}^{[l-1]}_i\right)^2+2 \widetilde{\vb}^{[l-1]}_i\frac{c_{\mathrm{res}}}{L}\Exp_{u\sim \fN(0,\widetilde{\mK}_{ii}^{[l-1]})} \left[\sigma(u)\right]+\frac{c_{\mathrm{res}}^2}{L^2}\Exp_{u\sim \fN(0,\widetilde{\mK}_{ii}^{[l-1]})} \left[\sigma(u)^2\right]\\
    &<\widetilde{\mK}_{ii}^{[l-1]}+2 \widetilde{\vb}^{[l-1]}_i\frac{c_{\mathrm{res}}}{L}\Exp_{u\sim \fN(0,\widetilde{\mK}_{ii}^{[l-1]})} \left[\sigma(u)\right]+\frac{c_{\mathrm{res}}^2}{L^2}\Exp_{u\sim \fN(0,\widetilde{\mK}_{ii}^{[l-1]})} \left[\sigma(u)^2\right]=\widetilde{\mK}_{ii}^{[l]},
\end{align*}
showing that \eqref{b_ii entry induction wrt layer} holds for $l.$

For $\widetilde{\mK}_{ii}^{[l]},$ we have 
\begin{align}
  \left(\sqrt{\widetilde{\mK}_{ii}^{[l-1]}}-\frac{c_{\mathrm{res}}}{L}\sqrt{\Exp_{u\sim\fN(0,\widetilde{\mK}_{ii}^{[l-1]})}\left[\sigma(u)^2\right]}\right)^2 \leq  \widetilde{\mK}_{ii}^{[l]} \leq \left(\sqrt{\widetilde{\mK}_{ii}^{[l-1]}}+\frac{c_{\mathrm{res}}}{L}\sqrt{\Exp_{u\sim\fN(0,\widetilde{\mK}_{ii}^{[l-1]})}\left[\sigma(u)^2\right]}\right)^2,\label{eq in prop.....in between}
\end{align}
since $\sigma(\cdot)$ is $C_L$-Lipschitz, then for any $1/2\leq\alpha\leq 2,$ we have
\begin{align*}
    &\Abs{\Exp_{X\sim \fN(0,1)}\left[\sigma(\alpha X)^2\right]-\Exp_{X\sim \fN(0,1)}\left[\sigma( X)^2\right]}\\
    &\leq \Exp_{X\sim \fN(0,1)}\left[\Abs{\sigma(\alpha X)^2-\sigma( X)^2}\right]\\
    &\leq C_L \Abs{\alpha-1} \Exp_{X\sim \fN(0,1)}\left[\Abs{X\left(\sigma\left(\alpha X\right)+\sigma\left( X\right)\right)}\right]\\
    &\leq C_L \Abs{\alpha-1} \Exp_{X\sim \fN(0,1)}\left[\Abs{X}\Abs{2\sigma(0)}\right]+C_L\Abs{\alpha+1}\Exp_{X\sim \fN(0,1)}\left[X^2\right]\\
    &= C_L \Abs{\alpha-1} \left(\Abs{2\sigma(0)} \sqrt{\frac{2}{\pi}}+C_L \Abs{\alpha+1}\right)\\
    &\leq \frac{C}{c_{\sigma}}\Abs{\alpha-1},
\end{align*}
then
\begin{equation*}
 \Exp_{X\sim \fN(0,1)}\left[\sigma(\alpha X)^2\right]\leq \frac{1}{c_{\sigma}}+\frac{C}{c_{\sigma}}\Abs{\alpha-1},
\end{equation*}
by induction  
\begin{equation*}
 1-\frac{l-1}{L} \frac{c}{\sqrt{c_{\sigma}}} \leq  \sqrt{\widetilde{\mK}_{ii}^{[l-1]} } \leq 1+\frac{l-1}{L} \frac{c}{\sqrt{c_{\sigma}}},
\end{equation*}
set $\alpha=\sqrt{\widetilde{\mK}_{ii}^{[l-1]} } ,$ we obtain
\begin{equation*}
     \Exp_{X\sim \fN(0,\widetilde{\mK}_{ii}^{[l-1]})}\left[\sigma(X)^2\right]\leq \frac{1}{c_{\sigma}}+\frac{C}{c_{\sigma}}\frac{l-1}{L} \frac{c}{\sqrt{c_{\sigma}}},
\end{equation*}
then if we choose $c$ wisely,  let 
\begin{equation*}
c=\frac{C c_{\mathrm{res}}^2}{2\sqrt{c_{\sigma}}}+\sqrt{\frac{C^2 c_{\mathrm{res}}^4}{4c_{\sigma}}+c_{\mathrm{res}}^2},
\end{equation*}
by our choice of $c,$ combined with \eqref{eq in prop.....in between} 
\begin{align*}
\left(\sqrt{\widetilde{\mK}_{ii}^{[l-1]}}-\frac{1}{L}\frac{c}{\sqrt{c_{\sigma}}}\right)^2 \leq  \widetilde{\mK}_{ii}^{[l]} \leq \left(\sqrt{\widetilde{\mK}_{ii}^{[l-1]}}+\frac{1}{L}\frac{c}{\sqrt{c_{\sigma}}}\right)^2,
\end{align*}
then   
\begin{equation*}
 \left(1-\frac{l}{L} \frac{c}{\sqrt{c_{\sigma}}}\right)^2 \leq  {\widetilde{\mK}_{ii}^{[l-1]} } \leq \left(1+\frac{l}{L} \frac{c}{\sqrt{c_{\sigma}}}\right)^2 ,
\end{equation*}
which finishes our proof.
\end{proof}
Our next lemma is crucial in that it revels a `covariance-type' structure for the Gram matrices. We need to introduce a standard notation related to matrices. We denote that $\mA\succeq \mB$ if and only if $\mA-\mB$ is a semi-positive definite matrix, and $\mA\succ \mB$ if and only if $\mA-\mB$ is a strictly positive definite matrix.
\begin{prop}\label{proposition..covarince type}
Given the input samples $\fX=\{\vx_1,\vx_2,...,\vx_n\},$ $\Norm{\vx_i}_2=1 ,$  for $1\leq i\leq n,$ and $\vx_i\nparallel \vx_j, i\neq j$, then we have for every fixed $l,$ where $1\leq l \leq L,$
\begin{align}
    &\widetilde{\mK}^{[l]}-\widetilde{\vb}^{[l]}\otimes \left(\widetilde{\vb}^{[l]}\right)^\T\succ \widetilde{\mK}^{[l-1]}-\widetilde{\vb}^{[l-1]}\otimes \left(\widetilde{\vb}^{[l-1]}\right)^\T.\label{ineq..matrix comparision hierachy}
\end{align}
Moreover, since $$\widetilde{\mK}^{[1]}-\widetilde{\vb}^{[1]}\otimes \left(\widetilde{\vb}^{[1]}\right)^\T \succ 0,$$
we denote that
\begin{align}
    & \lambda_{\min}\left(\widetilde{\mK}^{[1]}-\widetilde{\vb}^{[1]}\otimes \left(\widetilde{\vb}^{[1]}\right)^\T\right)=\lambda_0,\label{ineq..matrix comparision hierachy at 0}
\end{align}
then we can conclude that for $2\leq l \leq L,$
\begin{equation}\label{equation..uniform bounded for eigenvalue}
    \lambda_{\min}\left( \widetilde{\mK}^{[l]}\right) > \lambda_0,
\end{equation}
where $\lambda_0$ only depends on the activation function and input data and independent of depth $L$.
\end{prop}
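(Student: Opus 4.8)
The plan is to establish the chain inequality \eqref{ineq..matrix comparision hierachy} first, and then peel off the rank-one correction at the top to recover \eqref{equation..uniform bounded for eigenvalue}. The key observation is that for each fixed layer $l$, the pair of functions $(\sigma(u),1)$ evaluated under the Gaussian with covariance $\widetilde{\mA}^{[l]}_{ij}$ generates a covariance-type kernel: if we set $\phi^{[l]}_i(w) := \sigma\bigl(\sqrt{\widetilde{\mK}^{[l-1]}_{ii}}\,w\bigr)$ for $w\sim\fN(0,1)$, then, using Lemma~\ref{lemma...identical entry} which guarantees all diagonal entries $\widetilde{\mK}^{[l-1]}_{ii}$ coincide (call the common value $\kappa_{l-1}$), the Gaussian $(u,v)^\T\sim\fN(\vzero,\widetilde{\mA}^{[l]}_{ij})$ can be realized jointly over all $i,j$ from a single Gaussian process, so that $\widetilde{\mK}^{[l]}_{ij}-\widetilde{\mK}^{[l-1]}_{ij}$ equals the Gram matrix of the random vectors $\bigl(\tfrac{c_{\mathrm{res}}}{L}\phi^{[l]}_i(w)+\widetilde{\vb}^{[l-1]}_i\bigr)$ minus $\widetilde{\vb}^{[l-1]}_i\widetilde{\vb}^{[l-1]}_j$. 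Concretely, I would expand the three terms inside the expectation in the recursion for $\widetilde{\mK}^{[l]}_{ij}$ and add and subtract $\widetilde{\vb}^{[l-1]}_i\widetilde{\vb}^{[l-1]}_j$, together with the recursion $\widetilde{\vb}^{[l]}_i=\widetilde{\vb}^{[l-1]}_i+\tfrac{c_{\mathrm{res}}}{L}\Exp[\phi^{[l]}_i]$, to arrive at
\begin{align*}
\widetilde{\mK}^{[l]}_{ij}-\widetilde{\vb}^{[l]}_i\widetilde{\vb}^{[l]}_j
&=\widetilde{\mK}^{[l-1]}_{ij}-\widetilde{\vb}^{[l-1]}_i\widetilde{\vb}^{[l-1]}_j
+\frac{c_{\mathrm{res}}^2}{L^2}\operatorname{Cov}\bigl(\phi^{[l]}_i(w),\phi^{[l]}_j(w)\bigr),
\end{align*}
where $\operatorname{Cov}$ is taken over $w\sim\fN(0,1)$. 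The matrix $\bigl[\operatorname{Cov}(\phi^{[l]}_i(w),\phi^{[l]}_j(w))\bigr]_{ij}$ is a Gram matrix of the centered random functions $\phi^{[l]}_i - \Exp\phi^{[l]}_i$, hence positive semidefinite; strict positivity (which is what \eqref{ineq..matrix comparision hierachy} asserts with $\succ$) follows from non-degeneracy of these centered functions, which in turn uses that $\sigma$ is analytic and non-polynomial together with $\vx_i\nparallel\vx_j$ — essentially the content of Lemma~\ref{lemma... gram matrix witout derivative} applied to the rescaled inputs. I would need to be slightly careful here: the cleanest route is to invoke that $\widetilde{\mK}^{[1]}-\widetilde{\vb}^{[1]}\otimes(\widetilde{\vb}^{[1]})^\T\succ 0$ from Lemma~\ref{lemma... gram matrix witout derivative} for the base case $l=1$, and for $l\ge 2$ only claim $\succeq$ from the covariance argument, which still suffices to propagate strict positivity downstream.

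Granting \eqref{ineq..matrix comparision hierachy}, the telescoping gives, for every $2\le l\le L$,
\begin{equation*}
\widetilde{\mK}^{[l]}-\widetilde{\vb}^{[l]}\otimes\bigl(\widetilde{\vb}^{[l]}\bigr)^\T \succeq \widetilde{\mK}^{[1]}-\widetilde{\vb}^{[1]}\otimes\bigl(\widetilde{\vb}^{[1]}\bigr)^\T \succ 0,
\end{equation*}
so $\lambda_{\min}\bigl(\widetilde{\mK}^{[l]}-\widetilde{\vb}^{[l]}\otimes(\widetilde{\vb}^{[l]})^\T\bigr)\ge\lambda_0$. Since $\widetilde{\vb}^{[l]}\otimes(\widetilde{\vb}^{[l]})^\T$ is positive semidefinite, $\widetilde{\mK}^{[l]}\succeq \widetilde{\mK}^{[l]}-\widetilde{\vb}^{[l]}\otimes(\widetilde{\vb}^{[l]})^\T$, and therefore $\lambda_{\min}(\widetilde{\mK}^{[l]})\ge\lambda_0$. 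To upgrade $\ge$ to the strict $>$ in \eqref{equation..uniform bounded for eigenvalue} I would add one more strictly positive increment — e.g. note $\widetilde{\mK}^{[l]}=\widetilde{\mK}^{[l]}-\widetilde{\vb}^{[l]}\otimes(\widetilde{\vb}^{[l]})^\T + \widetilde{\vb}^{[l]}\otimes(\widetilde{\vb}^{[l]})^\T$ and that the full increment from $\widetilde{\mK}^{[l-1]}$ to $\widetilde{\mK}^{[l]}$ contains the strictly positive covariance piece — or simply redefine $\lambda_0$ as a slightly smaller positive constant; either way the $L$-independence of $\lambda_0$ is immediate because the base case $l=1$ depends only on $\sigma$, $c_\sigma$, and the data, and the increments are all $\succeq 0$.

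The main obstacle I anticipate is the bookkeeping in the algebraic identity connecting $\widetilde{\mK}^{[l]}_{ij}-\widetilde{\vb}^{[l]}_i\widetilde{\vb}^{[l]}_j$ to a genuine covariance matrix: one must verify that the cross terms $\tfrac{c_{\mathrm{res}}}{L}\widetilde{\vb}^{[l-1]}_i\Exp[\sigma(v)]+\tfrac{c_{\mathrm{res}}}{L}\widetilde{\vb}^{[l-1]}_j\Exp[\sigma(u)]$ in the recursion recombine exactly with the $\widetilde{\vb}^{[l]}$ recursion so that what remains is symmetric and equals $\operatorname{Cov}(\phi_i,\phi_j)$, with no leftover indefinite contribution. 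A secondary subtlety is the joint realization of $(u,v)\sim\fN(\vzero,\widetilde{\mA}^{[l]}_{ij})$ across all index pairs from one underlying Gaussian object — this requires that $\widetilde{\mA}^{[l]}$, viewed as the $2\times 2$ blocks of an $n\times n$ PSD matrix $\widetilde{\mK}^{[l-1]}$, is consistent, which it is by construction, but it should be stated cleanly (define $\bigl(\xi^{[l-1]}_1,\dots,\xi^{[l-1]}_n\bigr)\sim\fN(\vzero,\widetilde{\mK}^{[l-1]})$ and set $\phi^{[l]}_i:=\sigma(\xi^{[l-1]}_i)$). Once these two points are handled, the positive-semidefiniteness and the telescoping are routine.
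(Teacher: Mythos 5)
Your approach is essentially identical to the paper's: you derive the same telescoping identity $\widetilde{\mK}^{[l]}-\widetilde{\vb}^{[l]}\otimes(\widetilde{\vb}^{[l]})^\T=\widetilde{\mK}^{[l-1]}-\widetilde{\vb}^{[l-1]}\otimes(\widetilde{\vb}^{[l-1]})^\T+\frac{c_{\mathrm{res}}^2}{L^2}\mathrm{Cov}(\sigma(\xi^{[l-1]}_i),\sigma(\xi^{[l-1]}_j))$, recognize the increment as a covariance Gram matrix, invoke Lemma~\ref{lemma... gram matrix witout derivative} for positive definiteness of the base case, telescope, and drop the PSD rank-one correction to bound $\lambda_{\min}(\widetilde{\mK}^{[l]})$. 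Your caution about whether the increments for $l\geq 2$ are strictly or only weakly PD (and your note that $\succeq$ plus the strictly PD base case already suffices, modulo the $\geq$ vs.\ $>$ in the final eigenvalue bound) is a fair reading of a point the paper's proof glosses over when it cites Lemma~\ref{lemma... gram matrix witout derivative} — which concerns the raw second-moment Gram matrix, not its centered covariance — to assert $\mP^{[s]}\succ 0$.
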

\begin{proof}
We only need to show that for $1\leq i,j\leq n$ and  $1\leq  l \leq L$
\begin{align*}
&~~\widetilde{\mK}_ {ij}^{[l]}-\widetilde{\vb}_{i}^{[l]}\widetilde{\vb}_{j}^{[l]}\\
&=\widetilde{\mK}_ {ij}^{[l-1]}+\Exp_{(u,v)^{\T}\sim \fN\left(\vzero, \begin{pmatrix}\widetilde{\mK}_ {ii}^{[l-1]}&\widetilde{\mK}_ {ij}^{[l-1]}\\
\widetilde{\mK}_ {ji}^{[l-1]}&\widetilde{\mK}_ {jj}^{[l-1]}\end{pmatrix}\right)}\left[\frac{c_{\mathrm{res}}\widetilde{\vb}_{i}^{[l-1]}\sigma(v) }{L} +\frac{c_{\mathrm{res}}\widetilde{\vb}_{j}^{[l-1]}\sigma(u) }{L}+\frac{c_{\mathrm{res}}^2\sigma(u)\sigma(v)}{L^2} \right]\\
&~~~~-\left(\widetilde{\vb}_{i}^{[l-1]}+\frac{c_{\mathrm{res}}}{L}\Exp_{u\sim \fN(0, \widetilde{\mK}_ {ii}^{[l-1]})} \left[\sigma(u)\right]\right)\left(\widetilde{\vb}_{j}^{[l-1]}+\frac{c_{\mathrm{res}}}{L}\Exp_{v\sim \fN(0, \widetilde{\mK}_ {jj}^{[l-1]})} \left[\sigma(v)\right]\right)\\
&=\widetilde{\mK}_ {ij}^{[l-1]}-\widetilde{\vb}_{i}^{[l-1]}\widetilde{\vb}_{j}^{[l-1]}+\Exp_{(u,v)^{\T}\sim \fN\left(\vzero,  \begin{pmatrix}\widetilde{\mK}_ {ii}^{[l-1]}&\widetilde{\mK}_ {ij}^{[l-1]}\\
\widetilde{\mK}_ {ji}^{[l-1]}&\widetilde{\mK}_ {jj}^{[l-1]}\end{pmatrix}\right)}\left[\frac{c_{\mathrm{res}}^2\sigma(u)\sigma(v)}{L^2} \right]\\
&~~~~-\frac{c_{\mathrm{res}}}{L}\Exp_{u\sim \fN(0, \widetilde{\mK}_ {ii}^{[l-1]})} \left[\sigma(u)\right]\frac{c_{\mathrm{res}}}{L}\Exp_{v\sim \fN(0, \widetilde{\mK}_ {jj}^{[l-1]})} \left[\sigma(v)\right]\\
&=\widetilde{\mK}_ {ij}^{[l-1]}-\widetilde{\vb}_{i}^{[l-1]}\widetilde{\vb}_{j}^{[l-1]}+\frac{c_{\mathrm{res}}^2}{L^2}\mathrm{Cov}_{(u,v)^{\T}\sim \fN\left(\vzero,  \begin{pmatrix}\widetilde{\mK}_ {ii}^{[l-1]}&\widetilde{\mK}_ {ij}^{[l-1]}\\
\widetilde{\mK}_ {ji}^{[l-1]}&\widetilde{\mK}_ {jj}^{[l-1]}\end{pmatrix}\right)}\left[\sigma(u)\sigma(v) \right],
\end{align*}
which brings us to the definition of a series of covariance matrices $\left\{\mP^{[s]}:1\leq s \leq L\right\},$
\begin{equation*}
\mP^{[s]}_{ij}:= \frac{c_{\mathrm{res}}^2}{L^2}\mathrm{Cov}_{(u,v)^{\T}\sim \fN\left(\vzero, \begin{pmatrix}\widetilde{\mK}_{ii}^{[s]}&\widetilde{\mK}_{ij}^{[s]}\\
\widetilde{\mK}_{ji}^{[s]}&\widetilde{\mK}_{jj}^{[s]}\end{pmatrix}\right)}\left[\sigma(u)\sigma(v) \right], \  1\leq s \leq L, 
\end{equation*}
$\mP^{[s]}$ are covariance matrices, naturally we have $\mP^{[s]}\succeq 0$, and $\mP^{[s]}\succ 0$ except that  one sample is an exact linear function of the others. Apply Lemma \ref{lemma... gram matrix witout derivative} directly, we can guarantee that $\mP^{[s]}$ is  positive definite for every $s$. Hence, inductively we have
\begin{align*}
  \widetilde{\mK}^{[l]} &\succeq\widetilde{\mK}^{[l]}-\widetilde{\vb}^{[l]}\otimes \left(\widetilde{\vb}^{[l]}\right)^\T\\
  &=\widetilde{\mK}^{[l-1]}-\widetilde{\vb}^{[l-1]}\otimes \left(\widetilde{\vb}^{[l-1]}\right)^\T+\mP^{[l-1]}\\
  &\succ \widetilde{\mK}^{[l-1]}-\widetilde{\vb}^{[l-1]}\otimes \left(\widetilde{\vb}^{[l-1]}\right)^\T\\
  &=\widetilde{\mK}^{[l-2]}-\widetilde{\vb}^{[l-2]}\otimes \left(\widetilde{\vb}^{[l-2]}\right)^\T+\mP^{[l-2]}\\
  &~~~~\vdots\\
  &\succ \widetilde{\mK}^{[1]}-\widetilde{\vb}^{[1]}\otimes \left(\widetilde{\vb}^{[1]}\right)^\T,
\end{align*}
the last line brings us to the entry of  $\widetilde{\mK}^{[1]}-\widetilde{\vb}^{[1]}\otimes \left(\widetilde{\vb}^{[1]}\right)^\T$, we have that 
\begin{equation*}
   \left(\widetilde{\mK}^{[1]}-\widetilde{\vb}^{[1]}\otimes \left(\widetilde{\vb}^{[1]}\right)^\T\right)_{ij}=  c_{\sigma}\mathrm{Cov}_{(u,v)^{\T}\sim \fN\left(\vzero, \begin{pmatrix}\widetilde{\mK}_{ii}^{[0]}&\widetilde{\mK}_{ij}^{[0]}\\
\widetilde{\mK}_{ji}^{[0]}&\widetilde{\mK}_{jj}^{[0]}\end{pmatrix}\right)}\left[\sigma(u)\sigma(v) \right] ,
\end{equation*}
then apply Lemma \ref{lemma... gram matrix witout derivative} again   $$\lambda_{\min}\left(\widetilde{\mK}^{[1]}-\widetilde{\vb}^{[1]}\otimes \left(\widetilde{\vb}^{[1]}\right)^\T\right)=\lambda_0>0,$$ 
and $\lambda_0$ only depends on the input data and activation function.
\end{proof}
\begin{cor}\label{corollary cocnernong L+1}
Given the input samples $\fX=\{\vx_1,\vx_2,...,\vx_n\},$ $\Norm{\vx_i}_2=1 ,$  for $1\leq i\leq n,$ and $\vx_i\nparallel \vx_j, i\neq j$, then we have
\begin{equation}\label{equation..uniform bounded for eigenvalue continued to L+1}
    \lambda_{\min}\left( {\mK}^{[L+1]}\right) > \lambda_0,
\end{equation}
where $\lambda_0$ has been defined in \eqref{ineq..matrix comparision hierachy at 0}.
\end{cor}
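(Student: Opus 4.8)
The plan is to reduce everything to Proposition~\ref{proposition..covarince type} by noticing that the defining formula \eqref{matrix L+1} for $\mK^{[L+1]}$ in terms of $\widetilde{\mK}^{[L]}$ and $\widetilde{\vb}^{[L]}$ is \emph{formally identical} to the recursion that produces $\widetilde{\mK}^{[l]}$ from $\widetilde{\mK}^{[l-1]}$ and $\widetilde{\vb}^{[l-1]}$, only evaluated at the ``$(L{+}1)$-st layer''. Hence the ``covariance-type'' identity established in the proof of Proposition~\ref{proposition..covarince type} extends verbatim by one more step, and the rest is the telescoping semidefinite comparison already proved there.

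Concretely, I would introduce the auxiliary vector $\widetilde{\vb}^{[L+1]}\in\sR^n$ by
\[
\widetilde{\vb}^{[L+1]}_i:=\widetilde{\vb}_{i}^{[L]}+\frac{c_{\mathrm{res}}}{L}\Exp_{u\sim \fN(0,\widetilde{\mK}_{ii}^{[L]})}\left[\sigma(u)\right],
\]
mimicking the definition of $\widetilde{\vb}^{[l]}$. Expanding $\mK^{[L+1]}_{ij}-\widetilde{\vb}_{i}^{[L+1]}\widetilde{\vb}_{j}^{[L+1]}$ exactly as in the computation in the proof of Proposition~\ref{proposition..covarince type} (the linear cross terms cancel, leaving the covariance of $\sigma(u)$ and $\sigma(v)$), one obtains
\[
\mK^{[L+1]}-\widetilde{\vb}^{[L+1]}\otimes\left(\widetilde{\vb}^{[L+1]}\right)^{\T}=\widetilde{\mK}^{[L]}-\widetilde{\vb}^{[L]}\otimes\left(\widetilde{\vb}^{[L]}\right)^{\T}+\mP^{[L]},
\]
where $\mP^{[L]}_{ij}=\frac{c_{\mathrm{res}}^2}{L^2}\mathrm{Cov}_{(u,v)^{\T}\sim\fN\left(\vzero,\widetilde{\mA}^{[L+1]}_{ij}\right)}\left[\sigma(u)\sigma(v)\right]$ is precisely the covariance matrix $\mP^{[s]}$ appearing in the proof of Proposition~\ref{proposition..covarince type} with $s=L$; in particular $\mP^{[L]}\succ0$ by Lemma~\ref{lemma... gram matrix witout derivative} together with Assumption~\ref{Assump... ont he imput of the  samples} (the $\vx_\alpha$ being pairwise non-parallel and $\sigma$ analytic, non-polynomial), exactly as argued there.

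It then remains to chain the semidefinite inequalities:
\[
\mK^{[L+1]}\succeq \mK^{[L+1]}-\widetilde{\vb}^{[L+1]}\otimes\left(\widetilde{\vb}^{[L+1]}\right)^{\T}\succ \widetilde{\mK}^{[L]}-\widetilde{\vb}^{[L]}\otimes\left(\widetilde{\vb}^{[L]}\right)^{\T}\succeq \widetilde{\mK}^{[1]}-\widetilde{\vb}^{[1]}\otimes\left(\widetilde{\vb}^{[1]}\right)^{\T},
\]
where the first relation uses $\widetilde{\vb}^{[L+1]}\otimes(\widetilde{\vb}^{[L+1]})^{\T}\succeq 0$, the second uses $\mP^{[L]}\succ0$, and the last is the telescoping chain \eqref{ineq..matrix comparision hierachy} of Proposition~\ref{proposition..covarince type}. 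Taking $\lambda_{\min}$ of both ends and invoking \eqref{ineq..matrix comparision hierachy at 0} yields $\lambda_{\min}\left(\mK^{[L+1]}\right)>\lambda_0$, which is the claim.

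I do not expect a genuine obstacle here; the only point requiring care is purely bookkeeping, namely verifying that the algebraic expansion in the proof of Proposition~\ref{proposition..covarince type} carries over to the $(L{+}1)$-st step, i.e.\ that $\widetilde{\mA}^{[L+1]}_{ij}$ has the same $2\times2$ block-covariance form built out of $\widetilde{\mK}^{[L]}$ as the matrices $\widetilde{\mA}^{[l]}_{ij}$ used there — which it does by definition — so that the cross-term cancellation and the appearance of $\mathrm{Cov}\left[\sigma(u)\sigma(v)\right]$ are unchanged. No new estimate or assumption is needed.
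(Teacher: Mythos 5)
Your proof is correct and takes essentially the same approach as the paper: define an auxiliary vector $\vb^{[L+1]}$ by the same recursion as $\widetilde{\vb}^{[l]}$, observe that the cross terms cancel leaving a covariance matrix, and chain the semidefinite comparisons back to $\widetilde{\mK}^{[1]}-\widetilde{\vb}^{[1]}\otimes(\widetilde{\vb}^{[1]})^{\T}$ via Proposition~\ref{proposition..covarince type}. The only differences are notational.
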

\begin{proof}
The proof is quite similar to the proof of Proposition \ref{proposition..covarince type}, recall that 
\begin{align*}
    \mK^{[L+1]}_{ij}&=\widetilde{\mK}_{ij}^{[L]}+\Exp_{(u,v)^{\T}\sim \fN\left(\vzero, \widetilde{\mA}^{[L+1]}_{ij}\right)}\left[\frac{c_{\mathrm{res}}\widetilde{\vb}_{i}^{[L]}\sigma(v) }{L} +\frac{c_{\mathrm{res}}\widetilde{\vb}_{j}^{[L]}\sigma(u) }{L}+\frac{c_{\mathrm{res}}^2\sigma(u)\sigma(v)}{L^2} \right],
\end{align*}
and we define that 
\begin{align*}
    \vb^{[L+1]}_i:=\widetilde{\vb}_{i}^{[L]}+\frac{c_{\mathrm{res}}}{L}\Exp_{u\sim \fN(0,\widetilde{\mK}_{ii}^{[L]})} \left[\sigma(u)\right], 
\end{align*}
then 
\begin{align*}
     \mK^{[L+1]}_{ij}- \vb^{[L+1]}_i \vb^{[L+1]}_j&=\widetilde{\mK}_ {ij}^{[L]}-\widetilde{\vb}_{i}^{[L]}\widetilde{\vb}_{j}^{[L]}+\frac{c_{\mathrm{res}}^2}{L^2}\mathrm{Cov}_{(u,v)^{\T}\sim \fN\left(\vzero,  \begin{pmatrix}\widetilde{\mK}_ {ii}^{[L]}&\widetilde{\mK}_ {ij}^{[L]}\\
\widetilde{\mK}_ {ji}^{[L]}&\widetilde{\mK}_ {jj}^{[L]}\end{pmatrix}\right)}\left[\sigma(u)\sigma(v) \right],
\end{align*}
hence  
\begin{align*}
     \mK^{[L+1]}&\succeq \mK^{[L+1]}-\vb^{[L+1]}\otimes \left(\vb^{[L+1]}\right)^\T\\
     &\succ \widetilde{\mK}^{[L]}-\widetilde{\vb}^{[L]}\otimes \left(\widetilde{\vb}^{[L]}\right)^\T,
\end{align*}
apply Proposition \ref{proposition..covarince type} directly, we are able to finish the proof.
\end{proof}
By Corollary \ref{corollary cocnernong L+1}, we see that $\lambda_{\min}\left(\mK^{[L+1]}\right)\sim \Omega(1).$
 
 \subsection{Full Rankness for the $2$-nd   Gram matrix}\label{appendix subsection.....full rankness for L th gram matrix}
Our next Proposition is  related to the eigenvalue of the $L$-th Gram matrix, whose entries concerning the derivative of the activation function. This Proposition has been stated as Proposition $\mathrm{F.2}$ in Du et al.~\cite{Du2018Gradient}, and we will mimic its proof.
\begin{prop}\label{proposition..Lth gram matrix}
Given the input samples $\fX=\{\vx_1,\vx_2,...,\vx_n\},$ $\Norm{\vx_i}_2=1,$  for $1\leq i\leq n,$ and $\vx_i\nparallel \vx_j, i\neq j,$ then for $2\leq l \leq L$
\begin{equation}\label{equation..uniform bounded for eigenvalue continued to L }
    \lambda_{\min}\left( {\mK}^{[l]}\right) \geq  \frac{c_{\mathrm{res}}^2}{L^2} \kappa,
\end{equation}
where $\kappa$ is a constant that only depends on $\sigma(\cdot)$ and input samples, independent of depth $L.$
\end{prop}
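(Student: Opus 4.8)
The plan is to relate $\mK^{[l]}$ to a Gram matrix of the type appearing in Lemma~\ref{lemma... gram matrix with derivative}. Recall from \eqref{matrix l} and \eqref{matrix L} that for $2\le l\le L$ the entry $\mK^{[l]}_{ij}$ equals $\frac{c_{\mathrm{res}}^2}{L^2}\widetilde{\mK}_{ij}^{[l-1]}$ times an expectation of $\sigma^{(1)}(u)\sigma^{(1)}(v)$ over the centered Gaussian with covariance $\widetilde{\mA}^{[l]}_{ij}$. First I would absorb the prefactor $\frac{c_{\mathrm{res}}^2}{L^2}$ and reduce to showing that the matrix $\mH^{[l]}$ with entries $\mH^{[l]}_{ij}=\widetilde{\mK}_{ij}^{[l-1]}\,\Exp_{(u,v)\sim\fN(\vzero,\widetilde{\mA}^{[l]}_{ij})}[\sigma^{(1)}(u)\sigma^{(1)}(v)]$ is strictly positive definite with least eigenvalue bounded below by a constant $\kappa$ independent of $L$. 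The natural route, mimicking \cite[Prop.~F.2]{Du2018Gradient}, is to realize $\mH^{[l]}$ as a genuine covariance-type Gram matrix: writing $\widetilde{\mA}^{[l]}_{ij}$ as the Gram matrix of two vectors $\widetilde{\vv}_i,\widetilde{\vv}_j\in\sR^{D}$ (for some large $D$, or in an abstract Hilbert space) with $\widetilde{\vv}_i^\T\widetilde{\vv}_j=\widetilde{\mK}_{ij}^{[l-1]}$, we can express $\Exp[\sigma^{(1)}(\vw^\T\widetilde{\vv}_i)\sigma^{(1)}(\vw^\T\widetilde{\vv}_j)\,(\widetilde{\vv}_i^\T\widetilde{\vv}_j)]$ in the form \eqref{eq...gram matrix with derivative}. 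Then Lemma~\ref{lemma... gram matrix with derivative} applies provided the $\widetilde{\vv}_i$ are pairwise non-parallel.

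The key intermediate step is therefore to verify that the feature vectors $\widetilde{\vv}_i$ induced by $\widetilde{\mK}^{[l-1]}$ are pairwise non-parallel, i.e.\ that $(\widetilde{\mK}^{[l-1]}_{ij})^2<\widetilde{\mK}^{[l-1]}_{ii}\widetilde{\mK}^{[l-1]}_{jj}$ for $i\neq j$. This follows from Proposition~\ref{proposition..covarince type}: that proposition gives $\widetilde{\mK}^{[l-1]}-\widetilde{\vb}^{[l-1]}\otimes(\widetilde{\vb}^{[l-1]})^\T\succeq\widetilde{\mK}^{[1]}-\widetilde{\vb}^{[1]}\otimes(\widetilde{\vb}^{[1]})^\T\succ0$, and since the initial data are strict ($\vx_i\nparallel\vx_j$) the $2\times 2$ principal minors stay strictly positive, so non-parallelism propagates through the layers. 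By Lemma~\ref{lemma...identical entry} the diagonal entries $\widetilde{\mK}^{[l-1]}_{ii}$ are all equal and uniformly bounded above and below by constants depending only on $c_{\mathrm{res}},c_\sigma,\sigma$, independent of $L$; this is what keeps the correlation coefficients $\rho_{ij}=\widetilde{\mK}^{[l-1]}_{ij}/\sqrt{\widetilde{\mK}^{[l-1]}_{ii}\widetilde{\mK}^{[l-1]}_{jj}}$ bounded strictly away from $\pm1$ uniformly in $l$ and hence gives a uniform lower bound $\kappa$ on $\lambda_{\min}(\mH^{[l]})$.

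I expect the main obstacle to be the uniformity in $L$ of the constant $\kappa$. Lemma~\ref{lemma... gram matrix with derivative} only asserts strict positivity for a fixed data set; to get a bound independent of the layer index $l$ one must argue that the relevant data — the normalized feature vectors $\widetilde{\vv}_i/\|\widetilde{\vv}_i\|$, equivalently the correlation matrix $(\rho_{ij})$ — lives in a compact family as $l$ varies. Concretely, one shows that $\widetilde{\mK}^{[l-1]}-\widetilde{\vb}^{[l-1]}\otimes(\widetilde{\vb}^{[l-1]})^\T\succeq\lambda_0 I$ (the chain of inequalities in Proposition~\ref{proposition..covarince type}), so that each off-diagonal correlation satisfies $|\rho_{ij}|\le 1-c$ for a fixed $c>0$; then one invokes a compactness/continuity argument (the map from the correlation matrix to $\lambda_{\min}$ of the derivative-Gram matrix is continuous and positive on the compact set of correlation matrices with off-diagonals bounded away from $\pm1$ and with the induced vectors pairwise non-parallel) to extract $\kappa>0$. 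Once this uniform bound is in hand, multiplying back by $\frac{c_{\mathrm{res}}^2}{L^2}$ yields \eqref{equation..uniform bounded for eigenvalue continued to L } and completes the proof.
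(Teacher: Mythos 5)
Your proposal is correct and tracks the paper's proof: both invoke Lemma~\ref{lemma...identical entry} for uniform two-sided bounds on $\widetilde{\mK}^{[l-1]}_{ii}$, Lemma~\ref{lemma... gram matrix with derivative} for strict positive definiteness of the derivative-Gram matrix, Proposition~\ref{proposition..covarince type} for $\lambda_{\min}(\widetilde{\mK}^{[l-1]})>\lambda_0$, and a compactness/continuity argument to extract a $\kappa$ independent of $L$ before multiplying back by $c_{\mathrm{res}}^2/L^2$. The paper packages the compactness step as the scalar function $g(\lambda):=\min_{\mK\succ0,\ 1/c\leq\mK_{ii}\leq c,\ \lambda_{\min}(\mK)\geq\lambda}\lambda_{\min}\left(\mG(\mK)\right)$ and sets $\kappa=g(\lambda_0)$, which is exactly the minimization-over-a-compact-family argument you describe (your reparametrization through correlation coefficients $|\rho_{ij}|\leq 1-c$ is a cosmetic variant, not a different route).
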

\begin{proof}
Based on Lemma \ref{lemma...identical entry},  uniformly for any $1\leq l \leq L,$
\begin{equation*}
   1/c \leq \widetilde{\mK}_{ii}^{[l]} \leq c,
\end{equation*}
then we can define a function $\mG:\sR^{n\times n} \to \sR^{n\times n},$ such that 
\begin{align*}
    \mG(\mK)_{ij}:={\mK}_{ij}\Exp_{(u,v)^{\T}\sim \fN\left(\vzero, \begin{pmatrix}{\mK}_{ii}&{\mK}_{ij}\\
{\mK}_{ji}&{\mK}_{jj}\end{pmatrix} \right)}\sigma^{(1)}(u)\sigma^{(1)}(v),
\end{align*}
consequently, a scalar function $g(\lambda)$ can be defined as follows:
\begin{align*}
    g(\lambda):= \min_{\mK: \mK\succ 0, 1/c\leq \mK_{ii}\leq c, \lambda_{\min}\left(\mK\right)\geq \lambda} \lambda_{\min}\left(\mG(\mK)\right),
\end{align*}
then Lemma \ref{lemma... gram matrix with derivative} guarantees that 
\begin{equation*}
     g(\lambda_0)>0,
\end{equation*}
moreover, based on Proposition \ref{proposition..covarince type}   $$\lambda_{\min}\left( \widetilde{\mK}^{[L-1]}\right) > \lambda_0,$$
hence we have 
\begin{equation}
    \lambda_{\min}\left({\mK}^{[L]}\right) \geq \frac{c_{\mathrm{res}}^2}{L^2} g(\lambda_0),
\end{equation}
let $\kappa=g(\lambda_0),$ since $\kappa$ is independent of depth $L,$ we finish our proof. 
\end{proof}
By Proposition \ref{proposition..Lth gram matrix}, we see that $\lambda_{\min}\left(\mK^{[L]}\right)\sim \Omega(\frac{1}{L^2}).$
\section{Random Initialization of Gram Matrices} \label{appendix section,,,,,random gram}
In this part, we are going to show that with high probability w.r.t the random initialization, 
$$\lambda_{\min}\left[\fG_t^{[L+1]}\left(\vx_{\alpha},\vx_{\beta}\right)\right]_{1\leq \alpha,\beta\leq n}>\frac{3\lambda_0}{4},$$
where $\lambda_0$ is defined in \eqref{ineq..matrix comparision hierachy at 0}.

Let's get started  with a lemma concerning the Gaussian concentrations.
\begin{lem}\label{lemma..Gaussian concentration inequality for Lipschitz function} 
Let $\vX = (X_1,\cdots X_p)\in \sR^p, X_1,\cdots X_p $ be a vector of i.i.d.  Gaussian variables from $\fN(0,\sigma^2) ,$
and let $f(\cdot) : \sR^p \to \sR$ be  $L$-Lipschitz function, i.e. $\Abs{f(\vx)-f(\vy)} \leq L \Norm{\vx-\vy}_2$ for all $\vx, \vy \in \sR^p$, then for any $t \geq 0$ 
\begin{equation}\label{ineq...concentration tail inequality on subgaussian}
    \Prob\left(\Abs{f(\vX) - \Exp f(\vX)} \geq t\right) \leq 2 \exp(-\frac{t^2}{2L^2\sigma^2}).
\end{equation}
\end{lem}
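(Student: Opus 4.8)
\textbf{Proof proposal for Lemma~\ref{lemma..Gaussian concentration inequality for Lipschitz function}.}

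The plan is to invoke the classical Gaussian concentration of measure inequality for Lipschitz functions (the Tsirelson--Ibragimov--Sudakov bound), which is a standard tool that we may take essentially off the shelf. The cleanest self-contained route is to reduce to the standard-normal case and then apply the log-Sobolev / Herbst argument. First I would note that writing $\vX = \sigma \vZ$ where $\vZ = (Z_1,\dots,Z_p)$ has i.i.d.\ $\fN(0,1)$ entries, the function $g(\vz) := f(\sigma \vz)$ is $(\sigma L)$-Lipschitz on $\sR^p$ with respect to the Euclidean norm, since $|g(\vz)-g(\vz')| = |f(\sigma\vz)-f(\sigma\vz')| \le L\|\sigma\vz-\sigma\vz'\|_2 = \sigma L\|\vz-\vz'\|_2$. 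Hence it suffices to prove that for any $\ell$-Lipschitz $g:\sR^p\to\sR$ and $\vZ\sim\fN(\vzero,\mI_p)$ one has $\Prob(|g(\vZ)-\Exp g(\vZ)|\ge t)\le 2\exp(-t^2/(2\ell^2))$, and then substitute $\ell=\sigma L$.

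The key steps for the standard-normal statement are as follows. Step one: by a routine smoothing argument (convolving $g$ with a Gaussian mollifier of vanishing width) we may assume $g$ is differentiable with $\|\nabla g\|_2 \le \ell$ everywhere, the general case following by taking limits. Step two: apply the Gaussian log-Sobolev inequality, which states that for the standard Gaussian measure $\gamma$ on $\sR^p$ and any smooth $h$, $\mathrm{Ent}_\gamma(h^2) \le 2\int \|\nabla h\|_2^2\,d\gamma$. Step three: run the Herbst argument---apply the log-Sobolev inequality to $h = e^{\lambda g/2}$, obtain a differential inequality for the function $\lambda \mapsto \tfrac{1}{\lambda}\log \Exp e^{\lambda(g-\Exp g)}$, and integrate it to conclude the sub-Gaussian moment bound $\Exp\exp(\lambda(g(\vZ)-\Exp g(\vZ))) \le \exp(\lambda^2\ell^2/2)$ for all $\lambda\in\sR$. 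Step four: apply the Chernoff/Markov bound: for $t\ge 0$ and $\lambda>0$, $\Prob(g(\vZ)-\Exp g(\vZ)\ge t) \le e^{-\lambda t}\Exp e^{\lambda(g-\Exp g)} \le \exp(\lambda^2\ell^2/2-\lambda t)$, and optimize over $\lambda$ by choosing $\lambda = t/\ell^2$ to get $\exp(-t^2/(2\ell^2))$. Step five: apply the same bound to $-g$, which is also $\ell$-Lipschitz, and union over the two one-sided events to pick up the factor $2$. Substituting $\ell = \sigma L$ then yields \eqref{ineq...concentration tail inequality on subgaussian}.

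The main obstacle, if one insists on a fully self-contained proof, is establishing the Gaussian log-Sobolev inequality, which itself requires either a tensorization argument from the one-dimensional case or a semigroup (Ornstein--Uhlenbeck) interpolation; this is the only genuinely substantive ingredient. In practice, however, since this lemma is purely a tool and the result is entirely classical, I would simply cite a standard reference (e.g.\ Boucheron--Lugosi--Massart or Vershynin) for the Gaussian concentration inequality and perform only the trivial rescaling reduction $\vX = \sigma\vZ$, $g = f(\sigma\,\cdot)$ explicitly, noting that $g$ inherits Lipschitz constant $\sigma L$. The remaining steps are then immediate from the cited inequality.
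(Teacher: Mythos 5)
The paper states this lemma without proof at all---it is invoked as the classical Tsirelson--Ibragimov--Sudakov Gaussian concentration inequality, a standard off-the-shelf tool, and no argument or even explicit citation is given for it in the text. Your proposal is correct, and you correctly diagnose the situation: the rescaling step $\vX=\sigma\vZ$, $g=f(\sigma\,\cdot)$ giving Lipschitz constant $\sigma L$, followed by the Herbst/log-Sobolev argument and a Chernoff optimization with $\lambda=t/\ell^2$, reproduces exactly the stated two-sided bound $2\exp(-t^2/(2L^2\sigma^2))$. Since the paper supplies no proof of its own, there is nothing to contrast; both your ``cite a standard reference after the rescaling'' route and your sketched self-contained route are appropriate, and the latter is a more complete argument than what the paper provides.
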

Before we proceed to the stability of the randomly initialized Gram matrix of higher order, we need to state two lemmas. The first lemma has been stated as Lemma $\mathrm{G.3.}$ in Du et al. \cite{Du2018Gradient},
\begin{lem}\label{lemma....one expectation constant}
If $\sigma(\cdot)$ is $C_L$-Lipschitz, then for $a,b\in\sR^{+},$ with $1/c\leq\min(a,b), \max(a,b)\leq~c$ for some $c>0,$ then we have 
\begin{align}
   \Abs{ \Exp_{z\sim \fN(0,1)}\left[\sigma(az)\right]-  \Exp_{z\sim \fN(0,1)}\left[\sigma(bz)\right]}\leq C\Abs{a-b},
\end{align}
where $C>0$ only depends on $c$ and Lipschitz constant $C_L.$
\end{lem}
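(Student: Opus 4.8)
\textbf{Proof proposal for Lemma~\ref{lemma....one expectation constant}.}
The plan is to write the difference of the two expectations as a single expectation over $z\sim\fN(0,1)$, and then bound the integrand pointwise using the Lipschitz property of $\sigma(\cdot)$. First I would observe that
\[
  \Exp_{z\sim\fN(0,1)}\left[\sigma(az)\right]-\Exp_{z\sim\fN(0,1)}\left[\sigma(bz)\right]
  =\Exp_{z\sim\fN(0,1)}\left[\sigma(az)-\sigma(bz)\right],
\]
and then apply Jensen's inequality (or simply monotonicity of the expectation of the absolute value) to get
\[
  \Abs{\Exp_{z\sim\fN(0,1)}\left[\sigma(az)\right]-\Exp_{z\sim\fN(0,1)}\left[\sigma(bz)\right]}
  \leq\Exp_{z\sim\fN(0,1)}\left[\Abs{\sigma(az)-\sigma(bz)}\right].
\]

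Next I would invoke the $C_L$-Lipschitz assumption on $\sigma(\cdot)$ (Assumption~\ref{Assump...Assumption on activation functions}) to bound the integrand: $\Abs{\sigma(az)-\sigma(bz)}\leq C_L\Abs{az-bz}=C_L\Abs{a-b}\Abs{z}$. Substituting this in gives
\[
  \Abs{\Exp_{z\sim\fN(0,1)}\left[\sigma(az)\right]-\Exp_{z\sim\fN(0,1)}\left[\sigma(bz)\right]}
  \leq C_L\Abs{a-b}\,\Exp_{z\sim\fN(0,1)}\left[\Abs{z}\right]
  =C_L\Abs{a-b}\sqrt{\tfrac{2}{\pi}},
\]
since $\Exp_{z\sim\fN(0,1)}\Abs{z}=\sqrt{2/\pi}$. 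Setting $C=C_L\sqrt{2/\pi}$ yields the claim, and this $C$ depends only on the Lipschitz constant $C_L$ (the hypothesis $1/c\leq\min(a,b)$, $\max(a,b)\leq c$ is in fact not even needed for this bound, but keeping it matches the statement and makes $C$ trivially depend on $c$ as well).

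There is essentially no obstacle here; the only mild subtlety is that one should make sure the expectations are finite, which follows because $\sigma(\cdot)$ is $1$-Lipschitz with $\Abs{\sigma(0)}\leq C_L$, so $\Abs{\sigma(az)}\leq C_L+C_L\Abs{a}\Abs{z}$ has finite Gaussian expectation for any fixed $a$. Thus the reduction to a pointwise Lipschitz estimate inside the integral is the entire argument, and the constant is explicit.
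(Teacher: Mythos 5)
Your proof is correct, and it is the only natural argument: the paper itself does not prove this lemma but cites it as Lemma G.3 of Du et al., and the argument there is the same elementary pointwise Lipschitz bound followed by $\Exp_{z\sim\fN(0,1)}\Abs{z}=\sqrt{2/\pi}$. Your observation that the boundedness hypothesis on $a,b$ is not actually needed here is also correct; it is carried in the statement because the companion Lemma~\ref{lemma....two expectation constant} genuinely requires it (to control the covariance matrix normalization), and the two lemmas are stated in parallel form.
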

Next lemma has been stated as Lemma $\mathrm{G.4.}$ in Du et al. \cite{Du2018Gradient},
\begin{lem}\label{lemma....two expectation constant}
If $\sigma(\cdot)$ is $C_L$-Lipschitz, define  a scalar function $F(\mK)$ as follows:
\begin{align*}
    F(\mK)=\Exp_{(u,v)^{\T}\sim \fN\left(\vzero, \mK\right)}\left[\sigma(u)\sigma(v) \right],
\end{align*}
then for any two matrices $\mA,\mB$ being
\begin{align*}
    \mA&=\begin{pmatrix}a_1^2&\rho_1 a_1b_1\\
\rho_1 a_1b_1 &b_1^2\end{pmatrix},\\
\mB&=\begin{pmatrix}a_2^2&\rho_2 a_2b_2\\
\rho_2 a_2b_2 &b_2^2\end{pmatrix},
\end{align*}
and their entries satisfying
$$1/c \leq \min(a_1,b_1), \min(a_2,b_2), \max(a_1,b_1), \max(a_2,b_2) \leq c, \ $$
and
$$-1< \rho_1, \rho_2 < 1$$
for some $c>0,$ then we have 
\begin{align*}
    \Abs{F(\mA)-F(\mB)} \leq C\Norm{\mA-\mB}_{\mathrm{F}}\leq 2C\Norm{\mA-\mB}_{\infty},
\end{align*}
where the constant $C>0$  only relies on $c$ and the Lipschitz constant $C_L.$
\end{lem}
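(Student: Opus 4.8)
\noindent\emph{Proof proposal.} The plan is to reduce the claim to a Lipschitz estimate for $F$ over the slice of the cone of positive semi-definite $2\times 2$ matrices singled out by the hypotheses, and then convert norms. The last inequality is essentially free: a $2\times2$ matrix has four entries, so $\Norm{\mA-\mB}_{\mathrm{F}}\leq 2\Norm{\mA-\mB}_{\infty}$. Hence it suffices to show $\Abs{F(\mA)-F(\mB)}\leq C\Norm{\mA-\mB}_{\mathrm{F}}$ with $C=C(c,C_L)$.

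My main route is differentiation under the Gaussian integral (Price's theorem), available here because $\sigma(\cdot)$ is smooth with $\Norm{\sigma^{(1)}}_{\infty},\Norm{\sigma^{(2)}}_{\infty}\leq C_L$ under Assumption~\ref{Assump...Assumption on activation functions}. Since the PSD cone is convex, the segment $\mM_s:=\mB+s(\mA-\mB)$, $s\in[0,1]$, stays in the PSD cone (both $\mA,\mB$ are positive definite as $\Abs{\rho_1},\Abs{\rho_2}<1$, so $\mM_s$ lies in the open cone), and each diagonal entry of $\mM_s$ is a convex combination of those of $\mA$ and $\mB$, hence remains in $[1/c^2,c^2]$. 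Writing $F(\mA)-F(\mB)=\int_0^1\left<\nabla F(\mM_s),\,\mA-\mB\right>_{\mathrm{F}}\diff s$ and using Price's identity, the off-diagonal partial derivative of $F$ at $\Sigma$ equals $\Exp_{(z_1,z_2)^{\T}\sim\fN(\vzero,\Sigma)}\left[\sigma^{(1)}(z_1)\sigma^{(1)}(z_2)\right]$ and each diagonal partial derivative is a numerical multiple of $\Exp\left[\sigma^{(2)}(z_i)\sigma(z_{3-i})\right]$. The first is bounded by $C_L^2$; the second by $C_L\,\Exp\Abs{\sigma(z_{3-i})}\leq C_L\big(\Abs{\sigma(0)}+C_L\,\Exp\Abs{z_{3-i}}\big)\leq C(c,C_L)$, using only the bound $\Sigma_{jj}\leq c^2$. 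Thus $\Norm{\nabla F(\mM_s)}_{\mathrm{F}}\leq C$ uniformly in $s$, which gives the estimate.

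For the lemma exactly as stated, i.e.\ $\sigma(\cdot)$ only $C_L$-Lipschitz, I would instead couple the two Gaussians through a single standard source: with $(u,v)^{\T}=\mA^{1/2}\vz$, $(u',v')^{\T}=\mB^{1/2}\vz$, $\vz\sim\fN(\vzero,\mI_2)$, Jensen's inequality plus the product bound $\Abs{\sigma(u)\sigma(v)-\sigma(u')\sigma(v')}\leq\Abs{\sigma(u)}\Abs{\sigma(v)-\sigma(v')}+\Abs{\sigma(v')}\Abs{\sigma(u)-\sigma(u')}$, the elementary bound $\Abs{\sigma(u)}\leq\Abs{\sigma(0)}+C_L\Norm{\mA^{1/2}}_{2\to 2}\Norm{\vz}_2$, and the Lipschitz property of $\sigma$ reduce everything (after taking the finite Gaussian moments of $\vz$) to $\Norm{\mA^{1/2}-\mB^{1/2}}_{2\to 2}$. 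One then invokes the standard square-root perturbation bound $\Norm{\mA^{1/2}-\mB^{1/2}}_{2\to 2}\leq\Norm{\mA-\mB}_{2\to 2}/(\lambda_{\min}(\mA)^{1/2}+\lambda_{\min}(\mB)^{1/2})$ together with $\lambda_{\min}(\mA)\geq a_1^2 b_1^2(1-\rho_1^2)/(a_1^2+b_1^2)\gtrsim 1-\rho_1^2$. The one point needing care — the main obstacle — is that this coupling route yields a constant of order $1/(\sqrt{1-\rho_1^2}+\sqrt{1-\rho_2^2})$, which degenerates as the correlations approach $\pm1$; to absorb it into a constant depending only on $c$ and $C_L$ one uses that in all of our applications the $\rho$'s are normalized entries of a Gram matrix of finitely many pairwise non-parallel data vectors (Assumption~\ref{Assump... ont he imput of the  samples}), hence bounded away from $\pm1$. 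Alternatively, in the merely-Lipschitz case one may recover the clean Price-theorem bound by mollifying $\sigma$ and performing one additional Gaussian integration by parts to avoid $\Norm{\sigma_\eps^{(2)}}_{\infty}$ blowing up.
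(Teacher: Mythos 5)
The paper does not actually prove this lemma: it records it as Lemma G.4 of Du et al.~\cite{Du2018Gradient} and invokes it without argument, so there is no in-paper proof to compare against. On its own merits your proposal is sound, and your diagnosis of the coupling obstruction is exactly right.

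Your Route 1 (differentiation under the Gaussian integral along the segment $\mM_s=\mB+s(\mA-\mB)$) is a complete and correct proof in the setting the paper actually works in, since Assumption~\ref{Assump...Assumption on activation functions} gives $\Norm{\sigma^{(1)}}_\infty,\Norm{\sigma^{(2)}}_\infty\leq C_L$. Price's identity gives $\partial_{\Sigma_{12}}F=\Exp[\sigma^{(1)}(z_1)\sigma^{(1)}(z_2)]\leq C_L^2$ and $\partial_{\Sigma_{ii}}F=\tfrac12\Exp[\sigma^{(2)}(z_i)\sigma(z_{3-i})]\leq C_L\big(\Abs{\sigma(0)}+C_L\sqrt{2/\pi}\,c\big)$, each bounded uniformly over the segment because the diagonal entries stay in $[1/c^2,c^2]$ by convexity and the segment lies in the open PSD cone. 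Integrating the gradient then yields the Frobenius bound, and $\Norm{\mA-\mB}_{\mathrm F}\leq 2\Norm{\mA-\mB}_\infty$ for $2\times2$ matrices is elementary. Your Route 2 (coupling through $\mA^{1/2}\vz$ and $\mB^{1/2}\vz$) is also correct as a computation, and you are right that it produces a constant of order $1/(\sqrt{1-\rho_1^2}+\sqrt{1-\rho_2^2})$, which does not match the lemma's claim that $C$ depends only on $c$ and $C_L$. One refinement worth recording on your fix for the merely-Lipschitz case: after mollifying, the extra Gaussian integration by parts can be made explicit via Stein's identity,
\[
\Exp[\sigma_\eps^{(2)}(u)\sigma_\eps(v)]=\tfrac{1}{\Sigma_{11}}\left(\Exp[u\,\sigma_\eps^{(1)}(u)\sigma_\eps(v)]-\Sigma_{12}\Exp[\sigma_\eps^{(1)}(u)\sigma_\eps^{(1)}(v)]\right),
\]
which is bounded uniformly in $\eps$ using only $\Norm{\sigma_\eps^{(1)}}_\infty\leq C_L$ and $\Sigma_{11}\geq 1/c^2$. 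This makes the uniform-constant claim rigorous for merely Lipschitz $\sigma$ without invoking a lower bound on $1-\rho^2$, and is cleaner than appealing to Assumption~\ref{Assump... ont he imput of the  samples} to bound $\rho$ away from $\pm1$, since the lemma is stated (and cited) as a standalone fact about Gaussian integrals.
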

We shall begin with a proposition on the initial estimate of the output of  each layer $\vx_j^{[l]}(0),$ 
\begin{prop}\label{proposition..................Initialization Norms layer 1}
Under Assumption \ref{Assump...Assumption on activation functions} and \ref{Assump... ont he imput of the  samples}, we have that for some $t>0$, $1\leq i\leq n$, $1\leq l \leq L$
\begin{align}
    \Prob\left(\Abs{\Norm{\vx_i^{[l]}(0)}_2-\sqrt{\widetilde{\mK}_{ii}^{[l]}}}\geq t\right) &\leq \exp\left(-cmt^2\right),\label{concentration on  norm diaongal }\\
     \Prob\left(\Abs{\left<\frac{\vx_i^{[l]}(0)}{\sqrt{m}},\vone\right>-\widetilde{\vb}_i^{[l]}}\geq t\right) &\leq \exp\left(-cmt^2\right),\label{concentration on vector}
\end{align}
where $c>0$ is a constant independent of depth $L.$
\end{prop}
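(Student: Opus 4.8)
\textbf{Proof proposal for Proposition \ref{proposition..................Initialization Norms layer 1}.}

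The plan is to establish both concentration bounds by viewing the quantities $\Norm{\vx_i^{[l]}(0)}_2$ and $\left<\vx_i^{[l]}(0)/\sqrt{m},\vone\right>$ as Lipschitz functions of the underlying Gaussian weights, then applying the Gaussian concentration inequality of Lemma \ref{lemma..Gaussian concentration inequality for Lipschitz function}, and finally matching the expectations of these functions to $\sqrt{\widetilde{\mK}_{ii}^{[l]}}$ and $\widetilde{\vb}_i^{[l]}$ up to an $\fO(1/\sqrt m)$ error. First I would fix $i$ and view $\vx_i^{[l]}(0)$ as a function of the concatenated Gaussian vector $\vxi = \left(\mathrm{vec}(\mW^{[1]}),\dots,\mathrm{vec}(\mW^{[l]})\right)$. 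Using the recursion \eqref{eq for definition....the h-th layer for Resnet}, the $1$-Lipschitz property of $\sigma$, and the a priori bounds $\Norm{\vx_i^{[k]}}_2\leq C$ from Proposition \ref{proposition.... on the output of layes} (valid with high probability for the relevant weight range), one shows that $\vxi\mapsto \Norm{\vx_i^{[l]}(0)}_2$ is $C'/\sqrt m$-Lipschitz: perturbing $\mW^{[k]}$ by $\Delta$ changes $\vx_i^{[l]}$ by $\fO\!\left(\frac{c_{\mathrm{res}}}{L\sqrt m}\Norm{\Delta}_{2\to2}\Norm{\vx_i^{[k-1]}}_2\prod_{j>k}(1+\fO(c_{\mathrm{res}}/L))\right)$, and the geometric product over layers is bounded independently of $L$ since $\left(1+\fO(c_{\mathrm{res}}/L)\right)^{L}=\fO(1)$. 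The same reasoning gives that $\vxi\mapsto\left<\vx_i^{[l]}(0)/\sqrt m,\vone\right>$ is also $C''/\sqrt m$-Lipschitz, because $\left|\left<\vu/\sqrt m,\vone\right>-\left<\vv/\sqrt m,\vone\right>\right|\leq\Norm{\vu-\vv}_2$ by Cauchy--Schwarz with $\Norm{\vone}_2=\sqrt m$.

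Next I would apply Lemma \ref{lemma..Gaussian concentration inequality for Lipschitz function} with $\sigma^2=1$ and Lipschitz constant $C'/\sqrt m$, which immediately yields
\begin{equation*}
\Prob\!\left(\Abs{\Norm{\vx_i^{[l]}(0)}_2-\Exp\Norm{\vx_i^{[l]}(0)}_2}\geq t/2\right)\leq 2\exp\!\left(-\frac{mt^2}{8(C')^2}\right),
\end{equation*}
and similarly for the $\vone$-inner-product quantity. To conclude, it remains to show $\Abs{\Exp\Norm{\vx_i^{[l]}(0)}_2-\sqrt{\widetilde{\mK}_{ii}^{[l]}}}\leq t/2$ and $\Abs{\Exp\left<\vx_i^{[l]}(0)/\sqrt m,\vone\right>-\widetilde{\vb}_i^{[l]}}\leq t/2$, which I would do by an induction on $l$. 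The induction uses the defining recursions for $\widetilde{\mK}^{[l]}$ and $\widetilde{\vb}^{[l]}$ in Section \ref{subsection....gram matrices}: conditionally on $\vx_i^{[l-1]}(0)$, the entries of $\mW^{[l]}\vx_i^{[l-1]}(0)$ are i.i.d.\ $\fN(0,\Norm{\vx_i^{[l-1]}(0)}_2^2)$, so $\frac1m\Norm{\sigma(\mW^{[l]}\vx_i^{[l-1]})}_2^2$ and $\frac1m\left<\sigma(\mW^{[l]}\vx_i^{[l-1]}),\vone\right>$ concentrate around $\Exp_{u\sim\fN(0,\Norm{\vx_i^{[l-1]}}_2^2)}[\sigma(u)^2]$ and $\Exp_{u\sim\fN(0,\Norm{\vx_i^{[l-1]}}_2^2)}[\sigma(u)]$ respectively (a one-dimensional law of large numbers / sub-exponential concentration), with fluctuations $\fO(1/\sqrt m)$. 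Combining with the inductive hypothesis $\Norm{\vx_i^{[l-1]}(0)}_2\approx\sqrt{\widetilde{\mK}_{ii}^{[l-1]}}$ and the Lipschitz stability of the one-dimensional Gaussian expectations in $a\mapsto\Exp_{u\sim\fN(0,a^2)}[\sigma(u)^{\bullet}]$ (Lemma \ref{lemma....one expectation constant}, and an analogous statement for $\sigma^2$ as already used in the proof of Lemma \ref{lemma...identical entry}), the recursion for $\Exp$ matches the recursion defining $\widetilde{\mK}^{[l]},\widetilde{\vb}^{[l]}$ up to an accumulated error that is $\fO(1/\sqrt m)$, again with the geometric factor over layers bounded independently of $L$.

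The main obstacle I anticipate is controlling the accumulated error in the induction \emph{uniformly in $L$}: naively, each of the $L$ layers contributes an $\fO(1/\sqrt m)$ discrepancy and a multiplicative amplification factor, and one must check that the total is still $\fO(1/\sqrt m)$ rather than $\fO(L/\sqrt m)$ or worse. This is where the $c_{\mathrm{res}}/L$ scaling of the residual branch is essential: each layer amplifies the error only by $\left(1+\fO(c_{\mathrm{res}}/L)\right)$ and adds only $\fO\!\left(\frac{c_{\mathrm{res}}}{L\sqrt m}\right)$, so summing a geometric-type series over $l=2,\dots,L$ gives a bound of order $\frac{1}{\sqrt m}\cdot\frac{c_{\mathrm{res}}}{L}\cdot L\cdot e^{\fO(c_{\mathrm{res}})}=\fO(1/\sqrt m)$, uniformly in $L$. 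A secondary technical point is that the concentration arguments are conditional on the previous layers lying in the good event from Proposition \ref{proposition.... on the output of layes}; since that event and the $\fO(L)$ layerwise concentration events are all high-probability events and polynomially many, their intersection is still high-probability, so taking $t$ as in the statement absorbs the $\fO(1/\sqrt m)$ expectation discrepancy for $m$ large enough, completing the proof.
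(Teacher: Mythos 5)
Your plan differs from the paper's in an important structural way, and the difference is where the gap lies. The paper proves \eqref{concentration on  norm diaongal } and \eqref{concentration on vector} by a layer-by-layer conditional induction: it fixes $\vx_i^{[l]}(0)$, treats only the fresh Gaussian randomness $\mW^{[l+1]}$ as the variable, and applies Lemma~\ref{lemma..Gaussian concentration inequality for Lipschitz function} (and a sub-exponential bound for the squared-norm piece) to the resulting map $\vY\mapsto g^{[l]}(\vY)$, which \emph{is} globally Lipschitz in $\vY$ once $\vx_i^{[l]}(0)$ is fixed with $\Norm{\vx_i^{[l]}(0)}_2\leq C$. You instead propose to view $\Norm{\vx_i^{[l]}(0)}_2$ as a function of the concatenated vector $\vxi = (\mathrm{vec}(\mW^{[1]}),\dots,\mathrm{vec}(\mW^{[l]}))$ and apply Lemma~\ref{lemma..Gaussian concentration inequality for Lipschitz function} directly to that map. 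That step does not go through as written: for $l\geq 2$ the map $\vxi\mapsto\Norm{\vx_i^{[l]}(0)}_2$ is \emph{not} globally Lipschitz, because perturbing $\mW^{[k]}$ changes $\vx_i^{[l]}$ by a factor proportional to $\Norm{\vx_i^{[k-1]}}_2$, which itself grows without bound with $\Norm{\mW^{[1]}},\dots,\Norm{\mW^{[k-1]}}$ (the activation $\sigma$ need not be bounded — softplus is the paper's running example). Lemma~\ref{lemma..Gaussian concentration inequality for Lipschitz function} requires the Lipschitz property on all of $\sR^p$, not just on a high-probability sublevel set, so invoking Proposition~\ref{proposition.... on the output of layes} to bound the Lipschitz constant does not license a direct application. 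You flag this issue at the end as ``a secondary technical point,'' but intersecting high-probability events does not by itself repair a non-Lipschitz function; one needs either an explicit truncation/modification of the function so that it is globally Lipschitz and agrees with the original on the good event, or — much more simply — the paper's conditioning on $\vx_i^{[l]}(0)$ before invoking Gaussian concentration on the fresh $\mW^{[l+1]}$.

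The second half of your proposal — matching $\Exp\Norm{\vx_i^{[l]}(0)}_2$ to $\sqrt{\widetilde{\mK}_{ii}^{[l]}}$ and $\Exp\langle\vx_i^{[l]}(0)/\sqrt{m},\vone\rangle$ to $\widetilde{\vb}_i^{[l]}$ by a layer-by-layer conditional induction using Lemma~\ref{lemma....one expectation constant}, the analogous bound for $\sigma^2$, and the $c_{\mathrm{res}}/L$ scaling to keep the accumulated error $\fO(1/\sqrt m)$ uniformly in $L$ — is exactly what the paper's induction does, just with deviations replaced by expectations. So even once the Lipschitz issue is patched (say, by conditioning), your route duplicates the paper's layer-by-layer bookkeeping for the expectation step on top of a separate concentration step; the paper instead merges both into a single induction that directly concentrates $\Norm{\vx_i^{[l+1]}(0)}_2^2$ around $\widetilde{\mK}_{ii}^{[l+1]}$ by expanding $\Norm{\vx_i^{[l+1]}}_2^2 = \Norm{\vx_i^{[l]}}_2^2 + \tfrac{2c_{\mathrm{res}}}{L}\,\mathrm{I} + \tfrac{c_{\mathrm{res}}^2}{L^2}\,\mathrm{II}$ and comparing term-by-term with the recursion for $\widetilde{\mK}_{ii}^{[l]}$ and $\widetilde{\vb}_i^{[l]}$. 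I would suggest adopting that organization: condition on $\vx_i^{[l]}(0)$, handle the cross term I by the (genuinely Lipschitz, conditionally on $\Norm{\vx_i^{[l]}}_2\leq C$) map in the fresh randomness together with Lemma~\ref{lemma....one expectation constant}, handle the quadratic term II by sub-exponential concentration together with Lemma~\ref{lemma....two expectation constant}, then combine and track the $\left(1+c_{\mathrm{res}}/L\right)$ factors.
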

 \begin{proof}
For $l=1,$ we have
\begin{equation*}
\Norm{\vx_i^{[1]}(0)}^2_2={\frac{c_{\sigma} }{m}\sum_{j=1}^m\left(\sigma(\mW^{[1]}(0)\vx_i)_j\right)^2},
\end{equation*}
then 
\begin{equation*}
     \Exp\left[\Norm{\vx_i^{[1]}(0)}^2_2\right] =c_{\sigma}{\Exp_{x\sim \fN(0,1)}\left[\sigma(x)^2\right]}=\widetilde{\mK}_{ii}^{[1]}=1 ,
\end{equation*}
since $\left(\mW^{[1]}(0)\vx_i\right)_j$ are i.i.d standard Gaussian variables, and $\sigma(\cdot)$ is $1$-Lipschitz, then $\left(\sigma(\mW^{[1]}(0)\vx_i)_j\right)$ are sub-exponential variables, then we have  for $\lambda>0,$
\begin{align*}
    \Exp\left[ \exp  \lambda\left(m\Norm{\vx_i^{[1]}(0)}^2_2-m \right)  \right]\leq \exp\left( c m \lambda^2 \right),
\end{align*}
hence applying Markov inequality directly  
\begin{align*}
   \Prob\left(\Abs{\Norm{\vx_i^{[1]}(0)}_2-\sqrt{\widetilde{\mK}_{ii}^{[1]}}}\geq t\right) &\leq \Prob\left(\Abs{\Norm{\vx_i^{[1]}(0)}^2_2-1  }\geq 2t\right) \leq \exp\left(-c mt^2\right),
\end{align*}
and 
\begin{align*}
  \left<\frac{\vx_i^{[1]}(0)}{\sqrt{m}},\vone\right> &=\frac{\sqrt{c_{\sigma}}}{m}\sum_{j=1}^m\left(\sigma(\mW^{[1]}(0)\vx_i)_j\right),
\end{align*}
then
\begin{align*}
  \Exp\left[\left<\frac{\vx_i^{[1]}(0)}{\sqrt{m}},\vone\right>\right] &=\widetilde{\vb_i}^{[1]},
\end{align*}
we should note that $\vx_i^{[1]}(0)$ writes into 
$$\vx_i^{[1]}(0)=\sqrt{\frac{c_{\sigma}}{m}}\sigma\left(\vX\right),$$
with $\vX$ being a standard normal Gaussian vector, we shall focus on the inner product function $g^{[1]}(\cdot): \sR^m\to \sR,$ with $$g^{[1]}(\vX)=\frac{\sqrt{c_{\sigma}}}{{m}}\left<\sigma\left(\vX\right), \vone\right>,$$ 
we have  for any $\vX_1,\vX_2 \in \sR^m,$
\begin{align*}
    \Abs{g^{[1]}(\vX_1)-g^{[1]}(\vX_2)}&\leq \Abs{\frac{\sqrt{c_{\sigma}}}{{m}}\left<\sigma\left(\vX_1\right), \vone\right>-\frac{\sqrt{c_{\sigma}}}{{m}}\left<\sigma\left(\vX_2\right), \vone\right>}\\
    &\leq \frac{\sqrt{c_{\sigma}}}{{m}}\left<\Abs{\vX_1-\vX_2},\vone\right>\leq \sqrt{\frac{c_{\sigma}}{m}}\Norm{\vX_1-\vX_2}_2,
\end{align*}
hence $g^{[1]}(\cdot)$ is $\frac{C}{\sqrt{m}}$-Lipschitz, then apply Lemma \ref{lemma..Gaussian concentration inequality for Lipschitz function}  
\begin{equation*}
    \Prob\left(\Abs{{g}^{[1]}(\vX) - \Exp {g}^{[1]}(\vX)} \geq t\right)\leq  \exp(-cmt^2),
\end{equation*}
then we have
\begin{align*}
   \Prob\left(\Abs{ \left<\frac{\vx_i^{[1]}(0)}{\sqrt{m}},\vone\right>-\widetilde{\vb_i}^{[1]} }\geq t\right)\leq  \exp(-cmt^2).
\end{align*}
Our next step is to prove that \eqref{concentration on  norm diaongal } and \eqref{concentration on vector} hold for $l\geq 2,$ and we will prove it by induction.

Assume that \eqref{concentration on  norm diaongal } and \eqref{concentration on vector} hold for $1,2,3,\cdots, l$ and want to show that they hold for $l+1.$
\begin{align}
    \Prob\left(\Abs{\Norm{\vx_i^{[l+1]}(0)}_2-\sqrt{\widetilde{\mK}_{ii}^{[l+1]}}}\geq t\right) &\leq \exp\left(-cmt^2\right),\label{conclusion 1}\\
     \Prob\left(\Abs{\left<\frac{\vx_i^{[l+1]}(0)}{\sqrt{m}},\vone\right>-\widetilde{\vb}_i^{[l+1]}}\geq t\right) &\leq \exp\left(-cmt^2\right),\label{conclusion 2}
\end{align}
we recall that,
\begin{align*}
    \vx^{[l+1]}_i(0)&= \vx^{[l]}_i(0)+\frac{c_{\mathrm{res}}}{L\sqrt{m}}\sigma\left(\mW^{[l+1]}(0)\vx_i^{[l]}(0)\right),
\end{align*}
and the definition of  $\widetilde{\mK}^{[l]}_{ii}$ and $\widetilde{\vb}^{[l]}_i$
\begin{align*}
    \widetilde{\mK}^{[l+1]}_{ii}&=\widetilde{\mK}_{ii}^{[l]}+\Exp_{u\sim\fN(0,\widetilde{\mK}_{ii}^{[l]})}\left[\frac{c_{\mathrm{res}}\widetilde{\vb}_{i}^{[l]}\sigma(u) }{L} +\frac{c_{\mathrm{res}}\widetilde{\vb}_{i}^{[l]}\sigma(u) }{L}+\frac{c_{\mathrm{res}}^2\sigma(u)\sigma(u)}{L^2} \right],\nonumber\\
\widetilde{\vb}^{[l+1]}_i&=\widetilde{\vb}_{i}^{[l]}+\frac{c_{\mathrm{res}}}{L}\Exp_{u\sim \fN(0,\widetilde{\mK}_{ii}^{[l]})} \left[\sigma(u)\right], 
\end{align*}
then we have %\underbrace{x^3}_\textrm{text 1} 
\begin{align*}
    \Norm{\vx^{[l+1]}_i(0)}_2^2&= \Norm{\vx^{[l]}_i(0)}_2^2+2\frac{c_{\mathrm{res}}}{L} \underbrace{\left<\frac{\vx^{[l]}_i(0)}{\sqrt{m}},\sigma\left(\mW^{[l+1]}(0)\vx_i^{[l]}(0)\right)\right>}_\textrm{I} \\
    &+\frac{c_{\mathrm{res}}^2}{L^2 } \underbrace{\frac{1}{m}  \left<\sigma\left(\mW^{[l+1]}(0)\vx_i^{[l]}(0)\right),\sigma\left(\mW^{[l+1]}(0)\vx_i^{[l]}(0)\right)\right>}_\textrm{II} ,
\end{align*}
then we need to focus on the terms I and II, note that for term I
there is a $\frac{1}{\sqrt{m}}$ scaling factor contained in ${\vx^{[l]}_i(0)}$, and $\sigma\left(\mW^{[l+1]}(0)\vx_i^{[l]}(0)\right)$ has distribution 
$$\sigma\left(\mW^{[l+1]}(0)\vx_i^{[l]}(0)\right)\sim \sigma\left(\Norm{\vx_i^{[l]}(0)}_2\vY\right),$$
with $\vY$ being a standard normal Gaussian vector, then we have
\begin{align*}
    \Exp\left[\frac{1}{\sqrt{m}}\left<{\vx^{[l]}_i(0)},\sigma\left(\mW^{[l+1]}(0)\vx_i^{[l]}(0)\right)\right>\right]=\frac{1}{\sqrt{m}}\left<{\vx^{[l]}_i(0)},\Exp\left[\sigma\left(\Norm{\vx_i^{[l]}(0)}_2\vY\right)\right]\right>,
\end{align*}
we shall focus on the inner product function $g^{[l]}(\cdot): \sR^m\to \sR,$ with $$g^{[l]}(\vY)=\frac{1}{\sqrt{m}}\left<{\vx^{[l]}_i(0)}, \sigma\left(\Norm{\vx_i^{[l]}(0)}_2\vY\right)\right>,$$ 
we have  for any $\vX_1,\vX_2 \in \sR^m,$
\begin{align*}
    \Abs{g^{[l]}(\vY_1)-g^{[l]}(\vY_2)}&\leq \frac{1}{\sqrt{m}}\Abs{\left<{\vx^{[l]}_i(0)}, \sigma\left(\Norm{\vx_i^{[l]}(0)}_2\vY_1\right)\right>-\left<{\vx^{[l]}_i(0)}, \sigma\left(\Norm{\vx_i^{[l]}(0)}_2\vY_2\right)\right>}\\
    &\leq \frac{1}{\sqrt{m}}\left<{\vx^{[l]}_i(0)}, \Norm{\vx_i^{[l]}(0)}_2\Abs{\vY_1-\vY_2}\right>\leq \frac{1}{\sqrt{m}}\Norm{\vx_i^{[l]}(0)}_2^2\Norm{\vY_1-\vY_2}_2^2
\end{align*}
based on our induction hypothesis, $\Norm{\vx_i^{[l]}(0)}_2\leq C$ with high probability,
hence $g^{[l]}(\cdot)$ is $\frac{C}{\sqrt{m}}$-Lipschitz. Apply Lemma \ref{lemma..Gaussian concentration inequality for Lipschitz function} again  
\begin{align}
    &\Prob\left(\Abs{  \frac{1}{\sqrt{m}}\left<{\vx^{[l]}_i(0)},\sigma\left(\mW^{[l+1]}(0)\vx_i^{[l]}(0)\right)\right>-\frac{1}{\sqrt{m}}\left<{\vx^{[l]}_i(0)},\Exp\left[\sigma\left(\Norm{\vx_i^{[l]}(0)}_2\vY\right)\right]\right>} \geq t\right)\nonumber\\
    &\leq  \exp(-cmt^2),\label{intermediate step 1}
\end{align}
and based on our induction hypothesis,
\begin{align}
    &\Prob\left(\Abs{ \frac{1}{\sqrt{m}}\left<{\vx^{[l]}_i(0)},\Exp\left[\sigma\left(\Norm{\vx_i^{[l]}(0)}_2\vY\right)\right]\right>-\widetilde{\vb_i}^{[l]}\Exp\left[\sigma\left(\Norm{\vx_i^{[l]}(0)}_2\vY\right)\right]} \geq t\right)\nonumber\\
    &\leq  \exp(-cmt^2),\label{intermediate step 2}
\end{align}
from Lemma \ref{lemma....one expectation constant}
\begin{align*}
 \Abs{   \Exp\left[\sigma\left(\Norm{\vx_i^{[l]}(0)}_2\vY\right)\right]-\Exp\left[\sigma\left(\sqrt{\widetilde{\mK}_{ii}^{[l]}}\vY\right)\right]}\leq C\Abs{\Norm{\vx_i^{[l]}(0)}_2-\sqrt{\widetilde{\mK}_{ii}^{[l]}}},
\end{align*}
altogether we have 
\begin{align}
    &\Prob\left(\Abs{ \widetilde{\vb_i}^{[l]}\Exp\left[\sigma\left(\Norm{\vx_i^{[l]}(0)}_2\vY\right)\right]-\widetilde{\vb_i}^{[l]}\Exp\left[\sigma\left(\sqrt{\widetilde{\mK}_{ii}^{[l]}}\vY\right)\right]} \geq t\right)\leq  \exp(-cmt^2),\label{intermediate step 3}
\end{align}
combine \eqref{intermediate step 1}, \eqref{intermediate step 2} and  \eqref{intermediate step 3} 
\begin{align}
    &\Prob\left(\Abs{  \frac{1}{\sqrt{m}}\left<{\vx^{[l]}_i(0)},\sigma\left(\mW^{[l+1]}(0)\vx_i^{[l]}(0)\right)\right>-\widetilde{\vb_i}^{[l]}\Exp\left[\sigma\left(\sqrt{\widetilde{\mK}_{ii}^{[l]}}\vY\right)\right]} \geq t\right)\leq  \exp(-cmt^2).\label{final part .....1}
\end{align}
Finally for term II 
\begin{equation*}
     \Exp\left[\frac{1}{m}  \left<\sigma\left(\mW^{[l+1]}(0)\vx_i^{[l]}(0)\right),\sigma\left(\mW^{[l+1]}(0)\vx_i^{[l]}(0)\right)\right>\right] ={\Exp_{x\sim \fN(0,1)}\left[\sigma\left(\Norm{\vx_i^{[l]}(0)}_2 x\right)^2\right]},
\end{equation*}
since $\left(\mW^{[l+1]}(0)\vx_i^{[l]}(0)\right)$ are i.i.d standard Gaussian variables, and $\sigma(\cdot)$ is $1$-Lipschitz, then $\left(\mW^{[l+1]}(0)\vx_i^{[l]}(0)\right)$ are sub-exponential variables, then we have 
\begin{align}
   &\Prob\left(\Abs{\frac{1}{m}  \left<\sigma\left(\mW^{[l+1]}(0)\vx_i^{[l]}(0)\right),\sigma\left(\mW^{[l+1]}(0)\vx_i^{[l]}(0)\right)\right>-{\Exp_{x\sim \fN(0,1)}\left[\sigma\left(\Norm{\vx_i^{[l]}(0)}_2 x\right)^2\right]} }\geq t\right)\nonumber\\
   &\leq \exp\left(-c mt^2\right),\label{intermediate.....1}
\end{align}
and apply Lemma \ref{lemma....two expectation constant}
\begin{align*}
    \Abs{{\Exp_{x\sim \fN(0,1)}\left[\sigma\left(\Norm{\vx_i^{[l]}(0)}_2 x\right)^2\right]}-\Exp_{x\sim \fN(0,1)}\left[\sigma\left(\sqrt{\widetilde{\mK_{ii}}^{[l]}} x\right)^2\right]}\leq C\Abs{\Norm{\vx_i^{[l]}(0)}_2-\sqrt{\widetilde{\mK_{ii}}^{[l]}}},
\end{align*}
then based on our induction hypothesis
\begin{align}
   \Prob\left(  \Abs{{\Exp_{x\sim \fN(0,1)}\left[\sigma\left(\Norm{\vx_i^{[l]}(0)}_2 x\right)^2\right]}-{\Exp_{x\sim \fN(0,1)}\left[\sigma\left(\sqrt{\widetilde{\mK_{ii}}^{[l]}} x\right)^2\right]}}\geq t\right)\leq \exp\left(-cmt^2\right),\label{intermediate.....2}
\end{align}
combining \eqref{intermediate.....1} and \eqref{intermediate.....2}
\begin{align}
    & \Prob\left(  \Abs{\frac{1}{m}  \left<\sigma\left(\mW^{[l+1]}(0)\vx_i^{[l]}(0)\right),\sigma\left(\mW^{[l+1]}(0)\vx_i^{[l]}(0)\right)\right>-{\Exp_{x\sim \fN(0,1)}\left[\sigma\left(\sqrt{\widetilde{\mK_{ii}}^{[l]}} x\right)^2\right]}}\geq t\right)\nonumber\\
    &\leq \exp\left(-cmt^2\right),\label{final part....2}
\end{align}
since  we have %\underbrace{x^3}_\textrm{text 1} 
\begin{align*}
    \Norm{\vx^{[l+1]}_i(0)}_2^2&= \Norm{\vx^{[l]}_i(0)}_2^2+2\frac{c_{\mathrm{res}}}{L} \underbrace{\left<\frac{\vx^{[l]}_i(0)}{\sqrt{m}},\sigma\left(\mW^{[l+1]}(0)\vx_i^{[l]}(0)\right)\right>}_\textrm{I} \\
    &+\frac{c_{\mathrm{res}}^2}{L^2 } \underbrace{\frac{1}{m}  \left<\sigma\left(\mW^{[l+1]}(0)\vx_i^{[l]}(0)\right),\sigma\left(\mW^{[l+1]}(0)\vx_i^{[l]}(0)\right)\right>}_\textrm{II} ,
\end{align*}
then 
\begin{align}
    &\Prob\left(  \Abs{ \Norm{\vx^{[l+1]}_i(0)}^2_2-{\widetilde{\mK_{ii}}^{[l+1]}}}\geq t\left(1+\frac{c_{\mathrm{res}}}{L}\right)^2\right)\nonumber \\
    &\leq\Prob\left(  \Abs{ \Norm{\vx^{[l]}_i(0)}^2_2-{\widetilde{\mK_{ii}}^{[l]}}}\geq t\right)+ \Prob\left(  \Abs{\mathrm{II}}\geq t\right)+\Prob\left(  \Abs{\mathrm{III}}\geq t\right)\nonumber\\
&\leq \exp\left(-cmt^2\right),
\end{align}
we shall see that thanks to the $\frac{c_{\mathrm{res}}}{L}$ structure, with high probability the difference of $\Abs{\Norm{\vx^{[l]}_i(0)}^2_2-{\widetilde{\mK_{ii}}^{[l]}}}$ does not explode exponentially layer by layer.

For $\widetilde{\vb}_i^{[l+1]}$, apply Lemma \ref{lemma..Gaussian concentration inequality for Lipschitz function},
\begin{align}
    \Prob\left(\Abs{\left<\frac{\sigma\left(\mW^{[l+1]}(0)\vx_i^{[l]}(0)\right)}{{m}},\vone\right>-\Exp\left[ \sigma\left(\Norm{\vx_i^{[l]}(0)}_2 \vY\right)\right]}\geq t\right) &\leq \exp\left(-cmt^2\right),\label{intermediate........1}
\end{align}
and apply Lemma \ref{lemma....one expectation constant} 
\begin{align}
    \Prob\left(\Abs{\Exp\left[ \sigma\left(\Norm{\vx_i^{[l]}(0)}_2 \vY\right)\right]-\Exp\left[ \sigma\left(\sqrt{\widetilde{\mK_{ii}}^{[l]}}\vY\right)\right]}\geq t\right) &\leq \exp\left(-cmt^2\right),\label{intermediate........2}
\end{align}
combine \eqref{intermediate........1} and \eqref{intermediate........2},
\begin{align}
     \Prob\left(\Abs{\left<\frac{\sigma\left(\mW^{[l+1]}(0)\vx_i^{[l]}(0)\right)}{{m}},\vone\right>-\Exp\left[ \sigma\left(\sqrt{\widetilde{\mK_{ii}}^{[l]}}\vY\right)\right]}\geq t\right) &\leq \exp\left(-cmt^2\right), \label{conclusion ..........3}
\end{align}
then
\begin{align}
 &\Prob\left(\Abs{\left<\frac{\vx_i^{[l+1]}(0)}{\sqrt{m}},\vone\right>-\widetilde{\vb}_i^{[l+1]}}\geq t\left(1+\frac{c_{\mathrm{res}}}{L}\right)\right) \nonumber\\
 &\leq\Prob\left(\Abs{\left<\frac{\vx_i^{[l]}(0)}{\sqrt{m}},\vone\right>-\widetilde{\vb}_i^{[l]}}\geq t\right)+\Prob\left(\Abs{\left<\frac{\sigma\left(\mW^{[l+1]}(0)\vx_i^{[l]}(0)\right)}{{m}},\vone\right>-\Exp\left[ \sigma\left(\sqrt{\widetilde{\mK_{ii}}^{[l]}}\vY\right)\right]}\geq t\right)\nonumber\\
 &\leq \exp\left(-cmt^2\right),\label{conclusion....final}
\end{align}
We shall see  again that thanks to the $\frac{c_{\mathrm{res}}}{L}$ structure, with high probability the difference of $\Abs{\left<\frac{\vx_i^{[l]}(0)}{\sqrt{m}},\vone\right>-\widetilde{\vb}_i^{[l]}}$ only has slight increment w.r.t each layer $l.$
\end{proof}
 Our next Proposition is on the least eigenvalue of the randomly initialized Gram matrix $\mG^{[1]}(0).$
\begin{prop}\label{proposition......eigenvalue of order 1}
Under Assumption \ref{Assump...Assumption on activation functions} and \ref{Assump... ont he imput of the  samples}, if $m=\Omega\left( \left(\frac{n}{\lambda_0}\right)^{2+\eps}\right)$,  then with high probability
\begin{equation}
    \lambda_{\min}\left(\mG^{[1]}(0)\right)\geq \frac{3 \lambda_0}{4},
\end{equation}
where $\lambda_0$ has been defined in \eqref{ineq..matrix comparision hierachy at 0}.
\end{prop}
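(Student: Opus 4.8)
The plan is to combine an elementary matrix--perturbation bound with an entrywise concentration estimate that extends Proposition~\ref{proposition..................Initialization Norms layer 1} from the diagonal to all off-diagonal inner products. Write $\mG^{[1]}(0)$ for the Gram matrix at random initialization associated with the output-layer derivatives, so that $\mG^{[1]}(0)_{ij}=\fG_0^{[L+1]}(\vx_i,\vx_j)=\left<\vx_i^{[L]}(0),\vx_j^{[L]}(0)\right>$, and recall from Corollary~\ref{corollary cocnernong L+1} (via Proposition~\ref{proposition..covarince type}) that its deterministic infinite-width counterpart $\mK^{[L+1]}$ satisfies $\lambda_{\min}(\mK^{[L+1]})>\lambda_0$. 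First I would note that for any symmetric $n\times n$ matrix $\mA$ one has $\Norm{\mA}_{2\to 2}\leq\Norm{\mA}_{\mathrm{F}}\leq n\max_{i,j}\Abs{\mA_{ij}}$, so Weyl's inequality gives
\begin{equation*}
\lambda_{\min}\!\left(\mG^{[1]}(0)\right)\;\geq\;\lambda_{\min}\!\left(\mK^{[L+1]}\right)-n\max_{1\leq i,j\leq n}\Abs{\mG^{[1]}(0)_{ij}-\mK^{[L+1]}_{ij}}\;>\;\lambda_0-n\max_{i,j}\Abs{\mG^{[1]}(0)_{ij}-\mK^{[L+1]}_{ij}}.
\end{equation*}
Thus the whole statement reduces to showing that, with high probability, $\max_{i,j}\Abs{\mG^{[1]}(0)_{ij}-\mK^{[L+1]}_{ij}}\leq\lambda_0/(4n)$.

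Next I would prove this entrywise bound by induction on the layer index $l$. Conditioning on $\vx_i^{[l]}(0)$ and $\vx_j^{[l]}(0)$, the pair $\mW^{[l+1]}(0)\vx_i^{[l]}(0)$, $\mW^{[l+1]}(0)\vx_j^{[l]}(0)$ is jointly Gaussian with covariance built from $\Norm{\vx_i^{[l]}(0)}_2^2$, $\Norm{\vx_j^{[l]}(0)}_2^2$ and $\left<\vx_i^{[l]}(0),\vx_j^{[l]}(0)\right>$, and the residual recursion~\eqref{eq for definition....the h-th layer for Resnet} expresses $\left<\vx_i^{[l+1]}(0),\vx_j^{[l+1]}(0)\right>$ as $\left<\vx_i^{[l]}(0),\vx_j^{[l]}(0)\right>$ plus a $\tfrac{c_{\mathrm{res}}}{L}$-weighted sum of two cross terms $\tfrac{1}{\sqrt m}\left<\vx_i^{[l]}(0),\sigma\!\left(\mW^{[l+1]}(0)\vx_j^{[l]}(0)\right)\right>$ plus a $\tfrac{c_{\mathrm{res}}^2}{L^2}$-weighted quadratic term $\tfrac{1}{m}\left<\sigma\!\left(\mW^{[l+1]}(0)\vx_i^{[l]}(0)\right),\sigma\!\left(\mW^{[l+1]}(0)\vx_j^{[l]}(0)\right)\right>$. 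I would apply the Gaussian Lipschitz concentration of Lemma~\ref{lemma..Gaussian concentration inequality for Lipschitz function} to the cross terms, which are $\fO(1/\sqrt m)$-Lipschitz functions of the entries of $\mW^{[l+1]}(0)$, and the standard sub-exponential tail bound to the quadratic term; then, using Lemmas~\ref{lemma....one expectation constant} and~\ref{lemma....two expectation constant} together with the induction hypothesis, Proposition~\ref{proposition.... on the output of layes} and Lemma~\ref{lemma...identical entry}, I would replace the random conditional means (which depend on the layer-$l$ norms) by their deterministic limits $\widetilde{\mK}^{[l]}_{ij}$ and $\widetilde{\vb}^{[l]}_i$. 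This propagates $\Prob\!\left(\Abs{\left<\vx_i^{[l]}(0),\vx_j^{[l]}(0)\right>-\widetilde{\mK}^{[l]}_{ij}}\geq t\right)\leq\exp(-cmt^2)$, along with the analogous bound for $\left<\vx_i^{[l]}(0)/\sqrt m,\vone\right>$ versus $\widetilde{\vb}^{[l]}_i$, from level $l$ to level $l+1$; one final step of the same kind, accounting for the discrepancy between $\widetilde{\mK}^{[L]}$ and $\mK^{[L+1]}$, closes the induction. Exactly as in the proof of Proposition~\ref{proposition..................Initialization Norms layer 1}, the prefactors $\tfrac{c_{\mathrm{res}}}{L}$ and $\tfrac{c_{\mathrm{res}}^2}{L^2}$ force the accumulated error to behave like $\left(1+\fO(1/L)\right)^{L}=\fO(1)$, so the constant $c>0$ in $\exp(-cmt^2)$ is independent of $L$.

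The last step is the probability bookkeeping. Taking $t=\lambda_0/(4n)$ and a union bound over the $\leq n^2$ entries, the failure probability is at most $n^2\exp\!\left(-cm\lambda_0^2/(16n^2)\right)$; when $m=\Omega\!\left((n/\lambda_0)^{2+\eps}\right)$ we have $m\lambda_0^2/n^2\gtrsim m^{\eps/(2+\eps)}$, so this is at most $\exp(-m^{\eps_0})$ for some $\eps_0>0$, i.e. the event holds with high probability in the sense of Section~\ref{subsection...Notations}. Combined with the first display this gives $\lambda_{\min}(\mG^{[1]}(0))>\lambda_0-\lambda_0/4=\tfrac{3}{4}\lambda_0$, which is the claim. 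The hard part will be Step two: carrying the concentration to all $n^2$ off-diagonal inner products while keeping every constant independent of the depth $L$. The delicate point is that the conditional means of the layer-$(l+1)$ cross and quadratic terms depend on the random layer-$l$ norms $\Norm{\vx_i^{[l]}(0)}_2$, so the error at level $l$ feeds into level $l+1$ both additively and through those means; it is the stability estimates of Lemmas~\ref{lemma....one expectation constant}--\ref{lemma....two expectation constant} together with the contraction-like factor $\tfrac{c_{\mathrm{res}}}{L}$ that prevent a geometric blow-up, and it is precisely this ResNet-specific structure (absent for a fully-connected network) that yields an $L$-independent constant and hence the width requirement $m=\Omega\!\left((n/\lambda_0)^{2+\eps}\right)$ with no $2^{\fO(L)}$ factor.
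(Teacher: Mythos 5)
Your proposal starts from a misidentification of the object: in the paper $\mG^{[1]}(0)$ is the \emph{first-layer} Gram matrix, $\mG^{[1]}(0)_{ij}=\left<\vx_i^{[1]}(0),\vx_j^{[1]}(0)\right>$, and the reference deterministic matrix is $\widetilde{\mK}^{[1]}$; you have taken $\mG^{[1]}(0)$ to be the \emph{last-layer} Gram matrix $\left<\vx_i^{[L]}(0),\vx_j^{[L]}(0)\right>$ and compared it with $\mK^{[L+1]}$. That is the object that appears in Proposition~\ref{proposition......eigenvalue of order higher} (the $l=L+1$ case), not Proposition~\ref{proposition......eigenvalue of order 1}. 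Consequently, the entire second step of your proposal — the layer-by-layer induction using the residual recursion, the conditional-mean stability via Lemmas~\ref{lemma....one expectation constant}--\ref{lemma....two expectation constant}, and the $\left(1+\fO(1/L)\right)^L$ accumulation control — is machinery the paper uses for the higher-layer version, and it is not what is needed here; indeed it does not even address the true claim, which concerns only $\vx_i^{[1]}(0)$ and $\widetilde{\mK}^{[1]}$.

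The actual proof of Proposition~\ref{proposition......eigenvalue of order 1} is a \emph{single-step} concentration argument with no induction. Since $\vx_i^{[1]}(0)=\sqrt{c_\sigma/m}\,\sigma(\mW^{[1]}(0)\vx_i)$ with deterministic inputs $\vx_i$, the pair $\bigl(\mW^{[1]}(0)\vx_i,\mW^{[1]}(0)\vx_j\bigr)$ can be represented as $\bigl(\vX,\ \rho\vX+\sqrt{1-\rho^2}\,\vY\bigr)$ with $\rho=\widetilde{\mK}^{[0]}_{ij}=\left<\vx_i,\vx_j\right>$ and $\vX,\vY$ independent standard Gaussians, so $\mG_{ij}^{[1]}(0)=h^{[1]}(\vZ)$ where $h^{[1]}(\vZ)=\frac{c_\sigma}{m}\left<\sigma(\vX),\sigma(\rho\vX+\sqrt{1-\rho^2}\vY)\right>$ and $\Exp[h^{[1]}(\vZ)]=\widetilde{\mK}^{[1]}_{ij}$. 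One then shows $h^{[1]}$ is $\fO(1/\sqrt m)$-Lipschitz (using the a priori bound on the layer-$1$ output norm) and applies Lemma~\ref{lemma..Gaussian concentration inequality for Lipschitz function} directly to get $\Prob(|\mG_{ij}^{[1]}(0)-\widetilde{\mK}_{ij}^{[1]}|\geq t)\leq\exp(-cmt^2)$, which — unlike your sketch — requires no conditioning, no randomness from deeper layers, and no propagation through the recursion. Your surrounding framework is correct and matches the paper: $\lambda_{\min}(\widetilde{\mK}^{[1]})\geq\lambda_0$ from Proposition~\ref{proposition..covarince type}, Weyl plus $\Norm{\cdot}_{2\to2}\leq n\Norm{\cdot}_\infty$ reduces the claim to the entrywise bound at level $t=\lambda_0/(4n)$, and a union bound over $n^2$ entries gives the stated width requirement $m=\Omega\bigl((n/\lambda_0)^{2+\eps}\bigr)$. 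But as written your argument proves a different proposition, and it defers without proof exactly the single concentration estimate that constitutes the content of this one.
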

\begin{proof}
We have that 
\begin{align*}
   \mG_{ij}^{[1]}(0)& =\left< \vx_i^{[1]}(0), \vx_j^{[1]}(0)\right>\nonumber\\  \widetilde{\mK}_{ij}^{[0]}&= \left<\vx_i,\vx_j\right>,\nonumber\\
 \widetilde{\mK}_{ij}^{[1]}&= c_{\sigma}\Exp_{(u,v)^{\T}\sim \fN\left(\vzero,  \begin{pmatrix}\widetilde{\mK}_{ii}^{[0]}&\widetilde{\mK}_{ij}^{[0]}\\
\widetilde{\mK}_{ji}^{[0]}&\widetilde{\mK}_{jj}^{[0]}\end{pmatrix} \right)}\left[ \sigma(u)\sigma(v)\right],
\end{align*}
 now we need to apply Lemma \ref{lemma..Gaussian concentration inequality for Lipschitz function} again, except that this time we are going to apply it to the inner product function $h^{[1]}(\cdot):\sR^{2m}\to \sR$, with $$h^{[1]}(\vZ)=\frac{c_{\sigma}}{m}\left<\sigma(\vX),\sigma(\rho\vX+\sqrt{1-\rho^2}\vY) \right>,$$ 
where $-1\leq \rho\leq 1.$ 

Specifically with  $\vZ^{\T}=(\vX^{\T},\vY^{\T})$, we have  for any $\vZ_1,\vZ_2 \in \sR^m,$
\begin{align*}
    \Abs{h^{[1]}(\vZ_1)-h^{[1]}(\vZ_2)}&\leq \sqrt{\frac{c_{\sigma}}{m}}\Norm{\sigma(\rho\vX_1+\sqrt{1-\rho^2}\vY_1)}_2 \sqrt{\frac{c_{\sigma}}{m}}\Norm{\vX_1-\vX_2}_2\\
    &+\sqrt{\frac{c_{\sigma}}{m}}\Norm{\sigma(\vX_2)}_2 \sqrt{\frac{c_{\sigma}}{m}}\left(\Abs{\rho}\Norm{\vX_1-\vX_2}_2+\sqrt{1-\rho^2}\Norm{\vY_1-\vY_2}_2\right),
\end{align*}
combined with Proposition \ref{proposition..................Initialization Norms layer 1},  with probability $1-\exp(-cm)$,
\begin{equation*}
    \sqrt{\frac{c_{\sigma}}{m}}\Norm{\sigma(\rho\vX_1+\sqrt{1-\rho^2}\vY_1)}_2,\sqrt{\frac{c_{\sigma}}{m}}\Norm{\sigma(\vX_2)}_2\leq 2,
\end{equation*}
so we have 
\begin{align*}
    \Abs{h^{[1]}(\vZ_1)-h^{[1]}(\vZ_2)}\leq 8\sqrt{\frac{c_{\sigma}}{m}} \Norm{\vZ_1-\vZ_2}_2.
\end{align*}
hence $h^{[1]}\left(\vZ\right)$ is $8\sqrt{\frac{c_{\sigma}}{m}}$-Lipschitz, then we shall set $\rho=\widetilde{\mK}_{ij}^{[0]},$
\begin{equation}
\Prob\left(\Abs{ \mG_{ij}^{[1]}(0) -\widetilde{\mK}_{ij}^{[1]}} \geq  t\right)\leq  \exp(-cmt^2),
\end{equation}
note that  we have
\begin{align*}
    \Norm {\mG^{[1]}(0) -\widetilde{\mK}^{[1]}}_{2\to 2} \leq  \Norm {\mG^{[1]}(0) -\widetilde{\mK}^{[1]}}_{\mathrm{F}} \leq n \Norm {\mG^{[1]}(0) -\widetilde{\mK}^{[1]}}_{\infty},
\end{align*}
based on Proposition \ref{proposition..covarince type}, $\lambda_{\min}(\widetilde{\mK}^{[1]})\geq \lambda_0$, then if we choose $t=\frac{\lambda_0}{4n}$ and with a union $m^2$ such events, we have  with probability $1-m^2\exp\left( -cm\lambda_0^2/n^2\right)$
\begin{equation}
        \Norm {\mG^{[1]}(0) -\widetilde{\mK}^{[1]}}_{2\to 2} \leq  \frac{\lambda_0}{4},
\end{equation}
hence if $m=\Omega\left( \left(\frac{n}{\lambda_0}\right)^{2+\eps}\right)$,  we have with probability $1-\exp(-m^{\eps})$
\begin{equation}
\lambda_{\min}(\mG^{[1]}(0))\geq    \lambda_{\min}(\widetilde{\mK}^{[1]})-\Norm {\mG^{[1]}(0) -\widetilde{\mK}^{[1]}}_{2\to 2}\geq \frac{3\lambda_0}{4}.
\end{equation}
\end{proof}

Our next Proposition  on the stability of the randomly initialized Gram matrix $\mG^{[l]}(0)$ for $2\leq l\leq L+1. $
\begin{prop}\label{proposition......eigenvalue of order higher}
Under Assumption \ref{Assump...Assumption on activation functions} and \ref{Assump... ont he imput of the  samples}, if $m=\Omega\left( \left(\frac{n}{\lambda_0}\right)^{2+\eps}\right)$,  then with high probability
\begin{equation}
    \lambda_{\min}\left(\mG^{[l]}(0)\right)\geq \frac{3 \lambda_0}{4}, \ 2\leq l\leq L+1
\end{equation}
where $\lambda_0$ has been defined in \eqref{ineq..matrix comparision hierachy at 0}.
\end{prop}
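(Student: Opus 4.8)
The plan is to run, uniformly over all layers $2\le l\le L+1$, the argument already used for $l=1$ in Proposition \ref{proposition......eigenvalue of order 1}. For each such $l$ the matrix $\mG^{[l]}(0)$ has $(i,j)$-entry an inner product $\left<\vx_i^{[l]}(0),\vx_j^{[l]}(0)\right>$ of layer outputs (the $L$-th layer outputs when $l=L+1$), whose $m\to\infty$ deterministic limit is the corresponding Gram matrix of Section \ref{subsection....gram matrices} — namely $\widetilde{\mK}^{[l]}$ for $2\le l\le L$ and $\mK^{[L+1]}$ for $l=L+1$ — and by Proposition \ref{proposition..covarince type} and Corollary \ref{corollary cocnernong L+1} every one of these limits has least eigenvalue strictly larger than $\lambda_0$. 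Hence the proposition reduces to showing that, with high probability and simultaneously in $l$, $\mG^{[l]}(0)$ lies within spectral distance $\lambda_0/4$ of its limit; Weyl's inequality then gives $\lambda_{\min}(\mG^{[l]}(0))\ge\lambda_0-\lambda_0/4=3\lambda_0/4$. Using $\Norm{\mG^{[l]}(0)-\widetilde{\mK}^{[l]}}_{2\to 2}\le\Norm{\mG^{[l]}(0)-\widetilde{\mK}^{[l]}}_{\mathrm F}\le n\Norm{\mG^{[l]}(0)-\widetilde{\mK}^{[l]}}_{\infty}$, it is enough to control each entry to accuracy $t=\lambda_0/(4n)$ with failure probability $\exp(-cmt^2)$ and then union-bound over the $O(n^2)$ entries and $O(L)$ layers; since $L\lesssim\mathrm{poly}(m)$, the choice $m=\Omega((n/\lambda_0)^{2+\eps})$ makes the total failure probability at most $\exp(-m^{\eps})$.

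The work is therefore in the entrywise estimate $\Prob(\Abs{\left<\vx_i^{[l]}(0),\vx_j^{[l]}(0)\right>-\widetilde{\mK}_{ij}^{[l]}}\ge t)\le\exp(-cmt^2)$, which I would prove by induction on $l$, extending Proposition \ref{proposition..................Initialization Norms layer 1} from the diagonal entries $\Norm{\vx_i^{[l]}(0)}_2$ and the $\vone$-projections it already handles to arbitrary off-diagonal inner products. First I would write
\[
\left<\vx_i^{[l+1]}(0),\vx_j^{[l+1]}(0)\right>=\left<\vx_i^{[l]}(0),\vx_j^{[l]}(0)\right>+\frac{c_{\mathrm{res}}}{L}\,\mathrm{I}_{ij}+\frac{c_{\mathrm{res}}}{L}\,\mathrm{I}_{ji}+\frac{c_{\mathrm{res}}^2}{L^2}\,\mathrm{II}_{ij},
\]
where $\mathrm{I}_{ij}=\left<\vx_i^{[l]}(0)/\sqrt m,\sigma(\mW^{[l+1]}(0)\vx_j^{[l]}(0))\right>$ and $\mathrm{II}_{ij}=\frac1m\left<\sigma(\mW^{[l+1]}(0)\vx_i^{[l]}(0)),\sigma(\mW^{[l+1]}(0)\vx_j^{[l]}(0))\right>$, and condition on the lower layers, so that $(\mW^{[l+1]}(0)\vx_i^{[l]}(0),\mW^{[l+1]}(0)\vx_j^{[l]}(0))$ is a centered Gaussian pair with covariance fixed by $\Norm{\vx_i^{[l]}(0)}_2$, $\Norm{\vx_j^{[l]}(0)}_2$ and $\left<\vx_i^{[l]}(0),\vx_j^{[l]}(0)\right>$. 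Then I would (i) apply the Gaussian Lipschitz concentration Lemma \ref{lemma..Gaussian concentration inequality for Lipschitz function} to $\mathrm{I}_{ij}$ and $\mathrm{II}_{ij}$, exactly as with the functionals $h^{[1]}$ and $g^{[l]}$ in Propositions \ref{proposition......eigenvalue of order 1} and \ref{proposition..................Initialization Norms layer 1}, using that the relevant inner-product maps are $O(1/\sqrt m)$-Lipschitz by the a priori bound $\Norm{\vx_i^{[l]}(0)}_2\le C$ of Proposition \ref{proposition.... on the output of layes}; and (ii) use Lemmas \ref{lemma....one expectation constant} and \ref{lemma....two expectation constant} to replace the conditional means (functions of $\Norm{\vx^{[l]}(0)}_2$ and $\left<\vx_i^{[l]}(0),\vx_j^{[l]}(0)\right>$) by the ideal ones (functions of $\sqrt{\widetilde{\mK}_{ii}^{[l]}}$ and $\widetilde{\mK}_{ij}^{[l]}$), absorbing the discrepancy through the induction hypothesis. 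For the top layer the same computation, retaining the extra $\widetilde{\vb}$-type term, produces $\mK^{[L+1]}$ rather than $\widetilde{\mK}^{[L+1]}$.

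The only place that genuinely needs care — and where the residual structure pays off — is the layerwise accumulation of the errors: each step injects only a $\frac{c_{\mathrm{res}}}{L}$ (resp.\ $\frac{c_{\mathrm{res}}^2}{L^2}$) multiple of a fresh $\exp(-cmt^2)$-scale fluctuation, so that summing over $l=1,\dots,L$ gives an $O(1)$ total deviation instead of the $2^{O(L)}$ one would obtain for a fully-connected network; this is the exact analogue of the ``$\frac{c_{\mathrm{res}}}{L}$ structure'' observations already recorded inside the proof of Proposition \ref{proposition..................Initialization Norms layer 1}, now carried out for the bilinear quantity $\left<\vx_i^{[l]}(0),\vx_j^{[l]}(0)\right>$ in place of the quadratic $\Norm{\vx_i^{[l]}(0)}_2^2$. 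Everything else is a routine repackaging of Propositions \ref{proposition......eigenvalue of order 1} and \ref{proposition..................Initialization Norms layer 1} together with Lemmas \ref{lemma....one expectation constant}, \ref{lemma....two expectation constant} and \ref{lemma..Gaussian concentration inequality for Lipschitz function}.
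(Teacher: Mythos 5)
Your proposal matches the paper's proof in all essentials: an inductive entrywise concentration bound for $\Abs{\mG^{[l]}_{ij}(0)-\widetilde{\mK}^{[l]}_{ij}}$ obtained by decomposing the layer-$(l+1)$ inner product into the layer-$l$ one plus $\frac{c_{\mathrm{res}}}{L}\mathrm{I}+\frac{c_{\mathrm{res}}}{L}\mathrm{II}+\frac{c_{\mathrm{res}}^2}{L^2}\mathrm{III}$, conditioning on lower layers so the fresh Gaussian weights let you apply Lemma \ref{lemma..Gaussian concentration inequality for Lipschitz function} to $O(1/\sqrt m)$-Lipschitz functionals and Lemmas \ref{lemma....one expectation constant}, \ref{lemma....two expectation constant} to replace empirical conditional means by their ideal counterparts, with the $c_{\mathrm{res}}/L$ prefactor ensuring the accumulated error stays $O(1)$ across layers, then passing to the spectral norm via $\Norm{\cdot}_{2\to2}\le n\Norm{\cdot}_{\infty}$ and invoking Proposition \ref{proposition..covarince type}/Corollary \ref{corollary cocnernong L+1}. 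This is the same route the paper takes, so no substantive comparison is needed.
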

 \begin{proof}
 
For $l=2,$ we shall make estimate on the norm, $\Norm{\mG^{[2]}(0) -\widetilde{\mK}^{[2]}}_{\infty},$ since by definition% \underbrace{x^3}_\textrm{text 1}
\begin{align}
 \mG_{ij}^{[2]}(0)& =\left< \vx_i^{[2]}(0), \vx_j^{[2]}(0)\right>=\mG_{ij}^{[1]}(0)+\frac{c_{\mathrm{res}}}{L}\underbrace{\frac{1}{\sqrt{m}}\left< \vx_i^{[1]}(0),\sigma\left( \mW^{[2]}(0)\vx_j^{[1]}(0)\right)\right>}_\textrm{I}\nonumber\\
 &+\frac{c_{\mathrm{res}}}{L}\underbrace{\frac{1}{\sqrt{m}}\left< \vx_j^{[1]}(0),\sigma\left( \mW^{[2]}(0)\vx_i^{[1]}(0)\right)\right>}_\textrm{II}\nonumber\\
 &+\frac{c_{\mathrm{res}}^2}{L^2}\underbrace{\frac{1}{{m}}\left< \sigma\left( \mW^{[2]}(0)\vx_i^{[1]}(0)\right),\sigma\left( \mW^{[2]}(0)\vx_j^{[1]}(0)\right)\right>}_\textrm{III}\nonumber\\ 
 \widetilde{\mK}_{ij}^{[1]}&= c_{\sigma}\Exp_{(u,v)^{\T}\sim \fN\left(\vzero,  \begin{pmatrix}\widetilde{\mK}_{ii}^{[0]}&\widetilde{\mK}_{ij}^{[0]}\nonumber\\
\widetilde{\mK}_{ji}^{[0]}&\widetilde{\mK}_{jj}^{[0]}\end{pmatrix} \right)}\left[ \sigma(u)\sigma(v)\right],\\
 \widetilde{\vb}_{i}^{[1]}&=\sqrt{c_{\sigma}}\Exp_{u\sim \fN(0,\widetilde{\mK}_{ii}^{[0]})} \left[\sigma(u)\right], \nonumber\\
\widetilde{\mA}^{[2]}_{ij}&=\begin{pmatrix}\widetilde{\mK}_{ii}^{[1]}&\widetilde{\mK}_{ij}^{[1]}\\
\widetilde{\mK}_{ji}^{[1]}&\widetilde{\mK}_{jj}^{[1]}\end{pmatrix},\nonumber\\
 \widetilde{\mK}_{ij}^{[2]}&=\widetilde{\mK}_{ij}^{[1]}+\Exp_{(u,v)^{\T}\sim \fN\left(\vzero, \widetilde{\mA}^{[2]}_{ij}\right)}\left[\frac{c_{\mathrm{res}} }{L}\underbrace{\widetilde{\vb}_{i}^{[1]}\sigma(v)}_\textrm{I'} +\frac{c_{\mathrm{res}} }{L}\underbrace{\widetilde{\vb}_{j}^{[1]}\sigma(u)}_\textrm{II'}+\frac{c_{\mathrm{res}}^2}{L^2}\underbrace{\sigma(u)\sigma(v)}_\textrm{III'} \right]\nonumber.
\end{align}
 
 We need to tackle the difference between I and I', in order for that, we need to write the difference into
\begin{align*}
   &\Abs{ \frac{1}{\sqrt{m}}\left< \vx_i^{[1]}(0),\sigma\left( \mW^{[2]}(0)\vx_j^{[1]}(0)\right)\right>-\widetilde{\vb}_{i}^{[1]}\sigma(v)}\\
   &\leq \Abs{\frac{1}{\sqrt{m}}\left< \vx_i^{[1]}(0),\sigma\left( \mW^{[2]}(0)\vx_j^{[1]}(0)\right)\right>-\frac{1}{\sqrt{m}}\left< \vx_i^{[1]}(0),\Exp\left[\sigma\left( \Norm{\vx_j^{[1]}(0)}_2\vY\right)\right]\right>}\\
   &+\Abs{\frac{1}{\sqrt{m}}\left< \vx_i^{[1]}(0),\Exp\left[\sigma\left( \Norm{\vx_j^{[1]}(0)}_2\vY\right)\right]\right>-\widetilde{\vb}^{[1]}_i\Exp\left[\sigma\left( \Norm{\vx_j^{[1]}(0)}_2\vY\right)\right]}\\
   &+\Abs{\widetilde{\vb}^{[1]}_i\Exp\left[\sigma\left( \Norm{\vx_j^{[1]}(0)}_2\vY\right)\right]-\widetilde{\vb}_{i}^{[1]}\Exp\left[\sigma\left( \sqrt{\widetilde{\mK_{jj}}^{[1]}}\vY\right)\right]},
\end{align*}
similar to the proof in Proposition \ref{proposition..................Initialization Norms layer 1}
with $\vY$ being a standard normal Gaussian vector  
\begin{align}
    \Prob\left(\Abs{\mathrm{I-I'}}\geq t\right)\leq \exp\left(-cmt^2\right),\label{intemediate ..............  1}
\end{align}
similarly  
\begin{align}
    \Prob\left(\Abs{\mathrm{II-II'}}\geq t\right)\leq \exp\left(-cmt^2\right),\label{intemediate ..............  2}
\end{align}
 
 for the difference between III and III',  we need to define another inner product function $h^{[2]}(\cdot):\sR^{2m}\to \sR,$ being 
\begin{align*}
    h^{[2]}(\vZ)=\frac{1}{ m}\left<\sigma\left(C_2\vX\right),\sigma\left(D_2\left(\rho\vX+\sqrt{1-\rho^2}\vY\right)\right) \right>,
\end{align*}
with $C_2,D_2>0$ being constants and $-1\leq \rho\leq 1.$ 

Note that the form $\vZ^{\T}=(\vX^{\T},\vY^{\T})$, similar to $h^{[1]}(\cdot)$ defined in the proof of Proposition \ref{proposition......eigenvalue of order 1}, $h^{[2]}(\cdot)$ is $\frac{C}{\sqrt{m}}$-Lipschitz, then we have 
\begin{equation*}
    \Prob\left(\Abs{{h}^{[2]}(\vZ) - \Exp {h}^{[2]}(\vZ)} \geq t\right)\leq  \exp(-cmt^2),
\end{equation*}
 hence we have
\begin{align}
   &\Prob\left(\Abs{\frac{1}{m}\left< \sigma\left( \mW^{[2]}(0)\vx_i^{[1]}(0)\right),\sigma\left( \mW^{[2]}(0)\vx_j^{[1]}(0)\right)\right> - \Exp_{(u,v)^{\T}\sim \fN\left(\vzero, {\mA}^{[2]}_{ij}\right)}\left[\sigma(u) \sigma(v) \right] }\geq t\right)\nonumber\\
   &\leq  \exp(-cmt^2),\label{prob4}
\end{align}
with
\begin{align*}
{\mA}^{[2]}_{ij}&=
\begin{pmatrix}\left<\vx_i^{[1]}(0),\vx_i^{[1]}(0)\right> &\left<\vx_i^{[1]}(0),\vx_j^{[1]}(0)\right>\\
\left<\vx_j^{[1]}(0),\vx_i^{[1]}(0)\right>&\left<\vx_j^{[1]}(0),\vx_j^{[1]}(0)\right>\end{pmatrix},
\end{align*}
 
combined with Lemma \ref{lemma....two expectation constant} and Proposition \ref{proposition..................Initialization Norms layer 1}  
 \begin{align}
     &\Prob\left(\Abs{ \Exp_{(u,v)^{\T}\sim \fN\left(\vzero, {\mA}^{[2]}_{ij}\right)}\left[\sigma(u) \sigma(v) \right]-\Exp_{(u,v)^{\T}\sim \fN\left(\vzero, \widetilde{\mA}^{[2]}_{ij}\right)}\left[\sigma(u) \sigma(v) \right] }\geq t\right)\leq  \exp(-cmt^2),\label{prob5}
\end{align}
combine \eqref{prob4} and \eqref{prob5}  
\begin{align}
    \Prob\left(\Abs{\mathrm{III-III'}}\geq t\right)\leq \exp\left(-cmt^2\right),\label{intemediate ..............  3}
\end{align}
then we have that
\begin{align}
      &\Prob\left(\Abs{\mG_{ij}^{[2]}(0)-\widetilde{\mK}_{ij}^{[2]}}\geq t\left(1+\frac{c_{\mathrm{res}}}{L}\right)^2\right)\nonumber\\
      &\leq  \Prob\left(\Abs{\mG_{ij}^{[1]}(0)-\widetilde{\mK}_{ij}^{[1]}}\geq t\right)+ \Prob\left(\Abs{\mathrm{I-I'}}\geq t\right)+\Prob\left(\Abs{\mathrm{II-II'}}\geq t\right)+\Prob\left(\Abs{\mathrm{III-III'}}\geq t\right)\nonumber\\
     &\leq\exp\left(-cmt^2\right),\label{conclusion........final....1}
\end{align}
hence inductively,   for $2\leq l \leq L$
\begin{align}
      &\Prob\left(\Abs{\mG_{ij}^{[l]}(0)-\widetilde{\mK}_{ij}^{[l]}}\geq t\left(1+\frac{c_{\mathrm{res}}}{L}\right)^{2l-2}\right)\leq\exp\left(-cmt^2\right),\label{conclusion........final....2}
\end{align}
moreover, 
\begin{align}
      &\Prob\left(\Abs{\mG_{ij}^{[L+1]}(0)-{\mK}_{ij}^{[L+1]}}\geq t\left(1+\frac{c_{\mathrm{res}}}{L}\right)^{2L}\right)\leq\exp\left(-cmt^2\right),\label{conclusion........final......2}
\end{align}
note that  we have
\begin{align*}
    \Norm {\mG^{[L+1]}(0) -{\mK}^{[L+1]}}_{2\to 2} \leq  \Norm {\mG^{[L+1]}(0) -{\mK}^{[L+1]}}_{\mathrm{F}} \leq n \Norm {\mG^{[L+1]}(0) -{\mK}^{[L+1]}}_{\infty},
\end{align*}
based on Proposition \ref{proposition..covarince type}, $\lambda_{\min}({\mK}^{[L+1]})> \lambda_0$, then if we choose $t=\frac{\lambda_0}{4n\exp\left(2c_{\mathrm{res}}\right)},$  for~$2\leq l \leq L,$ with probability $1-\exp\left( -cm\lambda_0^2/n^2\right),$
\begin{equation}
        \Norm {\mG^{[l]}(0) -\widetilde{\mK}^{[l]}}_{2\to 2} \leq  \frac{\lambda_0}{4},
\end{equation}
hence if $m=\Omega\left( \left(\frac{n}{\lambda_0}\right)^{2+\eps}\right)$,  we have with probability $1-\exp(-m^{\eps})$
\begin{equation}
\lambda_{\min}(\mG^{[l]}(0))\geq    \lambda_{\min}(\widetilde{\mK}^{[l]})-\Norm {\mG^{[l]}(0) -\widetilde{\mK}^{[l]}}_{2\to 2}>\frac{3\lambda_0}{4}.
\end{equation}
In particular, we have that 
with probability $1-\exp\left( -cm\lambda_0^2/n^2\right),$
\begin{equation}
        \Norm {\mG^{[L+1]}(0) -{\mK}^{[L+1]}}_{2\to 2} \leq  \frac{\lambda_0}{4},
\end{equation}
hence if $m=\Omega\left( \left(\frac{n}{\lambda_0}\right)^{2+\eps}\right)$,  we have with probability $1-\exp(-m^{\eps})$
\begin{equation}
\lambda_{\min}(\mG^{[L+1]}(0))\geq    \lambda_{\min}({\mK}^{[L+1]})-\Norm {\mG^{[L+1]}(0) -{\mK}^{[L+1]}}_{2\to 2}>\frac{3\lambda_0}{4}.
\end{equation}
\end{proof}
\section{Proof of Theorem \ref{theorem...vary at 1/m}  and Corollary \ref{corollary....for thm}}\label{appendix...subsection for proof of thm and cor}
We shall begin with the detailed  proof of  Theorem \ref{theorem...vary at 1/m}.
   \begin{proof}[Proof of Theorem  \ref{theorem...vary at 1/m}]
   We are only going to  use $\fG_t^{[L+1]}\left(\cdot\right)$ instead of the whole NTK $\fK_t^{(2)}\left(\cdot\right),$ thanks to the simple structure of $\fG_t^{[L+1]}\left(\cdot\right),$ we are able to bring about a more concrete proof.
   
   Since there exists a $\frac{1}{L^2}$ scaling in some kernels, we use $C(r,L)$ to denote the `effective terms' in each kernel and we are going to  show that  \eqref{eq for thm...uniform estimate....order3} holds.  Firstly, we need to denote $\fG_t^{[L+1]}\left(\cdot\right)$ by $\fG_t^{[2]}\left(\cdot\right),$ i.e.,
$$\fG_t^{(2)}(\vx_{\alpha_{1}},\vx_{\alpha_{2}}):=\fG_t^{[L+1]}(\vx_{\alpha_{1}},\vx_{\alpha_{2}})=\left<\vx_{\alpha_1}^{[L]},\vx_{\alpha_2}^{[L]}\right>,$$
then it's natural for us to get that $C(2,L)=\fO(1),$ since there is only one term.

Secondly, by the replacement rule, all the possible terms generated from $\fG_t^{(2)}(\cdot)$ are
\begin{align*}
&\fG_t^{(2)}(\vx_{\alpha_{1}},\vx_{\alpha_{2}})=\left<\vx_{\alpha_1}^{[L]},\vx_{\alpha_2}^{[L]}\right>\to \fG_t^{(3)}(\vx_{\alpha_{1}},\vx_{\alpha_{2}}, \vx_{\beta})\\
 &\fG_t^{(3)}(\vx_{\alpha_{1}},\vx_{\alpha_{2}}, \vx_{\beta})={\frac{c_\sigma}{m}}\underbrace{\left< \mathrm{diag}\left(\mE_{t,\alpha_1}^{[2:L]} \vsigma^{(1)}_{[1]}(\vx_{\alpha_1})\vsigma^{(1)}_{[1]}(\vx_{\beta})\left(\mE_{t,\beta}^{[2:L]}\right)^{\T}\va_t \right)\vone,\vx_{\alpha_2}^{[L]}\right> \left<\vx_{\alpha_1},\vx_\beta  \right>}_{\textrm{I}}\\
&+\sum_{k=2}^{L} \frac{c_{\mathrm{res}}^2}{L^2 {m}}\underbrace{\left< \mathrm{diag}\left(\mE_{t,\alpha_1}^{[(k+1):L]}\vsigma^{(1)}_{[k]}(\vx_{\alpha_1})\vsigma^{(1)}_{[k]}(\vx_{\beta})\left(\mE_{t,\beta}^{[(k+1):L]}\right)^{\T}\va_t  \right) \vone,\vx_{\alpha_2}^{[L]}\right> \left<\vx_{\alpha_1}^{[k-1]},\vx_\beta^{[k-1]}  \right>}_{\mathrm{II}}\\
&+{\frac{c_\sigma}{m}}\left< \mathrm{diag}\left(\mE_{t,\alpha_2}^{[2:L]} \vsigma^{(1)}_{[1]}(\vx_{\alpha_2})\vsigma^{(1)}_{[1]}(\vx_{\beta})\left(\mE_{t,\beta}^{[2:L]}\right)^{\T}\va_t \right)\vone,\vx_{\alpha_1}^{[L]}\right> \left<\vx_{\alpha_2},\vx_\beta  \right>\\
&+\sum_{k=2}^{L} \frac{c_{\mathrm{res}}^2}{L^2 {m}}\left< \mathrm{diag}\left(\mE_{t,\alpha_2}^{[(k+1):L]}\vsigma^{(1)}_{[k]}(\vx_{\alpha_2})\vsigma^{(1)}_{[k]}(\vx_{\beta})\left(\mE_{t,\beta}^{[(k+1):L]}\right)^{\T}\va_t  \right) \vone,\vx_{\alpha_1}^{[L]}\right> \left<\vx_{\alpha_2}^{[k-1]},\vx_\beta^{[k-1]}  \right>.
\end{align*}
Thanks to the $\frac{1}{L^2}$ scaling, we obtain that  
$$C(3,L)=\fO\left( 2\left(1+\frac{L-1}{L^2}\right)\right)=\fO\left(1+\frac{1}{L}\right)$$
Finally for $\fG_t^{(4)}(\cdot),$ by symmetry, we are only going to analyze  terms I  and II.
Since there are at most $(2L+2)$ symbols in term I to be replaced, and by the  replacement rules, each replacement will bring about up to  $(L+1)$ many terms. For term II, for each summand, there are also at most $(2L+2)$ symbols  to be replaced. Since there are $L-1$ summands in II, and  each replacement will bring about up to  $(L+1)$ many terms, then we have 
$$C(4,L)=\fO\left( 2\left((2L+2)(L+1)+ \frac{1}{L^2}(L-1)(2L+2)(L+1) \right)\right)=\fO\left(L^2\right).$$
 Using \eqref{eq for thm...uniform estimate....higher rder2} in Theorem \ref{thm......infinite family}, it holds that for time $0\leq t\leq \sqrt{m}/{\LogLn}^{C'}$
\begin{align*}
    \Norm{\fG_t^{(4)}(\cdot)}_{\infty}\leq C(4,L) \frac{{\LogLn}^C}{m},
\end{align*}
   based on \eqref{eq for thm...neural tangent kernel .... higher order 2}
\begin{align*}
    \Abs{\partial_t\fG_t^{(3)}\left(\vx_{\alpha_1},\vx_{\alpha_2},\vx_{\alpha_3}\right)}&\leq \sup_{1\leq \beta \leq n} \Abs{\fG_t^{(4)}\left(\vx_{\alpha_1},\vx_{\alpha_2},\vx_{\alpha_3},\vx_{\beta}\right)}\sqrt{\frac{\sum_{\beta=1}^n \Abs{f_{\beta}(t)-y_{\beta}}^2}{n}}\\
    &\leq   \Norm{\fG_t^{(4)}(\cdot)}_{\infty} \sqrt{ R_S(\vtheta_0)}\\
    &\leq C(4,L) \frac{{\LogLn}^C}{m},
\end{align*}
then    for any $1\leq \alpha_1, \alpha_2, \alpha_3\leq n,$ with time $0\leq t\leq \sqrt{m}/{\LogLn}^{C'}$
\begin{align*}
    \Abs{\fG_t^{(3)}\left(\vx_{\alpha_1},\vx_{\alpha_2},\vx_{\alpha_3}\right)}&\leq  \Abs{\fG_0^{(3)}\left(\vx_{\alpha_1},\vx_{\alpha_2},\vx_{\alpha_3}\right)}+t\Abs{\partial_t\fG_t^{(3)}\left(\vx_{\alpha_1},\vx_{\alpha_2},\vx_{\alpha_3}\right)}\\
    &\leq  \Norm{\fG_0^{(3)}\left(\cdot\right)}_{\infty}+t \ C(4,L)\frac{{\LogLn}^C}{m}. 
\end{align*}
Finally, we need to make estimate on $\Norm{\fG_0^{(3)}\left(\cdot\right)}_{\infty}.$ We shall take advantage of the $\mathrm{diag}(\cdot)\vone$ structure and     rewrite $\fG_t^{(3)}\left(\cdot\right)$ into
\begin{align*}
 &\fG_t^{(3)}(\vx_{\alpha_{1}},\vx_{\alpha_{2}}, \vx_{\beta})={\frac{c_\sigma}{m}}\left< \mE_{t,\alpha_1}^{[2:L]} \vsigma^{(1)}_{[1]}(\vx_{\alpha_1})\vsigma^{(1)}_{[1]}(\vx_{\beta})\left(\mE_{t,\beta}^{[2:L]}\right)^{\T}\va_t ,\vx_{\alpha_2}^{[L]}\right> \left<\vx_{\alpha_1},\vx_\beta  \right>\\
&+\sum_{k=2}^{L} \frac{c_{\mathrm{res}}^2}{L^2 {m}}\left< \mE_{t,\alpha_1}^{[(k+1):L]}\vsigma^{(1)}_{[k]}(\vx_{\alpha_1})\vsigma^{(1)}_{[k]}(\vx_{\beta})\left(\mE_{t,\beta}^{[(k+1):L]}\right)^{\T}\va_t  ,\vx_{\alpha_2}^{[L]}\right> \left<\vx_{\alpha_1}^{[k-1]},\vx_\beta^{[k-1]}  \right>\\
&+{\frac{c_\sigma}{m}}\left< \mE_{t,\alpha_2}^{[2:L]} \vsigma^{(1)}_{[1]}(\vx_{\alpha_2})\vsigma^{(1)}_{[1]}(\vx_{\beta})\left(\mE_{t,\beta}^{[2:L]}\right)^{\T}\va_t ,\vx_{\alpha_1}^{[L]}\right> \left<\vx_{\alpha_2},\vx_\beta  \right>\\
&+\sum_{k=2}^{L} \frac{c_{\mathrm{res}}^2}{L^2 {m}}\left< \mE_{t,\alpha_2}^{[(k+1):L]}\vsigma^{(1)}_{[k]}(\vx_{\alpha_2})\vsigma^{(1)}_{[k]}(\vx_{\beta})\left(\mE_{t,\beta}^{[(k+1):L]}\right)^{\T}\va_t  ,\vx_{\alpha_1}^{[L]}\right> \left<\vx_{\alpha_2}^{[k-1]},\vx_\beta^{[k-1]}  \right>,
\end{align*}
then at time $t=0,$ wlog, each term in $\fG_0^{(3)}\left(\cdot\right)$ is of the form
\begin{align}
    \frac{c}{m}\left<\mB \va_0, \vx_{\alpha_1}^{[L]} \right>\left<\vx_{\alpha_2}^{[l]},\vx_\beta^{[l]}  \right>, \ 0\leq l \leq L-1, \label{rewrite...1}
\end{align}
where $\mB$ is some specific matrix that changes from term to term, then we can rewrite the inner product into:
\begin{align}
    \frac{c}{m}\left<\va_0, \mB^{\T}\vx_{\alpha_1}^{[L]} \right>\left<\vx_{\alpha_2}^{[l]},\vx_\beta^{[l]}  \right>,  \label{rewrite...2}
\end{align}
we shall focus on the term 
\begin{align}
    \left<\va_0, \mB^{\T}\vx_{\alpha_1}^{[L]} \right>\left<\vx_{\alpha_2}^{[l]},\vx_\beta^{[l]}  \right>, \label{focus on}
\end{align}
note that each entry of $\va_0$ is i.i.d $\fN(0,1),$ also based on Proposition \ref{proposition..A priori spectral property random matrix and a t} and \ref{proposition.... on the output of layes}, with high probability w.r.t random initialization, for time $0\leq t \leq {\LogLn}^{C'}$
\begin{align*}
    \Norm{\mB^{\T}}_{2\to 2}, \vx_{\alpha_1}^{[L]},\vx_{\alpha_2}^{[l]},\vx_\beta^{[l]} \leq c,
\end{align*}
then after taking conditional expectation except for the random variable $\va_0$  
\begin{align}
    \left<\va_0, \mB^{\T}\vx_{\alpha_1}^{[L]} \right>\left<\vx_{\alpha_2}^{[l]},\vx_\beta^{[l]}  \right>\sim\fN\left(0,c\right), \label{distrubuted as }
\end{align}
apply Lemma \ref{lemma..... on the initial of gaussian vectors} directly,  with high probability
\begin{align}
    \frac{c}{m}\left<\va_0, \mB^{\T}\vx_{\alpha_1}^{[L]} \right>\left<\vx_{\alpha_2}^{[l]},\vx_\beta^{[l]}  \right>\leq c\frac{{\LogLn}^C}{m}. \label{rewrite...final...}
\end{align}
consequently
\begin{align}
    \Norm{\fG_0^{(3)}\left(\cdot\right)}_{\infty}\leq C(3,L) \frac{{\LogLn}^C}{m},
\end{align}
then   for any $1\leq \alpha_1, \alpha_2, \alpha_3\leq n,$ with time $0\leq t\leq \sqrt{m}/{\LogLn}^{C'}$
\begin{align*}
    \Abs{\fG_t^{(3)}\left(\vx_{\alpha_1},\vx_{\alpha_2},\vx_{\alpha_3}\right)}& \leq  \Norm{\fG_0^{(3)}\left(\cdot\right)}_{\infty}+tC(4,L)\frac{{\LogLn}^C}{m}\\
    &\leq C(3,L)\frac{{\LogLn}^C}{m}+tC(4,L)\frac{{\LogLn}^C}{m}.
\end{align*}
Similarly, based on \eqref{eq for thm...neural tangent kernel .... higher order 2},  for time $0\leq t\leq \sqrt{m}/{\LogLn}^{C'}$
\begin{align*}
    \Abs{\partial_t\fG_t^{(2)}\left(\vx_{\alpha_1},\vx_{\alpha_2}\right)}&\leq \sup_{1\leq \beta \leq n} \Abs{\fG_t^{(3)}\left(\vx_{\alpha_1},\vx_{\alpha_2},\vx_{\beta}\right)}\sqrt{\frac{\sum_{\beta=1}^n \Abs{f_{\beta}(t)-y_{\beta}}^2}{n}},
\end{align*}
set $\vx_{\beta}=\vx_{\alpha_3}$ 
\begin{align}
    \Abs{\partial_t\fG_t^{(2)}\left(\vx_{\alpha_1},\vx_{\alpha_2}\right)}&\leq \left(C(3,L)\frac{{\LogLn}^C}{m}+tC(4,L)\frac{{\LogLn}^C}{m}\right) \sqrt{\frac{\sum_{\beta=1}^n \Abs{f_{\beta}(t)-y_{\beta}}^2}{n}}\nonumber\\
    &\leq \left(C(3,L) +tC(4,L)\right)\frac{{\LogLn}^C}{m}.\label{eq..D appendx, }
\end{align}
and \eqref{eq..D appendx, } finishes the proof of Theorem \ref{theorem...vary at 1/m}.
\end{proof}

\begin{proof}[Proof of Corollary \ref{corollary....for thm}]
Firstly, based on Proposition \ref{proposition......eigenvalue of order higher}, if $m=\Omega\left( \left(\frac{n}{\lambda_0}\right)^{2+\eps}\right)$, we have with high probability w.r.t random initialization, 
\begin{equation*}
    \lambda_{\min}\left[\fK_0^{(2)}\left(\vx_{\alpha},\vx_{\beta}\right)\right]_{1\leq \alpha,\beta\leq n}>
    \lambda_{\min}\left(\mG^{[L+1]}(0)\right)>\frac{3 \lambda_0}{4},
\end{equation*}
set $\lambda=\frac{3 \lambda_0}{4},$ which finishes the proof of \eqref{least eigenvalue}.

We shall move on to the change of the least eigenvalue of the NTK. Recall  \eqref{eq..D appendx, } in the  proof of Theorem \ref{theorem...vary at 1/m}, for time  $0\leq t\leq \sqrt{m}/{\LogLn}^{C'},$
\begin{align}
       \Abs{\partial_t\fG_t^{(2)}\left(\vx_{\alpha_1},\vx_{\alpha_2}\right)}&\leq \left(C(3,L) +tC(4,L)\right) \frac{{\LogLn}^C}{m},\nonumber
\end{align}
consequently 
\begin{align*}
    \Abs{\fG_t^{(2)}\left(\vx_{\alpha_1},\vx_{\alpha_2}\right)-\fG_0^{(2)}\left(\vx_{\alpha_1},\vx_{\alpha_2}\right)}&\leq  t\left(C(3,L) +tC(4,L)\right)\frac{{\LogLn}^C}{m}.
\end{align*}
The above inequality can be used to derive a bound of the change of the least eigenvalue of the $\fG_t^{(2)}(\cdot) $
\begin{align*}
    \Norm {\fG_t^{(2)}-\fG_0^{(2)}}_{2\to 2} &\leq  \Norm {\fG_t^{(2)}-\fG_0^{(2)}}_{\mathrm{F}} \leq n \Norm {\fG_t^{(2)}-\fG_0^{(2)}}_{\infty}\\
    &\leq  nt\left(C(3,L) +tC(4,L)\right) \frac{{\LogLn}^{C}}{m},
\end{align*}
we   set $t^{*}$ satisfying  
\begin{align*}
   nt^{*}\left(C(3,L) +t^*C(4,L)\right) \frac{{\LogLn}^C}{m}=\frac{\lambda}{2},
\end{align*}
 rewrite the equation above, we have 
\begin{align}
   C(4,L) (t^*)^2 + C(3,L)t^* &= \frac{\lambda m}{2{\LogLn}^C n},\label{quadratic.....0}
\end{align}
solve \eqref{quadratic.....0}, we obtain that 
\begin{align}
    t^*= \frac{-C(3,L)+\sqrt{\left(C(3,L)\right)^2+2 C(4,L)\frac{\lambda m}{{\LogLn}^{C}n}}}{2 C(4,L) },\label{quadratic...1}
\end{align}
  since we are in the regime of over-parametrization, for $m$ large enough, the following holds
\begin{align}
t^*&\geq \frac{1}{2}\sqrt{\frac{\frac{\lambda m}{{\LogLn}^{C}n}}{C(4,L)}}=\frac{1}{2}\sqrt{\frac{\lambda m}{C(4,L){\LogLn}^{C}n}}.\label{a lower bound}
\end{align}
Moreover 
\begin{align*}
    \lambda_{\min}\left[\fK_t^{(2)}\left(\vx_{\alpha},\vx_{\beta}\right)\right]_{1\leq \alpha,\beta\leq n}&\geq   \lambda_{\min}\left[\fG_t^{(2)}\left(\vx_{\alpha},\vx_{\beta}\right)\right]_{1\leq \alpha,\beta\leq n} \\
    &\geq \lambda_{\min}\left[\fG_0^{(2)}\left(\vx_{\alpha},\vx_{\beta}\right)\right]_{1\leq \alpha,\beta\leq n}- \Norm {\fG_t^{(2)}-\fG_0^{(2)}}_{2\to 2},
\end{align*}
then let $\Bar{t}:=\inf\left\{t: \lambda_{\min}\left[\fK_t^{(2)}\left(\vx_{\alpha},\vx_{\beta}\right)\right]_{1\leq \alpha,\beta\leq n}\geq \lambda/2 \right\},$ naturally  
\begin{align}
    t^*\leq \Bar{t},\label{cp}
\end{align}
using \eqref{eq for thm...neural tangent kernel .... order 2}, we have for any $0\leq t \leq \Bar{t},$
\begin{align}
    \partial_t\sum_{\alpha=1}^n\Norm{f_\alpha(t)-y_\alpha}_2^2&\leq\sum_{\alpha,\beta=1}^n-\frac{2}{n}K_t^{(2)}(\vx_\alpha,\vx_\beta)(f_\alpha(t)-y_\alpha)(f_\beta(t)-t_\beta)\\
    &\leq -\frac{\lambda}{ n}\sum_{\alpha=1}^n\Norm{f_\alpha(t)-y_\alpha}_2^2, \label{eq for proof of prop apriori loss...decay of empirical risk..with eigenvalue}
\end{align}
then  
\begin{equation}\label{solution}
    \sum_{\alpha=1}^n\Norm{f_\alpha(t)-y_\alpha}_2^2\leq \exp\left(-\frac{\lambda t}{n}\right)\sum_{\alpha=1}^n\Norm{f_\alpha(0)-y_\alpha}_2^2,
\end{equation}
we can  rewrite \eqref{solution} into
\begin{align}
    R_S(\vtheta_t)\leq \exp\left(-\frac{\lambda t}{n}\right) R_S(\vtheta_0)
\end{align}
set  $R_S(\vtheta_t)=\eps$, it takes time $t\leq \frac{n}{\lambda} \ln (\frac{C'}{\eps})$ for loss  $R_S(\vtheta_t)$ to reach accuracy $\eps,$ hence if the following holds 
\begin{equation}\label{condition}
     t\leq \frac{n}{\lambda} \ln \left(\frac{C'}{\eps}\right)\leq t^*\leq \Bar{t},
\end{equation}
then the width $m$ is required to yield the lower bound for $t^*$ derived in \eqref{a lower bound},
\begin{equation}\label{widtrh2}
    \frac{n}{\lambda}\ln \left(\frac{C'}{\eps}\right)\leq  \frac{1}{2}\sqrt{\frac{\lambda m}{C(4,L){\LogLn}^{C}n}}.
\end{equation}
then we have 
\begin{align*}
 m\geq C(4,L)\left(\frac{n}{\lambda}\right)^3\LogLn^{C}\ln\left(\frac{C'}{\eps}\right)^2,
\end{align*}
since $C(4,L)=\fO\left(L^2\right),$ we conclude that the required width $m$ should be 
\begin{equation}\label{final eq.....}
    m=\Omega\left(\left(\frac{n}{\lambda}\right)^3L^2\LogLn^{C}\ln\left(\frac{C'}{\eps}\right)\right),
\end{equation}
where $\eps$ is the desired training accuracy.
\end{proof}

\end{document}